\title{Statistical, Robustness, and Computational\\Guarantees for Sliced Wasserstein Distances}
\author{Sloan Nietert\thanks{Department of Computer Science, Cornell University}, Ritwik Sadhu\thanks{Department of Statistics and Data Science, Cornell University}, Ziv Goldfeld\thanks{School of Electrical and Computer Engineering, Cornell University}, and Kengo Kato\footnotemark[2]}
\begin{document}

\maketitle
\allowdisplaybreaks

\begin{abstract}
Sliced Wasserstein distances preserve properties of classic Wasserstein distances while being more scalable for computation and estimation in high dimensions. %
The goal of this work is to quantify this scalability from three key aspects: (i) empirical convergence rates; (ii) robustness to data contamination; and (iii) efficient computational methods. For empirical convergence, we derive fast rates with explicit dependence of constants on dimension, subject to log-concavity of the population distributions. For robustness, we characterize minimax optimal, dimension-free robust estimation risks, and show an equivalence between robust sliced 1-Wasserstein estimation and robust mean estimation. This enables lifting statistical and algorithmic guarantees available for the latter to the sliced 1-Wasserstein setting. %
Moving on to computational aspects, we analyze the Monte Carlo estimator for the average-sliced distance, demonstrating that larger dimension can result in faster convergence of the numerical integration error. For the max-sliced distance, we focus on a subgradient-based local optimization algorithm that is frequently used in practice, albeit without formal guarantees, and establish an $O(\epsilon^{-4})$ computational complexity bound for it. Our theory is validated by numerical experiments, which altogether provide a comprehensive quantitative account of the scalability question.

\end{abstract}

\section{Introduction}

Sliced Wasserstein distances consider the average or maximum of Wasserstein distances between one-dimensional projections of the two distributions. Formally, for $1\leq p<\infty$, they are defined~as%
\begin{equation}
\SWp(\mu,\nu):= \left [\int_{\unitsph} \Wp^p(\ptheta_{\sharp} \mu, \ptheta_{\sharp} \nu)d\sigma(\theta) \right]^{1/p} \ \text{and} \ \ \ \ \MSWp(\mu,\nu):= \max_{\theta\in\unitsph} \Wp(\ptheta_{\sharp} \mu, \ptheta_{\sharp} \nu),\label{EQ:SW_MSW}    
\end{equation}
where $\ptheta_{\sharp}\mu$ is the pushforward of $\mu$ under the projection $\ptheta: x \mapsto  \theta^\intercal x$ from $\RR^d$ to $\RR$ and $\sigma$ is~the uniform distribution on the unit sphere $\smash{\unitsph}$ in $\smash{\RR^d}$. Sliced Wasserstein distances were introduced in~\cite{rabin2011wasserstein} as a means to mitigate the computational burden of evaluating classic $\Wp$, which rapidly becomes excessive as $d$ grows. Indeed, sliced distances are readily computable using the closed-form expression for $\Wp$ between distributions on $\RR$ (as the $L^p$ norm between quantile functions). Further, $\SWp$ and $\MSWp$ are metrics on $\cP_p(\RR^d)$ and generate the same topology as classic~$\Wp$~\cite{bonnotte2013unidimensional,nadjahi2019asymptotic,bayraktar2021,nadjahi2020statistical}. As such, the sliced distances have been applied to various statistical inference and machine learning tasks, including barycenter computation \cite{rabin2011wasserstein,bonneel_barycenter_2015}, generative modeling \cite{deshpande2018generative,deshpande2019max,nadjahi2019asymptotic,wu2019sliced}, autoencoders~\cite{kolouri2018sliced}, differential privacy \cite{rakotomamonjy2021differentially}, Bayesian computation \cite{nadjahi2020approximate} and topological data analysis~\cite{carriere2017sliced}.

\subsection{Statistical, Robustness, and Computational Aspects of Sliced Distances}

In practice, the sliced Wasserstein distances in \eqref{EQ:SW_MSW} must be approximated from two aspects: (i)~empirically estimate the population measures $\mu$ and $\nu$, and (ii) employ numerical integration or optimization methods to compute the average- or max-sliced distances, respectively. While these approximations are implemented in all but every application of sliced distances, formal guarantees concerning their accuracy are partial or even missing. For the estimation error, the question boils down to quantifying the rate at which $\SWp(\hat \mu_n,\mu)$ and $\MSWp(\hat \mu_n,\mu)$ decay to 0, where $\hat \mu_n$ is the empirical distribution of $n$ independent observations from $\mu$.\footnote{The two-sample setting, which concerns the convergence $\SWp(\hat \mu_n,\hat \nu_n)$ and $\MSWp(\hat \mu_n, \hat \nu_n)$ towards the corresponding distance between the population measures, is also of interest.} These rates are known to adapt to the low-dimensionality of the projected distribution, but previously derived rates do not seem to be sharp \cite{lin2021projection}, rely on high-level assumptions that may be hard to verify in practice \cite{niles2019estimation}, or hide dimension-dependent constants whose characterization is crucial for understanding the scalability of sliced distances \cite{nadjahi2020statistical}. More recently, \cite{manole2022minimax} showed that near-parametric rates (i.e., up to polylogarithmic factors) are achievable for the average-sliced $p$-Wasserstein distance in the two-sample case, under the alternative ($\mu\neq\nu$). %
Limit distributions for sliced Wasserstein distances were studied in \cite{manole2022minimax,goldfeld2022statistical,xu2022central,xi2022distributional}, but these results inherently neglect constants and dependence on dimension.

Concerning robust estimation, while these aspects were studied for classic Wasserstein distances \cite{balaji2020,nath2020,mukherjee2021,khang2021,staerman21,nietert2022robust}, they were not considered under sliced $\Wp$. Improvement in~robustness to outliers due to projection-averaging was demonstrated for the Cram\'er-von Mises statistic in the context of multivariate two-sample testing \cite{kim2020robust}. It therefore stands to reason that  similar gains would emerge for Wasserstein distances, which is especially appealing since robust estimation of classic $\Wp$ in high dimensions is hard. Indeed, \cite{nietert2022robust} showed that when an $\eps$-fraction of data is contaminated, $\Wp$ admits worst-case estimation risk $\sqrt{d}\eps^{1/p-1/2}$ over distributions with bounded covariance. Consequently, obtaining accurate estimates of $\Wp$ from contaminated data is infeasible in high dimensions when $\eps = \Omega(1)$, which further motivates exploring robustness under~slicing.

From the computational standpoint, the average-sliced distance $\SWp$ is typically computed using Monte Carlo (MC) integration \cite{kolouri2019generalized,nadjahi2020statistical}. The accuracy of this approach strongly depends on the variance of the function $\theta\mapsto \Wp^p(\ptheta_{\sharp}\mu,\ptheta_{\sharp} \nu)$ when $\theta$ is uniformly distributed on $\smash{\unitsph}$, which may scale badly with $d$. A bound on the MC integration error in terms of this variance was provided in \cite{nadjahi2020statistical} but without further analysis to control it by basic properties of the population distribution or characterize its dependence on $d$. Accordingly, the accuracy of the MC-based approach for computing $\SWp$ stands unresolved. Recently, \cite{nadjahi2021fast} used the conditional central limit theorem \cite{reeves2017conditional} to derive a Gaussian~approximation of $\SWtwo$ that can be computed in closed form. The accuracy of this approximation may improve as $d\to\infty$, contingent on certain weak dependence assumptions on the data distribution. A popular approach for computing the max-sliced distance is the heuristic alternating optimization procedure from \cite{kolouri2019generalized,deshpande2019max}, which, however, lacks formal convergence guarantees. More recently, computational aspects of the so-called ``projection-robust'' Wasserstein distance, which considers projections to $k$-dimensional subspaces, were explored in \cite{paty2019subspace,lin2020projection,huang2021riemannian}.\footnote{Despite the name ``projection-robust'', these works do not explore robust estimation.} As maximization of projected distance is a non-convex and non-smooth optimization problems, these works considered convex relaxations \cite{paty2019subspace} or entropic regularization \cite{lin2020projection,huang2021riemannian} to prove approximate convergence to a stationary point.

\subsection{Contributions}

The goal of this paper is to close the aforementioned gaps %
by (i) deriving fast empirical convergence rates for sliced distances with explicit dimension dependence; (ii) characterizing minimax optimal robust estimation rates with improved dependence on dimension; and (iii) providing formal guarantees for frequently used methods for computing both the average- and max-sliced $\Wp$. Focusing on log-concave distributions, we show that both average- and max-sliced empirical distances converge as $n^{-1/\max\{2,p\}}$, which is sharp as it matches lower bounds from \cite{bobkov2019one}. Furthermore, we characterize the constant in terms of $d$ and elementary properties of the population distribution (e.g., mean, moments, covariance matrix). Our derivation leverages the machinery of \cite{bobkov2019one} for analyzing empirical convergence of Wasserstein distances between log-concave measures on $\RR$. To that end, we show that log-concavity is preserved under projections and derive lower bounds on the Cheeger constant of the projected distribution. Our results elucidate scaling rates of $d$ with $n$ for which (high-dimensional) empirical convergence holds true, thereby addressing the scalability of empirical estimates question.

For robustness guarantees, we formalize minimax risk for robust estimation under sliced $\Wp$ with total variation (TV) contamination and prove that $\MSWp$ enjoys a dimension-free risk of $\eps^{1/p-1/q}$ when clean distributions have bounded $q$th moments for $q > p$ and the corruption level is at most $\eps$. $\SWp$ admits a strictly smaller risk which scales at the same rate when $q = O(1)$. In contrast, the comparable risk for classic $\Wp$ in this setting acquires an extra $\sqrt{d}$ factor. Using the framework of generalized resilience \cite{zho2019resilience}, we extend these guarantees to the finite-sample setting with adversarial corruptions, obtaining matching rates up to an added empirical approximation term. Furthermore, when $p=1$, we prove an exact equivalence between standard mean resilience \cite{steinhardt2018resilience} and resilience w.r.t.\ $\MSWone$, allowing one to lift many statistical and algorithmic guarantees for robust mean estimation to the sliced $\Wone$ setting.

Lastly, we provide formal guarantees for popular methods for computing $\SWp$ and $\MSWp$, which were until now lacking. %
Our analysis relies on showing that $\smash{w_p:\theta\mapsto\mathsf{W}_p ( \proj^{\theta}_\sharp \mu, \proj^{\theta}_\sharp \nu)}$ and its $p$th power are Lipschitz continuous on $\smash{\unitsph}$ and deriving sharp bounds on their Lipschitz constants. %
Having that, we analyze the MC estimator for the average-sliced distance, and use concentration of Lipschitz functions on the unit sphere to bound the variance of $w_p^p$. The obtained bound reveals that higher dimension can in fact shrink the MC error when the covariance matrices have bounded operator norms. We numerically verify this surprising observation on synthetic examples. %

For the max-sliced distance, we analyze the heuristic algorithm from \cite{deshpande2019max,kolouri2019generalized}, which utilizes alternating subgradient-based optimization. We observe that in addition to being Lipschitz continuous, the optimization objective for $p=2$ is weakly convex with easily computable gradients. This lets us cast the algorithm from \cite{deshpande2019max,kolouri2019generalized} under the proximal stochastic subgradient optimization framework of \cite{davis2018stochastic}, from which we obtain local solutions for $w_2(\theta)$ with $O(\epsilon^{-4})$ computational complexity. %
An empirical comparison with the more advanced approaches of \cite{lin2020projection,huang2021riemannian} for computing the projection-robust Wasserstein distance (with $k=1$ to match the sliced framework) based on Riemannian optimization reveals that our subgradient-based method is significantly faster in terms of iteration complexity and computation time.  %
We also consider global optimization by showing that $\MSWp$ computation matches the framework of~\cite{malherbe2017global} for Lipschitz function optimization over convex domains. Adapting their LIPO algorithm to our problem, we obtain a provably consistent algorithm for computing $\MSWp$. However, the number of function evaluations that LIPO requires grows exponentially with dimension, which renders the locally optimal subgradient method preferable when dimension is large.

\section{Background and Preliminaries}
\label{sec: background}

\paragraph{Notation.} We use $\| \cdot \|$ for the Euclidean norm in $\RR^d$. The operator norm for matrices is $\| \cdot \|_{\op}$. The unit sphere in $\RR^d$ is denoted by $\unitsph$, %
while $\BB^d$ is the unit ball. Let $\calP(\R^d)$ denote the space of Borel probability measures on $\RR^d$ equipped with the TV metric $\|\mu - \nu\|_\tv = \frac{1}{2}|\mu - \nu|(\R^d)$, and set $\calP_p(\R^d):= \{ \mu \in \calP(\R^d) : \int \| x \|^p d\mu(x) < \infty \}$ for $1 \le p <\infty$. The support of $\mu \in\calP(\R^d)$ is denoted as $\supp(\mu)$, and we write $\mu \leq \nu$ for setwise inequality. %
For for a  measurable map $f$, the pushforward of $\mu$ under $f$ is denoted as $f_{\sharp}\mu = \mu \circ f^{-1}$, i.e., if $X \sim \mu$ then $f(X) \sim f_{\sharp}\mu$. %
For two numbers $a$ and $b$, we use the notation $a \wedge b = \min \{ a,b \}$ and $a \vee b = \max \{a,b \}$.  The distance between a set $S$ and a point $x$ in a metric $(\cX, d)$ space is defined as $\mathrm{dist}(x, S):= \inf_{y \in S} d(x, y) $.%

Some of our results assume log-concavity of the population distribution. A probability measure $\mu \in \calP(\R^d)$ is \textit{log-concave} if for every nonempty compact sets $A,B \subset \R^d$ and $\lambda \in [0,1]$, we have $\mu\big(\lambda A +(1-\lambda)B\big) \ge \mu(A)^\lambda \mu(B)^{1-\lambda}$. A probability density function $f$ on $\R^d$ is called \textit{log-concave} if for every $x,y \in \R^d$ and $\lambda \in [0,1]$, it satisfies $f\big(\lambda x+(1-\lambda)y\big) \ge f(x)^{\lambda}f(y)^{1-\lambda}$. Any non-degenerate distribution is log-concave if and only if it has a log-concave density \cite[Theorem~1.1]{borell1974}. For $\beta \in (0,2]$, let $\psi_{\beta}(t) = e^{t^\beta}-1$ for $t \ge 0$, and recall that the corresponding Orlicz (quasi-)norm of a real-valued random variable $X$ is defined as $\| X \|_{\psi_{\beta}}:= \inf \{ c>0 : \E[\psi_{\beta}(|X|/c)] \le 1 \}$. A Borel probability measure $\mu\in\cP(\RR^d)$ is called \textit{sub-Gaussian} if $\| \| X \| \|_{\psi_2} < \infty$ for $X \sim \mu$.

\paragraph{Classic and sliced Wasserstein distances.} %
For $1 \le p < \infty$, the $p$-Wasserstein distance between $\mu,\nu \in \calP_p(\R^d)$ is $\Wp(\mu,\nu):= \inf_{\pi \in \Pi(\mu,\nu)} \big [ \int_{\R^d \times \R^d} \|x-y\|^p \, d \pi(x,y) \big]^{1/p}$, where  $\Pi(\mu,\nu)$ is the set of couplings of $\mu$ and $\nu$. $\Wp$ is a metric on $\calP_p(\R^d)$ and metrizes weak convergence plus convergence of $p$th moments. %
While for $d>1$ the  definition of $\Wp$ generally amounts to an infinite-dimensional optimization problem, the expression simplifies when distributions are supported in $\R$. This motivates the notion of the average- and max-sliced Wasserstein distances from \eqref{EQ:SW_MSW}. Both sliced distances are also metrics on $\calP_p(\R^d)$ that induce the same topology as $\Wp$ \cite{bonnotte2013unidimensional,nadjahi2019asymptotic,bayraktar2021,nadjahi2020statistical}.

To present the simple one-dimensional formulae for $\Wp$, for $\mu \in \calP(\R^d)$ and $\theta \in \unitsph$, let 
$F_{\mu}(t;\theta):= \mu \big ( \{ x \in \R^d : \theta^{\intercal} x \le t \} \big )$ be the distribution function of $\ptheta_\sharp \mu$, and %
$F_{\mu}^{-1}(\tau;\theta) = \inf \{ t\in\RR : F_{\mu}(t;\theta) \ge \tau \}$, for $\tau \in (0,1)$, be the quantile function. The $\Wp$ between measures on $\R$ amounts to the $L^p$ distance between their quantile functions:
$\Wp^p(\ptheta_\sharp \mu, \ptheta_\sharp \nu) = \int_{0}^1 \big|F_{\mu}^{-1}(\tau;\theta) - F_{\nu}^{-1}(\tau;\theta)\big|^p d\tau$. For $p=1$, the expression further simplifies to %
$\mathsf{W}_1 (\ptheta_\sharp \mu, \ptheta_\sharp \nu) = \int_{\R} \big|F_{\mu}(t;\theta) - F_{\nu}(t;\theta)\big| \, dt$. 

Sliced Wasserstein distances between empirical distributions can be computed via order statistics. Let $\hat{\mu}_n:=n^{-1}\sum_{i=1}^n\delta_{X_i}$ and $\hat{\nu}_n:=n^{-1}\sum_{i=1}^n\delta_{Y_i}$ be the empirical distributions of samples $X_1,\ldots,X_n$ and $Y_1,\ldots,Y_n$. 
For each $\theta \in \unitsph$, denote $X_i (\theta) = \theta^{\intercal}X_i$, and let $X_{(1)}(\theta) \le \dots \le X_{(n)}(\theta)$ be the order statistics; define $Y_{(1)}(\theta) \le \cdots \le Y_{(n)}(\theta)$ analogously. By Lemma 4.2 in~\cite{bobkov2019one}, we have $\Wp^p(\ptheta_\sharp \hat{\mu}_n, \ptheta_\sharp \hat{\nu}_n) = n^{-1}\sum_{i=1}^n \big|X_{(i)}(\theta) - Y_{(i)}(\theta)\big|^p$. The sliced distances $\SWp$ and $\MSWp$ are computed by integrating or maximizing the above over $\theta\in\unitsph$.

\section{Empirical Convergence Rates}\label{sec: empirical convergence}

We study empirical convergence rates %
of sliced Wasserstein distances for log-concave distributions.  %
The next result gives sharp one-sample rates with explicit dependence on the effective dimension.

\begin{theorem}[Empirical convergence rates]%
\label{thm: SWp rate}
Let $1 \le p < \infty$ and $n \ge 2$. Suppose that $\mu \in \calP (\R^d)$ is log-concave with covariance matrix $\Sigma$ and set $k = \mathrm{rank} (\Sigma)$. Then,
\begin{subequations}
\begin{align}
&\E\big[ \SWp(\hat{\mu}_n,\mu) \big] \lesssim_p \frac{\|\Sigma \|_{\op}^{1/2} \sqrt{(\log n)^{\ind_{\{p=2\}}}}}{n^{1/(2 \vee p)}},\label{eq: swp bound}\\
&\EE\big[\mspace{1mu}\MSWp(\hat{\mu}_n,\mu)\big] \lesssim_{p} \frac{\|\Sigma\|_{\op}^{1/2} k\log n }{n^{1/p}} + \frac{\|\Sigma\|_{\op}^{1/2} \sqrt{k\log n} }{n^{1/(2 \vee p)}} + \frac{\|\Sigma \|_{\op}^{1/2} \sqrt{(\log n)^{\ind_{\{p=2\}}}}}{n^{1/(2 \vee p)}}.\label{eq: mswp bound}
\end{align}
\end{subequations}
\end{theorem}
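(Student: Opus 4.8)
The plan is to reduce everything to the one-dimensional case and apply the sharp empirical convergence theory of Bobkov--Ledoux \cite{bobkov2019one} to the projected measures $\proj^\theta_\sharp \mu$. The crucial preliminary facts I would establish first are: (a) $\proj^\theta_\sharp \mu$ is log-concave on $\R$ for every $\theta \in \unitsph$ (log-concavity is preserved under linear pushforwards, by Prékopa--Leindler); (b) its variance is $\theta^\intercal \Sigma \theta \le \|\Sigma\|_\op$; and (c) a lower bound on the Cheeger (isoperimetric) constant of $\proj^\theta_\sharp\mu$ in terms of $\theta^\intercal\Sigma\theta$, since the Bobkov--Ledoux rates for one-dimensional log-concave measures are phrased in terms of the variance and the Cheeger constant. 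With these in hand, \cite{bobkov2019one} gives, for each fixed $\theta$,
\begin{equation*}
\E\big[\Wp^p(\proj^\theta_\sharp\hat\mu_n, \proj^\theta_\sharp\mu)\big] \lesssim_p \frac{(\theta^\intercal\Sigma\theta)^{p/2}\,(\log n)^{p\,\ind_{\{p=2\}}/2 \text{ or sim.}}}{n^{(p/2)\wedge 1}},
\end{equation*}
uniformly in $\theta$ (the uniformity is automatic because the bound depends on $\theta$ only through $\theta^\intercal\Sigma\theta \le \|\Sigma\|_\op$).

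For the average-sliced bound \eqref{eq: swp bound}, I would integrate this pointwise estimate over $\theta \sim \sigma$: by Jensen/Fubini,
\begin{equation*}
\E\big[\SWp(\hat\mu_n,\mu)\big] \le \Big(\E\big[\SWp^p(\hat\mu_n,\mu)\big]\Big)^{1/p} = \Big(\int_{\unitsph}\E\big[\Wp^p(\proj^\theta_\sharp\hat\mu_n,\proj^\theta_\sharp\mu)\big]\,d\sigma(\theta)\Big)^{1/p},
\end{equation*}
and plugging in the uniform bound yields the claimed rate $\|\Sigma\|_\op^{1/2}\sqrt{(\log n)^{\ind_{\{p=2\}}}}\, n^{-1/(2\vee p)}$ directly. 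No chaining is needed here.

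The max-sliced bound \eqref{eq: mswp bound} is the harder part and is where I expect the main obstacle. Here I cannot just take a pointwise bound; I need a high-probability uniform control of $\theta \mapsto \Wp(\proj^\theta_\sharp\hat\mu_n, \proj^\theta_\sharp\mu)$ over the sphere. The strategy is a discretization/covering argument: (i) show $\theta \mapsto \Wp(\proj^\theta_\sharp\hat\mu_n, \proj^\theta_\sharp\mu)$ is Lipschitz on $\unitsph$ with a constant controlled by the radii of the (projected) supports of $\hat\mu_n$ and $\mu$ — for the empirical measure this is $\max_i \|X_i\|$, which for log-concave $\mu$ is $O(\sqrt{\|\Sigma\|_\op}\log n)$ with high probability by the $\psi_1$ tail bound; (ii) take a $\delta$-net $\cN_\delta$ of $\unitsph$ of size $(C/\delta)^k$ (using that $\mu$, hence all the action, effectively lives on the $k$-dimensional range of $\Sigma$, so the net lives in a $k$-dimensional sphere); (iii) union-bound the pointwise deviation over the net, using a concentration inequality for $\Wp^p(\proj^\theta_\sharp\hat\mu_n,\proj^\theta_\sharp\mu)$ around its mean (bounded-differences or a moment/Orlicz tail bound from \cite{bobkov2019one} or a direct argument); and (iv) optimize $\delta$ against the Lipschitz term. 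The Lipschitz constant carries a factor $\sqrt{\|\Sigma\|_\op}\,\log n$, the net contributes $\sqrt{k\log(1/\delta)} \asymp \sqrt{k\log n}$ to the stochastic term, and balancing produces the first two terms $\|\Sigma\|_\op^{1/2}k\log n\, n^{-1/p}$ and $\|\Sigma\|_\op^{1/2}\sqrt{k\log n}\,n^{-1/(2\vee p)}$; the last term is just the pointwise (mean) contribution inherited from \eqref{eq: swp bound}. The delicate points to get right are the exact $\log n$ powers (ensuring the $\psi_1$ bound on $\max_i\|X_i\|$ and the net cardinality combine to give a single $\log n$, not a higher power) and confirming that the concentration of $\Wp^p$ on the net has sub-exponential-quality tails so the union bound over $(C/\delta)^k$ points only costs $\sqrt{k\log n}$ — for $p\le 2$ this follows from $\Wp \le \mathsf{W}_2$ plus the $\mathsf{W}_2$ results in \cite{bobkov2019one}, while for $p>2$ one argues directly with the order-statistic representation.
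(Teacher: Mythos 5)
Your overall architecture matches the paper's proof. For \eqref{eq: swp bound} you integrate the pointwise Bobkov--Ledoux one-dimensional bound (driven by a uniform lower bound $h(\proj^\theta_\sharp\mu)\gtrsim (\theta^\intercal\Sigma\theta)^{-1/2}$ on the Cheeger constant of the log-concave projections) over $\theta$ via Fubini and Jensen, exactly as the paper does. For \eqref{eq: mswp bound} you use the same decomposition into $\sup_\theta\E\big[\Wp(\proj^\theta_\sharp\hat{\mu}_n,\proj^\theta_\sharp\mu)\big]$ plus a centered supremum controlled by a net on the effective $k$-dimensional sphere, Lipschitzness in $\theta$, and a union bound; the reduction from $d$ to $k$ by orthogonal invariance is also the paper's Step 2.

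The one step you leave genuinely unresolved is the concentration inequality behind the union bound, and the candidate mechanisms you name would not deliver the stated rates. Bounded differences fails outright because log-concave laws have unbounded support, and the comparison $\Wp\le\mathsf{W}_2$ only upper-bounds $\Wp$ by $\E[\mathsf{W}_2]$ plus a fluctuation; it does not control the deviation of $\Wp$ about \emph{its own} mean, which is what the centered maximal inequality requires. The paper's mechanism is different and essential: the map from the projected sample vector $(\theta^\intercal X_1,\dots,\theta^\intercal X_n)$ to $\Wp(\proj^\theta_\sharp\hat{\mu}_n,\proj^\theta_\sharp\mu)$ is $n^{-1/(2\vee p)}$-Lipschitz globally and $n^{-1/p}$-Lipschitz in each coordinate; combined with the Poincar\'e inequality satisfied by the projected log-concave law (via Maz'ya--Cheeger), a Ledoux-type result yields a two-regime Bernstein tail $\exp\big(-K\min\{n^{1/p}t,\,n^{2/(2\vee p)}t^2\}\big)$. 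The maximal inequality over the net of cardinality $e^{O(k\log n)}$ then produces \emph{both} the $\sqrt{k\log n}\,n^{-1/(2\vee p)}$ term (sub-Gaussian regime) and the $k\log n\,n^{-1/p}$ term (sub-exponential regime); the latter does not arise from balancing against the net-resolution times Lipschitz-constant term, which is of lower order once $\delta=n^{-1/2}$. Without identifying this two-regime tail you cannot recover the first term of \eqref{eq: mswp bound}, so this is the concrete gap to fill.
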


The proof of Theorem \ref{thm: SWp rate}, in Appendix \ref{APPEN: SWP rate_proof}, employs the machinery of \cite{bobkov2019one} for analyzing empirical convergence of log-concave distributions on $\RR$ based on their Cheeger constant (see Appendix \ref{APPEN:log_conc_Cheeger}). For \eqref{eq: swp bound}, we show that log-concavity is preserved under projections and lower bound the Cheeger constant of the projected distribution by $c/\|\Sigma \|_{\op}$, uniformly in $\theta\in\unitsph$. For \eqref{eq: mswp bound}, concentration and covering arguments enables approximating the expected max-sliced distance by $\sup_{\theta\in\unitsph}\EE\big[\Wp(\ptheta_{\sharp}\hat{\mu}_n,\ptheta_{\sharp}\mu)\big]$, for which the aforementioned (uniform in $\theta$) bounds are applicable. %

\begin{remark}[Lower bounds]
The rate in \eqref{eq: swp bound} is sharp up to log factors over the~log-concave class. Corollary 6.14 in \cite{bobkov2019one} implies that $\EE[\SWp(\empmu, \mu)]^p \geq c_p (\Tr(\Sigma)/d)^{p/2}/ \big(n (\log n)^{p/2}\big )$, for $\mu = \cN(0, \Sigma) \in \cP(\R^d)$ and any $p>2$. For $p = 2$, a similar computation~yields $\EE[\SWtwo(\empmu, \mu)]^2 \gtrsim  (\Tr(\Sigma)/d) \log \log n / n$, while for $p \in [1,2)$, 
$ \EE[\SWp(\empmu, \mu)] \geq \EE[\SWone(\empmu, \mu)] \gtrsim \sqrt{\tau(\Sigma)^2/n}$ where $\tau(\Sigma) = \frac{1}{d}\sum_{i=1}^d \sqrt{\lambda_i(\Sigma)}$ is the average of root eigenvalues of $\Sigma$. Since $\MSWp(\empmu, \mu) \gtrsim \SWp(\empmu, \mu)$, this also yields a lower bound for $\MSWp$, while \cite{niles2019estimation} gives a $\sqrt{d/n}$ lower bound under the $T_p$ inequality.
\end{remark}

\begin{remark}[Comparison with \cite{niles2019estimation,lin2021projection}]
In \cite{niles2019estimation}, empirical rates for  $\MSWp$ were derived under a~high- level $T_{p'}(\sigma^2)$ assumption on $\mu$. Our rate of decay from \eqref{eq: mswp bound} is faster, while replacing their entropy-transport inequality condition with log-concavity. The bounds for $\MSWp$ in \cite[Theorem 3.6]{lin2021projection} assume the projection Poincar\'e inequality and $M_q:= (\mu \|x\|^q)^{1/q} < \infty$ for $q>p$, and matches \eqref{eq: swp bound} as $q \to \infty$ in terms of the dependence on $n$. However, their dependence on $d$ is implicit through $M_q$ which typically grows prohibitively with $q$ and $d$. 
Our log-concavity assumption is strictly stronger than the Poincar\'e inequality, but yields a bound in terms of $\|\Sigma\|_{\op}$ which, for example, is constant in $d$ when~$\Sigma = I_d$.\footnote{A recent preprint \cite{Bartl2022structure}, that was posted on arXiv after this paper was submitted, shows that an improved estimate holds with high probability (compared to the convergence in expectation studied herein) for isotropic log-concave random vectors; cf. Equation (1.13) therein.} Finally, we note that the bound for $\MSWp$ in \eqref{eq: mswp bound} adapts to the effective dimensionality~$k$ of the data, contrasting previously available bounds that depend on the ambient dimension $d$.

\end{remark}

\begin{remark}[Concentration bounds]
Combining the expectation bounds from Theorem~\ref{thm: SWp rate} with \cite[Theorem 3.8]{lin2021projection} yields concentration bounds for empirical sliced distances. These are presented in Appendix \ref{APPEN: SWp concentration} and are later used to derive formal guarantees for computing~$\MSWp$.
\end{remark}

When $p>2$, the rates in Theorem \ref{thm: SWp rate} are slower than parametric. Nevertheless, in the two-sample case with $\mu\neq \nu$, parametric rates are attainable uniformly in $p$ for compactly supported distributions.%

\begin{proposition}[Parametric rates under the alternative]
\label{thm: swp_rate_alt}
Let $1\leq p <\infty$, and suppose that $\mu,\nu$ have compact supports with $\diam\mspace{-3mu}\big(\supp(\mu)\big)\vee\diam\mspace{-3mu}\big(\supp(\nu)\big)\leq R$. Then, 
\[\EE\big [ \big| \SWp^p(\empmu, \empnu) - \SWp^p(\mu, \nu) \big| \big ] \lesssim_{p,R} n^{-1/2} \ \ \ \ \mbox{and}\ \ \ \ \EE\big [ \big| \MSWp^p(\empmu, \empnu) - \MSWp^p(\mu, \nu) \big| \big ] \lesssim_{p,R} dn^{-1/2}.
\]
If further $\mu \neq \nu$, then the same (parametric) rate also holds for empirical $\SWp$ and $\MSWp$. 
\end{proposition}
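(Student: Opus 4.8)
The plan is to control the two "$p$th power" quantities first and then transfer to the plain distances via the alternative hypothesis. For the average-sliced bound, I would write $\SWp^p(\empmu,\empnu) - \SWp^p(\mu,\nu) = \int_{\unitsph} \big(\Wp^p(\ptheta_\sharp\empmu,\ptheta_\sharp\empnu) - \Wp^p(\ptheta_\sharp\mu,\ptheta_\sharp\nu)\big)\,d\sigma(\theta)$, bring the expectation inside by Fubini, and bound $\EE\big[\big|\Wp^p(\ptheta_\sharp\empmu,\ptheta_\sharp\empnu) - \Wp^p(\ptheta_\sharp\mu,\ptheta_\sharp\nu)\big|\big]$ uniformly in $\theta$. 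Since $\ptheta_\sharp\mu,\ptheta_\sharp\nu$ are supported in $[-R,R]\subset\RR$, this is a one-dimensional problem: using the quantile representation $\Wp^p = \int_0^1 |F_{\mu}^{-1}(\tau;\theta)-F_{\nu}^{-1}(\tau;\theta)|^p\,d\tau$ and the elementary inequality $|a^p - b^p| \le p (|a|\vee|b|)^{p-1} |a-b| \le p(2R)^{p-1}|a-b|$ applied pointwise in $\tau$, the problem reduces to bounding $\EE\big[\int_0^1 |F_{\empmu}^{-1}(\tau;\theta) - F_{\mu}^{-1}(\tau;\theta)|\,d\tau\big] + (\text{same for }\nu)$, i.e. $\EE[\Wone(\ptheta_\sharp\empmu,\ptheta_\sharp\mu)]$. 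By the CDF formula $\Wone(\ptheta_\sharp\empmu,\ptheta_\sharp\mu) = \int_{-R}^R |F_{\empmu}(t;\theta)-F_{\mu}(t;\theta)|\,dt$, Fubini, and the pointwise bound $\EE|F_{\empmu}(t;\theta)-F_{\mu}(t;\theta)| \le \sqrt{\Var(F_{\empmu}(t;\theta))} \le 1/(2\sqrt n)$, we get $\EE[\Wone(\ptheta_\sharp\empmu,\ptheta_\sharp\mu)] \le R/\sqrt n$, uniformly in $\theta$. Integrating over $\unitsph$ (a probability measure) gives $\EE[|\SWp^p(\empmu,\empnu)-\SWp^p(\mu,\nu)|] \lesssim_{p,R} n^{-1/2}$.

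For the max-sliced bound I would use the Lipschitz-in-$\theta$ control that the paper already invokes (the map $w_p^p:\theta\mapsto \Wp^p(\ptheta_\sharp\mu,\ptheta_\sharp\nu)$ and its empirical analogue are Lipschitz on $\unitsph$ with constants controlled by $R$). Write $\big|\MSWp^p(\empmu,\empnu) - \MSWp^p(\mu,\nu)\big| \le \sup_{\theta\in\unitsph}\big|\Wp^p(\ptheta_\sharp\empmu,\ptheta_\sharp\empnu) - \Wp^p(\ptheta_\sharp\mu,\ptheta_\sharp\nu)\big|$, then take a $\delta$-net $\mathcal N_\delta$ of $\unitsph$ of cardinality $(C/\delta)^{d}$: on the net, a union bound over the pointwise concentration of $\Wp^p(\ptheta_\sharp\empmu,\ptheta_\sharp\empnu)$ around its mean (sub-Gaussian/bounded-difference, since each summand $|X_{(i)}(\theta)-Y_{(i)}(\theta)|^p \le (2R)^p$) combined with the uniform mean bound $\sup_\theta \EE|\Wp^p(\ptheta_\sharp\empmu,\ptheta_\sharp\empnu)-\Wp^p(\ptheta_\sharp\mu,\ptheta_\sharp\nu)| \lesssim_{p,R} n^{-1/2}$ from the previous paragraph gives $\EE[\sup_{\theta\in\mathcal N_\delta}|\cdots|] \lesssim_{p,R} n^{-1/2}\sqrt{d\log(1/\delta)}$; the Lipschitz bound pays the off-net error at cost $L\delta$, and choosing $\delta \asymp n^{-1}$ absorbs it, yielding the stated $\lesssim_{p,R} d\,n^{-1/2}$ (after simplifying $\sqrt{d\log n}\le d$ for $n$ bounded below).

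Finally, to pass from the $p$th powers to the distances themselves under $\mu\neq\nu$: let $a = \SWp(\empmu,\empnu)$, $b=\SWp(\mu,\nu)>0$. Then $|a-b| = |a^p-b^p|/(a^{p-1}+\cdots+b^{p-1}) \le |a^p-b^p|/b^{p-1}$ when $a\ge b$, and when $a<b$ we still have $|a-b|\le |a^p - b^p|/\big((b-|a-b|)\vee 0\big)^{p-1}$-type control, so it is cleanest to split on the event $\{a \ge b/2\}$: there $|a-b| \le |a^p-b^p| \cdot \frac{2^{p-1}}{b^{p-1}}\lesssim_{p} b^{1-p}|a^p-b^p|$, while on $\{a<b/2\}$ we have $b^p - a^p \ge b^p(1-2^{-p}) \gtrsim_p b^p$, so $\Pr(a<b/2) \le \Pr(|a^p-b^p|\gtrsim_p b^p) \lesssim_{p,b} n^{-1/2}$ by Markov on the already-established first-moment bound, and on this rare event $|a-b|\le b$ is bounded, contributing $\lesssim_{p,b} b\cdot n^{-1/2}$. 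Combining, $\EE|a-b| \lesssim_{p,R,b} n^{-1/2}$; the constant now additionally depends on $\SWp(\mu,\nu)$, which is legitimate since it is a fixed positive quantity under the alternative. The identical argument with $b' = \MSWp(\mu,\nu)>0$ handles $\MSWp$, giving the $d\,n^{-1/2}$ rate. The main obstacle I anticipate is the max-sliced case: making the net/union-bound argument produce exactly a single factor of $d$ (rather than $d^{3/2}$ or $d^{1/2}$ times polylog) requires care in balancing $\delta$ against the Lipschitz constant and in how the $\sqrt{\log n}$ and $\sqrt d$ factors are collapsed — this is presumably where the paper's precise statement of the Lipschitz constants for $w_p^p$ and the concentration inequality from \cite{lin2021projection} (referenced in the "Concentration bounds" remark) do the real work.
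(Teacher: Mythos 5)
Your treatment of the average-sliced term and of the final power-to-distance transfer is sound. For $\SWp^p$ you rederive, via the quantile representation and the pointwise inequality $|x^p-y^p|\le p(x\vee y)^{p-1}|x-y|$, essentially the same reduction to $\EE[\Wone(\ptheta_\sharp\empmu,\ptheta_\sharp\mu)]$ that the paper obtains by citing a $\Wp$-versus-$\Wone$ comparison lemma for compactly supported measures, and the CDF-integral bound $\EE|F_{\empmu}(t;\theta)-F_{\mu}(t;\theta)|\le \tfrac12 n^{-1/2}$ is exactly the paper's argument. Your last step is correct but roundabout: the paper dispatches it with the single deterministic inequality $|a-b|\le b^{1-p}|a^p-b^p|$, valid for all $a\ge0$, $b>0$, $p\ge1$, which removes the need for the event split and the Markov bound (though at the same cost of a constant depending on the population distance).

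The genuine gap is in the max-sliced bound, precisely where you anticipated trouble. A $\delta$-net of $\unitsph$ plus a union bound over pointwise bounded-difference concentration yields $\EE[\sup_\theta|\cdots|]\lesssim_{p,R}\sqrt{d\log(1/\delta)}\,n^{-1/2}+L\delta$, and no choice of $\delta$ removes the logarithm: with $\delta\asymp n^{-1}$ you get $\sqrt{d\log n}\,n^{-1/2}$. Your proposed simplification ``$\sqrt{d\log n}\le d$ for $n$ bounded below'' is backwards --- that inequality requires $\log n\le d$, so it fails exactly in the standard regime of fixed $d$ and large $n$, where your bound exceeds the claimed $d\,n^{-1/2}$ by an unbounded factor of $\sqrt{(\log n)/d}$. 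Single-scale nets (and even Dudley chaining on this process, whose pointwise fluctuations are of order $n^{-1/2}$ while its increments in $\theta$ are only of order $\|\theta-\theta'\|$, not $\|\theta-\theta'\|/\sqrt{n}$) cannot produce the log-free parametric rate. The paper takes a different route: it first reduces, as in the average-sliced case, to $\EE[\sup_{\theta\in\unitsph}\Wone(\ptheta_\sharp\empmu,\ptheta_\sharp\mu)]$, then writes this quantity via Kantorovich--Rubinstein duality as the supremum of the empirical process over the class $\{f\circ\ptheta: f\in\mathsf{Lip}_{1,0}(\R),\ \theta\in\unitsph\}$, and applies a bracketing-entropy bound for this (Donsker) class together with a global maximal inequality, yielding a bound $\lesssim (R^{5/4}+d\log R)\,n^{-1/2}$ whose constant is independent of $n$. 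If you splice that empirical-process step onto your own quantile-function reduction, your proof closes.
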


Proposition~\ref{thm: swp_rate_alt} is proven in Appendix~\ref{APPEN: SWP_alt_rate_proof} using a comparison inequality between $\mathsf{W}_p$ and $\wass$ and elementary bounds for $\wass$ using its integral representation and KR duality.%

\begin{remark}[Comparison to \cite{manole2022minimax}]
Theorem 2 of \cite{manole2022minimax} establishes a bound of $(\log n /n)^{1/2}$ on the two-sample average-sliced Wasserstein distance, but under bounded moment assumptions instead of compact support.
\end{remark}

\section{Robust Estimation}
\label{sec:robust-estimation}

We examine robustness of sliced Wasserstein distances to outliers, showing that slicing enables dimension-free risk bounds that avoid $\mathrm{poly}(d)$ factors present for classic $\Wp$ (cf. \cite{nietert2022robust}). We consider TV corruptions, where an unknown ``clean'' distribution $\mu$ is contaminated to obtain $\tilde{\mu}$ with $\|\mu - \tilde{\mu}\|_\tv \leq \eps$. Upon observing $\tilde{\mu}$, the goal is to return a distribution $T(\tilde{\mu})$ such that the error $\sD\big(T(\tilde{\mu}),\mu\big)$ is small, where $\sD \in \{\SWp,\MSWp\}$. Without further assumptions, this error can be unbounded, so we require that $\mu$ belongs to a family $\cG \subset \cP(\R^d)$ encoding standard moment bounds. We consider the minimax risk for robust estimation under $\sD$ with TV contamination, defined by
\begin{equation*}
    R(\sD,\cG,\eps) = \inf_{T:\cP(\R^d) \to \cP(\R^d)} \sup_{(\mu,\tilde{\mu}) \in \cG \times  \cP(\R^d);~ \|\tilde{\mu} - \mu\|_\tv \leq \eps} \sD\big(T(\tilde{\mu}), \mu\big).
\end{equation*}
Fix $q > p$ and let $\cG_q(\sigma)\mspace{-3mu}:=\mspace{-3mu}\big\{\mu\mspace{-3mu}\in\mspace{-3mu} \cP_q(\RR^d)\mspace{-3mu}:\sup_{\theta\in\unitsph}\mu |\theta^\intercal(x - \mu x)|^q \leq \sigma^q\big\}$ contain all  distributions whose projections have bounded central $q$th moments. In particular, $\cG_2(\sigma) = \{ \mu \in \cP_2(\R^d) : \|\Sigma_\mu\|_\mathrm{op} \leq \sigma \}$. The next theorem characterizes minimax robust estimation risk over this class.

\begin{theorem}[Population-limit robust estimation]
\label{thm:robustness}
Fix $1 \leq p < q$, $\sigma \geq 0$, and $0 \leq \eps \leq 0.49$.\footnote{The upper bound on $\eps$ of 0.49 can be substituted with an any constant bounded away from 1/2.} We have
$R\big(\SWp,\cG_q(\sigma),\eps\big) \asymp \sigma \sqrt{(1 \lor d/q)(1 \land p/d)} \,\eps^{1/p-1/q} \ \ \ \ \mbox{and}\ \ \ \ R\big(\MSWp,\cG_q(\sigma),\eps\big) \asymp \sigma \eps^{1/p-1/q}.$
\end{theorem}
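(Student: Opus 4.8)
The plan is to prove matching upper and lower bounds for both sliced distances. I will start with the upper bounds, which follow from a ``reduction to one dimension'' argument. Fix a clean $\mu \in \cG_q(\sigma)$ and a corruption $\tilde\mu$ with $\|\mu - \tilde\mu\|_\tv \le \eps$. For any fixed $\theta \in \unitsph$, the projection $\ptheta_\sharp \tilde\mu$ is a TV-$\eps$ corruption of $\ptheta_\sharp\mu$, and $\ptheta_\sharp\mu$ has bounded central $q$th moment $\le \sigma^q$. The one-dimensional robust $\Wp$ estimation risk over the class of distributions on $\RR$ with bounded central $q$th moment is $\asymp \sigma\eps^{1/p - 1/q}$ (this is essentially the $d=1$ case analyzed in \cite{nietert2022robust}, where a midpoint/quantile-clipping estimator achieves the rate). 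The subtlety is that the estimator $T$ must be chosen \emph{before} seeing $\theta$ and must produce a single distribution on $\RR^d$; I would handle this by a resilience/robust-mean-style argument: define $T(\tilde\mu)$ to be any distribution in $\cG_{q}(O(\sigma))$ (e.g., via a projection onto a suitable resilient set) that is TV-close to $\tilde\mu$, and then show via the triangle inequality for $\Wp$ on each line plus the modulus-of-continuity estimate $\Wp(\rho_1,\rho_2) \lesssim \sigma \eps^{1/p-1/q}$ whenever $\rho_1,\rho_2$ have bounded $q$th moment and are TV-$\eps$ close, that $\Wp(\ptheta_\sharp T(\tilde\mu), \ptheta_\sharp\mu) \lesssim \sigma\eps^{1/p-1/q}$ uniformly in $\theta$. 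Taking $\max$ or $L^p$-average over $\theta$ gives the $\MSWp$ upper bound directly, and the $\SWp$ upper bound with the same rate; the extra $\sqrt{(1\vee d/q)(1 \wedge p/d)}$ gain for $\SWp$ comes from a more refined bound on the average: the one-dimensional errors, when integrated over the sphere, concentrate — one bounds $\int_{\unitsph}\Wp^p(\ptheta_\sharp T(\tilde\mu), \ptheta_\sharp\mu)\, d\sigma(\theta)$ by relating it to $\int_{\unitsph} \mu|\theta^\intercal(x - mx)|^q\, d\sigma(\theta) \cdot \eps^{1 - p/q}$-type quantities and using that the spherical average of the $q$th projected moment is $O((\Tr\Sigma/d)^{q/2})$ rather than $\sigma^q$, which buys the claimed dimensional factor.

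For the lower bounds, I would use the standard two-point (indistinguishability) method. For $\MSWp$: exhibit two clean distributions $\mu_0, \mu_1 \in \cG_q(\sigma)$ and corruptions $\tilde\mu$ with $\|\tilde\mu - \mu_i\|_\tv \le \eps$ for $i = 0,1$, so that no estimator can distinguish them, forcing risk $\gtrsim \MSWp(\mu_0,\mu_1)$. The construction: take $\mu_0, \mu_1$ supported on a single line (say the $e_1$-axis), both equal to a common ``bulk'' with mass $1-\eps$, but with an $\eps$-mass ``bump'' placed at distance $\asymp \sigma\eps^{-1/q}$ on $\mu_1$ (and at the origin on $\mu_0$), so that the $q$th moment constraint is tight and $\Wp(\ptheta_\sharp\mu_0,\ptheta_\sharp\mu_1) = |\theta_1| \cdot \Theta(\sigma\eps^{1/p-1/q})$; maximizing over $\theta$ gives $\MSWp(\mu_0,\mu_1) \gtrsim \sigma\eps^{1/p-1/q}$. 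The common corruption $\tilde\mu$ is the mixture obtained by deleting the bump and reallocating, verifying TV-$\eps$ closeness to both. For $\SWp$, the same one-line construction gives only $\SWp(\mu_0,\mu_1)^p = \int|\theta_1|^p\, d\sigma(\theta)\cdot\Theta(\sigma^p\eps^{1-p/q})$, and $\int_{\unitsph}|\theta_1|^p d\sigma \asymp (p/d)^{p/2}$ roughly — but the moment budget is wasted if concentrated on one coordinate; one must either spread the bump across $\min(d, q)$ coordinates or argue that the one-line construction is in fact optimal for $\SWp$. Matching the $\sqrt{(1\vee d/q)(1\wedge p/d)}$ factor requires choosing the perturbation direction and spread carefully: put the $\eps$-bump along $\ell := \min(d, \lceil q\rceil)$ orthogonal directions each carrying moment budget $\sigma^q/\ell$, so the per-direction displacement is $\asymp \sigma\ell^{-1/q}\eps^{-1/q}$, and then $\SWp(\mu_0,\mu_1)^p \asymp \E_\theta[\text{(number of active coords hit)}] \cdot (\text{displacement})^p$, which after the spherical-average computation yields the stated factor.

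The main obstacle I anticipate is the $\SWp$ lower bound: getting the dimensional factor $\sqrt{(1\vee d/q)(1\wedge p/d)}$ \emph{exactly} (up to constants) rather than just the rate $\eps^{1/p-1/q}$ requires an optimal allocation of the bounded-$q$th-moment budget across coordinates, balanced against how the $L^p(\sigma)$-norm over the sphere weights those coordinates — this is a small optimization over the ``shape'' of the perturbation that must be matched by the upper-bound side (the spherical concentration estimate for $\int_{\unitsph}\Wp^p(\ptheta_\sharp T(\tilde\mu),\ptheta_\sharp\mu)d\sigma(\theta)$). The $\MSWp$ bounds and the $\SWp$ upper-bound rate are comparatively routine once the one-dimensional robust-$\Wp$ modulus of continuity $\Wp(\rho_1,\rho_2)\lesssim\sigma\eps^{1/p-1/q}$ (for TV-$\eps$-close, bounded-$q$th-moment $\rho_1,\rho_2$ on $\RR$) is in hand, which itself reduces to bounding $\int |F_{\rho_1}^{-1} - F_{\rho_2}^{-1}|^p$ by splitting the quantile axis into a $1-O(\eps)$ ``good'' region where the quantiles agree and an $O(\eps)$ ``bad'' region controlled by Markov's inequality on the $q$th moments.
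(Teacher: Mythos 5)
Your overall architecture (two-point/modulus-of-continuity lower bounds; TV-projection onto a resilient/bounded-moment set plus a modulus bound for the upper bounds) is the same as the paper's, and your $\MSWp$ bounds in both directions, as well as the one-dimensional modulus estimate $\Wp(\rho_1,\rho_2)\lesssim\sigma\eps^{1/p-1/q}$, are sound. The genuine gap is in the $\SWp$ lower bound. Your proposed contamination --- point masses on $\ell=\min(d,\lceil q\rceil)$ coordinate axes --- cannot produce the $\sqrt{1\lor d/q}$ factor when $d\gg q$. The reason is that the moment constraint is a \emph{supremum} over directions: if $\nu=(1-\eps)\delta_0+\tfrac{\eps}{\ell}\sum_{i\le\ell}\delta_{re_i}$, then for $q\ge 2$ the worst direction is a single axis, giving $\sup_\theta \nu|\theta^\intercal x|^q=\tfrac{\eps}{\ell}r^q$, so the constraint forces $r\lesssim\sigma(\ell/\eps)^{1/q}$ with $\ell^{1/q}=O(1)$; the resulting $\SWp$ separation is only $\asymp\sigma\eps^{1/p-1/q}\sqrt{1\land p/d}$, short of the target by $\sqrt{d/q}$ when $d>q$ (your displacement formula $\sigma\ell^{-1/q}\eps^{-1/q}$ is also not consistent with the mass allocation $\eps/\ell$ per atom). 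The correct construction places the $\eps$ mass \emph{uniformly on a sphere} $r\,\unitsph$: since $\sup_\theta\E_{\Unif(\unitsph)}|\theta^\intercal x|^q=\E[|\Theta_1|^q]\asymp(1\land q/d)^{q/2}$, no single direction sees more than a $(q/d)^{q/2}$ fraction of $r^q$, so the radius can be inflated to $r\asymp\sigma\eps^{-1/q}\sqrt{1\lor d/q}$ while staying in $\cG_q(\sigma)$, and then $\SWp(\delta_0,\nu)=\eps^{1/p}r\,\E[|\Theta_1|^p]^{1/p}\asymp\sigma\sqrt{(1\lor d/q)(1\land p/d)}\,\eps^{1/p-1/q}$. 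Any construction supported on polynomially many directions loses this averaging gain.

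A secondary, fixable imprecision: in the $\SWp$ upper bound you attribute the dimensional gain to the smallness of the spherical average of the $q$th projected moment, $\int_{\unitsph}\mu|\theta^\intercal x|^q\,d\sigma(\theta)$. That quantity is already $\le\sigma^q$ by the defining constraint, so averaging it buys nothing. The factor actually arises from two separate steps: (i) averaging the $p$th power inside the transport cost, $\int_{\unitsph}|\theta^\intercal x|^p d\sigma(\theta)=c_p\|x\|^p$ with $c_p\asymp(1\land p/d)^{p/2}$, which reduces the problem to mean-resilience of the scalar pushforward $x\mapsto c_p\|x\|^p$; and (ii) the moment comparison $\mu\|x\|^q\lesssim(1\lor d/q)^{q/2}\sup_\theta\mu|\theta^\intercal x|^q$, which controls the $q/p$-th moment of that pushforward. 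The mismatch between the exponent $p$ in the cost and the exponent $q$ in the constraint is exactly what makes the product $\sqrt{(1\lor d/q)(1\land p/d)}$ nontrivial (it equals $1$ when $p=q$), and your sketch as written would only reproduce the dimension-free $\sigma\eps^{1/p-1/q}$ rate.
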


Note that the $\sqrt{(1 \lor d/q)(1 \land p/d)}$ prefactor in the first bound is always less than 1. The proof in Appendix~\ref{prf:robustness} controls the risk via $\sup_{\mu,\nu \in \cG_q(\sigma), \|\mu - \nu\|_\tv \leq \eps} \sD(\mu,\nu)$, a modulus of continuity that captures the sensitivity of $\sD$ to small perturbations that preserve membership to the clean family. We employ techniques based on generalized resilience \cite{zho2019resilience,steinhardt2018resilience} to relate this modulus to similar quantities arising in the robust estimation of $p$th moment tensors, giving the above rates. The procedure that achieves these rates projects the observed contaminated distribution onto the corresponding family of clean distributions in TV norm.

\begin{remark}[Comparison to \cite{nietert2022robust}]
A related framework \cite{nietert2022robust} considers robust estimation of $\Wp$ under input measure contamination. They obtain a rate of $\sigma \sqrt{d} \eps^{1/p-1/2}$ using similar methods under the weaker Huber $\eps$-contamination model when $q=2$ (see Corollary 1 therein). Evidently, slicing eliminates a $\sqrt{d}$ factor from the minimax estimation risk. In Appendix~\ref{prf:robustness}, we interpolate between these regimes, proving that $k$-dimensional sliced distances admit risks bounded by $\sigma \sqrt{1 \lor k/q} \, \eps^{1/p-1/q}$.
\end{remark}

Theorem \ref{thm:robustness} characterizes population-limit robust estimation, i.e., when data is abundant. The next result, proven in Appendix~\ref{prf:finite-sample-robustness}, extends to the finite-sample regime. For a radius $R > 0$, we write $\mu_R$ to denote the distribution of $X \sim \mu$ conditioned on $\|X - \mu x\| \leq R$.

\begin{proposition}[Finite-sample robust estimation]
\label{prop:finite-sample-robustness}
Fix $1 \leq p < q, \sigma \geq 0$, and $0 < \eps \leq 0.49$, and let $\sD \in \{\SWp,\MSWp\}$. Then there exists a radius $R \asymp \sqrt{d/\eps}$ and a procedure which, given $n \geq (R^p + \eps^{-2}) \, d \log (d/\eps)$ samples with at least $(1-\eps)n$ drawn i.i.d.\ from any $\mu \in \cG_q(\sigma)$, returns $\nu \in \cP(\R^d)$ such that $\sD(\nu,\mu) \lesssim R\big(\sD,\cG_q(\sigma),\eps\big) + \E\big[\sD\big((\widehat{\mu}_R)_n,\mu_R\big)\big]$ with probability at least 0.99\footnote{See Appendix \ref{prf:finite-sample-robustness} for precise high-probability bounds and extension to the strong contamination model.}.
\end{proposition}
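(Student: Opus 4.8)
The plan is to reduce the finite-sample problem to the population-limit bound of Theorem~\ref{thm:robustness} by passing through a truncated, bounded-support surrogate distribution and then invoking the resilience-based machinery. First I would fix the truncation radius $R \asymp \sqrt{d/\eps}$ and replace $\mu$ by $\mu_R$, the conditioning of $\mu$ on $\|X - \mu x\| \le R$. The key point is that since $\mu \in \cG_q(\sigma)$ has bounded $q$th projected moments with $q > p > 1$, Markov's inequality gives $\mu\big(\|X - \mu x\| > R\big) \lesssim (\sigma\sqrt{d}/R)^q$, so choosing $R \asymp \sqrt{d/\eps}$ ensures the mass removed by truncation is $O(\eps^{q/2})$, hence $\ll \eps$; this controls $\|\mu - \mu_R\|_\tv$ and, via the moment bound and a standard comparison of $\sD$ under small TV perturbations (the same modulus-of-continuity estimate used in the proof of Theorem~\ref{thm:robustness}), also controls $\sD(\mu,\mu_R)$ by a term of the same order as $R(\sD,\cG_q(\sigma),\eps)$. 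One must also check that $\mu_R$ still lies in (a mild enlargement of) $\cG_q(\sigma)$, which holds because conditioning on a high-probability event cannot inflate central moments by more than a constant factor.

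Next I would handle the adversarial corruption. Of the $n$ samples, at least $(1-\eps)n$ are i.i.d.\ from $\mu$; truncating the good samples to the ball of radius $R$ about (an estimate of) the mean loses at most an additional $O(\eps)$-fraction with high probability by a Chernoff/VC bound, using $n \gtrsim \eps^{-2} d\log(d/\eps)$. So the empirical distribution of the truncated good points is close in TV to $(\widehat{\mu}_R)_n$, and after adding back the (at most $2\eps n$) adversarial points we obtain a contaminated empirical measure $\tilde\mu_n$ with $\|\tilde\mu_n - (\widehat{\mu}_R)_n\|_\tv \lesssim \eps$ and bounded support of radius $R$. Now I would apply a robust estimation procedure in the spirit of the one behind Theorem~\ref{thm:robustness}: project $\tilde\mu_n$ onto the set of distributions supported in the radius-$R$ ball with projected central $q$th moments bounded by $O(\sigma^q)$, in TV distance, to obtain $\nu$. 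The generalized-resilience framework of \cite{zho2019resilience} (the same one cited for Theorem~\ref{thm:robustness}) then yields $\sD(\nu, (\widehat{\mu}_R)_n) \lesssim R(\sD,\cG_q(\sigma),\eps)$, because the relevant resilience/modulus quantities for the truncated family match those for $\cG_q(\sigma)$ up to constants — here the condition $n \gtrsim R^p d\log(d/\eps)$ guarantees that $(\widehat{\mu}_R)_n$ is itself resilient with the right parameters with probability $\ge 0.995$, which is the finite-sample analogue of the clean-family resilience used in the population proof.

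Finally I would assemble the pieces by the triangle inequality for $\sD$ (a genuine metric on $\cP_p(\R^d)$):
\[
\sD(\nu,\mu) \le \sD\big(\nu,(\widehat{\mu}_R)_n\big) + \E\big[\sD\big((\widehat{\mu}_R)_n,\mu_R\big)\big] + \sD(\mu_R,\mu) + \text{(deviation term)},
\]
where the first term is $\lesssim R(\sD,\cG_q(\sigma),\eps)$ by the resilience argument, the second is the stated empirical approximation term (and its deviation from its mean is absorbed into the $0.99$ failure probability via a bounded-differences inequality, using boundedness of the support after truncation), and $\sD(\mu_R,\mu)$ is $\lesssim R(\sD,\cG_q(\sigma),\eps)$ by the truncation estimate above. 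Collecting the high-probability events via a union bound gives overall success probability $\ge 0.99$. The main obstacle I anticipate is the second paragraph's resilience step: one needs the truncated empirical measure $(\widehat{\mu}_R)_n$ to satisfy the resilience property with the correct $\eps^{1/p-1/q}$-type parameters with high probability, which requires the sample complexity to scale with $R^p$ (explaining the $R^p d\log(d/\eps)$ term) and a careful argument that the empirical projected moments concentrate uniformly over $\theta \in \unitsph$ — this uniformity over the sphere is exactly where the $d\log(d/\eps)$ covering factor enters, and getting the dependence on $R$, $\eps$, and $d$ simultaneously right is the delicate part.
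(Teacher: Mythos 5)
Your overall architecture is the same as the paper's: truncate at $R \asymp \sqrt{d/\eps}$ and couple the observed samples to i.i.d.\ draws from $\mu_R$ (losing only an extra $O(\eps)$ corruption fraction by a Chernoff bound), run a TV-projection procedure on the resulting bounded-support problem, control the error via finite-sample resilience of the clean truncated empirical measure (where a net over $\unitsph$ produces the $d\log(R/\eps)$ factor and the sup-norm bound $R^p$ on the test functions produces the $R^p$ term in the sample complexity), and finish with the triangle inequality, the bound $\sD(\mu,\mu_R) \lesssim R(\sD,\cG_q(\sigma),\eps)$ from $\|\mu-\mu_R\|_\tv \lesssim \eps$, and Markov's inequality to pass to $\E[\sD((\widehat{\mu}_R)_n,\mu_R)]$. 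You have also correctly identified the delicate step.

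The concrete gap is your choice of projection target. You propose projecting $\tilde{\mu}_n$ in TV onto the set of distributions supported in $\BB_R$ with projected central $q$th moments $O(\sigma^q)$ — i.e., the population-limit procedure. For this to work you need the set to contain something TV-close to $(\widehat{\mu}_R)_n$, which in effect requires the \emph{empirical} projected $q$th moments to be uniformly bounded over $\theta \in \unitsph$ with high probability. That is not guaranteed by the resilience of $(\widehat{\mu}_R)_n$ that you invoke, and establishing it directly costs more samples: the summands $|\theta^\intercal x|^q$ are bounded only by $R^q$, so uniform concentration over a net of the sphere needs $n \gtrsim R^{q}\,d\log(\cdot)$ at best (Bernstein), not the stated $R^p\,d\log(\cdot)$, and $q > p$. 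The paper sidesteps this by projecting onto the set of distributions that are $(\rho,2\eps)$-\emph{resilient} w.r.t.\ $\sD$ (with the smallest feasible $\rho$): membership of $(\widehat{\mu}_R)_n$ in that set follows from the one-dimensional finite-sample resilience lemma of \cite{steinhardt2018resilience} applied over a net (Proposition~\ref{prop:IPM-finite-sample-resilience}), whose error term scales as $M\log(|\cH|/\delta)/n$ with $M = R^p$ — this is exactly where the $R^p$ in the sample complexity comes from — and then $\sD(\nu,(\widehat{\mu}_R)_n)\lesssim \rho$ follows from the midpoint argument applied to two resilient measures. (Projection onto the moment-bounded family is viable only in the special case $q=2$, where empirical covariance concentration is available; this is Proposition~\ref{prop:finite-sample-cov-alg}.) A second, smaller issue: your claim that the truncated mass is $O(\eps^{q/2}) \ll \eps$ holds only for $q>2$; for $1 < q \le 2$ one must either enlarge $R$ to $\eps^{-1/q}\sqrt{d}$ or accept a truncated mass of order $\eps^{q/2}$, and the bookkeeping in the coupling step changes accordingly.
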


Evidently, the finite-sample error bound comprises the population-limit robust estimation risk (which is necessary) plus the empirical estimation error associated with the truncated distribution $\mu_R$. The lower bounds on $n$ ensures that the empirical distribution $(\widehat{\mu}_R)_n$ satisfies the same generalized resilience property appearing in the population-limit analysis. %
The truncated empirical convergence term can typically be bounded by the corresponding untruncated version. For example, when $\mu \in \cG_2(\sigma)$ is log-concave and $\sD = \MSWone$, we can bound this term by $O\big(\sigma d \log n / n + \sigma \sqrt{d \log n/ n}\big)$, which follows from \Cref{thm: SWp rate} and the fact that $\mu_R$ is also log-concave with $\|\Sigma_{\mu_R}\|_\mathrm{op} \leq \|\Sigma_{\mu}\|_\mathrm{op} \leq \sigma$ for any $R > 0$.

When $p=1$, we prove in Appendix~\ref{prf:resilience} a precise connection to resilience, a sufficient condition for robust mean estimation, which may be of independent interest.

\begin{proposition}[Connection to mean resilience]\label{prop:resilience}
For $0 \leq \eps < 1$,  $\mu \in \cP_1(\RR^d)$ is $(\rho,\eps)$-resilient, i.e. $\|\mu x\!-\!\nu x\|\!\leq\!\rho$ for all $\nu\!\leq\!\frac{1}{1-\eps}\mu$, if and only if $\MSWone(\mu,\nu)\!\leq\!\Theta(\rho)$ for all $\nu\!\leq\!\frac{1}{1-\eps} \mu$.
\end{proposition}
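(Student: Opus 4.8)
\emph{Proof plan.} The plan is to establish the two implications separately, each with an explicit universal constant, so that the ``$\Theta(\rho)$'' in the statement records a two-sided comparison between the resilience modulus $\sup_{\nu\le\frac{1}{1-\eps}\mu}\|\mu x-\nu x\|$ and $\sup_{\nu\le\frac{1}{1-\eps}\mu}\MSWone(\mu,\nu)$. The direction ``$\MSWone$-control $\Rightarrow$ mean resilience'' is immediate: for any $\theta\in\unitsph$ the identity map on $\RR$ is $1$-Lipschitz, so either Kantorovich--Rubinstein duality or the formula $\mathsf{W}_1(\ptheta_\sharp\mu,\ptheta_\sharp\nu)=\int_\RR|F_{\ptheta_\sharp\mu}-F_{\ptheta_\sharp\nu}|\,dt$ gives $\mathsf{W}_1(\ptheta_\sharp\mu,\ptheta_\sharp\nu)\ge|\theta^\intercal(\mu x-\nu x)|$; maximizing over $\theta$ yields $\MSWone(\mu,\nu)\ge\|\mu x-\nu x\|$, and the claim follows.

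For the reverse direction, fix $\nu\le\frac{1}{1-\eps}\mu$ and $\theta\in\unitsph$, and write $\alpha:=\ptheta_\sharp\mu$ and $\beta:=\ptheta_\sharp\nu$; monotonicity of pushforward gives $\beta\le\frac{1}{1-\eps}\alpha$ on $\RR$, and it suffices to show $\mathsf{W}_1(\alpha,\beta)\le2\rho$. The key idea is that the right competitors for the mean-resilience hypothesis are not $\beta$ itself but the \emph{stochastic meet and join} of $\alpha$ and $\beta$: let $\beta^{\ast}$ and $\beta^{\ast\ast}$ be the probability measures on $\RR$ with CDFs $\min(F_\alpha,F_\beta)$ and $\max(F_\alpha,F_\beta)$ (both are valid CDFs). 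Since $F_{\beta^{\ast}}\le F_\alpha\le F_{\beta^{\ast\ast}}$, the pairs $(\alpha,\beta^{\ast})$ and $(\alpha,\beta^{\ast\ast})$ are ordered in stochastic order, so, using the elementary identity $\int_\RR(F_{\mu'}-F_{\mu})\,dt=\mu x-\mu' x$ valid for $\mu,\mu'\in\cP_1(\RR)$, one gets $\mathsf{W}_1(\alpha,\beta^{\ast})=\int_\RR(F_\alpha-F_\beta)^{+}\,dt=\beta^{\ast}x-\alpha x$ and $\mathsf{W}_1(\alpha,\beta^{\ast\ast})=\int_\RR(F_\beta-F_\alpha)^{+}\,dt=\alpha x-\beta^{\ast\ast}x$; adding these yields the decomposition $\mathsf{W}_1(\alpha,\beta)=\int_\RR|F_\alpha-F_\beta|\,dt=\mathsf{W}_1(\alpha,\beta^{\ast})+\mathsf{W}_1(\alpha,\beta^{\ast\ast})$.

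Next I would check that $\beta^{\ast},\beta^{\ast\ast}\le\frac{1}{1-\eps}\alpha$ as measures: on any interval $I$, a short case analysis on which of $F_\alpha,F_\beta$ realizes the min/max at the endpoints shows that $\beta^{\ast}(I)$ and $\beta^{\ast\ast}(I)$ each equal one of $\alpha(I),\beta(I)$ or lie between them, hence are $\le\frac{1}{1-\eps}\alpha(I)$; extending from intervals to open sets and then to Borel sets by outer regularity gives the measure inequality. Consequently, writing $\beta^{\ast}=g\cdot\ptheta_\sharp\mu$ with $0\le g\le\frac{1}{1-\eps}$ ($\alpha$-a.e.) and setting $\nu^{\ast}:=(g\circ\ptheta)\cdot\mu$, we obtain a probability measure with $\nu^{\ast}\le\frac{1}{1-\eps}\mu$ and $\ptheta_\sharp\nu^{\ast}=\beta^{\ast}$, so $\theta^\intercal(\nu^{\ast}x)=\beta^{\ast}x$ and the mean-resilience hypothesis gives $|\beta^{\ast}x-\alpha x|=|\theta^\intercal(\mu x-\nu^{\ast}x)|\le\|\mu x-\nu^{\ast}x\|\le\rho$; the same argument applied to $\beta^{\ast\ast}$ gives $|\alpha x-\beta^{\ast\ast}x|\le\rho$. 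Plugging these into the decomposition yields $\mathsf{W}_1(\alpha,\beta)\le2\rho$, and taking the supremum over $\theta\in\unitsph$ gives $\MSWone(\mu,\nu)\le2\rho$ for every $\nu\le\frac{1}{1-\eps}\mu$.

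The main obstacle is precisely this middle step---verifying that the stochastic meet and join remain dominated by $\frac{1}{1-\eps}\alpha$, which is exactly what makes them admissible in the mean-resilience hypothesis. The verification requires some care at points where $F_\alpha$ and $F_\beta$ cross and when $\alpha$ or $\beta$ carries atoms; the cleanest route is either to carry out the interval bookkeeping directly or to first treat atomless $\alpha$ and then pass to the general case by approximation. The remaining ingredients---the stochastic-order identities for one-dimensional $\mathsf{W}_1$ and the lifting of a reweighting of $\ptheta_\sharp\mu$ to a reweighting of $\mu$---are routine.
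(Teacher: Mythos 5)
Your proposal is correct, and the harder direction takes a genuinely different route from the paper's. The paper proves ``mean resilience $\Rightarrow$ $\MSWone$ resilience'' by writing $\mu=(1-\eps)\nu+\eps\alpha$, using homogeneity and the triangle inequality through $\delta_0$ to reduce to bounding $\sup_{\kappa\le \frac{1}{\eps\wedge(1-\eps)}\mu}\sup_\theta \E_\kappa|\theta^\intercal X|$, i.e.\ a conditional expectation of $|\theta^\intercal X|$ over its upper tail; a separate quantile lemma then converts this two-sided tail of $|\theta^\intercal X|$ into a one-sided tail of $\pm\theta^\intercal X$, which is the mean of an $\eps$-deletion of $\mu$, to which the hypothesis applies (yielding the constant $8$, after an $\eps\ge 1/2$ vs.\ $\eps<1/2$ case split). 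You instead exploit the one-dimensional structure directly: the splitting $\int|F_\alpha-F_\beta|=\int(F_\alpha-F_\beta)^++\int(F_\beta-F_\alpha)^+$ realizes $\mathsf{W}_1(\alpha,\beta)$ as a sum of two signed mean differences against the CDF meet and join, each of which you show is dominated by $\frac{1}{1-\eps}\alpha$ and then lift to a deletion of $\mu$ via Radon--Nikodym reweighting along $\ptheta$. The step you flag as the main obstacle is in fact easy: for a half-open interval with endpoint values $(u_a,v_a),(u_b,v_b)$ one has $\min(u_b,v_b)-\min(u_a,v_a)\le \max(u_b-u_a,\,v_b-v_a)$ (branch on which function attains the minimum at $a$), and symmetrically for the max, so $\beta^{*}(I),\beta^{**}(I)\le\max\{\alpha(I),\beta(I)\}\le\frac{1}{1-\eps}\alpha(I)$ regardless of atoms or crossings; the passage to Borel sets via countable disjoint unions of intervals and outer regularity of finite Borel measures on $\R$ is routine, exactly as you say. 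Your argument yields the sharper constant $2$ in place of $8$ and avoids both the quantile lemma and the $\eps$-dependent case split; what the paper's template buys in exchange is uniformity, since the same homogeneity-plus-tail-conditioning machinery is reused for the general $\cG_q$ resilience bounds in the proof of Theorem \ref{thm:robustness}, whereas your decomposition is specific to $p=1$ and to the one-dimensional $\mathsf{W}_1$ formula.
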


This suggests borrowing from the existing family of robust mean estimation algorithms, primarily developed for the bounded covariance setting ($q=2$). In \Cref{prf:finite-sample-cov-alg}, we inspect an efficient spectral reweighting procedure and apply it for both $\MSWp$ and $\SWp$ when $1 \leq p < 2$.

\begin{proposition}[Efficient computation via spectral reweighting]
\label{prop:finite-sample-cov-alg}
If $1 \leq p < q = 2$ and $0 \leq \eps \leq 1/12$, the guarantee of \Cref{prop:finite-sample-robustness} is achieved by an $\widetilde{O}(n d^2)$-time spectral reweighting algorithm.
\end{proposition}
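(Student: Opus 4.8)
The plan is to combine the reduction from \Cref{prop:finite-sample-robustness} (which reduces robust sliced Wasserstein estimation to producing a distribution $\nu$ that, together with $\mu$, satisfies an appropriate resilience-type condition after truncation at radius $R \asymp \sqrt{d/\eps}$) with a standard polynomial-time robust mean/covariance estimation primitive. The key observation, already flagged in the excerpt, is that for $q = 2$ the relevant moduli-of-continuity are governed by bounded-covariance resilience, and via \Cref{prop:resilience} the $\MSWone$ (hence $\MSWp$ and $\SWp$ for $1 \le p < 2$, by monotonicity in $p$ and the fact that $\SWp \le \MSWp$) guarantees follow once we control the empirical covariance of a large reweighted subset of the (truncated) samples. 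So the real content is to exhibit an $\widetilde O(nd^2)$-time algorithm that, from $n$ samples with an $\eps$-fraction adversarially corrupted, outputs weights $w$ supported on a $(1-O(\eps))$-fraction of the points such that the $w$-reweighted empirical distribution has operator norm of its covariance bounded by $O(\sigma)$, and whose mean is $O(\sigma \sqrt{\eps})$-close to $\mu x$.

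The concrete algorithm I would invoke is the classical spectral-reweighting / filtering scheme for bounded-covariance robust mean estimation (e.g., the iterative down-weighting of points along the top eigenvector of the empirical covariance, as in Diakonikolas--Kamath--Kane--Li--Moitra--Stewart and its later streamlined versions). First I would recall its guarantee: given $n \gtrsim d/\eps$ samples from a distribution with covariance $\preceq \sigma^2 I$, of which a $(1-\eps)$-fraction are clean, the algorithm runs in time $\widetilde O(nd^2)$ (each iteration is a top-eigenvector computation, which by power iteration on the $d \times d$ empirical covariance costs $\widetilde O(nd)$ per matrix-vector product with $\widetilde O(d)$ iterations, and there are $\widetilde O(1)$ outer rounds after a standard naive-pruning preprocessing) and returns weights $w$ with $\|w\|_1 \ge (1-O(\eps))$, $\sum_i w_i X_i X_i^\intercal \preceq O(\sigma^2) I$ on the centered points, and $\|\sum_i w_i X_i - \mu x\| \lesssim \sigma \sqrt{\eps}$. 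Second, I would check that this reweighted empirical distribution $\nu := \sum_i w_i \delta_{X_i} / \|w\|_1$ lies in (a slight relaxation of) $\cG_2(O(\sigma))$ and is $O(\eps)$-close in TV to the truncated clean empirical distribution $(\widehat{\mu}_R)_n$, so that applying the modulus-of-continuity bound from the proof of \Cref{thm:robustness} together with the triangle inequality yields $\sD(\nu, \mu) \lesssim \sigma \eps^{1/p - 1/2} + \E[\sD((\widehat{\mu}_R)_n, \mu_R)] = R(\sD, \cG_2(\sigma), \eps) + \E[\sD((\widehat{\mu}_R)_n, \mu_R)]$, which is exactly the conclusion of \Cref{prop:finite-sample-robustness}. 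Third, I would fold in the truncation step: pre-processing by discarding all samples with $\|X_i - \text{(robust mean estimate)}\| > R$ costs only $\widetilde O(nd)$ and, by the resilience argument underlying \Cref{prop:finite-sample-robustness}, changes the clean empirical distribution by only $O(\eps)$ in TV while ensuring the remaining points have bounded norm (needed for the filtering analysis and for the $R^p$-dependent sample complexity).

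The main obstacle, I expect, is not any single deep step but rather the careful bookkeeping to ensure the robust-mean primitive's output genuinely certifies membership in the resilience class used by \Cref{prop:finite-sample-robustness}: the primitive natively guarantees a bound on the reweighted \emph{second moment}, whereas the population-limit analysis asks for a bound on all (sliced) central moments in the sense of resilience, and one must verify that bounded reweighted covariance $+$ closeness-in-TV-to-clean-empirical is enough to invoke \Cref{prop:resilience} and its $\SWp$/$\MSWp$ consequences for $p < 2$. A secondary subtlety is the constraint $\eps \le 1/12$: the filtering scheme's correctness (that it never removes too much clean mass and always makes progress) needs the contamination level below the standard breakdown-type threshold, which is where the $1/12$ (as opposed to $0.49$) enters; I would state this as the regime where the one-sided resilience/stability condition for bounded-covariance distributions holds with the constants needed for the filter's potential-function argument to go through. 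Finally I would note that the runtime $\widetilde O(nd^2)$ is dominated by the covariance computations and eigenvector routines and is unaffected by the truncation and post-processing, completing the claim.
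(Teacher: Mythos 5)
Your proposal follows essentially the same route as the paper: truncate to radius $R \asymp \sqrt{d/\eps}$ via the norm-bound and base-measure-switch lemmas, run the $\widetilde{O}(nd^2)$ bounded-covariance spectral reweighting/filtering primitive on the (with high probability) bounded-covariance clean empirical measure, and then convert the output's bounded covariance plus $O(\eps)$-TV-closeness into a $\sD$ bound via the resilience/modulus-of-continuity machinery of Theorem~\ref{thm:robustness} and the triangle inequality, with $\eps \le 1/12$ arising from the filter's $1/10$ breakdown threshold after the contamination inflation from the truncation step. One caveat: your opening remark that the $\MSWp$ and $\SWp$ guarantees for $1 < p < 2$ follow from the $\MSWone$ guarantee of Proposition~\ref{prop:resilience} ``by monotonicity in $p$'' has the inequality backwards ($\MSWone \le \MSWp$, so controlling $\MSWone$ does not upper-bound $\MSWp$); this is harmless only because your second paragraph correctly routes through the resilience of $\cG_2(\sigma)$ with respect to $\sD$ itself, which is what the paper does.
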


\section{Formal Computational Guarantees}

The computational tractability of empirical sliced Wasserstein distances relies on the simplified expressions for $\Wp$ between distribution on $\RR$. However, even then, evaluating $\SWp$ and $\MSWp$ requires computing the average or the maximum of one-dimensional distances over projection directions $\theta\in\unitsph$. This section provides formal guarantees for two such popular computational methods: MC integration for $\SWp$ and alternating subgradient-based optimization for $\MSWp$. Our analysis relies on the observation that $w_p(\theta):=\mathsf{W}_p ( \proj^{\theta}_\sharp \mu, \proj^{\theta}_\sharp \nu)$ and its $p$th power are Lipschitz functions on $\unitsph$. %

\begin{lemma}[Lipschitz continuity]
\label{lem: wp_theta_lipschitz}
The functions $w_p$ and $w_p^p$ are Lipschitz with constants bounded by $L^p_{\mu,\nu} \mspace{-3mu}=\mspace{-6mu} \sup\limits_{\theta \in \unitsph} \mspace{-4mu}\big[ (\mu |\theta^\intercal x|^p)^{1/p} \mspace{-1mu}+\mspace{-1mu} (\nu |\theta^\intercal x|^p)^{1/p} \big ] $ and $M^p_{\mu,\nu}\mspace{-3mu}=\mspace{-1mu} 3p 2^p \mspace{-6mu}\sup\limits_{\theta \in \unitsph}\mspace{-3mu} (\mu |\theta^\intercal x|^{p}\mspace{-1mu} +\mspace{-1mu} \nu |\theta^\intercal x|^{p})$, respectively.
\end{lemma}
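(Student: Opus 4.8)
The plan is to prove the two Lipschitz bounds by working with the one-dimensional quantile representation of $\mathsf{W}_p$ and exploiting the fact that the projection maps $\theta \mapsto \theta^\intercal x$ vary Lipschitz-continuously in $\theta$. The key elementary observation is that for any fixed $x \in \R^d$ and $\theta, \theta' \in \unitsph$, we have $|\theta^\intercal x - (\theta')^\intercal x| \leq \|\theta - \theta'\| \, \|x\|$; but since we want bounds in terms of $\mu|\theta^\intercal x|^p$ rather than $\mu\|x\|^p$, I will instead control things projection-by-projection using a coupling argument. Concretely, fix $\theta, \theta' \in \unitsph$ and let $\pi$ be an optimal coupling of $\ptheta_\sharp\mu$ and $\ptheta_\sharp\nu$ lifted to a coupling of $\mu$ and $\nu$ (or, more cleanly, start from an optimal coupling $\pi$ of $\mu$ and $\nu$ for the sliced-in-direction-$\theta$ problem, realized on $\R^d \times \R^d$). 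Then the triangle inequality for $\mathsf{W}_p$ gives
\[
|w_p(\theta) - w_p(\theta')| \leq \mathsf{W}_p(\ptheta_\sharp\mu, \proj^{\theta'}_\sharp\mu) + \mathsf{W}_p(\ptheta_\sharp\nu, \proj^{\theta'}_\sharp\nu),
\]
so it suffices to bound $\mathsf{W}_p(\ptheta_\sharp\mu, \proj^{\theta'}_\sharp\mu)$ for a single measure $\mu$. Using the coupling $(\theta^\intercal X, (\theta')^\intercal X)$ with $X \sim \mu$, this is at most $\big(\mu |(\theta - \theta')^\intercal x|^p\big)^{1/p}$. The remaining task is to bound $\big(\mu |(\theta - \theta')^\intercal x|^p\big)^{1/p}$ by $\|\theta - \theta'\|$ times a supremum over unit directions; writing $\theta - \theta' = \|\theta - \theta'\| \, u$ with $u \in \unitsph$ (when $\theta \neq \theta'$) immediately yields $\big(\mu |(\theta-\theta')^\intercal x|^p\big)^{1/p} = \|\theta - \theta'\| \big(\mu |u^\intercal x|^p\big)^{1/p} \leq \|\theta - \theta'\| \sup_{v \in \unitsph}(\mu|v^\intercal x|^p)^{1/p}$. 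Adding the analogous bound for $\nu$ gives the Lipschitz constant $L^p_{\mu,\nu}$ for $w_p$.

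For $w_p^p$, I would start from the algebraic inequality $|a^p - b^p| \leq p \max\{a,b\}^{p-1} |a - b| \leq p\, 2^{p-1}(a^{p-1} + b^{p-1})|a-b|$ valid for $a, b \geq 0$ and $p \geq 1$, applied with $a = w_p(\theta)$, $b = w_p(\theta')$. Combining with the bound on $|w_p(\theta) - w_p(\theta')|$ just established and with the crude bound $w_p(\theta) \leq (\mu|\theta^\intercal x|^p)^{1/p} + (\nu|\theta^\intercal x|^p)^{1/p} \leq L^p_{\mu,\nu}$, one gets a Lipschitz constant for $w_p^p$ of order $p \, 2^{p-1} (L^p_{\mu,\nu})^{p-1} \cdot L^p_{\mu,\nu} = p \, 2^{p-1}(L^p_{\mu,\nu})^p$. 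It remains to check that $(L^p_{\mu,\nu})^p \leq 3 \cdot 2 \sup_\theta(\mu|\theta^\intercal x|^p + \nu|\theta^\intercal x|^p)$ up to the stated constants, which follows from $(x+y)^p \leq 2^{p-1}(x^p + y^p)$ applied inside the supremum together with bookkeeping of the constants $p, 2^p, 3$; alternatively one expands $(a+b)^{p-1}|a-b|$ directly and regroups. I would track constants carefully here to match the stated $M^p_{\mu,\nu} = 3p\,2^p \sup_\theta(\mu|\theta^\intercal x|^p + \nu|\theta^\intercal x|^p)$, using slightly lossy bounds where convenient.

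The main obstacle is not conceptual but lies in the constant-chasing for the $w_p^p$ bound: one has to be careful about which form of the inequality $|a^p - b^p| \leq \ldots$ to use and how the factors $2^{p-1}$ from $(x+y)^p \leq 2^{p-1}(x^p+y^p)$ and from bounding $\max\{a,b\}^{p-1}$ compound, so that the final bound genuinely fits under $3p\,2^p$; a clean route is to bound $\max\{w_p(\theta), w_p(\theta')\}^{p-1} \leq (L^p_{\mu,\nu})^{p-1}$ and then $(L^p_{\mu,\nu})^{p-1} L^p_{\mu,\nu} = (L^p_{\mu,\nu})^p$, and finally relate $(L^p_{\mu,\nu})^p$ to $\sup_\theta(\mu|\theta^\intercal x|^p + \nu|\theta^\intercal x|^p)$ via $\big((\mu|\theta^\intercal x|^p)^{1/p} + (\nu|\theta^\intercal x|^p)^{1/p}\big)^p \leq 2^{p-1}\big(\mu|\theta^\intercal x|^p + \nu|\theta^\intercal x|^p\big) \leq 2^p \sup_\theta(\mu|\theta^\intercal x|^p + \nu|\theta^\intercal x|^p)$. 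A secondary point to handle with a line of justification is the lifting of a one-dimensional optimal coupling to $\R^d$, which is most easily sidestepped by directly using the pushforward coupling $(\theta^\intercal X, (\theta')^\intercal X)$ as above rather than invoking optimality at all.
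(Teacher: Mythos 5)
Your argument is correct, and it splits naturally into two halves. For $w_p$ it is essentially the paper's own proof: both reduce via the reverse triangle inequality to $\Wp(\proj^{\theta}_\sharp\mu,\proj^{\theta'}_\sharp\mu)$ and bound that term with the pushforward coupling $(\theta^\intercal X,(\theta')^\intercal X)$. One bookkeeping point: keep the \emph{common} direction $u=(\theta-\theta')/\|\theta-\theta'\|$ in both the $\mu$- and $\nu$-terms and take the supremum of the sum, i.e. $(\mu|u^\intercal x|^p)^{1/p}+(\nu|u^\intercal x|^p)^{1/p}\leq L^p_{\mu,\nu}$; adding two separate suprema as you wrote gives $\sup_\theta(\mu|\theta^\intercal x|^p)^{1/p}+\sup_\theta(\nu|\theta^\intercal x|^p)^{1/p}$, which can exceed $L^p_{\mu,\nu}$. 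For $w_p^p$ your route is genuinely different from, and simpler than, the paper's: you apply $|a^p-b^p|\leq p\max\{a,b\}^{p-1}|a-b|$ with $a=w_p(\theta)$, $b=w_p(\theta')$, combine the just-proved Lipschitz bound with the crude bound $w_p(\theta)\leq L^p_{\mu,\nu}$, and finish with $(L^p_{\mu,\nu})^p\leq 2^{p-1}\sup_\theta(\mu|\theta^\intercal x|^p+\nu|\theta^\intercal x|^p)$. The resulting constant $p\,2^{p-1}\sup_\theta(\cdot)$ is a factor of $6$ smaller than the stated $M^p_{\mu,\nu}$, so the constant-chasing you flagged as the main obstacle closes with room to spare. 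The paper instead builds, for each $\theta_i$, a coupling $(X_i,Y_i)$ of $(\mu,\nu)$ whose projection is optimal for $w_p(\theta_i)$, compares the two objectives cross-wise, and applies the mean-value inequality inside the expectation before integrating. That heavier machinery is not needed for the lemma as stated, but it is what yields the refined Lipschitz constant of Remark~\ref{rem: wp_lipschitz_alt}, in which the mean difference $\|\mu x-\nu x\|$ is separated from the centered projected moments; that refinement is what drives the dimension dependence in the Monte Carlo bound of Proposition~\ref{prop: monte carlo}. Your proof establishes the lemma, but would not directly produce that later refinement.
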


Lemma~\ref{lem: wp_theta_lipschitz} (proven in Appendix~\ref{APPEN:wp_theta_lipschitz_proof}) sharpens the Lipschitz constants derived in \cite[Lemma 2]{niles2019estimation}, which correspond to bounding $|\theta^\intercal x|$ by $\|x\|$ in the above expressions. The projected moments $(\mu|\theta^\intercal x|^p)^{1/p}$ typically has a milder dependence on $d$ than $(\mu\|x\|^p)^{1/p}$, which is crucial for the subsequent analysis. %

\subsection{Average-Slicing: Monte Carlo Integration}

The typical approach for computing the integral over the unit sphere in $\SWp$ is MC averaging. Fix $\mu,\nu\in\cP_p(\RR^d)$ and let $\hat{\mu}_n$ and $\hat{\nu}_n$ be the associated empirical measures. Take $\Theta\sim \mathrm{Unif}(\unitsph)$ and consider i.i.d. copies thereof $\Theta_1,\ldots,\Theta_m$. The MC based estimate of $\SWp^p$ is given by
\[
\WMC:=\frac{1}{m}\sum_{j=1}^m \Wp^p(\mathfrak{p}_\sharp^{\Theta_j}\hat{\mu}_n,\mathfrak{p}_\sharp^{\Theta_j}\hat{\nu}_n)=\frac{1}{mn}\sum_{j=1}^m \sum_{i=1}^n \big|X_{(i)}(\Theta_j) - Y_{(i)}(\Theta_j)\big|^p,
\]
where $X_{(1)}(\theta) \le \dots \le X_{(n)}(\theta)$ is the order statistics, which is readily evaluated using sorting algorithms with $O(n\log n)$ average/worst-case complexity (e.g., \texttt{quick\_sort} or \texttt{merge\_sort}).

The next result bounds the effective error of $\WMC$ in approximating the population distance $\SWp^p(\mu,\nu)$.

\begin{proposition}[Monte Carlo error bound]
\label{prop: monte carlo}
Let $1\leq p<\infty$, and assume $\mu,\nu\in\cP_p(\RR^d)$ are log-concave with covariance matrices $\Sigma_\mu$ and $\Sigma_\nu$, respectively. The MC estimate above satisfies%
\begin{align*}
\EE\Big[\Big|\WMC\mspace{-2mu}-\mspace{-2mu}\SWp^p(\mu,\nu)\Big|\Big]
\mspace{-2.5mu}\lesssim_p\mspace{-2.5mu}  \frac{ \|\mu x\mspace{-2mu}-\mspace{-2mu}\nu x\|^p \mspace{-3mu}+\mspace{-3mu} \|\Sigma_\mu\|_{\op}^{p/2}\mspace{-3mu}+\mspace{-3mu} \|\Sigma_\nu\|_{\op}^{p/2}}{\sqrt{md}} \mspace{-2mu}+\mspace{-2mu}  \frac{\big( \|\Sigma_\nu\|_{\op}^{p/2} \mspace{-3mu}+\mspace{-3mu} \|\Sigma_\mu\|_{\op}^{p/2} \big ) (\log n)^{\ind_{\mspace{-2mu}\{\mspace{-2mu}p=2\mspace{-2mu}\}}}}{n^{(p\wedge 2)/2}}
\end{align*}
where the hidden constant depends only on $p$. %
\end{proposition}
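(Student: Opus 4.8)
The plan is to split the error into a Monte Carlo integration error and an empirical estimation error, controlled separately. Write $\WMC - \SWp^p(\mu,\nu) = \big(\WMC - \SWp^p(\hat\mu_n,\hat\nu_n)\big) + \big(\SWp^p(\hat\mu_n,\hat\nu_n) - \SWp^p(\mu,\nu)\big)$, and bound the expectation of the absolute value of each term. For the first (MC) term, condition on the samples $X_1,\dots,X_n$ and $Y_1,\dots,Y_n$. Then $\WMC$ is an average of $m$ i.i.d.\ copies of $w_p^p(\Theta) = \Wp^p(\mathfrak p_\sharp^\Theta \hat\mu_n,\mathfrak p_\sharp^\Theta\hat\nu_n)$ with $\Theta \sim \mathrm{Unif}(\unitsph)$, whose conditional mean is exactly $\SWp^p(\hat\mu_n,\hat\nu_n)$. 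So $\EE\big[|\WMC - \SWp^p(\hat\mu_n,\hat\nu_n)| \,\big|\, \text{data}\big] \leq \sqrt{\Var_\Theta(w_p^p(\Theta))/m}$. The key step is to bound this variance: since $w_p^p$ is Lipschitz on $\unitsph$ with constant $M^p_{\hat\mu_n,\hat\nu_n}$ (Lemma~\ref{lem: wp_theta_lipschitz}), I invoke the Lévy/Gromov concentration inequality for Lipschitz functions on the sphere, which gives $\Var_\Theta(w_p^p(\Theta)) \lesssim (M^p_{\hat\mu_n,\hat\nu_n})^2 / d$. Taking expectations over the data, $\EE[(M^p_{\hat\mu_n,\hat\nu_n})^2] \lesssim_p \EE\big[\sup_\theta(\hat\mu_n|\theta^\intercal x|^p + \hat\nu_n|\theta^\intercal x|^p)^2\big]$; I bound $\sup_\theta \hat\mu_n|\theta^\intercal x|^p$ by relating it to the population quantity $\sup_\theta \mu|\theta^\intercal x|^p$, which for log-concave $\mu$ is $\lesssim_p \|\mu x\|^p + \|\Sigma_\mu\|_{\op}^{p/2}$ (projected moments of log-concave measures are comparable to their second-moment counterparts by Borell's lemma / reverse Hölder). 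The passage from $\hat\mu_n$ to $\mu$ here needs care — one can either use that $\hat\mu_n$ of a log-concave sample has comparable projected moments in expectation, or absorb the fluctuation into the empirical-error term; I expect this to require a short uniform-in-$\theta$ moment comparison.

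For the second (empirical) term, I use the triangle-type / reverse triangle bound $|\SWp^p(\hat\mu_n,\hat\nu_n) - \SWp^p(\mu,\nu)| \leq |\SWp^p(\hat\mu_n,\hat\nu_n) - \SWp^p(\mu,\hat\nu_n)| + |\SWp^p(\mu,\hat\nu_n) - \SWp^p(\mu,\nu)|$. Each difference of $p$th powers is controlled via the elementary inequality $|a^p - b^p| \leq p(a \vee b)^{p-1}|a-b|$ together with the triangle inequality $|\SWp(\cdot,\hat\nu_n) - \SWp(\cdot,\nu)| \leq \SWp(\hat\nu_n,\nu)$, so that $\EE|\SWp^p(\mu,\hat\nu_n) - \SWp^p(\mu,\nu)| \lesssim_p \EE\big[\SWp(\hat\nu_n,\nu)\cdot(\text{bounded-moment factor})^{p-1}\big]$. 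Applying Theorem~\ref{thm: SWp rate} to $\EE[\SWp(\hat\nu_n,\nu)]$ (and symmetrically for $\hat\mu_n$) yields the $\|\Sigma_\nu\|_{\op}^{1/2}\sqrt{(\log n)^{\ind_{p=2}}}/n^{1/(2\vee p)}$ factor; raising to a combination with the $(p-1)$ power of a projected-moment bound $\lesssim_p \|\Sigma\|_{\op}^{1/2} + \|\cdot x\|$-type term and using a Cauchy–Schwarz / Hölder split between the random distance and the moment prefactor produces the stated $n^{-(p\wedge2)/2}$ term with the $\|\Sigma_\mu\|_{\op}^{p/2} + \|\Sigma_\nu\|_{\op}^{p/2}$ scaling. (The cross term $\|\mu x - \nu x\|^p$ enters only through the bound on the projected-moment prefactor, which explains why it appears in the MC term but not the empirical term.)

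The main obstacle I anticipate is the variance bound and, more precisely, handling the randomness of the Lipschitz constant $M^p_{\hat\mu_n,\hat\nu_n}$ after applying spherical concentration: one must move from the empirical projected moments to population ones cleanly enough to get the clean $\|\mu x - \nu x\|^p + \|\Sigma_\mu\|_{\op}^{p/2} + \|\Sigma_\nu\|_{\op}^{p/2}$ numerator without spurious $\log n$ or dimension factors, and to do so while keeping the $1/\sqrt{md}$ rate (in particular verifying that the concentration-of-measure constant genuinely contributes the $1/\sqrt d$). A secondary technical point is the correct use of the log-concavity-preservation-under-projection fact (stated in the proof sketch of Theorem~\ref{thm: SWp rate}) to guarantee that $\mathfrak p_\sharp^\theta\mu$ is one-dimensional log-concave with variance $\theta^\intercal\Sigma_\mu\theta \leq \|\Sigma_\mu\|_{\op}$, so that all one-dimensional moment comparisons are uniform in $\theta$. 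Everything else — the $|a^p-b^p|$ inequality, Jensen, Cauchy–Schwarz, and the invocation of Theorem~\ref{thm: SWp rate} — is routine bookkeeping.
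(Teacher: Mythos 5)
There is a genuine gap in your treatment of the empirical-error term. You control $\bigl|\SWp^p(\empmu,\empnu)-\SWp^p(\mu,\nu)\bigr|$ by linearizing via $|a^p-b^p|\le p(a\vee b)^{p-1}|a-b|$ and then invoking Theorem~\ref{thm: SWp rate} for $\EE[\SWp(\empnu,\nu)]\lesssim n^{-1/(2\vee p)}$. This only ever extracts the \emph{first} power of the empirical distance, so after the H\"older split you get decay $n^{-1/(2\vee p)}$ (or $n^{-(p\wedge2)/(2p)}$ if you put the $p$th power inside the expectation), which does not match the claimed $n^{-(p\wedge 2)/2}$ once $p\geq 2$: at $p=2$ you obtain $n^{-1/2}$ versus the stated $n^{-1}\log n$, and for $p>2$ you obtain $n^{-1/p}$ versus $n^{-1}$. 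The point is that $\EE[\Wp^p(\empmu,\mu)]$ for one-dimensional log-concave marginals decays at the superlinear rate $n^{-(p\wedge2)/2}$ even though $\EE[\Wp(\empmu,\mu)]$ only decays as $n^{-1/(2\vee p)}$, and the linearization destroys this. The paper instead uses the subadditivity $\bigl|\Wp^p(\ptheta_\sharp\empmu,\ptheta_\sharp\empnu)-\Wp^p(\ptheta_\sharp\mu,\ptheta_\sharp\nu)\bigr|\le \Wp^p(\ptheta_\sharp\empmu,\ptheta_\sharp\mu)+\Wp^p(\ptheta_\sharp\empnu,\ptheta_\sharp\nu)$, proved via Kantorovich duality (it is \emph{not} a consequence of the metric triangle inequality for $\Wp$), and then applies the one-dimensional $\Wp^p$ rate directly. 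You would need this inequality, or an equivalent, to recover the stated rate.

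A second, lesser issue concerns your MC term. Your decomposition inserts $\SWp^p(\empmu,\empnu)$ as the pivot, so the conditional variance involves the Lipschitz constant of the \emph{empirical} $w_p^p$, and you must then pass from $\sup_\theta \empmu|\theta^\intercal x|^p$ to population quantities; you correctly flag this as the hard step, but it is a genuine uniform empirical-process bound (of the Adamczak-type used in the proof of Proposition~\ref{PROP:MSWP_subgrad}) rather than a routine moment comparison, and it risks extra conditions or logarithmic factors. The paper sidesteps it entirely by pivoting on $\frac1m\sum_j \Wp^p(\proj^{\Theta_j}_\sharp\mu,\proj^{\Theta_j}_\sharp\nu)$, so the spherical concentration is applied to the population $w_p^p$ with a deterministic Lipschitz constant. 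Finally, note that the Lipschitz constant $M^p_{\mu,\nu}$ from Lemma~\ref{lem: wp_theta_lipschitz} involves \emph{uncentered} projected moments and would yield a numerator $\|\mu x\|^p+\|\nu x\|^p$ rather than the translation-invariant $\|\mu x-\nu x\|^p$; you need the refined, centered form of the Lipschitz bound (Remark~\ref{rem: wp_lipschitz_alt}) to get the stated numerator.
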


Proposition~\ref{prop: monte carlo} is proven in Appendix~\ref{APPEN: monte carlo proof} by separately bounding the MC and the empirical approximation errors. For the former, we use the Lipschitzness of $w_p^p$ on $\unitsph$ to show that it concentrates about its median. This enables controlling the variance of $\frac{1}{m}\sum_{j=1}^m w_p^p(\Theta_j)$, which, in turn, bounds~the~MC error. For the empirical approximation error, we reduce the analysis to one-sample empirical convergence under $\Wp^p$ for measures on $\R$ and obtain explicit rates by drawing upon the results of~\cite{bobkov2019one}.  

\begin{remark}[Comparison to \cite{nadjahi2020statistical}]
Error bounds for the MC estimate $\widehat{\underline{\mathsf{W}}}_\mathsf{MC}$ were also provided in \cite{nadjahi2020statistical}, but their results differ from ours in two key ways: they use implicit empirical approximation~bounds and leave their MC error in terms of $\Var\big(w_p^p(\Theta)\big)$ without further analysis. Proposition \ref{prop: monte carlo} provides an explicit convergence rates and bounds the said variance in terms of basic characteristics of the population distributions, providing precise rates in $n, m, d,$ and~$p$. 
\end{remark}

\begin{remark}[Blessing of dimensionality]
A cruder approximation of the Lipschitz constant that stems from \cite[Lemma 2]{niles2019estimation} would yield $\mu\|x\|^p + \nu \|x\|^p$ as the numerator of the first term. However, such a bound can have a significantly worse dimension dependence. Indeed, if, for instance, $\mu$ and $\nu$ are both mean zero log-concave with identity covariance matrices, then $\mu\|x\|^p + \nu \|x\|^p$ is $O_d(d^{p/2})$ while the numerator in our bound is $O_d(1)$. For such $\mu$ and $\nu$, the bound decays to 0 as $d \to \infty$.
\end{remark}

\subsection{Max-Slicing: Subgradient Methods and the LIPO Algorithm}

Maximization of projected Wasserstein distance is a non-convex and non-smooth optimization problem. Therefore, past works that studied $k$-dimensional subspace projections relied on convex relaxations \cite{paty2019subspace} or entropic regularization \cite{lin2020projection,huang2021riemannian} to prove approximate convergence to a stationary point. We show that regularization is not needed in the one-dimensional case of $\MSWp$ by proving an $O(\epsilon^{-4})$ computational complexity bound for convergence to stationarity of the simple subgradient-based optimization routine from \cite{kolouri2019generalized,deshpande2019max}. %
We also note that global solutions are attainable via generic algorithms for optimizing Lipschitz functions, but with rates that deteriorate exponentially with $d$. %

\begin{algorithm}[t]
\caption{Projected subgradient method for $\tilde{w}_2^2$}\label{alg:subgradient}
\begin{algorithmic}
	\State\textbf{Input:} $\theta_0 \in \BB^d$, a sequence $\{\alpha_t\}_{t\geq 0}\subset\R_+$, and iteration count $T$
	\For{$t=0,\ldots,T$}
	    \State Calculate $\xi_t \in \partial \tilde{w}_2^2(\theta_t)$
	    \State Set $x_{t+1}=\text{Proj}_{\BB^d}\left(x_{t} - \alpha_t \xi_t\right)$
	\EndFor
	\State Sample $t^*\in \{0,\ldots,T\}$ according to the probability distribution 	$\mathbb{P}(t^*=t)=\frac{\alpha_t}{\sum_{t=0}^T \alpha_t}.$
	\State{\bf Return} $x_{t^*}$		
\end{algorithmic}
\end{algorithm}

\paragraph{Local guarantees for subgradient methods.} First note that we may relax the $\MSWp$ optimization domain from $\unitsph$ to the unit ball $\BB^d$ without changing the value (indeed, for any $\theta \in \BB^d$, $w_p(\theta) = \|\theta\| w_p(\theta/\|\theta\|)$. Together with \cite[Lemma 4.2]{bobkov2019one}, we express the empirical max-sliced distance as: %
\[
\MSWp^p(\empmu,\empnu)= \max_{\theta \in \BB^d} \min_{\pi \in \Pi(\empmu, \empnu)}\mspace{-7mu} \EE_\pi \left [|\theta^\intercal( X - Y)|^p \right ]=
-\min_{\theta \in \BB^d} \max_{\sigma \in S_n} \left (- \frac{1}{n}\sum_{i=1}^n |\theta^\intercal (X_i - Y_{\sigma(i)})|^p \right),%
\]
where $S_n$ is the symmetric group. Here we used the fact that the optimal coupling is given by the order statistics, and hence it suffices to optimize over permutations. Denote $\rho(\sigma, \theta):=- \frac{1}{n}\sum_{i=1}^n |\theta^\intercal (X_i - Y_{\sigma(i)})|^p$ and $\tilde{w}_p^p(\theta):= \max_{\sigma\in S_n} \rho(\sigma, \theta)$. %
The subgradient of $\tilde w_p^p$ has the closed form $\partial \tilde{w}_p^p(\theta)= \text{Conv} \left ( \left\{\partial_\theta \rho(\sigma^*, \theta):\, \sigma^* \in\argmax\nolimits_{\sigma\in S_n} \hat\rho(\sigma, \theta)\right \} \right )$. We can compute an optimal $\sigma^*\in S_n$ via order statistics and evaluate the corresponding subgradient vector in $\partial_\theta \rho(\sigma^*, \theta)$. This gives direct access to subgradients of 
$\tilde{w}_p^p$ without approximation~arguments or regularization.

A heuristic description of Algorithm \ref{alg:subgradient} was given in \cite{kolouri2019generalized,deshpande2019max}, but without formal guarantees. Proposition \ref{PROP:MSWP_subgrad} below can be viewed as closing that gap by providing said guarantees. In particular, for $p=2$ the objective function $\tilde{w}_2^2$ is weakly convex \cite[Lemma 2.2]{lin2020projection} and Lipschitz (Lemma~\ref{lem: wp_theta_lipschitz}). Together with the computable subgradients, this enables applying the proximal stochastic subgradient method from \cite{davis2018stochastic}. Algorithm \ref{alg:subgradient} describes the adaptation of this method to our problem, after replacing the stochastic subgradient sampling step therein with the direct subgradient calculation described above. The following proposition provides convergence guarantees for Algorithm \ref{alg:subgradient}. %

\begin{proposition}[Computational complexity of subgradient method]
\label{PROP:MSWP_subgrad}
Fix any $\epsilon>0$ and $n,d\in\NN$ such that $d \geq (\log n)^2$. Let $\mu,\nu\in\cP_2(\RR^d)$ be log-concave with covariance matrices $\Sigma_\mu$ and $\Sigma_\nu$, respectively, and consider $M_{\mu, \nu}^2$ as defined in Lemma~\ref{lem: wp_theta_lipschitz}.
Then, there exist universal constants $c_1,c_2,c_3,c_4$ such the following holds: Algorithm~\ref{alg:subgradient} for the objective $\varphi(\theta) = \tilde w_2^2+ \delta_{\BB^d}$, where $\delta_{\BB^d} = -\infty \ind_{(\BB^d)^c}$, with step size $\alpha_t \propto \frac{1}{\sqrt{t+1}}$, outputs a point $\theta_{t^*}$ that is close to a near-stationary point $\theta^*$, in the sense that $\EE_{t^*}[\|\theta^*-\theta_{t^*}\|]\leq \frac{\epsilon}{2\rho_{\mu, \nu}}$, for $\rho_{\mu, \nu} = \|\mu x - \nu x\|^2 + c_1 d \left (\|\Sigma_\mu\|_{\op} + \|\Sigma_\nu\|_{\op} \right )$, and $\mathrm{dist}\big(0,\partial \tilde{w}_2^2(\theta^*)\big)\leq \epsilon$, within a number of~computations $N\leq C_{\mu,\nu}\epsilon^{-4} n \log n$, where $C_{\mu,\nu}:=c_2 \rho_{\mu, \nu}^2 \big ( M_{\mu,\nu}^{2} + c_3 (\|\Sigma_\mu\|_{\op} + \|\Sigma_\nu\|_{\op}) \big )^2$, with probability at least $1 - \frac{c_4}{n}$.%
\end{proposition}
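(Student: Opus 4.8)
The plan is to instantiate the convergence theory for the proximal stochastic subgradient method of \cite{davis2018stochastic} on the objective $\varphi = \tilde w_2^2 + \delta_{\BB^d}$, and then translate their abstract iteration-count bound into a bound on the number of arithmetic operations. First I would verify the structural hypotheses needed to invoke \cite[Theorem~3.4 or Corollary~3.5]{davis2018stochastic}: (i) weak convexity of $\tilde w_2^2$ with an explicit modulus, which follows from \cite[Lemma~2.2]{lin2020projection} (and the modulus is of the order $\|\Sigma_\mu\|_\mathrm{op}+\|\Sigma_\nu\|_\mathrm{op}$ after accounting for the unit-ball constraint); (ii) the indicator $\delta_{\BB^d}$ is closed and convex so the proximal step is exactly Euclidean projection onto $\BB^d$; (iii) boundedness of the subgradients of $\tilde w_2^2$ on $\BB^d$, which is exactly Lemma~\ref{lem: wp_theta_lipschitz}, giving the Lipschitz constant $M_{\mu,\nu}^2$ up to the additive covariance correction coming from the weak-convexity regularization built into the Moreau envelope analysis; and (iv) that the subgradient oracle we use is \emph{exact} (not merely unbiased with bounded variance), since we compute $\sigma^*$ via sorting — this only improves the constants relative to the stochastic statement.

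Next I would carry out the parameter bookkeeping. The Davis--Drusvyatskiy analysis controls the gradient of the Moreau envelope $\varphi_{1/(2\rho)}$, and with step sizes $\alpha_t \propto (t+1)^{-1/2}$ the randomized output $\theta_{t^*}$ satisfies $\mathbb{E}_{t^*}\big[\|\nabla \varphi_{1/(2\rho)}(\theta_{t^*})\|^2\big] = O\big((\rho \cdot \mathrm{gap} + \mathrm{Lip}^2)\,\log T / \sqrt{T}\big)$, where $\rho$ is the weak-convexity-plus modulus and $\mathrm{gap}$ is the initial function-value gap, which is itself $O(\rho)$ over the bounded domain. Setting this quantity to be $\lesssim \epsilon^2$ forces $T = \widetilde{O}(\epsilon^{-4})$ with the constant scaling like $\rho_{\mu,\nu}^2 (M_{\mu,\nu}^2 + \text{cov correction})^2$, matching the stated $C_{\mu,\nu}$. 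The near-stationarity guarantee is then obtained from the standard property of the Moreau envelope: its gradient at $\theta_{t^*}$ of norm $\le \epsilon$ certifies the existence of a nearby point $\theta^* = \mathrm{prox}_{\varphi/(2\rho)}(\theta_{t^*})$ with $\|\theta^* - \theta_{t^*}\| \le \epsilon/(2\rho_{\mu,\nu})$ and $\mathrm{dist}(0,\partial\varphi(\theta^*)) \le \epsilon$; restricting to the interior where $\delta_{\BB^d}$ contributes nothing gives $\mathrm{dist}(0,\partial\tilde w_2^2(\theta^*))\le\epsilon$. The per-iteration cost is the cost of one subgradient evaluation: compute the two sets of order statistics in $O(n\log n)$, read off the optimal permutation, and sum $n$ terms; multiplying by $T$ yields $N \le C_{\mu,\nu}\epsilon^{-4} n\log n$.

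The remaining ingredient is where the log-concavity hypothesis and the condition $d\ge(\log n)^2$ enter: they are used only to replace the population-dependent constants $M_{\mu,\nu}^2$ and the weak-convexity modulus (which a priori involve $\sup_\theta \mu|\theta^\intercal x|^2$ and second projected moments) by explicit functions of $\|\Sigma_\mu\|_\mathrm{op}$ and $\|\Sigma_\nu\|_\mathrm{op}$, and to pass from the population distributions to the empirical ones on which the algorithm actually runs. Concretely, I would invoke the projected-moment concentration/covering estimates underlying Theorem~\ref{thm: SWp rate} (and the companion concentration bounds referenced in the ``Concentration bounds'' remark) to show that, with probability at least $1 - c_4/n$, $\sup_{\theta\in\unitsph}\big(\widehat\mu_n|\theta^\intercal x|^2 + \widehat\nu_n|\theta^\intercal x|^2\big) \lesssim \|\mu x - \nu x\|^2 + d(\|\Sigma_\mu\|_\mathrm{op}+\|\Sigma_\nu\|_\mathrm{op})$ — this is precisely the quantity $\rho_{\mu,\nu}$ — and similarly bound $M_{\widehat\mu_n,\widehat\nu_n}^2$ by $M_{\mu,\nu}^2$ plus a covariance correction term. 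The constraint $d\ge(\log n)^2$ is what makes the logarithmic covering-number contribution lower-order compared to the $\sqrt{d}$-type deviation, so that the bound closes cleanly.

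\textbf{Main obstacle.} The genuinely delicate step is the high-probability transfer from population to empirical quantities in a form that is \emph{uniform over the sphere}: the subgradient bound and the weak-convexity modulus of the empirical objective $\tilde w_2^2$ must be controlled by $\|\Sigma_\mu\|_\mathrm{op}$, $\|\Sigma_\nu\|_\mathrm{op}$, and $\|\mu x - \nu x\|$ rather than by empirical second moments, and this requires a uniform-in-$\theta$ concentration argument for $\theta\mapsto \widehat\mu_n|\theta^\intercal x|^2$ that exploits log-concavity (to get sub-exponential tails for the one-dimensional marginals) together with an $\epsilon$-net on $\unitsph$ of size $e^{O(d)}$ — whence the $d$ prefactor in $\rho_{\mu,\nu}$ and the need for $d \ge (\log n)^2$. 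Everything else is a matter of faithfully plugging Lemma~\ref{lem: wp_theta_lipschitz} and \cite[Lemma~2.2]{lin2020projection} into the off-the-shelf rate of \cite{davis2018stochastic} and tracking constants.
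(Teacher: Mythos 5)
Your skeleton matches the paper's proof: invoke the Davis--Drusvyatskiy rate for the proximal stochastic subgradient method with the exact subgradient oracle obtained by sorting, feed in the Lipschitz constant from Lemma~\ref{lem: wp_theta_lipschitz} and the weak-convexity modulus from \cite[Lemma~2.2]{lin2020projection}, replace the empirical constants $M_n,\rho_n$ by population quantities with probability $1-O(1/n)$ using log-concavity, and multiply the $O(\epsilon^{-4})$ iteration count by the $O(n\log n)$ per-iteration sorting cost. (The paper also smooths $\mu,\nu$ by convolution with a small uniform ball to handle rank-deficient covariances, a case you do not address, but that is minor.)

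The step you flag as the crux, however, is argued via the wrong mechanism, and as stated it would not produce the bound in the proposition. The weak-convexity modulus delivered by \cite[Lemma~2.2]{lin2020projection} is $\rho_n = 2\max_{i,j}\|X_i - Y_j\|^2$ --- a maximum of \emph{unprojected} Euclidean norms over all $n^2$ sample pairs --- not the supremum of projected second moments $\sup_\theta(\hat\mu_n|\theta^\intercal x|^2 + \hat\nu_n|\theta^\intercal x|^2)$ that you identify with $\rho_{\mu,\nu}$. The latter quantity concentrates, uniformly in $\theta$, around $O(\|\mu x - \nu x\|^2 + \|\Sigma_\mu\|_{\op} + \|\Sigma_\nu\|_{\op})$ \emph{without} any factor of $d$ (the paper gets this from \cite[Theorem~4.2]{adamczak2010quantitative} and uses it only to bound the Lipschitz constant $M_n$); so a sphere-net argument of the kind you describe cannot be "whence the $d$ prefactor in $\rho_{\mu,\nu}$." The $d$ actually enters because $\max_i\|\Sigma_\mu^{-1/2}(X_i-\mu x)\|^2 \lesssim d$ with high probability for log-concave samples, and the hypothesis $d\geq(\log n)^2$ is exactly what \cite[Lemma~3.1]{adamczak2010quantitative} (a Paouris-type large-deviation bound) needs so that this holds uniformly over $n$ draws --- it has nothing to do with covering numbers being lower order. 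To repair your argument you must either bound $\max_{i,j}\|X_i-Y_j\|^2$ directly by this norm-concentration route, or re-derive the weak-convexity modulus as the operator norm of $\frac{2}{n}\sum_i (X_i-Y_{\sigma(i)})(X_i-Y_{\sigma(i)})^\intercal$ uniformly over permutations, in which case your projected-moment bound does apply; either way the justification you give for the probability-$1-c_4/n$ event needs to be replaced.
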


Proposition~\ref{PROP:MSWP_subgrad} is proven in Appendix~\ref{APPEN:MSWP_subgrad_proof} via the  complexity bound from \cite[Corollary~2]{davis2018stochastic}. As the algorithm is tuned for the empirical objective $\tilde{w}_p^p(\theta)$, the bound depends on the random Lipschitz and weak convexity constants $M_n = 4 \sup_{\theta\in\unitsph} (\empmu |\theta^\intercal x|^2 + \empnu |\theta^\intercal x|^2)$ and $\rho_n = 2 \max_{i,j=1,\ldots,n} \|X_i - Y_j\|^2$. We use concentration bounds for $M_n$ and $\rho_n$ to obtain the deterministic bound~above. %

\begin{remark}[Comparison to past works]
Computation of projection-robust Wasserstein distances (i.e., when projections are $k$-dimensional) was studied in \cite{paty2019subspace} and \cite{lin2020projection,huang2021riemannian} using a convex relaxations and entropic regularization, respectively. A similar $O(\epsilon^{-4})$ convergence rate is proven in \cite{lin2020projection} for their regularized method. Proposition \ref{PROP:MSWP_subgrad} shows that regularization in not necessary to achieve this rate when projections are one-dimensional. %
The result of \cite{lin2020projection} was improved to $O(\epsilon^{-3})$ in \cite{huang2021riemannian} using Riemannian block coordinate descent (still with entropic regularization). While this rate is faster than in Proposition \ref{PROP:MSWP_subgrad}, our goal was to couple the simpler and abundantly used subgradient ascent approach with formal guarantees. In addition, the next section shows that empirically, our algorithm is much faster than those of \cite{lin2020projection,huang2021riemannian} for the $\MSWtwo$ in terms of complexity and computation time. %
\end{remark}

\begin{figure}[t]
\centering
\begin{subfigure}[b]{0.495\textwidth}
    \centering
    \includegraphics[scale=0.27]{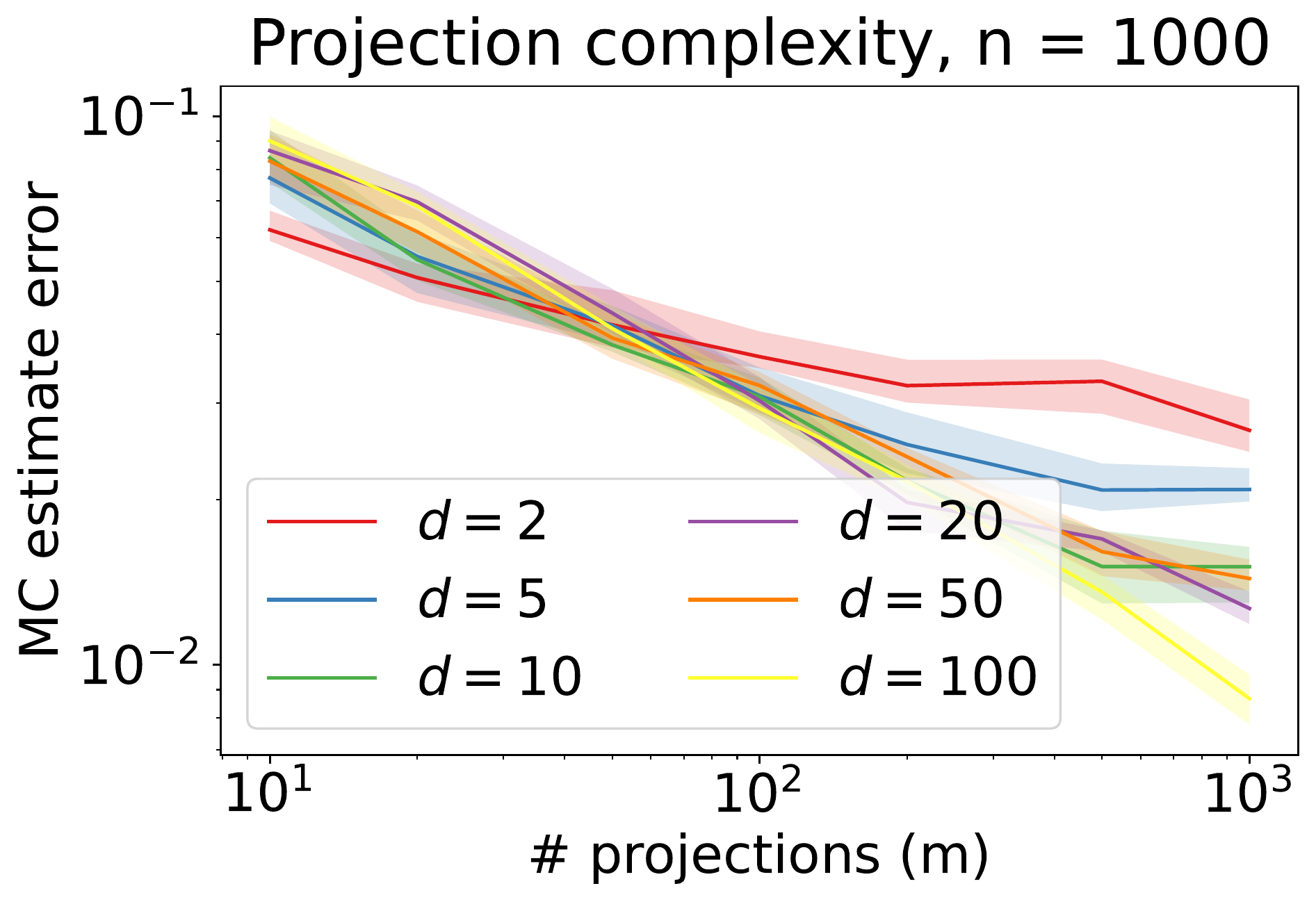}\\
    \vspace{1mm}
    \includegraphics[scale=0.27]{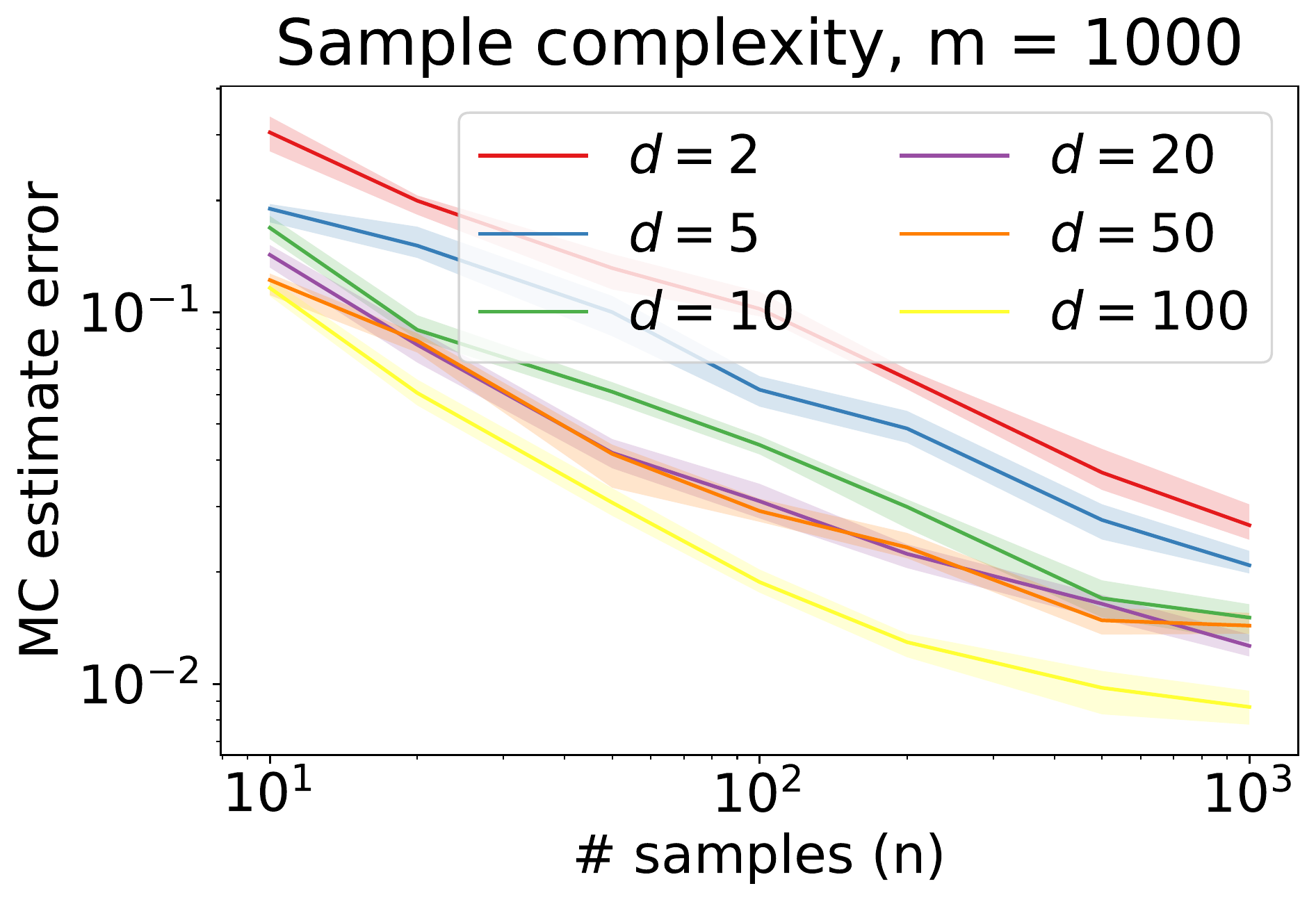}
    \caption{$\big |\WMCtwo - \SWtwo^2(\mu, \nu) \big |$ under Model (1).}
\end{subfigure}
\begin{subfigure}[b]{0.495\textwidth}
    \centering
    \includegraphics[scale=0.27]{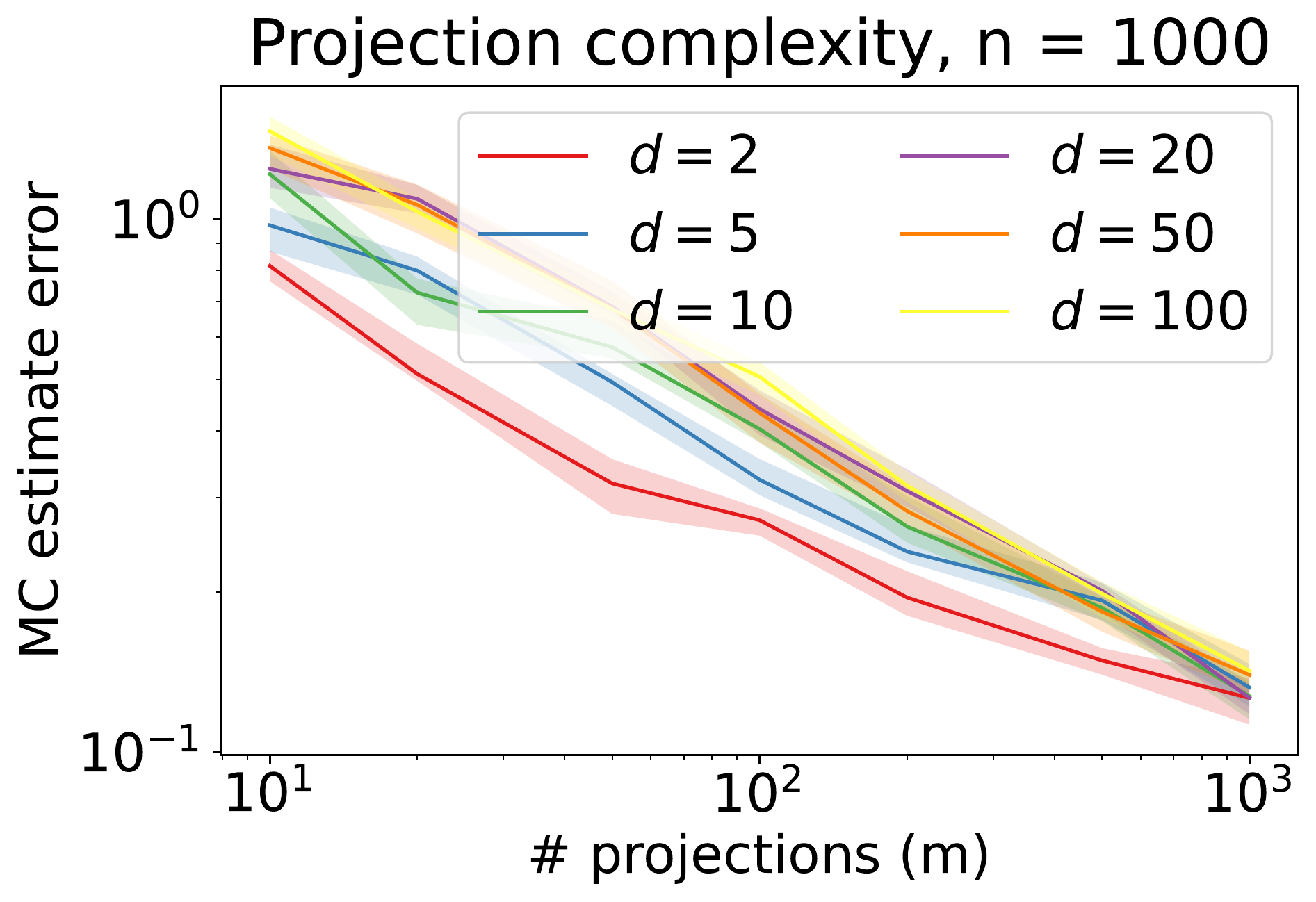}\\
    \vspace{1mm}
    \includegraphics[scale=0.27]{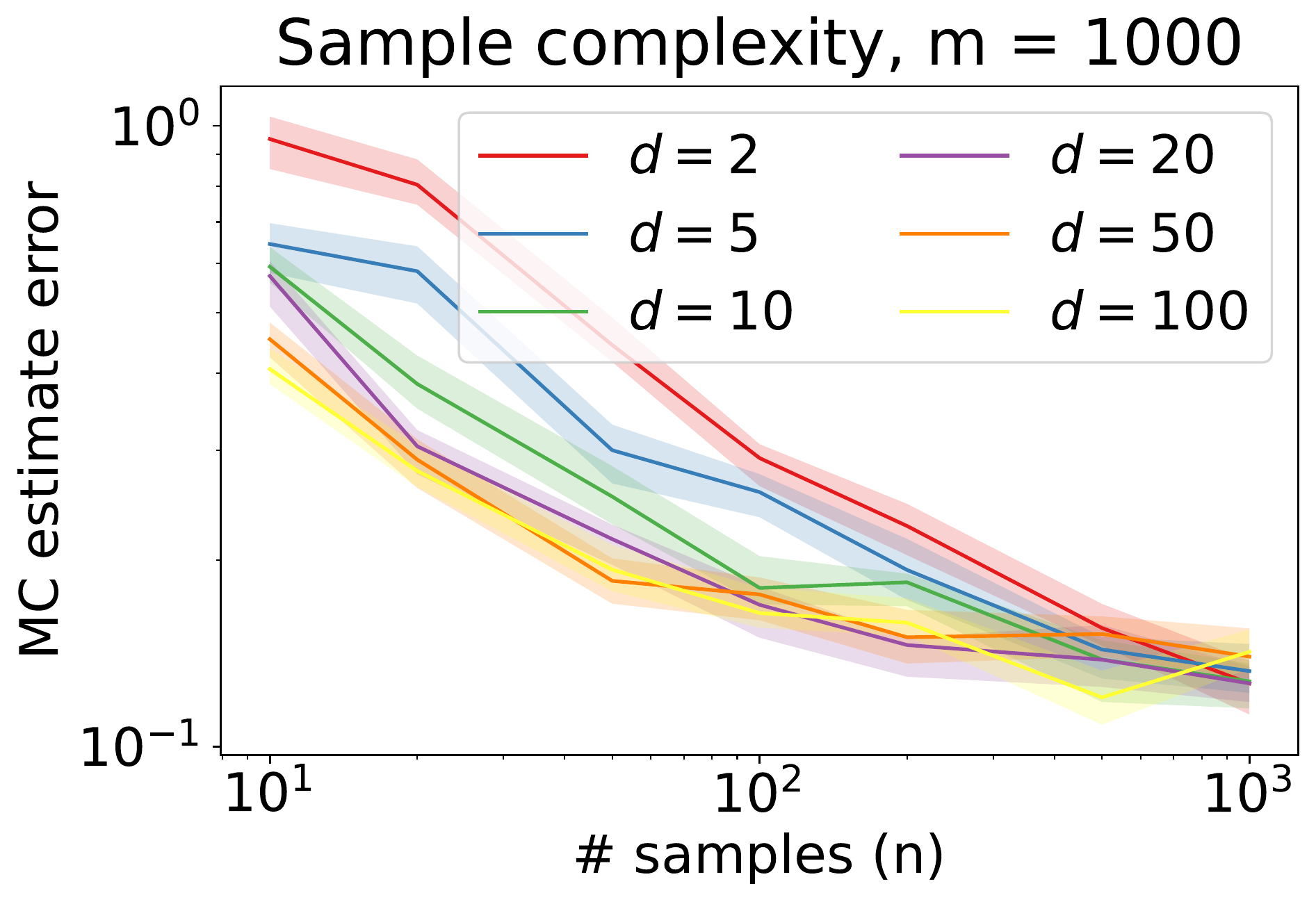}
    \caption{$\big | \WMCtwo - \SWtwo^2(\mu, \nu) \big |$ under Model (2).}
\end{subfigure}
\caption{Projection and sample complexity for $\MSWtwo$.}\label{fig:sw2_cpx}
\vspace{-2mm}
\end{figure}

\begin{remark}[The non-quadratic case]
For $p\neq 2$, we still have Lipschitzness of the objective function in $\theta$ (Lemma \ref{lem: wp_theta_lipschitz}). Recent work on finding stationary points for non-smooth, non-convex, Lipschitz functions, such as \cite{davis2021gradient}, provide convergence guarantees for these cases. These guarantees appear to be of the same $\epsilon^{-1/4}$ order (cf. \cite[Theorem 3.2]{davis2021gradient}), but we leave a full exploration for future work.
\end{remark}

\begin{remark}[Global guarantees via LIPO]
We can attain global optimality, i.e., compute $\MSWp(\empmu, \empnu)$ itself, via the LIPO algorithm \cite{malherbe2017global}. LIPO performs global optimization of Lipschitz~functions over convex domains based on function evaluations, which are readily accessible in our problem via sorting. In Appendix~\ref{APPEN:LIPO}, we adapt LIPO to the max-sliced distance, prove consistency, and derive its complexity. While this approach attains global optimality, the number of evaluation grows~exponentially with dimension. Hence, the subgradient method described above is preferable when dimension is large. 
\end{remark}

\section{Empirical Results}
\label{sec:experiments}

\paragraph{Projection and sample complexity for {\boldmath $\SWtwo$}.}
We validate the convergence rates of the MC-based estimate of $\SWtwo$ predicted by Propositions~\ref{prop: monte carlo} in the following two models: %
(1) $\mu = \cN(0, I_d)$, $\nu = 0.5 \cN(0, I_d) + 0.5 \cN(0, I_d + 0.5 \bm{1}_d \bm{1}_d^\intercal/d)$, and (2) $\mu = \cN(0, I_d)$, $\nu = \cN(2 \bm{1}_d, I_d)$, where $\bm{1}_d$ is a vector with all coordinates equal to 1. For Model (1), Proposition~\ref{prop: monte carlo} predicts a decreasing error with dimension, and inverse square root decay in number of projections and number of samples. For Model (2), on the other hand, the errors should increase with~$d$ for sufficiently large $n$. Plots of the projection and sample complexities for each model (averaged over 100 runs) are given in Figure \ref{fig:sw2_cpx} at the top of the previous page, and are in line with the above discussion and our theoretical results. Additionally, confidence bands are plotted representing top and bottom 10\% quantiles among 20 bootstrappped means from the same 100 runs. An additional experimental setup, comparing 10 component normal mixtures with different means and variances, can be found in Appendix~\ref{app:experiments}. %

\begin{figure}[t]
\centering
\begin{center}
\includegraphics[width=0.48\textwidth]{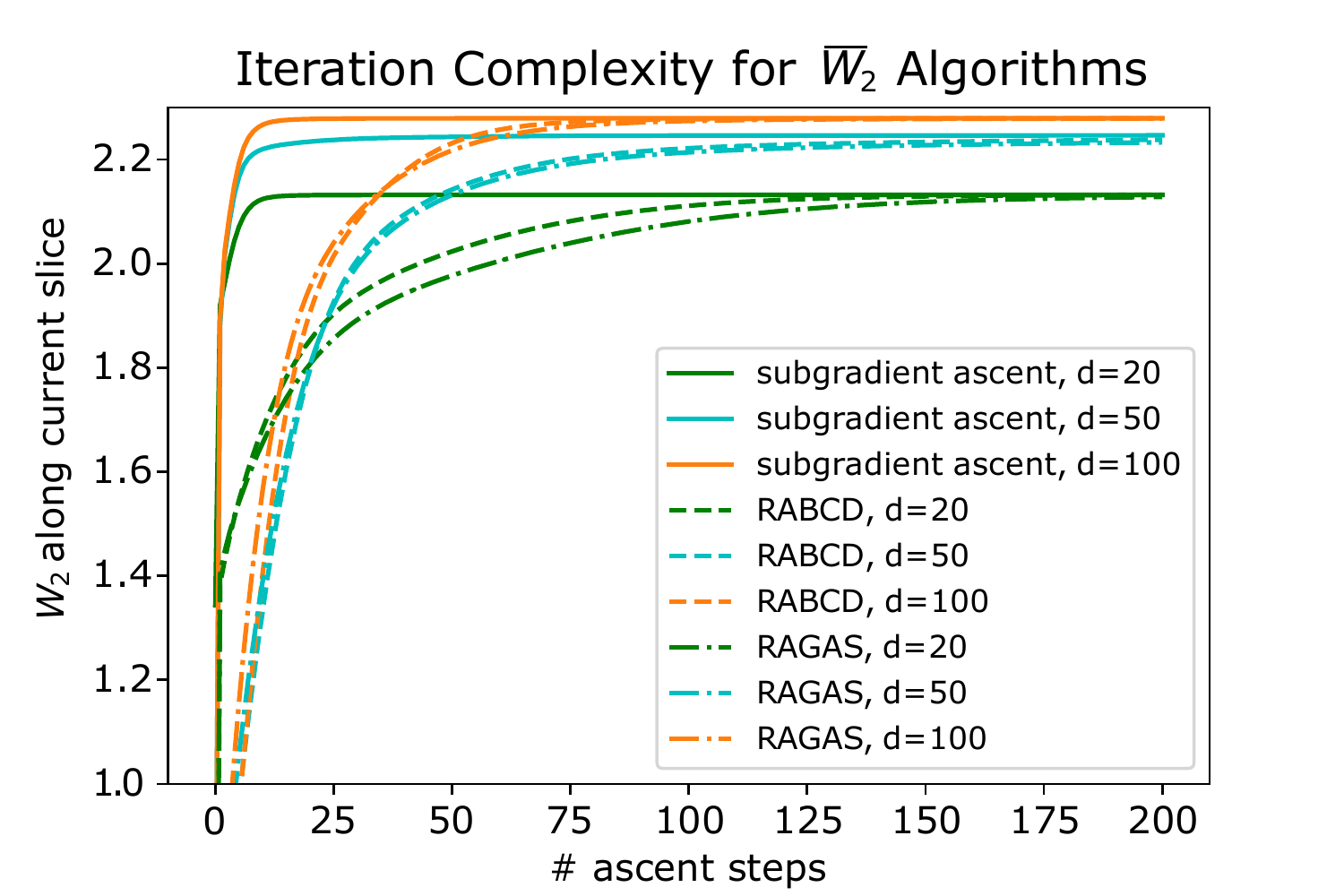}\includegraphics[width = 0.48\textwidth]{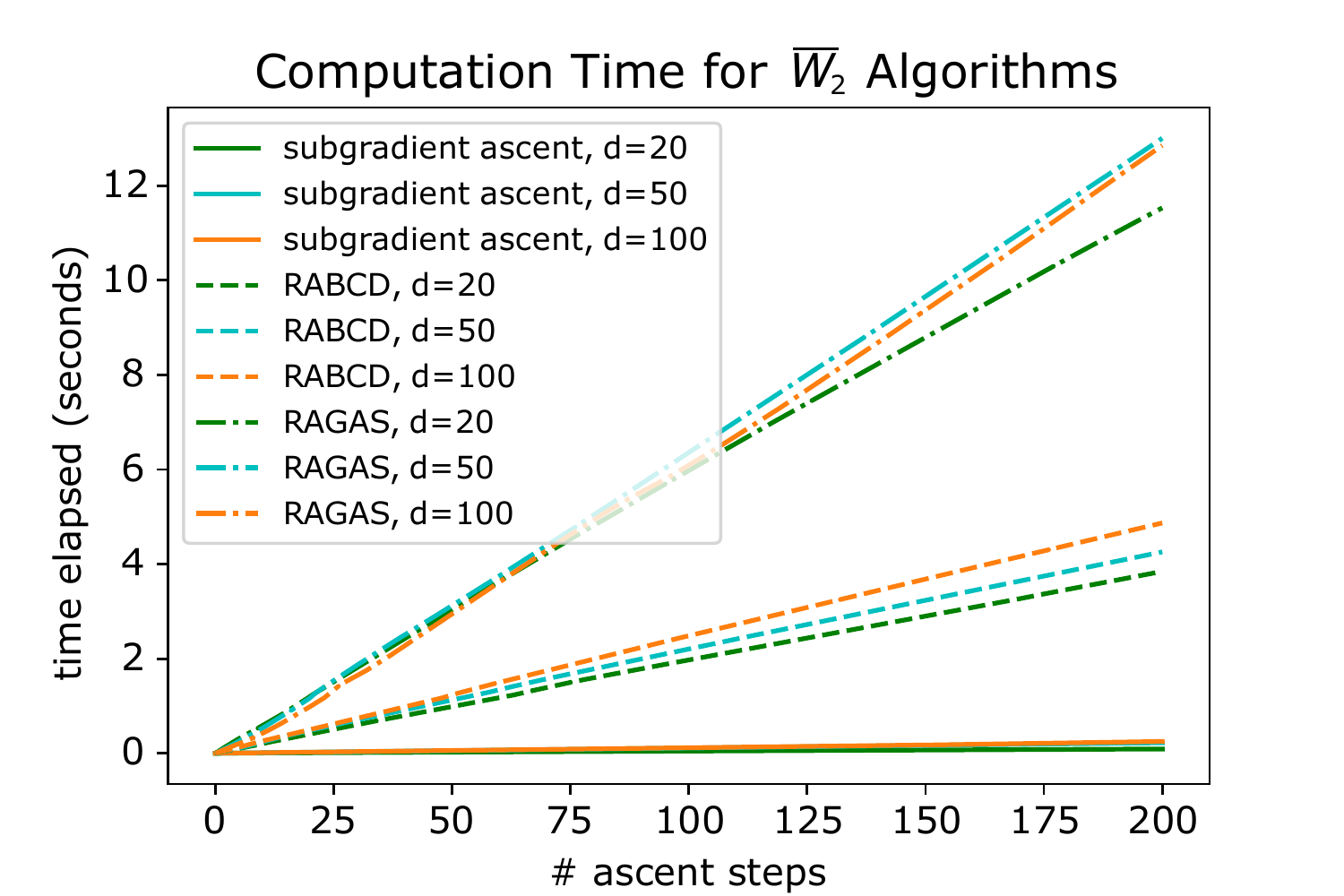}
\end{center}
\vspace{-2mm}
\caption{Errors and runtime versus step count for $\MSWtwo$ computation algorithms.}\label{fig:MSW2_computation}
\vspace{-2mm}
\label{fig:}
\end{figure}

\paragraph{Comparison of {\boldmath $\MSWtwo$} algorithms.}
We compare the performance of the subgradient-based Algorithm \ref{alg:subgradient} and the Riemannian optimization methods of \cite{lin2020projection, huang2021riemannian}. Consider the setup from \cite[Section 6.1]{huang2021riemannian}, where $\mu = \Unif([-1,1]^d)$ and $\nu = T_\sharp \mu$, with $T(x) = x + \sum_{i=1}^{10} \mathrm{sign}(x_i) e_i $, is the fragmented hypercube distribution with $k^*=10$. Figure \ref{fig:MSW2_computation} shows the errors and runtime by step count of Algorithm \ref{alg:subgradient} (with a constant step size) and the Riemannian algorithms from \cite{lin2020projection,huang2021riemannian} (abbreviated RAGAS and RABCD, respectively) for different ambient dimensions. For these algorithms, we used the code from \url{https://github.com/mhhuang95/PRW_RBCD} with their default choice of parameters; we also tried optimizing these parameters but the observed trends remained the same. Sample size is fixed at $n = 500$ and computation times are averaged over $10$ trials. 
Evidently, the subgradient ascent algorithm converges significantly faster and within fewer iterations than the other two methods, for all considered values of~$d$. Despite our $O(\epsilon^{-4})$ iteration complexity bound, which is slower than the best known rates \cite{huang2021riemannian}, this favorable empirical performance may be attributed to the fact that Algorithm \ref{alg:subgradient} relies on the cheap sorting operation, as opposed to the burdensome computation of regularization operations in \cite{lin2020projection,huang2021riemannian}. It may also be the case that our bound can be improved, which we plan to explore in future work.

\begin{figure}[t]
\centering
\begin{center}
    \includegraphics[width=0.48\textwidth]{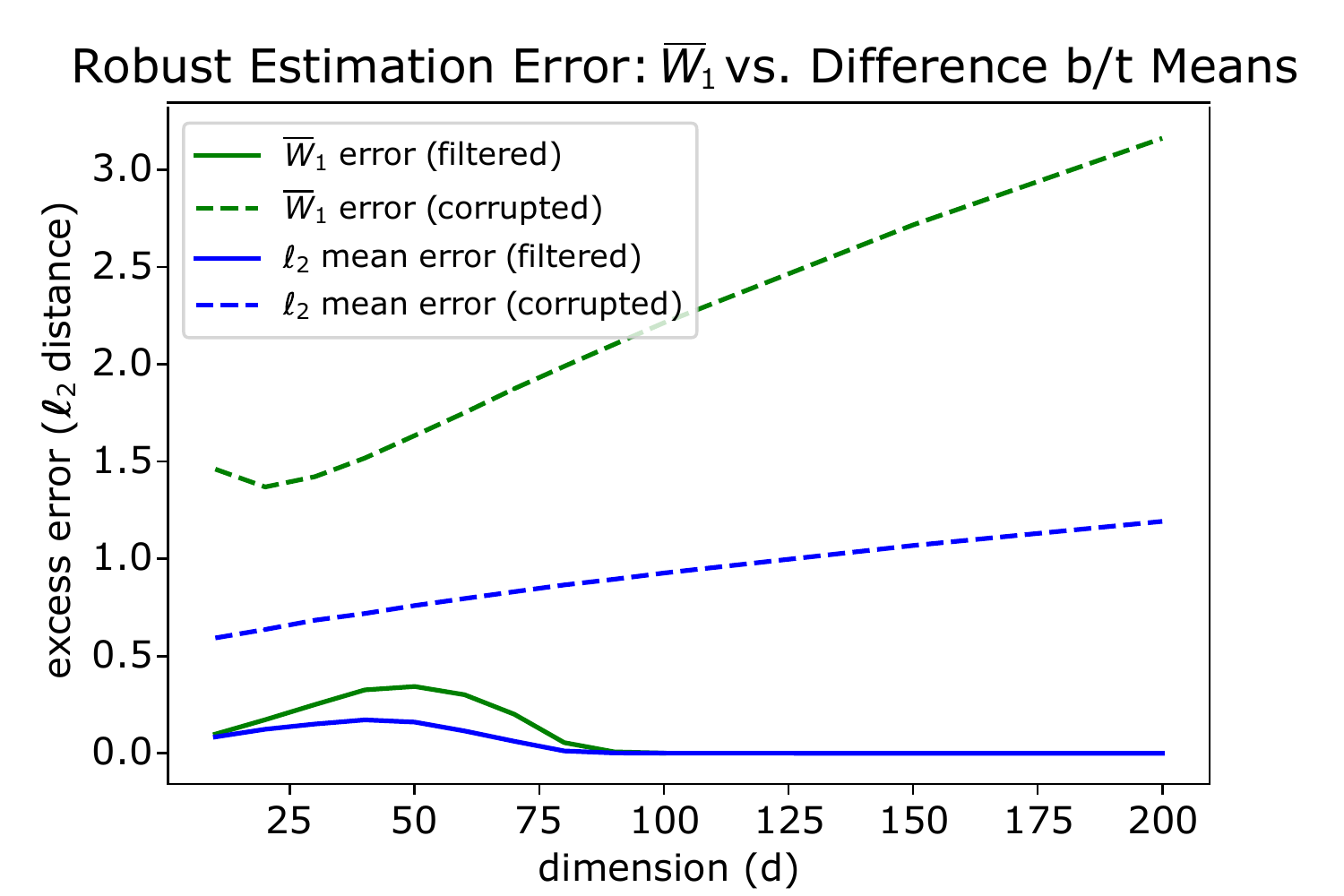}
    \includegraphics[width=0.48\textwidth]{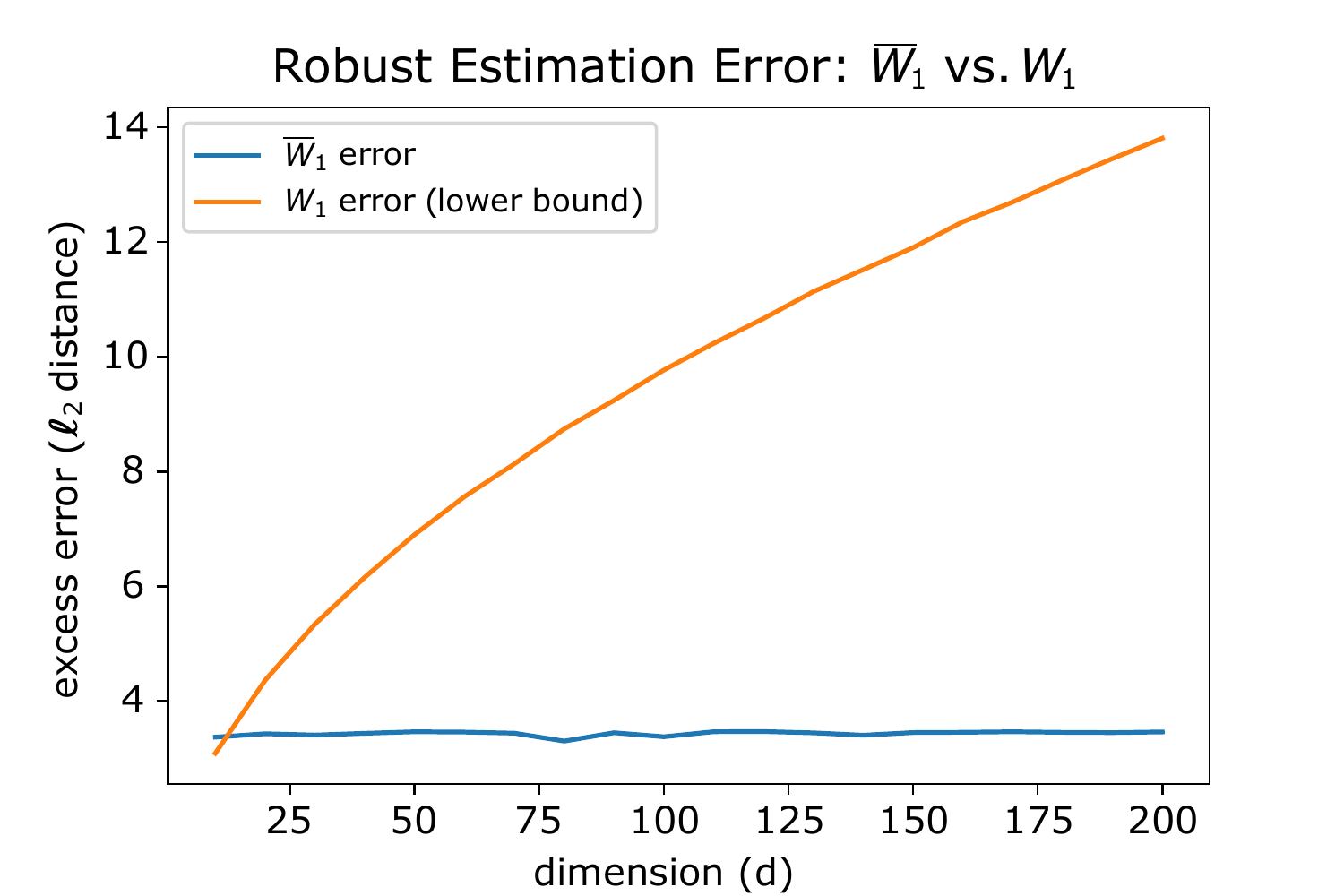}
\end{center}
\caption{Robust estimation errors for the iterative filtering estimate in two scenarios: (left) comparing $\MSWone$ with difference between means and (right) comparing $\MSWone$ to $\Wone$. Mean and $\MSWone$ errors are bounded as $d \to \infty$, while $\Wone$ error scales like $\sqrt{d}$.}\label{fig:robust-estimation}
\end{figure}

\paragraph{Robust estimation.}
To support Proposition \ref{prop:resilience}, we perform robust estimation via a standard iterative filtering algorithm developed for mean estimation \cite{diakonikolas2017robust}. For each $d \in \{10,20,\dots,200\}$, we take $n = 10d\eps^{-2}$ samples, with $(1-\eps)n$ drawn i.i.d.\ from $\cN(0,I_d)$ and $\eps n$ from a product noise distribution used in \cite{diakonikolas2017robust}, with $\eps = 0.1$. For each~$d$, iterative filtering returns a candidate subset of clean samples, and Figure \ref{fig:robust-estimation} (left) compares these subsets to the true clean samples both in $\MSWone$ (estimated via projected subgradient ascent) and in $\ell_2$ distance between means. Note that the error in the latter never exceeds that in the former by more than a factor of 2 (Proposition \ref{prop:resilience} implies that mean and $\SWone$ risk are equivalent up to constant factors). In Figure~\ref{fig:robust-estimation} (right), we set $\mu = (1-\eps)\delta_0 + \eps \Unif(\sqrt{d/\eps} \,\unitsph)$ with null contamination, and take $n,d$ ranging as before. In this case, since many samples are 0, we can efficiently compute a lower bound on classic $\Wone$ between the filtered and clean samples in high dimensions. As predicted by Theorem \ref{thm:robustness}, we observe the $\sqrt{d}$ separation in estimation error between $\Wone$ and $\MSWone$. In this case, only errors for the filtered samples are plotted, since the unfiltered samples have no contamination. See Appendix~\ref{app:experiments} for additional experiments on generative modeling with contaminated datasets, along with full details and code for all experiments.

\section{Summary and Concluding Remarks}

This paper provided a quantitative study of the scalability of sliced Wasserstein distances to high dimensions. Three key aspects were covered:
\begin{itemize}[leftmargin=.4cm]
    \item \textbf{Empirical convergence rates:} We established sharp, dimension-free rates for $\SWp$ and $\MSWp$ with explicitly characterized dimension-dependent constants. Our bounds reveal the interplay between the number of samples $n$, dimension $d$, and the order of the distance~$p$. 
    \item \textbf{Robust estimation:} The minimax optimal robust estimation rate of $\SWp$ and $\MSWp$, under contamination level $\eps$, was characterized as $O(\sigma\eps^{1/p-1/q})$. This rate is dimension-free and improves~upon corresponding results for $\Wp$ by a $\sqrt{d}$ factor. We showed that robust estimation of $\MSWone$ is equivalent to robust mean estimation, which enables lifting statistical/algorithmic results from means to $\MSWone$.
    \item \textbf{Computational guarantees:} The error of a MC-based estimator for $\SWp$ was derived, showing that it can improve as $d\to\infty$, depending on the growth-rate of the mean and the operator~norm of the covariance matrix. For $\MSWp$, we analyzed the subgradient-based local optimization algorithm from \cite{kolouri2019generalized,deshpande2019max}, and proved $O(\epsilon^{-4})$ complexity using Lipschitzness and weak convexity of the objective.
\end{itemize}
In all three aspects, the benefit of slicing in terms of dependence on dimension was clearly evident, thus providing rigorous justification the perceived scalability of sliced distance. Going forward, we plan to pursue improved complexity bounds for the subgradient ascent algorithm for computing $\MSWtwo$, as our empirical results suggest it converges faster than Proposition \ref{PROP:MSWP_subgrad} predicts. We are also interested in understanding conditions on $\mu,\nu$ under which faster global guarantees for $\MSWp$ can be provided, e.g., by precluding the existence of nontrivial local optima for $\Wp (\proj^{\theta}_\sharp \empmu, \proj^{\theta}_\sharp \empnu)$ on $\unitsph$, or matching the conditions of \cite[Theorem 15]{malherbe2017global}, which results in polynomial and even exponential rates for LIPO. %
Extensions of our results to projection-robust Wasserstein distance, which considers projections to $k$-dimensional subspaces, are of interest, aiming to understand the effect of $k$ on the results.

\subsection*{Acknowledgements}
The authors thank Jason Gaitonde for helpful discussion on high-dimensional probability. S.\ Nietert was supported by a NSF Graduate Research Fellowship under Grant DGE-1650441. Z.\ Goldfeld is partially supported by NSF grants CCF-1947801,  CCF-2046018, and DMS-2210368, and the 2020 IBM Academic Award. K.\ Kato is supported by NSF grants DMS-1952306, DMS-2014636, and DMS-2210368.

\bibliographystyle{abbrv}
\bibliography{ref}

\newpage

\appendix

\section{Cheeger Constant}\label{APPEN:log_conc_Cheeger}

Our empirical convergence rate analysis for the proof of Theorem \ref{thm: SWp rate} relies on controlling the Cheeger (isoperimetric) constant of the projected distributions. This section collects basic definitions and facts about Cheeger constants.

For  $\mu \in \calP(\R^d)$, define the boundary measure of a Borel subset $A \subset \R^d$ as
\[
\mu^+(\partial A) := \liminf_{\epsilon \downarrow 0}
\frac{\mu(A^\epsilon) - \mu(A)}{\epsilon},
\]
where $A^\epsilon = \{x \in \R^d: d(x,A) \leq \epsilon\}$ is the $\epsilon$-blowup of $A$, with $d(x,A) := \inf \{  \| x - y \| : y \in A \}$.  The \textit{Cheeger constant} $h(\mu)$ of $\mu$ is  defined as
\[
h(\mu) := \inf_{A\subset \R^d}
\frac{\mu^+(\partial A)}{\min\{\mu(A),\mu(A^c)\}},
\]
which serves as a measure of bottleneckedness for $\mu$. Indeed, a small $h(\mu)$ indicates the existence of a measurable $A\subset \R^d$ whose boundary measure is much smaller than the measure of $A$ and $A^c$ themselves. If $\mu$ has density $f$, then we also write $h(f) = h(\mu)$. 

\medskip

If $d=1$ and $\mu$ has density $f$ with distribution function $F$, then the Cheeger constant admits the simplified expression \cite[Theorem 1.3]{bobkov1997}
\[
h(\mu) = \essinf_{x \in \R}\frac{f(x)}{\min \{ F(x),1-F(x) \}}.
\]
Furthermore, if $F$ is strictly increasing around $x$, then for $t = F(x)$, we have 
\[
\frac{f(x)}{\min \{ F(x),1-F(x) \}} = \frac{f(F^{-1}(t))}{\min \{t,1-t\}}%
\]
The numerator on the right-hand side (RHS) is denoted by $I(t):=f(F^{-1}(t))$; lower bounding this function plays a key role in our empirical convergence rate analysis. The main observation in that regard is that if $f$ is log-concave in $d=1$, then $\{ x: 0 < F(x) < 1 \}$ is an interval and $f$ is positive on the interval, which implies $I(t) \ge h(f)\min \{ t,1-t\}$ for $t \in (0,1)$. 

\medskip
Consequently, lower bounding $I(t)$ reduces to controlling $h(f)$ from below.
In general, it is known from \cite{kannan1995isoperimetric} that if $f$ is a log-concave density on $\R^d$ with covariance matrix $\Sigma$, then there exists a constant $c_d > 0$ that depends only on $d$ such that
\be
\label{eq: cheeger_lower_bound}
h(f) \ge \frac{c_{d}}{\| \Sigma \|_{\op}^{1/2}}. 
\ee
The KLS conjecture \cite{kannan1995isoperimetric,lee2018kannan} states that $c_d$ can be chosen to be independent of $d$. The best available result up to date is due to \cite{chen2021almost}, which shows that $c_d = 1/d^{o_d(1)}$ as $d \to \infty$.

The proof of the concentration inequalities in Proposition~\ref{prop: SWp concentration} below requires another property of log-concave distributions, namely the fact that they satisfy Poincar\'e inequalities. A probability measure $\mu \in \cP(\R^d)$ is said to satisfy a Poincar\'e inequality with constant $M_\mu > 0$ if 
\be
\label{eq: poincare}
\Var_\mu(f) \leq M_\mu \EE[\|\nabla f\|^2]
\ee
for any function $f: \R^d \to \R$ such that both sides of the above display are finite.
The Maz'ya-Cheeger theorem (Theorem 1.1 in \cite{milman2009role}) yields that $1/M_\mu \geq h(\mu)/2 > 0$, so that any (nondegenerate) log-concave distribution automatically satisfies a Poincar\'{e} inequality.

\section{Concentration Inequalities }\label{APPEN: SWp concentration}

We present concentration bounds for the empirical sliced distances as a corollary of Theorem~\ref{thm: SWp rate}. This result is utilized to provide global guarantees for computing $\MSWp$ via the LIPO algorithm \cite{malherbe2017global} (cf. Proposition \ref{prop: MSWp_LIPO_convergence} in Appendix \ref{APPEN:LIPO}). %

\begin{proposition}[Concentration inequalities]%
\label{prop: SWp concentration}
Let $1 \le p < \infty$ and $n \ge 2$, and assume that $\mu \in \cP(\R^d)$ is log-concave with non-singular covariance matrix $\Sigma$. Then, for any $t > 0$, 
\begin{subequations}
\begin{align}
&\PP \Bigg ( \SWp(\empmu, \mu) \geq \frac{ \big(\|\Sigma\|_{\op} (\log n)^{\ind_{\{p = 2\}}} \big)^{1/2}}{ n^{1/(2 \vee p)} } + t \Bigg ) \leq 2 \exp \left ( -K\min \left\{ n^{1/p} t, n^{2/(2 \vee p)} t^2 \right\} \right ), \label{eq: SWp concentration}\\
&\PP \Bigg ( \MSWp(\empmu, \mu) \geq \alpha_{n,\mu} + t \Bigg ) \leq 2 \exp \left ( -K \min \left \{ n^{1/p} t, n^{2/(2 \vee p)} t^2 \right \} \right ),\label{eq: MSWp concentration}
\end{align}
\end{subequations}
where $K \lesssim d^{o_d(1)} \max\{\|\Sigma\|_{\op}^{1/2},\,\|\Sigma\|_{\op} \}$ and $\alpha_{n,\mu}$ is defined by the RHS of \eqref{eq: mswp bound} with $k=d$.
\end{proposition}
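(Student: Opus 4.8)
\textbf{Proof proposal for Proposition~\ref{prop: SWp concentration}.}

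The plan is to combine the expectation bounds from Theorem~\ref{thm: SWp rate} with a concentration-of-measure argument for the maps $\empmu \mapsto \SWp(\empmu,\mu)$ and $\empmu \mapsto \MSWp(\empmu,\mu)$, viewed as functions of the i.i.d.\ sample $X_1,\dots,X_n \sim \mu$. The key structural fact is that both maps are \emph{Lipschitz} in a suitable sense: by the triangle inequality for $\SWp$ (resp.\ $\MSWp$) together with convexity of $t\mapsto t^p$, changing a single coordinate $X_i$ perturbs $\SWp^p(\empmu,\mu)$ (resp.\ $\MSWp^p(\empmu,\mu)$) by at most $O_p(1)\,n^{-1}(\|X_i\|^p + \E\|X\|^p)$-type quantities, and more usefully, $\SWp$ itself is $1$-Lipschitz with respect to $\Wp$, hence has bounded differences of order $n^{-1}\|X_i - X_i'\|$ when the sample points have bounded norm. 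This is precisely the route taken in \cite[Theorem~3.8]{lin2021projection}, which the excerpt flags as the intended tool. So the first step is to invoke that concentration result (or re-derive it via a bounded-differences / McDiarmid argument after a truncation step, since log-concave $\mu$ is sub-Gaussian so the norms $\|X_i\|$ concentrate).

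Second, I would unpack the rate produced by \cite[Theorem~3.8]{lin2021projection}. Such results typically give a bound of the form $\PP(\sD(\empmu,\mu) \geq \E[\sD(\empmu,\mu)] + t) \leq 2\exp(-c\,\min\{n^{1/p}t,\, n^{2/(2\vee p)}t^2\}/K)$, where the two regimes in the $\min$ come from the fact that for $p > 2$ one is controlling a sum of $p$th powers (heavier tails, giving the linear-in-$t$ term at the slow scale $n^{1/p}$) while the Gaussian-type term $n^{2/(2\vee p)}t^2$ dominates for small $t$; the constant $K$ absorbs the scale of the projected distribution, which for log-concave $\mu$ is governed by $\|\Sigma\|_{\op}$ through the sub-Gaussian norm bound $\|\,\theta^\intercal X\,\|_{\psi_2} \lesssim \|\Sigma\|_{\op}^{1/2}$ uniformly in $\theta$. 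Tracking these constants gives $K \lesssim d^{o_d(1)}\max\{\|\Sigma\|_{\op}^{1/2},\|\Sigma\|_{\op}\}$, where the $d^{o_d(1)}$ factor enters through the KLS-type Cheeger bound \eqref{eq: cheeger_lower_bound} used (as in Theorem~\ref{thm: SWp rate}) to control the projected densities near their quantiles — this is why the statement carries $d^{o_d(1)}$ rather than a dimension-free constant.

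Third, I would substitute the expectation bounds: Theorem~\ref{thm: SWp rate} gives $\E[\SWp(\empmu,\mu)] \lesssim_p \|\Sigma\|_{\op}^{1/2}\sqrt{(\log n)^{\ind_{\{p=2\}}}}/n^{1/(2\vee p)}$, which is exactly the centering term appearing in \eqref{eq: SWp concentration}, and similarly $\E[\MSWp(\empmu,\mu)] \lesssim_p \alpha_{n,\mu}$ (the RHS of \eqref{eq: mswp bound} with $k = d$, since $\Sigma$ is nonsingular so $k = \mathrm{rank}(\Sigma) = d$), giving the centering in \eqref{eq: MSWp concentration}. Replacing the true mean by this upper bound only loosens the inequality, so the stated forms follow. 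The constant $K$ in the exponent is the same for both parts because the bounded-differences/Lipschitz structure is identical — $\MSWp$ is also $1$-Lipschitz w.r.t.\ $\Wp$, and the per-coordinate perturbation bound after truncation depends only on $\mu$'s sub-Gaussian scale, not on whether we average or maximize over $\theta$.

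I expect the main obstacle to be \emph{bookkeeping the constant $K$ through the truncation and the Cheeger-constant step} rather than any conceptual difficulty: one must verify that the sub-Gaussian concentration of $\|X\|$ (so that truncating at radius $\sim \|\Sigma\|_{\op}^{1/2}\sqrt{d\log n}$ loses only polynomially small probability) interacts correctly with the two-regime tail from \cite{lin2021projection}, and that the $d^{o_d(1)}$ from \cite{chen2021almost} is the only super-polylogarithmic dimensional factor that survives. A secondary subtlety is that \cite[Theorem~3.8]{lin2021projection} may be stated under the projection-Poincaré inequality rather than log-concavity, so I would first record (as done in Appendix~\ref{APPEN:log_conc_Cheeger}) that nondegenerate log-concave $\mu$ satisfies a Poincaré inequality with constant $M_\mu \leq 2/h(\mu) \lesssim \|\Sigma\|_{\op}^{1/2}/c_d$, which both licenses the application of their theorem and feeds the correct $\|\Sigma\|_{\op}$-dependence into $K$.
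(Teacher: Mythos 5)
Your proposal follows essentially the same route as the paper's proof: establish the Poincar\'e inequality for log-concave $\mu$ via the Maz'ya--Cheeger theorem and the KLS-type bound $h(\mu) \gtrsim c_d/\|\Sigma\|_{\op}^{1/2}$ with $c_d = d^{-o_d(1)}$, invoke \cite[Theorem 3.8]{lin2021projection} to get concentration about the expectation with $K$ controlled by $M_\mu$, and then substitute the expectation bounds from Theorem~\ref{thm: SWp rate}. The only cosmetic difference is that the paper reads the explicit value $K = 1/\min\{2\sqrt{M_\mu}, 6e^5 M_\mu\}$ off the proof of that theorem rather than re-deriving the two-regime tail, but this is exactly the constant bookkeeping you anticipated.
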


The proof of Proposition \ref{prop: SWp concentration} combines the expectation bounds from Theorem \ref{thm: SWp rate} with the concentration inequality for empirical sliced Wasserstein distances from \cite[Theorem 3.8]{lin2021projection}. The latter result holds under a Poincar\'e inequality assumption on the population distribution, which is always satisfied for log-concave measures (cf. \cite[Theorem 1.1]{milman2009role}), and is hence applicable for our setting.

The proof proceeds by lower bounding the Poincar\'e constant of $\mu$, and then using a concentration result with expectation centering in \cite{lin2021projection} that relies on the Poincar\'e constant, combined with our expectation bounds (Theorem~\ref{thm: SWp rate}).  By assumption, $\mu$ is log-concave with covariance matrix $\Sigma_\mu$. This, in particular, implies that \eqref{eq: cheeger_lower_bound} holds, with $c_d = d^{o_d(1)}$ (cf. Theorem 1 in \cite{chen2021almost}). Combined with the Maz'ya-Cheeger inequality (Theorem 1.1 in \cite{milman2009role}), this gives the following bound for the Poincar\'e constant $M_\mu$ of $\mu$:
\[
\frac{1}{M_\mu} \geq \frac{h(\mu)}{2} \geq \frac{1}{2 d^{o_d(1)} \|\Sigma\|_{\op}}.
\]
Now, by Theorem 3.8 in \cite{lin2021projection}, we have 
\[
\PP\big(|\rho(\empmu, \mu) - \EE[\rho(\empmu, \mu)]| \geq t\big) \leq 2 \exp \left ( -K \min \left\{n^{1/p} t,\, n^{2/(2 \vee p)} t^2 \right\} \right ), \ t > 0,
\]
where $\rho = \SWp$ or $\MSWp$, and $K$ depends only on $M_\mu$. A careful review of the proof of Theorem 3.8 and intermediate results in \cite{ledoux1999concentration} yields that $1/\min \{ 2 \sqrt{M_\mu}, 6e^5 M_\mu \}$ is a valid choice of $K$ in the above display, so that $K \lesssim d^{o_d(1)} \max\{\|\Sigma\|_{\op}^{1/2},\,\|\Sigma\|_{\op} \}$.
Plugging \eqref{eq: swp bound} and \eqref{eq: mswp bound}  into the above display completes the proof.

\section{Global Guarantees for Max-Sliced {\boldmath $\Wp$} Computation via LIPO}\label{APPEN:LIPO}
We can compute $\MSWp(\empmu, \empnu)=\max_{\theta\in\BB^d}\Wp (\proj^{\theta}_\sharp \empmu, \proj^{\theta}_\sharp \empnu )$ itself via the LIPO algorithm~\cite{malherbe2017global}, which performs global optimization of Lipschitz functions over convex domains based on function evaluations. LIPO sequentially chooses the next evaluation point only if it can increase the function value, based on the Lipschitz condition. Setting $\hat{w}_p(\theta):=\Wp \big (\proj^{\theta}_\sharp \empmu, \proj^{\theta}_\sharp \empnu \big)$ and tuning LIPO to the (empirical) Lipschitz constant $\hat L_n^p:=\sup_{\theta \in \unitsph} \big[ (\hat{\mu}_n|\theta^\intercal x|^p)^{1/p} + (\hat{\nu}_n |\theta^\intercal x|^p)^{1/p}\big]$ (see Lemma~\ref{lem: wp_theta_lipschitz}), if $\Theta_1,\ldots,\Theta_t$ are the $t$ previous evaluation points, the next evaluation will be at $\Theta_{t+1}$ provided that
\[
\min_{1\leq i\leq t} \big\{\hat{w}_p(\Theta_i) + \hat L_n^p\|\Theta_{t+1} - \Theta_i\| \big\} \geq \max_{1\leq i \leq t} \hat{w}_p(\Theta_i).\]
The output after $k$ steps is $\max_{1\leq i\leq k} \hat{w}_p(\Theta_i)$. See \cite[Figure~1]{malherbe2017global} for the full pseudo-algorithm. We have the following global guarantee for the performance of LIPO. %

\begin{proposition}[LIPO error bound]
\label{prop: MSWp_LIPO_convergence}
Let $1\leq p<\infty$ and assume that $\mu,\nu\in\cP_p(\R^d)$ are log-concave with non-singular covariance matrices $\Sigma_\mu$ and $\Sigma_\nu$, respectively. Let $\Theta_1,\ldots\Theta_k$ be a sequence of points generated by the LIPO for computing $\max_{\theta \in \unitsph} \hat{w}_p(\theta)$. Then for any $t > 0$ and $n \geq C_p d^{p/2}$, we have
\[
\PP \left(\Big|\,\MSWp(\mu,\nu) - \max_{1 \leq i \leq k} \hat{w}_p(\Theta_i)\Big| \leq 2 L_{\mu, \nu}  \left ( \frac{\log (1/\delta)}{k} \right )^{1/d}+ \alpha_n +2t\right) \geq 1 - \delta - \beta - \gamma_n(t)
\]
where $\alpha_n = \alpha_{n,\mu} + \alpha_{n,\nu}$ with $\alpha_{n,\mu}$ given by the RHS of \eqref{eq: mswp bound} with $k=d$, $\alpha_{n, \nu}$ defined analogously, 
\begin{align*}
    L_{\mu, \nu}&= (\|\Sigma_\mu\|_{\op}^{1/2} + \|\Sigma_\nu\|_{\op}^{1/2}) \left ( (2p)^{1/p} \vee 2  + \frac{1}{2}  \right ) + \|\mu x\|  + \|\nu x \|,\\
    \beta &= \exp(-c_p \sqrt{d}), \quad \text{and} \\
    \gamma_n(t)&=2 \exp \left ( -K_\mu \min \left ( n^{1/p} t, n^{2/(2 \vee p)} t^2 \right ) \right ) + 2 \exp \left ( -K_\nu \min \left ( n^{1/p} t, n^{2/(2 \vee p)} t^2 \right ) \right ),
\end{align*}
with $K_\mu \lesssim d^{o_d(1)} \max\{\|\Sigma_\mu\|_{\op}^{1/2},\,\|\Sigma_\mu\|_{\op} \}$ and $K_\nu \lesssim d^{o_d(1)} \max\{\|\Sigma_\nu\|_{\op}^{1/2},\,\|\Sigma_\nu \|_{\op} \}$. 

\end{proposition}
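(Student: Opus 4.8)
The plan is to decompose the error $|\MSWp(\mu,\nu) - \max_{1\leq i\leq k}\hat w_p(\Theta_i)|$ into three pieces via the triangle inequality: (a) the gap between the population max-sliced distance $\MSWp(\mu,\nu) = \max_{\theta}w_p(\theta)$ and the empirical one $\MSWp(\empmu,\empnu) = \max_\theta \hat w_p(\theta)$; (b) the gap between $\max_\theta \hat w_p(\theta)$ and the value $\max_{1\leq i\leq k}\hat w_p(\Theta_i)$ returned by LIPO after $k$ steps; and (c) making sure the empirical Lipschitz constant $\hat L_n^p$ that tunes LIPO actually dominates the true Lipschitz constant of $\hat w_p$ on $\unitsph$, so that the LIPO guarantee from \cite{malherbe2017global} applies. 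I would handle these in the order (c), (b), (a).

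For step (c), by Lemma~\ref{lem: wp_theta_lipschitz} the function $\hat w_p$ is Lipschitz with constant $\hat L_n^p = \sup_{\theta}[(\empmu|\theta^\intercal x|^p)^{1/p} + (\empnu|\theta^\intercal x|^p)^{1/p}]$, which is exactly the tuning parameter fed to LIPO, so no union-bound argument over directions is needed here — LIPO is tuned to the correct empirical constant by construction. I then invoke the consistency bound for LIPO (\cite[e.g. the uniform-rate bound]{malherbe2017global}): with probability at least $1-\delta$ over LIPO's internal randomness, $\max_\theta \hat w_p(\theta) - \max_{1\leq i\leq k}\hat w_p(\Theta_i) \leq 2\hat L_n^p \operatorname{diam}(\unitsph)\,(\log(1/\delta)/k)^{1/d}$, which after bounding $\operatorname{diam}(\unitsph) \le 2$ and replacing $\hat L_n^p$ by the population quantity $L_{\mu,\nu}$ gives term (b) up to a controlled probability. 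To pass from $\hat L_n^p$ to $L_{\mu,\nu}$, I use concentration of the projected empirical moments $\sup_\theta \empmu|\theta^\intercal x|^p$ around $\sup_\theta \mu|\theta^\intercal x|^p$ for log-concave $\mu$ (here the hypothesis $n \geq C_p d^{p/2}$ and sub-Gaussian/log-concave tail bounds on $\|x\|$ enter), which also accounts for the failure probability $\beta = \exp(-c_p\sqrt d)$; the explicit form of $L_{\mu,\nu}$ in the statement — featuring $\|\Sigma_\mu\|_{\op}^{1/2}+\|\Sigma_\nu\|_{\op}^{1/2}$ times a constant and the mean norms $\|\mu x\| + \|\nu x\|$ — is precisely what one gets from bounding $(\mu|\theta^\intercal x|^p)^{1/p} \le (\mu|\theta^\intercal(x-\mu x)|^p)^{1/p} + |\theta^\intercal \mu x|$ and applying log-concave moment comparison $(\mu|\theta^\intercal(x-\mu x)|^p)^{1/p} \lesssim_p \|\Sigma_\mu\|_{\op}^{1/2}$.

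For step (a), I bound $|\MSWp(\mu,\nu) - \MSWp(\empmu,\empnu)|$ by $\MSWp(\empmu,\mu) + \MSWp(\empnu,\nu)$ via the triangle inequality for the metric $\MSWp$, and then apply the concentration inequality \eqref{eq: MSWp concentration} from Proposition~\ref{prop: SWp concentration} to each term (the log-concavity of $\mu$ and $\nu$ with non-singular covariances is exactly what that proposition requires). This yields $\MSWp(\empmu,\mu) \le \alpha_{n,\mu} + t$ with probability $\ge 1 - 2\exp(-K_\mu\min\{n^{1/p}t, n^{2/(2\vee p)}t^2\})$, and analogously for $\nu$, giving the $\alpha_n = \alpha_{n,\mu}+\alpha_{n,\nu}$ and $\gamma_n(t)$ terms; a union bound over the three events (LIPO randomness $\delta$, moment concentration $\beta$, and the two sliced-distance deviations inside $\gamma_n(t)$) delivers the stated probability $1 - \delta - \beta - \gamma_n(t)$. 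The main obstacle I anticipate is step (c): carefully establishing that the random tuning constant $\hat L_n^p$ concentrates from above onto a clean deterministic expression valid uniformly over $\theta \in \unitsph$ — this requires a uniform (in $\theta$) deviation bound for the $p$th projected empirical moment of a log-concave measure, which is where the sample-size condition $n \ge C_p d^{p/2}$ and the $\exp(-c_p\sqrt d)$ failure probability originate, and where the bulk of the technical work lies; by contrast, steps (a) and (b) are essentially bookkeeping on top of Proposition~\ref{prop: SWp concentration} and the off-the-shelf LIPO rate.
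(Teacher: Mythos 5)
Your proposal is correct and follows essentially the same route as the paper's proof: the same three-way decomposition (LIPO gap via \cite[Corollary 13]{malherbe2017global} tuned to $\hat L_n$, concentration of $\hat L_n$ around $L_{\mu,\nu}$ via uniform deviation bounds for projected empirical moments of log-concave measures, which the paper takes from \cite[Theorem 4.2]{adamczak2010quantitative}, and the triangle inequality plus Proposition~\ref{prop: SWp concentration} for the empirical-to-population gap), combined by a union bound. The only minor discrepancy is the extra $\operatorname{diam}(\unitsph)$ factor you insert into the LIPO rate, which would inflate the constant to $4L_{\mu,\nu}$ rather than the stated $2L_{\mu,\nu}$, but this does not affect the substance of the argument.
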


The proof of Proposition \ref{prop: MSWp_LIPO_convergence} is given in Appendix~\ref{subsection: MSWp_LIPO_convergence_proof}. The analysis separately bounds the empirical approximation error of the max-sliced objective and the error due to LIPO. The empirical error is treated using the concentration inequality from Proposition \ref{prop: SWp concentration}. For the LIPO analysis, we first argue that the (random) Lipschitz constant $\hat L_n^p$ concentrated about its mean and bound the latter by the population Lipschitz constant $L_{\mu,\nu}^p$. With this deterministic bound, the result follows from \cite[Corollary 13]{malherbe2017global}. Evidently, while Proposition \ref{prop: MSWp_LIPO_convergence} provides a global optimality guarantee, the resulting rate depends exponentially on dimension, which is too conservative in high-dimensional settings. %

\section{Proofs of Results in the Main Text}

\textbf{Additional notation}: We use $N(\epsilon, \cF, d)$ to denote the $\epsilon$-covering number of a function class or set $\cF$ with respect to (w.r.t.) a metric $d$ on $\cF$, and $N_{[\,]}(\epsilon, \cF, d)$ denotes the corresponding bracketing number.

\subsection{Proof of Theorem \ref{thm: SWp rate}}\label{APPEN: SWP rate_proof}

The proof relies on \cite[Theorem 6.6]{bobkov2019one}, restated below, that bounds empirical convergence rates for $\Wp$ between distributions on $\RR$. 

\begin{lemma}[Theorem 6.6 in \cite{bobkov2019one}]
\label{prop: wp bound}
Fix $1 \le p < \infty$ and $n \ge 2$. Let $\mu \in \calP(\R)$ have log-concave density $f$ with distribution function $F$. Set $I(t) = f\big(F^{-1}(t)\big)$ for $t \in (0,1)$, where $F^{-1}$ is the quantile function of $F$. Then, 
\begin{equation}
 \EE\big[\Wp^p(\hat{\mu}_n,\mu)\big] \leq \left(\frac{C p^2}{n}\right)^{p/2} \int_{1/(n+1)}^{n/(n+1)}\frac{\big(t(1-t)\big)^{p/2}}{I^p(t)}\, \, dt  ,\label{EQ:Wp_bound_geq2}
\end{equation}
where $C$ is a universal constant. 
\end{lemma}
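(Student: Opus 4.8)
\emph{Proof sketch.} The plan is to reduce to the one-dimensional quantile representation of $\Wp$, pass to uniform order statistics via the quantile transform, exploit log-concavity to control the inverse density $1/I$ pointwise, and finish with moment estimates for Beta random variables. Since $\mu$ and $\hat{\mu}_n$ live on $\RR$, the monotone coupling is optimal, so $\Wp^p(\hat{\mu}_n,\mu)=\int_0^1|F_n^{-1}(t)-F^{-1}(t)|^p\,dt$ with $F_n^{-1}$ the empirical quantile function. Writing $X_i=F^{-1}(U_i)$ for $U_1,\dots,U_n$ i.i.d.\ uniform on $[0,1]$ (valid since $F$ is continuous), monotonicity of $F^{-1}$ gives $X_{(k)}=F^{-1}(U_{(k)})$ for the order statistics, hence $F_n^{-1}(t)=F^{-1}\big(G_n^{-1}(t)\big)$, where $G_n$ is the empirical c.d.f.\ of the $U_i$. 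By Tonelli,
\[
\EE\big[\Wp^p(\hat{\mu}_n,\mu)\big]=\int_0^1\EE\Big[\big|F^{-1}(G_n^{-1}(t))-F^{-1}(t)\big|^p\Big]\,dt,
\]
and $G_n^{-1}(t)=U_{(\lceil nt\rceil)}\sim\mathrm{Beta}(\lceil nt\rceil,\,n-\lceil nt\rceil+1)$, with mean $\asymp t$ and variance $\asymp t(1-t)/n$.

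On the support interval $f$ is positive and $F$ strictly increasing, so $F^{-1}$ is absolutely continuous with $(F^{-1})'(u)=1/I(u)$ and $F^{-1}(b)-F^{-1}(a)=\int_a^b du/I(u)$ for $0<a\le b<1$. Differentiating gives $I'(t)=(\log f)'(F^{-1}(t))$, which is non-increasing since $(\log f)'$ is non-increasing ($f$ log-concave) and $F^{-1}$ is non-decreasing; thus $I$ is concave and positive on $(0,1)$, so $1/I$ is convex. Hence the supremum of $1/I$ over any subinterval is attained at an endpoint, yielding the pointwise bound
\[
\big|F^{-1}(G_n^{-1}(t))-F^{-1}(t)\big|\le|G_n^{-1}(t)-t|\,\max\big\{\tfrac{1}{I(t)},\,\tfrac{1}{I(G_n^{-1}(t))}\big\}.
\]
Concavity of $I$ with $I\ge 0$ at the endpoints $0,1$ gives $I(s)\ge(s/t)I(t)$ for $s\le t$ and $I(s)\ge\frac{1-s}{1-t}I(t)$ for $s\ge t$, which lets one replace $1/I(G_n^{-1}(t))$ by $R_n(t)/I(t)$ with edge ratio $R_n(t):=\max\{t/G_n^{-1}(t),\,(1-t)/(1-G_n^{-1}(t))\}\ge 1$. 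Altogether $\big|F^{-1}(G_n^{-1}(t))-F^{-1}(t)\big|^p\le I(t)^{-p}\,|G_n^{-1}(t)-t|^p\,R_n(t)^p$.

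It then remains to show, for each relevant $t$ with $k=\lceil nt\rceil$, that $\EE\big[|U_{(k)}-t|^p R_n(t)^p\big]\lesssim(Cp^2\,t(1-t)/n)^{p/2}$ up to universal constants, using $t(1-t)\asymp\frac{k}{n}(1-\frac{k}{n})$ and $\Var(U_{(k)})\asymp t(1-t)/n$. The ingredients are: (i) the Beta density is log-concave, so $U_{(k)}$ is sub-gamma about its mean and $\EE|U_{(k)}-\EE U_{(k)}|^p\lesssim(Cp^2\Var(U_{(k)}))^{p/2}$, the $p^2$ (rather than a Gaussian $p$) reflecting the merely sub-exponential right tail; and (ii) control of the negative moments $\EE[U_{(k)}^{-p}]$ and $\EE[(1-U_{(k)})^{-p}]$, which are finite and of the right order precisely when $p\lesssim k\wedge(n-k+1)$ — which is exactly why the right-hand integral is truncated to $[1/(n+1),\,n/(n+1)]$, with the edge ranges of $t$ (equivalently, the extreme order statistics) handled separately and absorbed into the stated bound. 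Plugging the per-$t$ estimate into $\EE[\Wp^p]=\sum_{k=1}^n\int_{(k-1)/n}^{k/n}\EE|F^{-1}(U_{(k)})-F^{-1}(t)|^p\,dt$ and comparing the discrete sum with a Riemann integral (legitimate since $I$ has bounded logarithmic oscillation on each block $[(k-1)/n,k/n]$ by concavity, away from the edges) collects everything into $(Cp^2/n)^{p/2}\int_{1/(n+1)}^{n/(n+1)}\frac{(t(1-t))^{p/2}}{I^p(t)}\,dt$. The main obstacle is this last step: securing the edge-weighted centered Beta moment bound uniformly over the truncated range, and cleanly matching the $\lceil nt\rceil$-discretization to the continuous integrand; the reductions of the first paragraph and the log-concavity facts are routine.
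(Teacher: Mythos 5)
First, note that the paper offers no proof of this lemma: it is imported verbatim from Bobkov--Ledoux \cite[Theorem 6.6]{bobkov2019one}, so your attempt has to be measured against their original argument, whose skeleton (quantile representation, uniform order statistics $U_{(k)}\sim\mathrm{Beta}(k,n-k+1)$, concavity of $I$ for log-concave $\mu$) you do reproduce correctly. The difficulty is that your sketch gives no argument exactly where the theorem's substance lies. Your per-$t$ estimate $\E\big[|U_{(k)}-t|^p R_n(t)^p\big]\lesssim (Cp^2\,t(1-t)/n)^{p/2}$ is not merely unproved but false for the extreme blocks: for $k=\lceil nt\rceil\le p$ the $\mathrm{Beta}(k,n-k+1)$ density behaves like a positive constant times $u^{k-1}$ near $0$, so $\E[U_{(k)}^{-p}]=\infty$ and the left-hand side is infinite already for $k=1$, i.e.\ for $t\in[1/(n+1),1/n]$, which lies \emph{inside} the truncation range. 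Hence the region requiring a separate treatment is all order statistics with $k\lesssim p$ or $k\gtrsim n-p$ (plus $t\notin[1/(n+1),n/(n+1)]$), and the truncation does not ``explain'' or excuse this: the truncation affects only the right-hand side, while the left-hand side is the full $\E[\Wp^p(\hat\mu_n,\mu)]$, so no portion of $t\in(0,1)$ may be dropped.

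Second, these edge contributions cannot simply be ``absorbed into the stated bound'' as lower-order terms, because they are generically of the same order as the entire bound. For instance, for $\mu=\cN(0,1)$ and $p=4$ one finds $\int_0^{1/(n+1)}\E\big|F^{-1}(U_{(1)})-F^{-1}(t)\big|^4\,dt\asymp (n\log^2 n)^{-1}$, which is precisely the order of the full right-hand side, since $(Cp^2/n)^{p/2}\int_{1/(n+1)}^{n/(n+1)}(t(1-t))^{p/2}I(t)^{-p}\,dt\asymp C^2(n\log^2n)^{-1}$ in that example. Controlling them requires a dedicated estimate: concavity of $I$ gives $1/I(s)\le\big(2\min(s,1-s)\,I(1/2)\big)^{-1}$, hence only logarithmic growth of $F^{-1}$ near $0$ and $1$, which must be combined with tail bounds for the extreme order statistics and then matched against the near-endpoint part of the truncated integral --- this is the heart of the corresponding argument in \cite{bobkov2019one}, and it is absent here. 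Relatedly, your Riemann-sum comparison ``away from the edges'' also fails on the first and last blocks, where $I$ oscillates by an unbounded factor. Your ingredient (i) (log-concavity of the Beta law giving $\E|U_{(k)}-\E U_{(k)}|^p\lesssim (Cp^2\Var)^{p/2}$, the source of the $p^2$) is fine, but as structured the route through the weight $R_n(t)^p$ and Beta negative moments cannot close the gap you yourself flag as ``the main obstacle.''
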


We will apply Lemma \ref{prop: wp bound} to $\Wp(\ptheta_{\sharp}\hat{\mu}_n,\ptheta_{\sharp}\mu)$ and bound the corresponding $I$-function from below  \textit{uniformly} over the projection parameter $\theta \in \unitsph$. Recall that the distribution function of $\ptheta_{\sharp}\mu$ is denoted by $F_{\mu}(\cdot;\theta)$, which we abbreviate as $F_\theta$ throughout this proof and denote the corresponding density by $f_\theta$. We first observe that since $\mu\in\cP(\RR^d)$ is log-concave, then so is $\ptheta_{\sharp}\mu$ for any $\theta\in\unitsph$. 

Let $h_\theta:=h(\mathfrak{p}^\theta_\sharp \mu)$ denote the Cheeger constant of the projected distribution. From the discussion in Appendix \ref{APPEN:log_conc_Cheeger}, we know that $1/\left ( f_{\theta}\big(F_{\theta}^{-1}(t)\big) \right ) \ge h_{\theta} \min \{ t,1-t\}$ for $t \in (0,1)$. Given that, the proof for the $\SWp$ case is relatively straightforward from Lemma \ref{prop: wp bound}. Bounding $\E[\MSWp(\hat{\mu}_n,\mu)]$, however, requires extra work to treat the supremum over $\theta$ that appears inside the expectation. 

\paragraph{$\bm{\SWp}$ case.}
Suppose that $\theta \in \unitsph$ is such that $\theta^\intercal \Sigma \theta = 0$. Then, $\ptheta_\sharp \mu$ degenerates to a point mass, so that $\Wp^p\big(\mathfrak{p}^\theta_\sharp \hat{\mu}_n,\mathfrak{p}^\theta_\sharp \mu\big) = 0$. 

Suppose $\theta^\intercal \Sigma \theta > 0$. Then, $\ptheta_\sharp \mu$ is nondegerate log-concave, so it has a log-concave density. 
Observe that $h_\theta \gtrsim 1/(\theta^\intercal \Sigma\theta)^{1/2}$. If $1 \le p < 2$, then 
\[
\int_{0}^1 \frac{\big(t(1-t)\big)^{p/2}}{t^p \wedge (1-t)^p} \, dt  < \infty,
\]
so that by  Lemma \ref{prop: wp bound}, we have
\[
\EE\big[\Wp^p\big(\mathfrak{p}^\theta_\sharp \hat{\mu}_n,\mathfrak{p}^\theta_\sharp \mu\big)\big] \lesssim  \left(\frac{\theta^\intercal \Sigma \theta}{n}\right)^{p/2}.
\]

If $2 \le p < \infty$, then by Lemma \ref{prop: wp bound}, dividing $\int_{1/(n+1)}^{n/(n+1)}$ into $\int_{1/(n+1)}^{1/2} + \int_{1/2}^{n/(n+1)}$ and using the symmetry, we have
\begin{align*}
\EE\big[\Wp^p\big(\mathfrak{p}^\theta_\sharp \hat{\mu}_n, \mathfrak{p}^\theta_\sharp \mu\big)\big] 
&\lesssim \left(\frac{\theta^\intercal \Sigma \theta}{n}\right)^{p/2} \int_{1/(n+1)}^{1/2} t^{-p/2} \, dt .
\end{align*}
Here
\begin{align*}
\int_{1/(n+1)}^{1/2} t^{-p/2} \, dt 
&=\begin{cases}
\big[\log(n+1) - \log 2\big]&\ p=2\\
\frac{1}{p/2 - 1} \left[(n+1)^{p/2 - 1} - 2^{p/2 - 1}\right]&\ p>2
\end{cases}
,
\end{align*}
so that 
\[
\EE\big[\Wp^p\big(\mathfrak{p}^\theta_\sharp \hat{\mu}_n, \mathfrak{p}^\theta_\sharp \mu\big)\big]  \lesssim \frac{(\theta^\intercal \Sigma \theta)^{p/2}(\log n)^{\ind_{\{p=2\}}}}{n}.
\]
The result follows by noting that $\theta^\intercal \Sigma \theta \leq \|\Sigma\|_{\op}$, integrating the display over $\theta \in \unitsph$ and applying Fubini's theorem. 
\begin{remark}[Better bound for $\SWtwo$]
The above calculation actually yields the slightly better bound 
\[
\EE[\SWtwo^2(\mu,\nu)] \lesssim \frac{k \|\Sigma\|_{\op} \log n }{nd}
\]
for $p = 2$ by using the spectral decomposition $\Sigma = \sum_{i=1}^k \lambda_i a_i a_i^\intercal$, as follows:
\[
\int_{\unitsph} \theta^\intercal \Sigma \theta \, d\sigma(\theta) = \sum_{i=1}^k \int_{\unitsph} (a_i^\intercal \theta)^2\, d\sigma(\theta) = \frac{1}{d} \sum_{i=1}^k \lambda_i \leq \frac{k \|\Sigma\|_{\op}}{d}.
\]
\end{remark}
\paragraph{$\bm{\MSWp}$ case.} 
We divide the proof into two steps. In Step 1, we will prove the claim of the theorem when $k=d$, i.e., $\Sigma$ is of full rank. In Step 2, we reduce the general case to the $d=k$ case.

\underline{Step 1}. Assume $k=\mathrm{rank}(\Sigma)=d$. 
The main idea is to approximate $\EE\big[\MSWp(\hat{\mu}_n,\mu)\big]=\EE\big[\sup_{\theta\in\unitsph}\Wp\big(\mathfrak{p}^\theta_\sharp \hat{\mu}_n, \mathfrak{p}^\theta_\sharp \mu\big)\big]$ by the maximum expected projected distance (roughly speaking, switch the expectation and the supremum). To that end we will employ a covering argument of the unit sphere along with Lipschitz continuity of $\Wp\big(\mathfrak{p}^\theta_\sharp \hat{\mu}_n, \mathfrak{p}^\theta_\sharp \mu\big)$ w.r.t. the samples and $\theta$. These technical results are collected in the following lemma.

\begin{lemma}
\label{lem: lip project}
The following hold:
\begin{enumerate}
    \item[(i)] For any $\epsilon \in (0,1)$, we have $N(\epsilon, \unitsph, \|\cdot\|) \leq (5/\epsilon)^{d}$. 
    \item[(ii)] For any $\gamma \in \calP_p(\R)$, the map $u \mapsto \Wp(n^{-1}\sum_{i=1}^n \delta_{u_i},\gamma)$ with $u= (u_1,\dots,u_n)$ is $n^{-1/(2\vee p)}$-Lipschitz. Further, it is partially differentiable a.e. w.r.t. each $u_i$, and its partial derivative w.r.t. $u_i$ is bounded by $n^{-1/p}$
    \item[(iii)] The map $\theta \mapsto \Wp(\ptheta_{\sharp}\hat{\mu}_n,\ptheta_{\sharp}\mu)$ is $L$-Lipschitz with $L = (\sup_\theta \, \empmu |\theta^\intercal x|^p)^{1/p} + (\sup_\theta \, \mu |\theta^\intercal x|^p)^{1/p}$. If $\mu \in \cP(\R^d)$ is centered and log-concave with non-singular covariance matrix $\Sigma$, then
    \[
    \E[L] \le 2p \|\Sigma\|_{\op}^{1/2} \sqrt{d}
    \]
\end{enumerate}
\end{lemma}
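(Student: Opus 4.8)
Part (i) is a standard volumetric estimate: I would take a maximal $\epsilon$-separated subset $\{\theta_1,\dots,\theta_N\}$ of $\unitsph$ (which is automatically an $\epsilon$-cover), note that the balls $\theta_i+(\epsilon/2)\BB^d$ are pairwise disjoint and contained in $(1+\epsilon/2)\BB^d$, and compare Lebesgue volumes to get $N\le(1+2/\epsilon)^d\le(5/\epsilon)^d$ for $\epsilon\in(0,1)$. For part (ii), the starting point is the triangle inequality for $\Wp$ together with the ``matching'' coupling of two empirical measures on $n$ atoms: for $u,v\in\R^n$ one has $\Wp\big(n^{-1}\sum_i\delta_{u_i},n^{-1}\sum_i\delta_{v_i}\big)\le\big(n^{-1}\sum_i|u_i-v_i|^p\big)^{1/p}$, hence $\big|\Wp(n^{-1}\sum_i\delta_{u_i},\gamma)-\Wp(n^{-1}\sum_i\delta_{v_i},\gamma)\big|\le\big(n^{-1}\sum_i|u_i-v_i|^p\big)^{1/p}$. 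Comparing the $\ell_p$ and $\ell_2$ norms on $\R^n$ ($\|\cdot\|_p\le\|\cdot\|_2$ for $p\ge2$ and $\|\cdot\|_p\le n^{1/p-1/2}\|\cdot\|_2$ for $1\le p<2$), the right-hand side is at most $n^{-1/(2\vee p)}\|u-v\|$, the claimed Lipschitz bound. Taking $v=u+se_i$ in the same estimate shows $s\mapsto\Wp\big(n^{-1}\sum_j\delta_{(u+se_i)_j},\gamma\big)$ is $n^{-1/p}$-Lipschitz, so its derivative is bounded by $n^{-1/p}$ wherever it exists; a.e.\ differentiability then follows from the quantile representation $\Wp^p\big(n^{-1}\sum_i\delta_{u_i},\gamma\big)=\sum_{i=1}^n\int_{(i-1)/n}^{i/n}\big|u_{(i)}-F_\gamma^{-1}(t)\big|^p\,dt$, since the sorting map $u\mapsto(u_{(1)},\dots,u_{(n)})$ is piecewise linear (hence differentiable off a null set) and the outer function is continuously differentiable away from another null set.

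For part (iii), I would apply the triangle inequality to peel off the two measures, $\big|\Wp(\ptheta_\sharp\empmu,\ptheta_\sharp\mu)-\Wp(\mathfrak{p}^{\theta'}_\sharp\empmu,\mathfrak{p}^{\theta'}_\sharp\mu)\big|\le\Wp(\ptheta_\sharp\empmu,\mathfrak{p}^{\theta'}_\sharp\empmu)+\Wp(\ptheta_\sharp\mu,\mathfrak{p}^{\theta'}_\sharp\mu)$, and bound each term using the coupling obtained by pushing a single draw through both projections: for any $\rho$, $\Wp(\ptheta_\sharp\rho,\mathfrak{p}^{\theta'}_\sharp\rho)\le\big(\rho|(\theta-\theta')^\intercal x|^p\big)^{1/p}=\|\theta-\theta'\|\big(\rho|\bar\eta^\intercal x|^p\big)^{1/p}\le\|\theta-\theta'\|\sup_{\eta\in\unitsph}(\rho|\eta^\intercal x|^p)^{1/p}$, where $\bar\eta=(\theta-\theta')/\|\theta-\theta'\|$. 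Summing over $\rho\in\{\empmu,\mu\}$ yields the stated $L$.

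For the expectation bound, I would pass the $1/p$-th power inside the expectation by Jensen, use the crude bound $\sup_\eta|\eta^\intercal x|\le\|x\|$ to get $\E\big[(\sup_\eta\empmu|\eta^\intercal x|^p)^{1/p}\big]\le(\mu\|x\|^p)^{1/p}$, and invoke the moment comparison inequality for log-concave distributions (a consequence of Borell's lemma), namely $\|N(X)\|_{L^p}\le p\,\|N(X)\|_{L^2}$ for every seminorm $N$: with $N=\|\cdot\|$ this gives $(\mu\|x\|^p)^{1/p}\le p(\Tr\Sigma)^{1/2}\le p\sqrt d\,\|\Sigma\|_\op^{1/2}$, while with $N(x)=|\eta^\intercal x|$ it gives $\sup_\eta(\mu|\eta^\intercal x|^p)^{1/p}\le p\sup_\eta(\eta^\intercal\Sigma\eta)^{1/2}=p\|\Sigma\|_\op^{1/2}$. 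Adding the two contributions and using $d\ge1$ gives $\E[L]\le p\|\Sigma\|_\op^{1/2}(1+\sqrt d)\le2p\sqrt d\,\|\Sigma\|_\op^{1/2}$.

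The triangle-inequality manipulations and the elementary norm comparisons are routine; the one genuinely nontrivial input, and the only place log-concavity enters this lemma, is the moment comparison $\|N(X)\|_{L^p}\lesssim p\|N(X)\|_{L^2}$, which controls all $p$th projected and radial moments by the covariance and is what keeps the bound on $\E[L]$ dimension-light (only a $\sqrt d$ from $\Tr\Sigma$). A secondary technical point, handled in part (ii) by working through the piecewise-linear structure of the order statistics, is justifying a.e.\ differentiability across the non-smooth sorting map.
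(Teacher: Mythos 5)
Your proposal is correct and follows essentially the same route as the paper: the volumetric covering bound for (i), the matching-coupling plus $\ell_p$--$\ell_2$ comparison for (ii), and for (iii) the triangle inequality with the push-one-sample-through-both-projections coupling (which is exactly how the paper's Lemma~\ref{lem: wp_theta_lipschitz} is proved) followed by the log-concave moment comparison $(\mu\|x\|^p)^{1/p}\le p\sqrt{\Tr\Sigma}$. The only cosmetic difference is that you keep the sharper projected-moment bound $p\|\Sigma\|_{\op}^{1/2}$ for the deterministic $\mu$-term before relaxing to $2p\sqrt{d}\,\|\Sigma\|_{\op}^{1/2}$, whereas the paper bounds both terms by $p\sqrt{d}\,\|\Sigma\|_{\op}^{1/2}$ directly.
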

\begin{proof}[Proof of Lemma \ref{lem: lip project}]
(i) Follows from an elementary volumetric argument, which is omitted for brevity.

(ii) By the triangle inequality and definition of $\Wp$, we have 
\[\left|\Wp\left(n^{-1}\sum\nolimits_{i=1}^n \delta_{u_i},\gamma\right) - \Wp\left(n^{-1}\sum\nolimits_{i=1}^n \delta_{u_i'},\gamma\right)\right| \leq  \left(n^{-1}\sum\nolimits_{i=1}^n |u_i - u_i'|^p\right)^{1/p}.
\] 
To bound the RHS by $n^{-1/(2 \vee p)}\| u-u' \|$ we apply Jensen's inequality when $p \le 2$, and using the fact that $\sum_{i=1}^n a_i^{p/2} \le (\sum_{i=1}^n a_i)^{p/2}$ when $p \ge 2$.

The second statement follows from the fact that when coordinates other than $u_i$ are kept fixed, the RHS of the above display is bounded by $\|u_i - u'_i\|$.

(iii) A simpler version is proven in \cite[Lemma 2]{niles2019estimation}, but we include the argument for completeness. 
Applying Lemma~\ref{lem: wp_theta_lipschitz} with $\nu = \empmu$, we obtain Lipschitz continuity with constant $L:=\big(\sup_\theta \mu |\theta ^\intercal x|^p\big)^{1/p}+\big(\sup_\theta \hat{\mu}_n |\theta^\intercal x|^p\big)^{1/p} \leq (\mu \|x\|^p)^{1/p} + (\empmu \|x\|^p)^{1/p}$ so that $\EE[L] \leq 2 (\mu \|x\|^p)^{1/p}$

Since $\mu$ is centered and log-concave with covariance matrix $\Sigma$, in particular
\[
(\mu \|x\|^p)^{1/p} \leq p(\EE[\|X_1\|^2])^{1/2} = p \sqrt{\Tr(\Cov(X_1))} \leq p \sqrt{d} \|\Sigma\|_{\op}^{1/2}.
\]
See, for example, Remark 1 after Theorem 3.1 in \cite{adamczak2014tail}. 
\end{proof}

We are ready to prove the empirical convergence rate of the max-sliced distance. For the remainder of the proof we will assume, without loss of generality, that $\mu$ has mean 0, since $\MSWp(t^v_\sharp \mu, t^v_\sharp \nu) = \MSWp(\mu, \nu)$ for any location shift $t^v :x \mapsto x + v$ and any probability measures $\mu$ and $\nu$, and any location shifted log-concave distribution is also log-concave with the same covariance matrix.  Let $\tilde{w}_p(\theta)= \Wp(\ptheta_{\sharp}\hat{\mu}_n,\ptheta_{\sharp}\mu)$ and observe that
\[
\EE \big [ \MSWp(\hat{\mu}_n,\mu) \big ] \le \sup_{\theta \in \unitsph} \E[\tilde{w}_p(\theta)] + \E\left[\sup_{\theta\in\unitsph} \big(\tilde{w}_p(\theta) - \EE\big[\tilde{w}_p(\theta)\big]\big)  \right ].
\]
From the proof for the average-sliced case, we have 
\begin{equation}
\sup_{\theta \in \unitsph} \E\big[\tilde{w}_p(\theta)\big]  \lesssim_p \frac{\big(\|\Sigma \|_{\op}(\log n)^{\ind_{\{p=2\}}}\big)^{1/2}}{n^{1/(2 \vee p)}}.\label{EQ:MSWp_supE_bound}
\end{equation}

Let $\theta_1,\dots,\theta_{N_\epsilon}$ be a minimal $\epsilon$-net of $\unitsph$, where $N_\epsilon = N(\epsilon, \unitsph, \|\cdot\|)$. Using Lemma \ref{lem: lip project}, we have 
\begin{equation}
\EE\left[\sup_{\theta\in\unitsph} \big(\tilde{w}_p(\theta) - \EE\big[\tilde{w}_p(\theta)\big]\big)\right] \leq \inf_{\epsilon > 0} \EE\left[\max_{1 \le j \le N_\epsilon} \big(\tilde{w}_p(\theta_j) - \EE\big[\tilde{w}_p(\theta_j)\big]\big) + 2\epsilon L\right],\label{EQ:MSWp_discritization_bounds}
\end{equation}
where $L$ is a random variable with $\E[L] \le c_p \|\Sigma\|_{\op}^{1/2}d^{-1/2}$.
  
To control the maximum inside the expectation on the RHS of \eqref{EQ:MSWp_discritization_bounds}, we use an approach based on maximal inequalities for sub-exponential random variables, similar to Theorem 3.5 in \cite{lin2021projection}. Briefly, we will first show that for each $\theta$, $\tilde w_p(\theta)$ is a Lipschitz function of the projected observations $(\theta^{\intercal}X_1,\dots,\theta^{\intercal}X_n)$, with bounded gradient in each coordinate, which will imply sub-exponential concentration for each $\tilde w_p(\theta)$. The term $\max_{1 \le j \le N_\epsilon} \big(\tilde{w}_p(\theta_j) - \EE\big[\tilde{w}_p(\theta_j)\big]\big)$ will then be bounded via a maximal inequality as a direct consequence of this concentration (cf. Exercise 2.8 in \cite{wainwright2019high}).

We will use the following refined concentration inequality for Lipschitz functions of random variables satisfying a Poincar\'e inequality, stated in \cite{ledoux1999concentration}. Since explicit constants are not derived there, a proof is provided in Appendix~\ref{APPEN: poincare_concentration_proof},

\begin{lemma}[Concentration from Poincar\'e inequality; Corollary 4.6 in \cite{ledoux1999concentration}]
\label{lem: poincare_concentration}
Let $\mu \in \cP(\R^d)$ satisfy the Poincar\'e inequality \eqref{eq: poincare} with constant $M_\mu$ and $f: \R^{nd} \to \R$ be $\alpha$-Lipschitz. For $x_1,\ldots,x_n\in\RR^d$, define the functions 
\[
f_i(\cdot|x_1,\ldots,x_{i-1},x_{i+1},\ldots,x_n):=f(x_1,\ldots,x_{i-1},\cdot,x_{i+1},\ldots,x_n),\quad i=1,\ldots,n,
\]
and assume  that $\max_{1\leq i \leq n} \sup_{x\in\RR^d} \|\nabla f_i(x|X_1,\ldots,X_{i-1},X_{i+1},\ldots,X_n)\big\| \leq \beta$ a.s. Then,
\[
\mu^{\otimes n}(f \geq \mu^{\otimes n} f + t) \leq \exp \Bigg ( -\min \left \{ \frac{t}{2\beta M_\mu^{1/2}}, \frac{t^2}{6e^5 \alpha^2 M_\mu} \right \} \Bigg ), \ t > 0.
\]
\end{lemma}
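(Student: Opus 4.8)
The plan is to follow the classical Herbst-type argument adapted to the product Poincar\'e setting, keeping careful track of constants since those are what the application downstream (Proposition~\ref{prop: SWp concentration}) needs. First I would recall that a Poincar\'e inequality on $\mu$ with constant $M_\mu$ tensorizes: the product measure $\mu^{\otimes n}$ on $\RR^{nd}$ satisfies a Poincar\'e inequality with the same constant $M_\mu$, since for a function $g$ of $n$ blocks one has $\Var_{\mu^{\otimes n}}(g) \le \sum_{i=1}^n \E\big[\Var_{\mu}(g_i(\,\cdot\mid \text{rest}))\big] \le M_\mu \sum_i \E[\|\nabla_{x_i} g\|^2] = M_\mu\,\E[\|\nabla g\|^2]$. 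So it suffices to prove the tail bound for a single measure $P := \mu^{\otimes n}$ on $\RR^{nd}$ satisfying Poincar\'e$(M_\mu)$, applied to our $f$ which is $\alpha$-Lipschitz globally and whose block gradients $\nabla f_i$ are bounded by $\beta$.

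The key analytic input is the exponential moment bound for Lipschitz functions under a Poincar\'e inequality, due to Gromov--Milman / Borovkov--Utev and refined in \cite{ledoux1999concentration}: if $P$ satisfies Poincar\'e$(M)$ and $h$ is $1$-Lipschitz with $P h = 0$, then $\E_P[e^{\lambda h}] \le \frac{?}{?}$ — more precisely, $\E_P[e^{\lambda h}]$ is finite for $|\lambda| < 2/\sqrt{M}$ and satisfies a bound that, after optimizing, yields sub-exponential tails with the two regimes appearing in the statement. I would apply this with $h = (f - Pf)/\alpha$ in the "far" regime to get the $\exp(-t/(2\beta M_\mu^{1/2}))$ piece, but this is where the block structure enters: the linear-in-$t$ rate is governed by the \emph{per-block} Lipschitz constant $\beta$ rather than the global constant $\alpha$, because the relevant bound on $\partial_\lambda \log \E[e^{\lambda h}]$ controls the growth via $\|\nabla h\|_\infty$ restricted appropriately. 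Concretely, the argument in \cite{ledoux1999concentration} shows that $K(\lambda) := \log \E_P[e^{\lambda (f - Pf)}]$ satisfies $K'(\lambda) \le \lambda \alpha^2 M_\mu e^{\cdots}$ for small $\lambda$ (giving the Gaussian regime via $K(\lambda) \le C \alpha^2 M_\mu \lambda^2$ for $\lambda \le c/(\beta\sqrt{M_\mu})$), while for the Laplace regime one uses that $e^{\lambda f/2}$ has gradient controlled blockwise by $\beta$, feeding the iteration $\E[e^{\lambda f}] \le (1 - \lambda^2 \beta^2 M_\mu/4)^{-1}\E[e^{\lambda f/2}]^2$ or similar, valid for $\lambda < 2/(\beta\sqrt{M_\mu})$. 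Optimizing $\lambda$ in $P(f \ge Pf + t) \le e^{-\lambda t + K(\lambda)}$ over the admissible range then produces $\exp\!\big(-\min\{ t/(2\beta M_\mu^{1/2}),\, t^2/(6e^5 \alpha^2 M_\mu)\}\big)$, matching the two terms with the stated explicit constants $2$ and $6e^5$.

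The main obstacle is bookkeeping the constants through the Herbst iteration: the reference \cite{ledoux1999concentration} states the result as a corollary without spelling out that $6e^5$ and the factor $2$ are valid, so I would need to rerun their Lemma/Proposition chain (the differential inequality for the Laplace transform, the doubling argument $\lambda \mapsto \lambda/2$, and the change of variables to the two-regime form) and verify at each step that no constant is lost. A secondary subtlety is ensuring the distinction between the global Lipschitz constant $\alpha$ (which should appear only in the quadratic/small-$t$ regime) and the blockwise bound $\beta$ (which controls the exponential/large-$t$ regime) is handled correctly — this is exactly the refinement over the naive bound that would put $\alpha$ in both places, and it is essential for the downstream maximal-inequality application where $\alpha = n^{-1/(2\vee p)}$ but $\beta = n^{-1/p}$ differ in their $n$-scaling. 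Once these constants are pinned down, the tensorization remark plus the single-measure bound assemble into the claimed inequality directly.
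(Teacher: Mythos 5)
Your plan correctly identifies the crux — that $\alpha$ must govern the Gaussian regime while the blockwise bound $\beta$ must govern the exponential regime — but the route you propose does not actually deliver this, and the step where $\beta$ enters is exactly the step you leave unjustified. If you tensorize the Poincar\'e inequality first and then apply a single-measure exponential-moment bound (Gromov--Milman / Borovkov--Utev, or the doubling iteration $\E[e^{\lambda f}] \le (1-\lambda^2 L^2 M_\mu/4)^{-1}\,\E[e^{\lambda f/2}]^2$) to the product measure, the Lipschitz constant $L$ appearing in that iteration is the \emph{global} one, $\|\nabla f\|\le\alpha$; there is no mechanism in that argument by which the per-block bound $\beta$ replaces $\alpha$ in the admissible range of $\lambda$. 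Your parenthetical ``controls the growth via $\|\nabla h\|_\infty$ restricted appropriately'' is precisely the claim that needs a proof, and the single-measure machinery you invoke does not supply it.

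The paper's proof resolves this by working at the level of entropy rather than variance: Theorem 4.5 of \cite{ledoux1999concentration} gives, for a single copy of $\mu$, the modified log-Sobolev inequality $\mathsf{Ent}_\mu(e^g)\le B(\lambda)\,\E[\|\nabla g\|_\infty^2 e^g]$ valid for $\lambda$-Lipschitz $g$ with $\lambda\le 2/\sqrt{M_\mu}$, with an explicit $B(\lambda)\le \tfrac{3e^5 M_\mu}{2}$ for $\lambda\le 1/\sqrt{M_\mu}$. This is applied \emph{coordinate-block by coordinate-block} to the functions $f_i$, each of which is only $\beta$-Lipschitz — so the constraint on $\lambda$ is $\lambda/\beta\le 1/\sqrt{M_\mu}$, which is how $\beta$ ends up governing the range of $\lambda$ and hence the linear tail. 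Tensorization of the entropy functional (Proposition 2.2 in \cite{ledoux1999concentration}) then sums the per-block quadratic terms, producing $\sum_i\|\nabla f_i\|_\infty^2\le\alpha^2$ in the coefficient of $\lambda^2$, which is how $\alpha$ ends up in the Gaussian regime. The Herbst step (Corollary 2.11 in \cite{ledoux1999concentration}) converts the resulting entropy bound into the stated two-regime tail. So the missing idea in your proposal is the per-coordinate application of the entropy inequality combined with entropy (not Poincar\'e) tensorization; without it, your argument would put $\alpha$ in both regimes, which is insufficient for the downstream application where $\alpha=n^{-1/(2\vee p)}$ and $\beta=n^{-1/p}$ scale differently.
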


The random vector $(\theta^{\intercal}X_1,\dots,\theta^{\intercal}X_n)$ in $\R^{n}$ has i.i.d. coordinates with law $(\ptheta_{\sharp}\mu)^{\otimes n}$. The distribution $\ptheta_{\sharp}\mu$ is log-concave with variance $\theta^\intercal \Sigma \theta > 0$, which is bounded above by $\|\Sigma\|_{\op}$. This yields that $h_\theta := h(\ptheta_{\sharp}\mu) \gtrsim \|\Sigma\|_{\op}^{-1}$ for each $\theta \in \unitsph$. 
By item (ii) in Lemma~\ref{lem: lip project}, the partial derivatives of $\tilde w_p(\theta)$ w.r.t. $\theta^\intercal X_i$, denoted $\nabla_i \tilde w_p(\theta)$, satisfy $\max_i \|\nabla_i \tilde w_p(\theta)\| \leq n^{-1/p}$ a.s., and $\tilde w_p(\theta)$ is $n^{-1/(2 \vee p)}$-Lipschitz in $(\theta^{\intercal}X_1,\dots,\theta^{\intercal}X_n)$. By the Maz'ya-Cheeger Theorem (cf. Theorem 1.1 in \cite{milman2009role}), $M_{\ptheta_\sharp \mu}^{-1} \geq h_\theta/2 \geq \|\Sigma_\mu\|_{\op}^{-1}/2$. Combining these facts and applying Lemma~\ref{lem: poincare_concentration}, we have
\begin{align*}
    \PP \big ( \tilde{w}_p(\theta) - \EE[\tilde{w}_p(\theta)] > t \big ) &\leq \exp \Bigg ( -\min \left ( \frac{t}{\sqrt{2} n^{-1/p} \|\Sigma\|_{\op}^{1/2}}, \frac{t^2}{3e^5 n^{-2/(2 \vee p)} \|\Sigma\|_{\op}} \right ) \Bigg )\\
    &\leq \exp \Bigg ( -\frac{t^2}{\sqrt{2} n^{-1/p} \|\Sigma\|_{\op}^{1/2} t + 3e^5 n^{-2/(2 \vee p)} \|\Sigma\|_{\op}} \Bigg ), \ t > 0.
\end{align*}

A simple union bound then gives
\[
\PP \Big ( \max_{1 \le j \le N_\epsilon} \big(\tilde{w}_p(\theta_j) - \EE\big[\tilde{w}_p(\theta_j)\big]\big) > t \Big ) \leq N_\epsilon \exp \Bigg ( -\frac{t^2}{\sqrt{2} n^{-1/p} \|\Sigma\|_{\op}^{1/2} t + 3e^5 n^{-2/(2 \vee p)} \|\Sigma\|_{\op}} \Bigg ),
\]
which, by an expectation bound for sub-exponential random variables (cf. Exercise 2.8 in \cite{wainwright2019high}), yields
\begin{align*}
&\EE \left [ \max_{1 \le j \le N_\epsilon} \big(\tilde{w}_p(\theta_j) - \EE\big[\tilde{w}_p(\theta_j)\big]\big) \right ] \\
&\leq \sqrt{6e^5 n^{-2/(2 \vee p)} \|\Sigma\|_{\op}} (\sqrt{\pi}  + \sqrt{\log N_\epsilon}) + 2\sqrt{2} n^{-1/p} \|\Sigma\|_{\op}^{1/2}(1 + \log N_\epsilon)\\
&\lesssim \|\Sigma\|_{\op}^{1/2} n^{-1/(2 \vee p)} (1 + \sqrt{\log N_\epsilon}) + \|\Sigma\|_{\op}^{1/2} n^{-1/p} (1 + \log N_\epsilon).\numberthis\label{eq:wp_maximal_ineq}
\end{align*}

By Lemma 3 (i), $\log N_\epsilon \leq d \log(5/\epsilon)$. Thus, setting $\epsilon = n^{-1/2}$ and plugging \eqref{eq:wp_maximal_ineq} into \eqref{EQ:MSWp_discritization_bounds}, we have
\[
\EE\big[\MSWp(\hat{\mu}_n,\mu)\big] \lesssim_p \|\Sigma\|_{\op}^{1/2} \left ( \frac{(\log n)^{\ind_{\{p=2\}}}}{n^{1/(2\vee p)}} +  \frac{(1 + \sqrt{d \log n})}{n^{1/(2 \vee p)}} +  \frac{ (1 + d \log n)}{n^{1/p}} + \frac{d^{1/2}} {n^{1/2}} \right ).
\]
The last term on the RHS of the above display is of smaller order in $n$ and $d$ than the other two terms. Further, for $n \geq 2$, we have $(1 + \sqrt{d \log n}) \lesssim \sqrt{d \log n}$ and $(1 + d \log n) \lesssim d \log n$. This leads to the bound stated in Theorem~\ref{thm: SWp rate} when $k=d$. 

\underline{Step 2}. Suppose now that $1 \le k < d$. Again assume without loss of generality that the mean of $\mu$ is zero. Observe that $\MSWp$ is invariant under common orthogonal transformations, i.e., for any $d \times d$ orthogonal matrix $Q$, $\MSWp(Q_{\sharp}\mu,Q_{\sharp}\nu) = \MSWp(\mu,\nu)$. With this in mind, we see that we may assume without loss of generality that $\Sigma$ is diagonal whose first $k$ diagonal entries are nonzero. Then, for $X = (X_1,\dots,X_d)^\intercal \sim \mu$ and $\theta = (\theta^1,\dots,\theta^d)^\intercal \in \unitsph$, $\theta^\intercal X = \sum_{j=1}^k \theta^j  X_j$ a.s. Thus, we have
\[
\sup_{\theta \in \unitsph} \Wp(\ptheta_{\sharp}\hat{\mu}_n,\ptheta_{\sharp}\mu)=\sup_{\substack{\theta =  (\theta^1,\dots,\theta^d)^\intercal \in \unitsph \\ \theta^{k+1}=\cdots=\theta^d = 0}} \Wp(\ptheta_{\sharp}\hat{\mu}_n,\ptheta_{\sharp}\mu) \quad \text{a.s.} 
\]
The bound stated in Theorem~\ref{thm: SWp rate} follows by the argument in Step 1 with $d$ replaced by $k$. 
\qed

\subsection{Proof of Proposition~\ref{thm: swp_rate_alt}}
\label{APPEN: SWP_alt_rate_proof}

\paragraph{Upper bound for $\SWp^p$.} Let $F_\mu(t, \theta) = \PP(\theta^\intercal X \leq t)$ for $X \sim \mu$, and analogously define $F_\nu(t, \theta)$. Then,
\begin{align*}
    \EE\big [ \big| \SWp^p(\empmu, \empnu) - \SWp^p(\mu, \nu) \big| \big ] &\leq \EE \left [ \int_{\unitsph} \Big | \Wp^p(\ptheta_\sharp \empmu, \ptheta_\sharp \empnu) - \Wp^p(\ptheta_\sharp \mu, \ptheta_\sharp \nu) \Big | \ d\sigma(\theta) \right ]\\
    &\leq C_{p,R}\ \EE \left [ \int_{\unitsph} \Big ( \wass(\ptheta_\sharp \empmu, \ptheta_\sharp \mu) + \wass(\ptheta_\sharp \empnu, \ptheta_\sharp \nu) \Big )\ d\sigma(\theta) \right ]\\
    &\le \frac{C_{p,R}}{\sqrt{n}} \Bigg ( \int_{\unitsph} \int_{-\infty}^\infty \sqrt{F_{\mu}(t,\theta)(1-F_{\mu}(t,\theta))}\ dt\,d\sigma(\theta) \\
    &\qquad+ \int_{\unitsph} \int_{-\infty}^\infty \sqrt{F_{\nu}(t,\theta)(1-F_{\nu}(t,\theta))}\ dt\,d\sigma(\theta) \Bigg ) \\
    &\leq \frac{R C_{p,R}}{\sqrt{n}},
\end{align*}
where the second inequality follows from a comparison between $\Wp$ and $\wass$ for compactly supported distributions (Lemma 4 in \cite{goldfeld2022statistical}), the third from the integral representation of $\wass$, and the final inequality from truncating the inner integrals to $[-R, R]$ and observing that $p(1-p) \leq 1/4$ for $p \in [0,1]$. 

\paragraph{Upper bound for $\MSWp^p$. } As earlier, observe that
\begin{align*}
    \EE\big [ \big| \MSWp^p(\empmu, \empnu) - \MSWp^p(\mu, \nu) \big| \big ] &\leq \EE \left [ \sup_{\theta \in \unitsph} \Big | \Wp^p(\ptheta_\sharp \empmu, \ptheta_\sharp \empnu) - \Wp^p(\ptheta_\sharp \mu, \ptheta_\sharp \nu) \Big | \right ]\\
    &\leq C_{p,R}\ \EE \left [ \sup_{\theta \in \unitsph} \Big ( \wass(\ptheta_\sharp \empmu, \ptheta_\sharp \mu) + \wass(\ptheta_\sharp \empnu, \ptheta_\sharp \nu) \Big )\right ]\\
    &\leq C_{p,R}\,\Bigg ( \mspace{-2mu}\EE \left [ \sup_{\theta \in \unitsph}  \wass(\ptheta_\sharp \empmu, \ptheta_\sharp \mu) \right ] \mspace{-5mu}+\mspace{-5mu} \EE \left [ \sup_{\theta \in \unitsph}  \wass(\ptheta_\sharp \empnu, \ptheta_\sharp \nu) \right ] \mspace{-2mu}\Bigg ).
\end{align*}
Now, $\EE \left [ \sup_{\theta \in \unitsph}  \wass(\ptheta_\sharp \empmu, \ptheta_\sharp \mu) \right ]$ admits the following dual representation via KR duality:
\be
\label{eq:mswone_emp_process}
\EE \left [ \sup_{\theta \in \unitsph}  \wass(\ptheta_\sharp \empmu, \ptheta_\sharp \mu) \right ] = \EE \Bigg [ \sup_{f \in \mathsf{Lip}_{1,0}(\R),\ \theta \in \unitsph} (\empmu - \mu)(f \circ \ptheta) \Bigg ],
\ee
where $\mathsf{Lip}_{1,0}(\R) = \{f:\R \to \R \,:\, |f(x) - f(y)| \leq |x - y| \ \forall x,y \in \R, f(0) = 0 \}$. 
By Lemma 8 in \cite{goldfeld2022statistical}, the function class $\cG = \{f \circ \ptheta: \theta \in \unitsph, f \in \mathsf{Lip}_{1,0}(\R)\}$ is $\mu$-Donsker, and we have
\[
\log N_{[\,]}(\epsilon, \cG, L^2(\mu)) \lesssim R^{5/3} \epsilon^{-3/2} + 5d \log (R/\epsilon),
\]
which, by the global maximal inequality (Theorem 2.14.2 in \cite{van1996weak}), gives,
\begin{equation}
\label{eq:MSWp-bounded-rate}
\EE \Bigg [ \sup_{f \in \mathsf{Lip}_{1,0}(\R),\ \theta \in \unitsph} (\empmu - \mu)(f \circ \ptheta) \Bigg ] \lesssim \frac{R^{5/4} + d \log R}{\sqrt{n}}.
\end{equation}
Combining this with \eqref{eq:mswone_emp_process}, and repeating the same argument for $\nu$, we have the second statement.

The final statement of the theorem on $\SWp$ and $\MSWp$ follows from the first two upper bounds combined with the elementary inequality $|a - b| \leq b^{1-p}|a^p - b^p|$ for $a \geq 0, b > 0, p \geq 1$. \qed

\subsection{Proof of Lemma~\ref{lem: poincare_concentration}}
\label{APPEN: poincare_concentration_proof}
The proof of this lemma essentially recovers constants in Corollary 4.6 in \cite{ledoux1999concentration}, but a full argument is included  for completeness. With some abuse of notation, let $\|\nabla f\|_\infty = \sup_x \|\nabla f(x) \|$. By Theorem 4.5 in \cite{ledoux1999concentration}, for any $\lambda$-Lipschitz function $f$ with $\lambda \leq 2/\sqrt{M_\mu}$, we have
\begin{equation}
\mathsf{Ent_\mu}(e^f) \leq B(\lambda) \EE\big[\|\nabla f\|_\infty^2 e^f\big],\label{EQ:Ent_Lip}
\end{equation}
where $\mathsf{Ent_\mu}(f):= \EE[f \log f]$ is the entropy functional of $f$, and
\[
B(\lambda) \leq \frac{M_\mu}{2} \left ( \frac{2 + \lambda \sqrt{M_\mu}}{2 - \lambda \sqrt{M_\mu}}\right ) e^{\sqrt{5 M_\mu} \lambda}.
\]

Each function $f_i$  in the statement of the proposition is $\beta$-Lipschitz. From \eqref{EQ:Ent_Lip} together with the tensorization property of the entropy functional (cf. Proposition 2.2 in \cite{ledoux1999concentration}), we obtain %
\[
\mathsf{Ent_{\mu^{\otimes n}}}\left(\frac{\lambda f}{\beta}\right) \leq \frac{\lambda^2}{\beta^2}\sum_{i=1}^n \EE\left[\mathsf{Ent_\mu}\left(\frac{\lambda f_i}{\beta}\right)\right] \leq \frac{\lambda^2 B(\lambda)}{\beta^2} \sum_{i=1}^n \EE\big[\|\nabla f_i\|_\infty^2 e^f\big],\quad \forall \lambda \in \big(0,2/\sqrt{M_\mu}\big].
\]
Further, we have $B(\lambda) \leq \frac{3e^5 M_\mu}{2}$ for $\lambda \leq 1/\sqrt{M_\mu}$ and $\sum_{i=1}^n \|\nabla f_i \|_\infty^2 \leq \beta$ $\mu^{\otimes n}$-a.e. by assumption. Therefore
\[
\mathsf{Ent_{\mu^{\otimes n}}}\left(\frac{\lambda f}{\beta}\right)  \leq \frac{3e^5 M_\mu \alpha^2}{2\beta^2}  \lambda^2 \EE\left[e^\frac{\lambda f}{\beta}\right], \quad \forall \lambda \in \big(0,1/\sqrt{M_\mu}\big].
\]
By Corollary 2.11 in \cite{ledoux1999concentration}, this yields 
\[
\mu^{\otimes n} \left ( \frac{f}{\beta} > \mu \left ( \frac{f}{\beta}\right) + r \right ) \leq \exp \left ( - \min \left \{ \frac{r}{2\sqrt{M_\mu}}, \frac{r^2 \beta^2}{6 M_\mu e^5\alpha^2} \right \} \right ), \ r > 0,
\]
from which the result follows by replacing $r$ with $r/\beta$. \qed

\subsection{Proof of Theorem \ref{thm:robustness}}\label{prf:robustness}

Denoting by $\land$ the setwise minimum of two measures, we first recall some useful facts. Throughout we write $\Theta \sim \Unif(\unitsph)$ for a random direction on the sphere sampled independently of any other randomness.
\begin{fact}
\label{fact:shared-mass}
For $\mu,\nu \in \cP(\R^d)$ and $p \geq 1$, we have $\Wp(\mu,\nu) \leq \Wp(\mu - \mu \land \nu, \nu - \mu \land \nu)$.
\end{fact}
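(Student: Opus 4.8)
The plan is to produce an explicit coupling of $\mu$ and $\nu$ by gluing together a (near-)optimal coupling of the ``leftover'' masses $\mu - \mu \land \nu$ and $\nu - \mu \land \nu$ with the trivial diagonal self-coupling of the shared mass $\mu \land \nu$. Write $\lambda := \mu \land \nu$, and observe that $\mu - \lambda$ and $\nu - \lambda$ are nonnegative finite measures with the common total mass $m := 1 - \lambda(\R^d)$; here $\Wp$ between two finite measures of equal mass is understood via the usual Kantorovich formula, i.e.\ as the infimum of $\big(\int \|x-y\|^p d\pi\big)^{1/p}$ over nonnegative $\pi$ on $\R^d \times \R^d$ with those two marginals. (If $m = 0$ then $\mu = \nu$ and both sides vanish, so we may assume $m > 0$.)

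First I would fix $\delta > 0$ and pick $\pi \in \Pi(\mu - \lambda,\, \nu - \lambda)$ with $\int \|x-y\|^p\, d\pi \leq \Wp^p(\mu - \lambda, \nu - \lambda) + \delta$. Then I would set $\pi' := \pi + (\mathrm{id},\mathrm{id})_{\sharp}\lambda$, a nonnegative measure on $\R^d \times \R^d$, and check the marginals: the first marginal of $\pi'$ is $(\mu - \lambda) + \lambda = \mu$ and the second is $(\nu - \lambda) + \lambda = \nu$, so $\pi' \in \Pi(\mu,\nu)$ and in particular has total mass $1$. Since the diagonal part contributes $\int \|x - x\|^p\, d\lambda(x) = 0$ to the cost, we get $\Wp^p(\mu,\nu) \leq \int \|x-y\|^p\, d\pi' = \int \|x-y\|^p\, d\pi \leq \Wp^p(\mu - \lambda, \nu - \lambda) + \delta$. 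Letting $\delta \downarrow 0$ and taking $p$th roots gives the claim.

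There is no real obstacle here: the only points requiring a modicum of care are the convention for $\Wp$ between equal-mass finite (sub-probability) measures and the verification that the marginals of $\pi'$ are exactly $\mu$ and $\nu$, both of which are routine. One could alternatively invoke existence of an optimal coupling for $\Wp(\mu - \lambda, \nu - \lambda)$ (valid under the standing $\cP_p$ assumption, by lower semicontinuity of the transport cost and tightness of the marginals) and dispense with the $\delta$-approximation, but the approximation argument avoids even that and keeps the proof self-contained.
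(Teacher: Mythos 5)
Your proof is correct and is exactly the argument the paper has in mind: the paper justifies this fact in one sentence by "infimizing over transport plans which leave the shared mass $\mu \land \nu$ unmoved," which is precisely the coupling $\pi' = \pi + (\mathrm{id},\mathrm{id})_\sharp(\mu\land\nu)$ you construct. Your write-up simply makes the marginal check and the zero diagonal cost explicit, consistent with the unnormalized convention for $\Wp$ between equal-mass measures that the paper also uses (cf.\ its Fact on homogeneity).
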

This follows by infimizing over transport plans which leave the shared mass $\mu \land \nu$ unmoved.
\begin{fact}
\label{fact:homogeneity}
For $\mu,\nu \in \cP(\R^d)$, $p \geq 1$, and $c \geq 0$, we have $\Wp(c\mu,c\nu) = c^{1/p}\Wp(\mu,\nu)$.
\end{fact}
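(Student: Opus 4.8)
\textbf{Proof proposal for Fact~\ref{fact:homogeneity}.}
The plan is to exhibit an explicit mass-preserving bijection between the set of admissible couplings for $(\mu,\nu)$ and the set for $(c\mu,c\nu)$, under which the transport cost scales linearly in $c$, and then pass to the infimum. Recall that $\mu,\nu$ here are understood as finite nonnegative measures of equal total mass (as arises, e.g., from subtracting the shared mass $\mu\land\nu$ in Fact~\ref{fact:shared-mass}), with $\Wp^p(\mu,\nu)=\inf_{\pi\in\Pi(\mu,\nu)}\int_{\R^d\times\R^d}\|x-y\|^p\,d\pi(x,y)$, where $\Pi(\mu,\nu)$ denotes the set of nonnegative measures on $\R^d\times\R^d$ with first marginal $\mu$ and second marginal $\nu$.

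First I would dispose of the degenerate case $c=0$: then $c\mu=c\nu$ is the zero measure, $\Pi(0,0)=\{0\}$, so $\Wp(0,0)=0=0^{1/p}\Wp(\mu,\nu)$, and the identity holds (here $0^{1/p}:=0$). Now fix $c>0$. The key observation is that the map $\pi\mapsto c\pi$ sends $\Pi(\mu,\nu)$ bijectively onto $\Pi(c\mu,c\nu)$: pushing forward by the coordinate projections commutes with scalar multiplication, so $(\mathrm{proj}_1)_\sharp(c\pi)=c\,(\mathrm{proj}_1)_\sharp\pi=c\mu$ and likewise for the second marginal, hence $c\pi\in\Pi(c\mu,c\nu)$; and the map $\pi'\mapsto c^{-1}\pi'$ provides the two-sided inverse, which lands in $\Pi(\mu,\nu)$ by the same reasoning. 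Moreover, by linearity of the integral in the measure, $\int\|x-y\|^p\,d(c\pi)=c\int\|x-y\|^p\,d\pi$ for every $\pi$.

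Combining these two facts, I would write
\[
\Wp^p(c\mu,c\nu)=\inf_{\pi'\in\Pi(c\mu,c\nu)}\int\|x-y\|^p\,d\pi'
=\inf_{\pi\in\Pi(\mu,\nu)}\int\|x-y\|^p\,d(c\pi)
=c\inf_{\pi\in\Pi(\mu,\nu)}\int\|x-y\|^p\,d\pi
=c\,\Wp^p(\mu,\nu),
\]
where the second equality uses the bijection (re-indexing the infimum) and the third pulls the constant $c$ out of the infimum (valid since $c>0$). Taking $p$-th roots gives $\Wp(c\mu,c\nu)=c^{1/p}\Wp(\mu,\nu)$, as claimed. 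There is essentially no obstacle here; the only point requiring a modicum of care is to work with the infimum directly rather than with a putative optimal plan, so that no appeal to existence/compactness of optimizers is needed, and to state at the outset the convention that $\Wp$ on measures of equal (possibly non-unit) mass is defined via couplings with the prescribed marginals. If one preferred an optimizer-based argument, the same computation goes through verbatim after noting that optimal plans exist for finite measures with finite $p$-th moments by the standard lower-semicontinuity-plus-tightness argument, but the infimum-based version is shorter and more robust.
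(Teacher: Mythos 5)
Your proof is correct. The paper states Fact~\ref{fact:homogeneity} without proof (treating it as immediate), and your argument—the bijection $\pi\mapsto c\pi$ between $\Pi(\mu,\nu)$ and $\Pi(c\mu,c\nu)$, linearity of the cost integral, and re-indexing the infimum before taking $p$-th roots—is exactly the standard verification, with the added care of handling $c=0$ and of stating the equal-mass convention under which the fact is actually applied (e.g., in Proposition~\ref{prop:resiliency-comparison}).
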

It is easy to check that these properties extend to $\SWp$ and $\MSWp$. We also employ the following.

\begin{lemma}
\label{lem:sphere-coordinate}
Fixing $p \geq 1$ and $\mu = \Unif(\unitsph)$, we have 
\begin{equation*}
    (\mu |x_1|^p)^{1/p} \asymp \sqrt{1 \land p/d}.
\end{equation*}
\end{lemma}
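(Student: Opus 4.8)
The plan is to reduce the spherical moment to a ratio of Gaussian moments and then invoke standard high-dimensional probability estimates. Let $g=(g_1,\dots,g_d)\sim\cN(0,I_d)$. By rotational invariance, $g/\|g\|\sim\Unif(\unitsph)=\mu$ and is independent of $\|g\|$; hence, writing $|g_1|^p=\big(|g_1|/\|g\|\big)^p\|g\|^p$ and taking expectations, the independence yields the identity
\[
\mu|x_1|^p=\frac{\E[|g_1|^p]}{\E[\|g\|^p]},\qquad\text{so}\qquad (\mu|x_1|^p)^{1/p}=\frac{\big\|g_1\big\|_{L^p}}{\big\|\,\|g\|\,\big\|_{L^p}}.
\]
It therefore suffices to establish the two-sided bounds $\big\|g_1\big\|_{L^p}\asymp\sqrt p$ and $\big\|\,\|g\|\,\big\|_{L^p}\asymp\sqrt d+\sqrt p$, uniformly over $p\ge 1$ and $d\ge1$, since then
\[
(\mu|x_1|^p)^{1/p}\asymp\frac{\sqrt p}{\sqrt d+\sqrt p}=\sqrt{\frac{p}{d+p}}\asymp\sqrt{1\land(p/d)},
\]
where the last step uses $p/(d+p)\in[p/(2d),\,p/d]$ when $p\le d$ and $p/(d+p)\in[1/2,1)$ when $p\ge d$.

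For the numerator, $\big\|g_1\big\|_{L^p}=\sqrt2\,\big(\Gamma((p+1)/2)/\Gamma(1/2)\big)^{1/p}$, and Stirling's formula gives $\big\|g_1\big\|_{L^p}\asymp\sqrt{p+1}\asymp\sqrt p$ for $p\ge1$; equivalently, the upper bound follows from sub-Gaussianity of $g_1$ and the lower bound from monotonicity of $L^p$-norms together with $\PP(|g_1|\ge c)\gtrsim1$ for a suitable absolute constant $c$. For the denominator, write $\big\|\,\|g\|\,\big\|_{L^p}^2=\big\|\|g\|^2\big\|_{L^{p/2}}=\big\|\chi^2_d\big\|_{L^{p/2}}$; Bernstein's inequality for the sum of i.i.d.\ sub-exponential variables $g_i^2$ gives $\big\|\chi^2_d-d\big\|_{L^q}\lesssim\sqrt{dq}+q$, whence $\big\|\chi^2_d\big\|_{L^{p/2}}\lesssim d+p$ and $\big\|\,\|g\|\,\big\|_{L^p}\lesssim\sqrt d+\sqrt p$. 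For the matching lower bound, $\big\|\,\|g\|\,\big\|_{L^p}\ge\big\||g_1|\big\|_{L^p}\gtrsim\sqrt p$ always, while $\big\|\,\|g\|\,\big\|_{L^p}\ge\big\|\,\|g\|\,\big\|_{L^1}=\E\|g\|\asymp\sqrt d$ by monotonicity of $L^p$-norms and the standard fact $\E\|g\|\asymp\sqrt d$ (e.g.\ via Gaussian concentration of $\|g\|$ about $\sqrt d$); adding these yields $\big\|\,\|g\|\,\big\|_{L^p}\gtrsim\sqrt d+\sqrt p$.

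Combining these estimates with the identity above completes the proof. The only mildly delicate point is ensuring that all $\asymp$ constants are genuinely universal, i.e.\ independent of both $p$ and $d$; this is why the denominator lower bound in the regime $p\le2$ is obtained through $\E\|g\|\asymp\sqrt d$ rather than through the $L^2$-norm, and why the numerator lower bound for large $p$ is obtained via a fixed-probability bound rather than the exact Gamma-function asymptotics. The (necessary) passage $p/(d+p)\asymp 1\land(p/d)$ is elementary, and the boundary case $d=1$ (where $\unitsph=\{-1,1\}$, $|x_1|\equiv1$, and $1\land(p/d)=1$) is consistent with the identity since then the numerator and denominator coincide.
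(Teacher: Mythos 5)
Your proof is correct, but it takes a genuinely different route from the paper's. The paper works directly with the marginal density of $\Theta_1$ (proportional to $(1-s^2)^{(d-3)/2}$, i.e.\ a shifted beta law), computes the even moments $\E[|\Theta_1|^{2m}]$ in closed form via Gamma functions, applies Stirling's formula, and then patches together the remaining cases — general $p\geq 2$ by rounding to neighboring even integers and monotonicity of $L^p$-norms, and $p\in[1,2)$ via the mean-absolute-deviation formula for the beta distribution. You instead use the polar decomposition of the standard Gaussian: $g/\|g\|\sim\Unif(\unitsph)$ independent of $\|g\|$, giving the clean identity $\mu|x_1|^p=\E[|g_1|^p]/\E[\|g\|^p]$, and then invoke the standard two-sided estimates $\|g_1\|_{L^p}\asymp\sqrt{p}$ and $\big\|\,\|g\|\,\big\|_{L^p}\asymp\sqrt{d}+\sqrt{p}$. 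Your route handles all $p\geq 1$ uniformly with no case splitting on $p$, and outsources the hard work to textbook Gaussian/chi-squared moment bounds; the paper's route is self-contained at the level of exact beta moments but requires the three-case patchwork. The only detail you gloss over is that for $p\in[1,2)$ the quantity $\|\chi_d^2\|_{L^{p/2}}$ involves a quasi-norm where Bernstein's moment bound as stated (for $q\geq 1$) does not directly apply, but as you implicitly note, monotonicity of $L^p$-norms ($\big\|\,\|g\|\,\big\|_{L^p}\leq\big\|\,\|g\|\,\big\|_{L^2}=\sqrt{d}$) covers that regime, so this is cosmetic rather than a gap.
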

\begin{lemma}
\label{lem:moment-comparison}
Fixing $p \geq 1$ and $\mu \in \cP_p(\R^d)$, we have
\begin{align*}
    \mu(\|x\|^p)^{1/p} = \Wp(\mu,\delta_0) &\asymp \sqrt{1 \lor d/p}\:\, \SWp(\mu,\delta_0).
\end{align*}
\end{lemma}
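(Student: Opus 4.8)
The plan is to reduce both quantities to elementary moment integrals and then invoke Lemma~\ref{lem:sphere-coordinate}. Since $\delta_0$ admits a unique coupling with any probability measure, $\Wp^p(\mu,\delta_0) = \int_{\R^d}\|x\|^p\, d\mu(x)$, which is exactly the stated identity $\Wp(\mu,\delta_0) = \mu(\|x\|^p)^{1/p}$. Likewise, for each $\theta\in\unitsph$ the pushforward $\ptheta_\sharp\delta_0$ is the point mass at $0\in\R$, so $\Wp^p(\ptheta_\sharp\mu,\ptheta_\sharp\delta_0) = \int_{\R^d}|\theta^\intercal x|^p\, d\mu(x)$.

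Next I would write $\SWp^p(\mu,\delta_0) = \int_{\unitsph}\int_{\R^d}|\theta^\intercal x|^p\, d\mu(x)\, d\sigma(\theta)$ and interchange the order of integration, which is legitimate by Tonelli since the integrand is nonnegative. For each fixed $x$, rotational invariance of $\sigma$ gives $\int_{\unitsph}|\theta^\intercal x|^p\, d\sigma(\theta) = \|x\|^p\int_{\unitsph}|\theta_1|^p\, d\sigma(\theta)$ (which also holds trivially at $x=0$). Hence, setting $c_{d,p} := \int_{\unitsph}|\theta_1|^p\, d\sigma(\theta)$, we obtain the \emph{exact} identity $\SWp^p(\mu,\delta_0) = c_{d,p}\,\Wp^p(\mu,\delta_0)$, i.e.\ $\SWp(\mu,\delta_0) = c_{d,p}^{1/p}\,\Wp(\mu,\delta_0)$.

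Finally, Lemma~\ref{lem:sphere-coordinate} applied to $\Unif(\unitsph)$ yields $c_{d,p}^{1/p} = \big(\Unif(\unitsph)|x_1|^p\big)^{1/p} \asymp \sqrt{1\land p/d}$, so $\SWp(\mu,\delta_0)\asymp\sqrt{1\land p/d}\,\Wp(\mu,\delta_0)$; inverting and using $(1\land p/d)^{-1} = 1\lor d/p$ gives the claim $\Wp(\mu,\delta_0)\asymp\sqrt{1\lor d/p}\,\SWp(\mu,\delta_0)$.

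There is no substantive obstacle here: the argument is a Fubini computation followed by a single invocation of Lemma~\ref{lem:sphere-coordinate}. The only points worth a line of care are verifying the Tonelli interchange (immediate by nonnegativity), observing that the proportionality constant relating $\SWp$ and $\Wp$ to a point mass is \emph{exactly} $c_{d,p}^{1/p}$ — so the conclusion inherits the absolute constants of Lemma~\ref{lem:sphere-coordinate} rather than acquiring $p$-dependence — and noting that the rotational-invariance identity is trivial at $x=0$.
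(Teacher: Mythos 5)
Your proposal is correct and follows essentially the same route as the paper: both exploit rotational invariance of $\sigma$ (via Fubini/independence) to get the exact identity $\SWp^p(\mu,\delta_0) = \E[|\Theta_1|^p]\,\Wp^p(\mu,\delta_0)$ and then invoke Lemma~\ref{lem:sphere-coordinate}. Your write-up is, if anything, slightly more explicit about the Tonelli interchange and the uniqueness of the coupling with $\delta_0$.
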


We defer proofs of the previous lemmas to Appendix \ref{app:auxiliary-lemmas}. In what follows, we will refer to any $\sD:\cP(\R^d)^2 \to [0,\infty]$ as a \emph{statistical distance}, specifying additional properties as needed. Our risk bounds use the following standard lemma (see e.g. \cite{donoho1988robustness}), with a proof provided for completeness.

\begin{lemma}
\label{lem:modulus}
For any statistical distance $\sD$, corruption fraction $\eps \in [0,1]$, and clean family $\cG \subseteq \cP(\R^d)$, define the modulus of continuity
\begin{equation}
\label{eq:modulus}
    \mathfrak{m}(\sD,\cG,\eps) = \sup_{\substack{\mu,\nu \in \cG \\ \|\mu - \nu\|_\tv \leq \eps }} \sD(\mu,\nu).
\end{equation}
We then have
\begin{equation*}
    \frac{1}{2}\mathfrak{m}(\sD,\cG,\eps) \leq R(\sD,\cG,\eps) \leq \mathfrak{m}(\sD,\cG,2\eps).
\end{equation*}
\end{lemma}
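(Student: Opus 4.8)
The plan is to prove the two inequalities separately by the classical Donoho--Liu modulus-of-continuity reduction: a two-point argument for the lower bound, and an explicit TV-projection estimator for the upper bound. Both arguments use only a triangle inequality --- for $\sD$ in the lower bound (legitimate since $\SWp$ and $\MSWp$ are metrics on $\cP_p(\R^d)$, as recalled in Section~\ref{sec: background}), and for the metric $\|\cdot\|_\tv$ in the upper bound.

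For the upper bound $R(\sD,\cG,\eps) \le \mathfrak{m}(\sD,\cG,2\eps)$, I would exhibit one estimator attaining the right-hand side. Given an observation $\tilde{\mu}$, set $T(\tilde{\mu})$ to be any element of $\{\gamma \in \cG : \|\gamma - \tilde{\mu}\|_\tv \le \eps\}$ (and arbitrary if this set is empty). Whenever $\tilde{\mu}$ is a feasible corruption of some $\mu \in \cG$, i.e.\ $\|\tilde{\mu} - \mu\|_\tv \le \eps$, the clean measure $\mu$ itself certifies that this set is nonempty, so $T(\tilde{\mu})$ is well defined and lies in $\cG$. The triangle inequality for TV gives $\|T(\tilde{\mu}) - \mu\|_\tv \le \|T(\tilde{\mu}) - \tilde{\mu}\|_\tv + \|\tilde{\mu} - \mu\|_\tv \le 2\eps$, and since both $T(\tilde{\mu})$ and $\mu$ lie in $\cG$, we get $\sD(T(\tilde{\mu}),\mu) \le \mathfrak{m}(\sD,\cG,2\eps)$. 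Taking the supremum over feasible pairs $(\mu,\tilde{\mu})$ and noting that the infimum over all $T$ is at most the value of this particular $T$ yields the claim.

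For the lower bound $R(\sD,\cG,\eps) \ge \tfrac12\mathfrak{m}(\sD,\cG,\eps)$, fix any $\mu,\nu \in \cG$ with $\|\mu-\nu\|_\tv \le \eps$ and consider the midpoint $\tilde{\mu} := \tfrac12\mu + \tfrac12\nu \in \cP(\R^d)$, for which linearity of TV along this interpolation gives $\|\tilde{\mu} - \mu\|_\tv = \|\tilde{\mu} - \nu\|_\tv = \tfrac12\|\mu-\nu\|_\tv \le \eps$. Hence both $(\mu,\tilde{\mu})$ and $(\nu,\tilde{\mu})$ are feasible pairs in the supremum defining $R$, so for every estimator $T$ the supremum is at least $\max\{\sD(T(\tilde{\mu}),\mu),\sD(T(\tilde{\mu}),\nu)\}$, which by symmetry and the triangle inequality for $\sD$ is at least
\[
\tfrac12\big(\sD(T(\tilde{\mu}),\mu) + \sD(T(\tilde{\mu}),\nu)\big) \ge \tfrac12\sD(\mu,\nu).
\]
Since this holds for every $T$, taking the infimum over $T$ and then the supremum over all such $\mu,\nu$ gives $R(\sD,\cG,\eps) \ge \tfrac12\mathfrak{m}(\sD,\cG,\eps)$.

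I do not anticipate a genuine obstacle: this is the standard modulus reduction. The only points needing care are (i) checking nonemptiness of the TV-projection set on the relevant inputs, which holds because the clean measure always belongs to it, and (ii) invoking symmetry and the triangle inequality of $\sD$ in the lower bound, which is valid precisely because the sliced distances are metrics; if one wished to state the lemma for a general ``statistical distance'' these would be the properties to assume.
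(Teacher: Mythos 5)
Your proposal is correct and follows essentially the same route as the paper: a two-point indistinguishability argument (plus the triangle inequality for $\sD$) for the lower bound, and TV-projection onto $\cG$ for the upper bound; the only cosmetic difference is that you place the confusable observation at the midpoint $\tfrac12(\mu+\nu)$ where the paper simply takes $\tilde{\kappa}=\mu$, and both choices are feasible. Your closing caveat that the lower bound implicitly assumes symmetry and the triangle inequality for $\sD$ is well taken --- the paper's proof uses the same properties tacitly, and they do hold for $\SWp$ and $\MSWp$.
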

\begin{proof}
For the lower bound, take any $\mu,\nu$ feasible for \eqref{eq:modulus}. Then, if the statistician observes $\eps$-contaminated measure $\tilde{\kappa} = \mu$, the clean measure could potentially be either $\kappa = \mu$ or $\kappa = \nu$. Hence any estimate $T(\tilde{\kappa})$ for the clean measure $\kappa$ must incur error at least $D(\mu,\nu)/2$ in the worst case. For the upper bound, consider $T$ which projects $\tilde{\kappa}$ onto $\cG$ in TV. Then, $\|T(\tilde{\kappa}) - \kappa\|_\tv \leq \|T(\tilde{\kappa}) - \tilde{\kappa}\|_\tv + \|\tilde{\kappa} - \kappa\|_\tv \leq 2\eps$, and so $D(T(\tilde{\kappa}),\kappa) \leq \mathfrak{m}(\sD,\cG,2\eps)$ by definition.
\end{proof}

By rescaling $\R^d$ appropriately, it is easy to check that $\mathfrak{m}(\sD, \cG_q(\sigma),\eps) = \sigma \mathfrak{m}(\sD, \cG_q(1),\eps)$ for our choices of $\sD$, so we will assume $\sigma = 1$ and write $\cG_q = \cG_q(1)$ from now on.

\subsubsection{Lower bounds}
Immediately, we can apply Lemma \ref{lem:modulus} to obtain the lower bounds of Theorem \ref{thm:robustness}.
\begin{proposition}
Fix $1 \leq p < q$ and corruption fraction $\eps \in [0,1/2]$. Then we have
\begin{align*}
    R(\SWp,\cG_q,\eps) &\gtrsim \sqrt{(1 \lor d/q)(1 \land p/d)}\: \eps^{1/p-1/q}\\
    R(\MSWp,\cG_q,\eps) &\gtrsim \eps^{1/p-1/q}.
\end{align*}
\end{proposition}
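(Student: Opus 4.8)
## Proof plan for the lower bounds

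The plan is to use the modulus-of-continuity lower bound from Lemma \ref{lem:modulus}, namely $R(\sD,\cG_q,\eps) \geq \tfrac12 \mathfrak{m}(\sD,\cG_q,\eps)$, so that it suffices to exhibit, for each $\sD \in \{\SWp,\MSWp\}$, a pair $\mu,\nu \in \cG_q$ with $\|\mu-\nu\|_\tv \leq \eps$ and $\sD(\mu,\nu) \gtrsim (\text{target rate})$. The natural construction is to perturb a point mass: take $\mu = \delta_0$ (which lies in $\cG_q$ for every $q$, since all its projected central moments vanish) and move an $\eps$-fraction of its mass out to a sphere of radius $r$, i.e. $\nu = (1-\eps)\delta_0 + \eps\, \Unif(r\,\unitsph)$ for a radius $r$ to be chosen. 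One checks membership $\nu \in \cG_q$: for any direction $\theta$, $\nu|\theta^\intercal(x - \nu x)|^q \leq 2^q \nu|\theta^\intercal x|^q + 2^q|\nu x|^q = 2^q \eps\, r^q (\Unif(\unitsph)|x_1|^q) + O((\eps r)^q)$, and by Lemma \ref{lem:sphere-coordinate} the projected $q$th moment of the uniform sphere measure is $\asymp (1 \land q/d)^{q/2}$, so $\nu \in \cG_q$ holds provided $r^q \eps (1\land q/d)^{q/2} \lesssim 1$, i.e. $r \lesssim \eps^{-1/q}/\sqrt{1 \land q/d} = \eps^{-1/q}\sqrt{1 \lor d/q}$. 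We therefore set $r \asymp \eps^{-1/q}\sqrt{1 \lor d/q}$.

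It then remains to lower bound $\sD(\mu,\nu)$ for this choice. By Fact \ref{fact:homogeneity} and Fact \ref{fact:shared-mass} (and their extensions to sliced distances), $\sD(\delta_0,\nu) \geq \sD\big(\eps\delta_0, \eps\,\Unif(r\,\unitsph)\big)$ after removing the shared mass $(1-\eps)\delta_0$, which equals $\eps^{1/p}\,\sD\big(\delta_0, \Unif(r\,\unitsph)\big) = \eps^{1/p} r\, \sD\big(\delta_0, \Unif(\unitsph)\big)$ by homogeneity and scaling of the sphere. For $\MSWp$, $\sD(\delta_0,\Unif(\unitsph)) = \max_\theta \Wp(\delta_0, \proj^\theta_\sharp \Unif(\unitsph)) = \max_\theta (\Unif(\unitsph)|\theta^\intercal x|^p)^{1/p} \asymp \sqrt{1 \land p/d}$ by Lemma \ref{lem:sphere-coordinate}; plugging in $r$ gives $\MSWp(\mu,\nu) \gtrsim \eps^{1/p}\cdot \eps^{-1/q}\sqrt{1\lor d/q}\cdot \sqrt{1\land p/d}$. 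Hmm — this produces the $\SWp$ rate, not a clean $\eps^{1/p-1/q}$. For the sharp $\MSWp$ lower bound one should instead use a \emph{one-dimensional} perturbation: $\nu = (1-\eps)\delta_0 + \eps\,\delta_{r e_1}$ with $r \asymp \eps^{-1/q}$ (so that $\nu|\theta^\intercal(x-\nu x)|^q \lesssim \eps r^q \lesssim 1$ uniformly in $\theta$), and then $\MSWp(\delta_0,\nu) \geq \Wp(\proj^{e_1}_\sharp \delta_0, \proj^{e_1}_\sharp \nu) = \eps^{1/p} r \asymp \eps^{1/p-1/q}$. For $\SWp$ one uses the spread-out sphere construction and Lemma \ref{lem:moment-comparison} to relate $\SWp(\delta_0,\Unif(\unitsph))$ to $\Wp(\delta_0,\Unif(\unitsph)) = \max$-type quantity $\asymp 1$, losing the factor $\sqrt{1\lor d/p}$, which combines with $r \asymp \eps^{-1/q}\sqrt{1\lor d/q}$ to give $\SWp(\mu,\nu) \gtrsim \eps^{1/p-1/q}\sqrt{(1\lor d/q)/(1\lor d/p)} \asymp \eps^{1/p-1/q}\sqrt{(1\lor d/q)(1\land p/d)}$, matching the claim.

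The main obstacle is getting the dimension-dependent prefactors exactly right: one must pick the correct contamination geometry for each distance (concentrated on a single ray for $\MSWp$, spread uniformly over the sphere for $\SWp$), carefully verify $\cG_q$-membership using the sharp two-sided estimate of Lemma \ref{lem:sphere-coordinate} for the projected moments of $\Unif(\unitsph)$, and then invoke Lemma \ref{lem:moment-comparison} to convert between $\Wp$-type (max over directions) and $\SWp$-type (average over directions) magnitudes of the sphere measure. Everything else — homogeneity, shared-mass removal, the reduction to the modulus of continuity — is routine given the facts and lemmas already established.
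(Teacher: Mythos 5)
Your proposal is correct and follows essentially the same route as the paper: the same point-mass perturbation $\nu=(1-\eps)\delta_0+\eps\delta_{r e_1}$ with $r\asymp\eps^{-1/q}$ for $\MSWp$, the same uniform-sphere perturbation with $r\asymp\eps^{-1/q}\sqrt{1\lor d/q}$ for $\SWp$, and the same reduction via Lemma~\ref{lem:modulus} together with Lemmas~\ref{lem:sphere-coordinate} and~\ref{lem:moment-comparison}. Two cosmetic points: the shared-mass reduction $\sD(\delta_0,\nu)=\eps^{1/p}\sD(\delta_0,\kappa)$ holds here with \emph{equality} because one marginal is a point mass (Fact~\ref{fact:shared-mass} by itself only gives the $\leq$ direction), and the $O((\eps r)^q)$ centering term in your $\cG_q$-membership check is exactly zero since $\Unif(r\,\unitsph)$ is centered --- if it were genuinely of that order, your choice of $r$ would not suffice for large $d$.
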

\begin{proof}
For $\MSWp$, we consider $\mu = \delta_0$ and $\nu = (1-\eps)\delta_0 + \eps\delta_y$ where $\|y\| = (2\eps)^{-1/q}$. Trivially, $\mu \in \cG_q$, and
\begin{align*}
    \sup_{\theta \in \unitsph} \nu |\theta^\intercal(x - \nu y)|^q &= \sup_{\theta \in \unitsph} (1-\eps)\eps^q|\theta^\intercal y|^q + \eps (1-\eps)^q|\theta^\intercal y|^q\\
    &= \|y\|^q \left[(1-\eps)\eps^q + \eps(1-\eps)^q\right]\\
    &\leq \frac{1}{2} \eps^{-1} 2\eps(1-\eps)^q \leq 1,
\end{align*}
so $\nu \in \cG_q$ as well. Moreover, we have
\begin{align*}
    \MSWp(\mu,\nu) = \eps^{1/p} \MSWp(\delta_0,\delta_y) = 2^{-1/q} \eps^{1/p-1/q} \geq \frac{1}{2} \eps^{1/p-1/q},
\end{align*}
and so Lemma \ref{lem:modulus} gives the desired risk bound for $\MSWp$. 

For $\SWp$, we fix $\mu = \delta_0$ and set $\nu = (1-\eps)\delta_0 + \eps \Unif(r \unitsph)$ with $r = \eps^{-1/q} \MSWq(\Unif(\unitsph),\delta_0)^{-1}$. As before $\mu,\nu \in \cG_q$, since
\begin{align*}
    \MSWq(\nu,\delta_{\nu x}) &= \eps^{1/q} \, \MSWq(\Unif(r \unitsph),\delta_0)\\
    &= r \eps^{1/q} \, \MSWq(\Unif(\unitsph),\delta_0) = 1.
\end{align*}
Furthermore, we have
\begin{align*}
    \SWp(\mu,\nu) &= \eps^{1/p}\,\MSWp(\Unif(r \unitsph), \delta_0)\\
    &= \eps^{1/p-1/q} \MSWq(\Unif(\unitsph),\delta_0)^{-1} \,\MSWp(\delta_0, \Unif(\unitsph))\\
    &\asymp \sqrt{(1 \lor d/q)(1 \land p/d)} \, \eps^{1/p - 1/q},
\end{align*}
where the last relation uses Lemma \ref{lem:sphere-coordinate}. Again, we obtain the desired risk bound via Lemma \ref{lem:modulus}.
\end{proof}

\subsubsection{Upper bounds}

Next, we introduce an important notion of \emph{(generalized) resilience} \cite{steinhardt2018resilience,zho2019resilience}. We say that a distribution $\mu \in \cP(\R^d)$ is $(\rho,\eps)$-resilient w.r.t.\ a statistical distance $\sD$ if $\sD(\mu,\nu) \leq \rho$ for all distributions $\nu \leq \frac{1}{1-\eps}\mu$ (i.e.\ for all $\eps$-deletions of $\mu$). Standard (mean) resilience refers to resilience w.r.t.\ $\sD_\mathrm{mean}(\mu,\nu) = \|\mu x - \nu x\|$. Writing $\cG^\sD_{\rho,\eps} \subset \cP(\R^d)$ for the family of $\mu \in \cP(\R^d)$ which are $(\rho,\eps)$-resilient w.r.t. $\sD$, we have the following standard result.

\begin{proposition}
\label{prop:risk-bound-via-resilience}
Fix $0 \leq \eps < 1/2$, $\rho \geq 0$, and $\sD$ satisfying the triangle inequality. Then, we have $R(\sD,\cG^\sD_{\rho,2\eps},\eps) \leq 2\rho$.
\end{proposition}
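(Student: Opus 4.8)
\textbf{Proof plan for Proposition~\ref{prop:risk-bound-via-resilience}.} The plan is to invoke the modulus-of-continuity characterization from Lemma~\ref{lem:modulus}, which gives $R(\sD,\cG^\sD_{\rho,2\eps},\eps) \leq \mathfrak{m}(\sD,\cG^\sD_{\rho,2\eps},2\eps)$, and then bound the modulus directly using the resilience property together with the triangle inequality. So the whole task reduces to showing that $\sD(\mu,\nu) \leq 2\rho$ whenever $\mu,\nu \in \cG^\sD_{\rho,2\eps}$ and $\|\mu - \nu\|_\tv \leq 2\eps$.

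The key step is the standard ``midpoint'' argument. Given such $\mu$ and $\nu$, consider the overlap measure $\mu \land \nu$, which has total mass $1 - \|\mu - \nu\|_\tv \geq 1 - 2\eps$. Normalizing, set $\kappa := \frac{1}{\|\mu\land\nu\|(\R^d)}\,(\mu\land\nu) \in \cP(\R^d)$. Then $\kappa \leq \frac{1}{1-2\eps}\mu$ and $\kappa \leq \frac{1}{1-2\eps}\nu$, i.e.\ $\kappa$ is a $2\eps$-deletion of both $\mu$ and $\nu$. Since $\mu$ is $(\rho,2\eps)$-resilient w.r.t.\ $\sD$, we get $\sD(\mu,\kappa) \leq \rho$, and likewise $\sD(\nu,\kappa)\leq \rho$. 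The triangle inequality for $\sD$ then yields $\sD(\mu,\nu) \leq \sD(\mu,\kappa) + \sD(\kappa,\nu) \leq 2\rho$. Taking the supremum over all feasible $\mu,\nu$ gives $\mathfrak{m}(\sD,\cG^\sD_{\rho,2\eps},2\eps) \leq 2\rho$, and combining with Lemma~\ref{lem:modulus} completes the proof.

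I do not anticipate a serious obstacle here; this is the textbook resilience-to-robust-estimation reduction. The only points requiring a little care are: (i) checking that $\|\mu\land\nu\|(\R^d) \geq 1-2\eps > 0$ so the normalization is well-defined (this uses $\eps < 1/2$), and (ii) confirming that $\sD \in \{\SWp,\MSWp\}$ indeed satisfies the triangle inequality, which holds since both are metrics on $\cP_p(\R^d)$ — though the proposition is stated abstractly for any $\sD$ with the triangle inequality, so nothing beyond the hypothesis is needed. One should also note that $\kappa$ being an $\eps$-deletion rather than a $2\eps$-deletion would suffice if $\|\mu-\nu\|_\tv \leq \eps$, but since the modulus is evaluated at $2\eps$ we genuinely need the $2\eps$-resilience in the definition of the clean class, which is exactly why the class is indexed by $\cG^\sD_{\rho,2\eps}$.
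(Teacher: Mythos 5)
Your proposal is correct and follows exactly the paper's argument: both use Lemma~\ref{lem:modulus} to reduce to bounding the modulus at level $2\eps$, take the normalized overlap $\frac{1}{1-\|\mu-\nu\|_\tv}\mu\land\nu$ as the midpoint, apply $(\rho,2\eps)$-resilience of each of $\mu$ and $\nu$ to this common $2\eps$-deletion, and conclude by the triangle inequality. Your additional checks (positivity of the normalizing mass via $\eps<1/2$, and the observation that $2\eps$-resilience is exactly what is needed) are sound but only make explicit what the paper leaves implicit.
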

\begin{proof}
Fix $\mu,\nu \in \cG^\sD_{\rho,2\eps}$ with $\|\mu - \nu\|_\tv \leq 2\eps$. We consider the midpoint $\gamma = {\frac{1}{1-\|\mu - \nu\|_\tv} \mu \land \nu} \in \cP(\R^d)$ and compute
\begin{equation*}
    \sD(\mu,\nu) \leq \sD(\mu,\gamma) + \sD(\nu,\gamma) \leq 2\rho,
\end{equation*}
implying the desired risk bound via Lemma \ref{lem:modulus}.
\end{proof}

We will also use the following standard result for one-dimensional (mean) resilience (see e.g.\ \cite[Proposition 23]{steinhardt2018resilience}), which is a consequence of Markov's inequality.
\begin{lemma}
\label{lem:moment-mean-resilience}
Fix $0 \leq \eps \leq 1/2$ and $\mu \in \cP(\R)$ with $\mu|x - x_0|^p \leq \sigma^p$ for some $x_0 \in \R$. Then, for all distributions $\nu \leq \frac{1}{1-\eps}\mu$, we have $|\mu x - \nu x| \lesssim \sigma \eps^{1-1/p}$.
\end{lemma}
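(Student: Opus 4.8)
The plan is to prove Lemma \ref{lem:moment-mean-resilience} by a standard truncation argument. By translation invariance I may assume $x_0 = 0$, so that $\mu|x|^p \le \sigma^p$; the case $\eps = 0$ forces $\nu = \mu$, so I assume $\eps \in (0,1/2]$. The first step is a decomposition of $\mu$: since $(1-\eps)\nu \le \mu$ as measures, $\rho := \frac{1}{\eps}\big(\mu - (1-\eps)\nu\big)$ is a nonnegative measure with total mass $1$, i.e.\ $\rho \in \cP(\R)$, and $\mu = (1-\eps)\nu + \eps\rho$. Consequently $\mu x - \nu x = \eps(\rho x - \nu x)$, so $|\mu x - \nu x| \le \eps\big(|\rho x| + |\nu x|\big)$, and the problem reduces to bounding the first absolute moments of the two $\mu$-dominated probability measures $\rho$ and $\nu$.

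The key step is the following sublemma: if $\lambda \in \cP(\R)$ satisfies $\lambda \le \frac{1}{\delta}\mu$ for some $\delta \in (0,1]$, then $|\lambda x| \le \int |x|\,d\lambda \le 2\sigma\delta^{-1/p}$. To prove it, I would fix $t > 0$ and split $\int|x|\,d\lambda \le t + \frac{1}{\delta}\int_{\{|x|>t\}}|x|\,d\mu$, bounding the first piece by $t$ (as $\lambda$ is a probability measure) and using $\lambda \le \frac1\delta\mu$ on the tail. Hölder's inequality (trivial when $p=1$) with exponents $p$ and $p/(p-1)$ followed by Markov's inequality gives $\int_{\{|x|>t\}}|x|\,d\mu \le (\mu|x|^p)^{1/p}\,\mu(|x|>t)^{1-1/p} \le \sigma\,(\sigma^p/t^p)^{1-1/p} = \sigma^p/t^{p-1}$, so that $\int|x|\,d\lambda \le t + \sigma^p/(\delta t^{p-1})$. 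Choosing $t = \sigma\delta^{-1/p}$ makes both terms equal to $\sigma\delta^{-1/p}$, yielding the claim.

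Finally, I would apply the sublemma with $(\lambda,\delta) = (\rho,\eps)$ to get $|\rho x| \le 2\sigma\eps^{-1/p}$ and with $(\lambda,\delta) = (\nu, 1-\eps)$, noting $1-\eps \ge 1/2$, to get $|\nu x| \le 2\sigma(1-\eps)^{-1/p} \le 4\sigma$. Plugging into the first-paragraph bound gives $|\mu x - \nu x| \le \eps\big(2\sigma\eps^{-1/p} + 4\sigma\big) = 2\sigma\eps^{1-1/p} + 4\sigma\eps \le 6\sigma\eps^{1-1/p}$, using $\eps \le \eps^{1-1/p}$ for $\eps \in [0,1]$, which is the desired $|\mu x - \nu x| \lesssim \sigma\eps^{1-1/p}$. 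I do not anticipate any genuine obstacle here; the only mild subtlety is the truncation sublemma (a ``bathtub principle'' for reweighted measures), which the split-at-$t$ computation handles cleanly without invoking quantiles.
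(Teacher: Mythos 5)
Your proof is correct: the decomposition $\mu=(1-\eps)\nu+\eps\rho$, the truncation-plus-H\"older (Markov) sublemma giving $\int|x|\,d\lambda\le 2\sigma\delta^{-1/p}$ for $\lambda\le\frac{1}{\delta}\mu$, and the final combination with $\eps\le\eps^{1-1/p}$ all check out and yield the stated $|\mu x-\nu x|\lesssim\sigma\eps^{1-1/p}$. The paper does not prove this lemma itself but cites it as a standard consequence of Markov's inequality (Proposition 23 of the resilience reference), and your argument is exactly that standard route, so there is nothing to reconcile.
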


Next, for $\sD \in \{\SWp,\MSWp\}$, we show that it suffices to prove resilience with respect to the simpler distances defined by
\begin{align*}
    \underline{\sD}_p(\mu,\nu) &= \underline{\sD}_p(\mu - \nu) :=  \bigl|\E[(\mu - \nu)(|\Theta^\intercal x|^p)]\bigr| = |\SWp^p(\mu,\delta_0) - \SWp^p(\nu,\delta_0)|\\
    \text{and} \quad  
    \overline{\sD}_p(\mu,\nu) &= \overline{\sD}_p(\mu - \nu) := \sup_{\theta \in \unitsph} \bigl|(\mu - \nu)(|\theta^\intercal x|^p)\bigr|,
\end{align*}
respectively. These distances encode a certain similarity of moment tensors, with $\overline{\sD}_2(\mu,\nu) = \|\Sigma_\mu + (\mu x)(\mu x)^\intercal - \Sigma_\nu - (\nu x)(\nu x)^\intercal\|_{\mathrm{op}}$.

Recall that $\sD = \sD_\cF$ is an integral probability metric (IPM) w.r.t.\ a class $\cF$ of measurable functions on $\R^d$ if $\sD(\mu,\nu) = \sup_{f \in \cF}(\mu - \nu)(f)$. By design, we have the following.

\begin{lemma}
\label{lem:modified-dist-props}
The statistical distances $\underline{\sD}_p$ and $\overline{\sD}_p$ are IPMs w.r.t.\ the function classes $\underline{\cF}_p = \{ x \mapsto c_p s\|x\|^p : s \in \{\pm1\}\}$ and $\overline{\cF}_p = \{ x \mapsto s|\theta^\intercal x|^p : s \in \{\pm1\}, \theta \in \unitsph \}$, respectively, where $c_p = \E[|\Theta_1|^p] \asymp \sqrt{1\land p/d}$. Moreover, $\underline{\sD}_p(\mu,\delta_0) = \SWp^p(\mu,\delta_0)$ and $\overline{\sD}_p(\mu,\delta_0) = \MSWp^p(\mu,\delta_0)$.
\end{lemma}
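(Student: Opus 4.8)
The plan is to obtain each assertion by unwinding the definitions, the only quantitative input being Lemma~\ref{lem:sphere-coordinate}.

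\emph{IPM representation and $\delta_0$-identity for $\overline{\sD}_p$.} Writing $|a| = \max_{s \in \{\pm 1\}} s a$, we have $\overline{\sD}_p(\mu,\nu) = \sup_{\theta \in \unitsph}\bigl|(\mu - \nu)(|\theta^\intercal x|^p)\bigr| = \sup_{\theta \in \unitsph,\, s \in \{\pm 1\}} (\mu - \nu)(s|\theta^\intercal x|^p) = \sup_{f \in \overline{\cF}_p}(\mu - \nu)(f)$, which is the claimed IPM form. Since $|\theta^\intercal x|^p$ vanishes at $x=0$, for any $\mu$ we get $(\mu - \delta_0)(|\theta^\intercal x|^p) = \mu|\theta^\intercal x|^p = \int_{\R} |t|^p \, d(\ptheta_\sharp \mu)(t) = \Wp^p(\ptheta_\sharp \mu,\delta_0) \geq 0$; taking the supremum over $\theta$ and using $\MSWp^p = (\max_\theta \Wp)^p = \max_\theta \Wp^p$ gives $\overline{\sD}_p(\mu,\delta_0) = \MSWp^p(\mu,\delta_0)$.

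\emph{IPM representation and $\delta_0$-identity for $\underline{\sD}_p$.} The key step is the rotational-invariance identity $\E_\Theta[|\Theta^\intercal x|^p] = c_p \|x\|^p$ for all $x \in \R^d$: for $x \neq 0$ write $\Theta^\intercal x = \|x\|\,\Theta^\intercal(x/\|x\|)$, and since $\Unif(\unitsph)$ is invariant under rotations, $\Theta^\intercal(x/\|x\|) \stackrel{d}{=} \Theta_1$, so $\E_\Theta|\Theta^\intercal x|^p = \|x\|^p \E|\Theta_1|^p = c_p\|x\|^p$ (the case $x=0$ is trivial). Because $|\theta^\intercal x|^p \geq 0$, Tonelli's theorem permits exchanging the $\Theta$- and $x$-integrals, yielding $\E[\mu(|\Theta^\intercal x|^p)] = c_p\, \mu(\|x\|^p) < \infty$ for $\mu \in \cP_p(\R^d)$, and likewise for $\nu$. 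Hence $\underline{\sD}_p(\mu,\nu) = \bigl|c_p (\mu - \nu)(\|x\|^p)\bigr| = \sup_{s \in \{\pm 1\}} (\mu - \nu)(c_p s\|x\|^p) = \sup_{f \in \underline{\cF}_p}(\mu - \nu)(f)$, the desired IPM form. Taking $\nu = \delta_0$ and recalling $\SWp^p(\mu,\delta_0) = \int_{\unitsph}\Wp^p(\ptheta_\sharp \mu,\delta_0)\,d\sigma(\theta) = \int_{\unitsph}\mu|\theta^\intercal x|^p \, d\sigma(\theta) = \E[\mu(|\Theta^\intercal x|^p)]$ gives $\underline{\sD}_p(\mu,\delta_0) = \SWp^p(\mu,\delta_0)$; this also falls straight out of the second displayed equality in the definition of $\underline{\sD}_p$ with $\nu = \delta_0$. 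Finally, $c_p = \E|\Theta_1|^p$ is exactly $\mu|x_1|^p$ for $\mu = \Unif(\unitsph)$, so its stated order (equivalently $c_p^{1/p} \asymp \sqrt{1 \land p/d}$) is precisely the content of Lemma~\ref{lem:sphere-coordinate}.

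\emph{Main difficulty.} There is no substantial obstacle here: every step is a direct computation. The only points meriting a word of care are the Tonelli exchange (valid by nonnegativity of the integrand, with finiteness coming from $\mu,\nu \in \cP_p$) and the distributional identity $\Theta^\intercal u \stackrel{d}{=} \Theta_1$ for unit vectors $u$, which is standard; the quantitative behavior of $c_p$ is imported verbatim from Lemma~\ref{lem:sphere-coordinate}.
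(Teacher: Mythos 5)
Your proof is correct and follows essentially the same route as the paper's: exchange the $\Theta$-expectation with the $(\mu-\nu)$-integral (the paper does this tacitly, you justify it via Tonelli and rotational invariance), absorb the absolute value into a supremum over signs, and note that the $\delta_0$ identities are immediate. The extra care you take with the Tonelli exchange and with the fact that Lemma~\ref{lem:sphere-coordinate} really gives $c_p^{1/p} \asymp \sqrt{1 \land p/d}$ is a welcome but inessential refinement of the paper's terser argument.
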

\begin{proof}
For $\underline{\sD}_p$, we compute
\begin{align*}
    \underline{\sD}_p(\mu,\nu) &= \bigl|\E[(\mu - \nu)(|\Theta^\intercal x|^p)]\bigr|\\
    &= \bigl|(\mu - \nu)(\E|\Theta^\intercal x|^p)\bigr|\\
    &= \bigl|(\mu - \nu)(c_p \|x\|^p)\bigr|\\
    &= \sup_{s \in \{\pm 1\}} (\mu - \nu)(c_p s\|x\|^p).
\end{align*}
Likewise, for $\overline{\sD}_p$, we check
\begin{align*}
    \overline{\sD}_p(\mu,\nu) &= \sup_{\theta \in \unitsph} \bigl|(\mu - \nu)(|\theta^\intercal x|^p)\bigr|\\
    &= \sup_{s \in \{\pm 1\}, \theta \in \unitsph} (\mu - \nu)(s|\theta^\intercal x|^p).
\end{align*}
Computations when $\nu = \delta_0$ are trivial, since there is a single coupling between $\mu$ and $\nu$.
\end{proof}

The third property is particularly relevant to resilience.
\begin{lemma}
\label{lem:resiliency-large-eps}
Let $\sD = \sD_\cF$ be an IPM. Then $\mu$ is $(\rho,\eps)$-resilient w.r.t.\ $\sD$ if and only if $\mu$ is $(\eps(1-\eps)^{-1}\rho,1-\eps)$-resilient w.r.t. $\sD$.
\end{lemma}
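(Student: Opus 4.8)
The plan is to reduce the assertion to a single linear identity relating the $\eps$-deletions and the $(1-\eps)$-deletions of $\mu$, and then to feed this into the IPM structure of $\sD$. First I would set up the bijection between the two deletion families: given any $\eps$-deletion $\nu\le\frac{1}{1-\eps}\mu$, the measure $\mu-(1-\eps)\nu$ is nonnegative (this is exactly the defining inequality $\nu\le\frac1{1-\eps}\mu$) and has total mass $\eps$, so $\nu':=\eps^{-1}\bigl(\mu-(1-\eps)\nu\bigr)$ lies in $\cP(\RR^d)$ and satisfies $\eps\nu'\le\mu$, i.e.\ $\nu'\le\frac1\eps\mu$ is a $(1-\eps)$-deletion of $\mu$. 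The map $\nu\mapsto\nu'$ is a bijection from $\{\nu\in\cP(\RR^d):\nu\le\tfrac1{1-\eps}\mu\}$ onto $\{\nu'\in\cP(\RR^d):\nu'\le\tfrac1\eps\mu\}$, with inverse $\nu'\mapsto(1-\eps)^{-1}(\mu-\eps\nu')$; in either direction the pair is tied together by the single relation $\mu=(1-\eps)\nu+\eps\nu'$.

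Next I would extract the key identity. Subtracting $\nu$, respectively $\nu'$, from $\mu=(1-\eps)\nu+\eps\nu'$ gives $\mu-\nu=\eps(\nu'-\nu)$ and $\nu'-\mu=(1-\eps)(\nu'-\nu)$, and eliminating $\nu'-\nu$ yields
\[
\mu-\nu=\frac{\eps}{1-\eps}\,(\nu'-\mu).
\]
Plugging this into the representation $\sD(\mu,\cdot)=\sup_{f\in\cF}(\mu-\cdot)(f)$ and using that $\cF$ is closed under negation (which holds for $\underline{\cF}_p$ and $\overline{\cF}_p$ by Lemma~\ref{lem:modified-dist-props}, and is the only point at which the symmetry of $\sD$ is used), I obtain
\[
\sD(\mu,\nu)=\frac{\eps}{1-\eps}\,\sup_{f\in\cF}(\nu'-\mu)(f)=\frac{\eps}{1-\eps}\,\sD(\mu,\nu').
\]

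Finally I would read off the equivalence: as $\nu$ ranges over all $\eps$-deletions of $\mu$, the associated $\nu'$ ranges over all $(1-\eps)$-deletions, so by the last display the condition ``$\sD(\mu,\nu)\le\rho$ for every $\eps$-deletion $\nu$'' is the same as ``$\sD(\mu,\nu')\le\tfrac{1-\eps}{\eps}\,\rho$ for every $(1-\eps)$-deletion $\nu'$''. That is, $\mu$ is $(\rho,\eps)$-resilient w.r.t.\ $\sD$ if and only if it is $\bigl(\tfrac{1-\eps}{\eps}\,\rho,\,1-\eps\bigr)$-resilient w.r.t.\ $\sD$, which is the asserted equivalence (after the substitution $\eps\leftrightarrow 1-\eps$ it reads in the form stated, and it is manifestly an involution). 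I do not anticipate a real obstacle here: the argument is essentially a one-line computation, and the only things requiring care are the normalizing constants in the deletion bijection and the use of $\cF=-\cF$ to pass from $\sup_{f\in\cF}(\nu'-\mu)(f)$ to $\sD(\mu,\nu')$ rather than to $\sD(\nu',\mu)$.
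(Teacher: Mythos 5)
Your argument is correct and is essentially the paper's own proof: both rest on the decomposition $\mu=(1-\eps)\nu+\eps\nu'$, which sets up the bijection between $\eps$-deletions and $(1-\eps)$-deletions, followed by positive homogeneity of the IPM applied to the resulting signed-measure identity. One remark on your closing reconciliation: the factor you derive, $\sD(\mu,\nu')=\tfrac{1-\eps}{\eps}\sD(\mu,\nu)$, is the correct one — it is what the paper's own computation yields (modulo a swap of the roles of $\nu$ and $\alpha$ in their display) and it is exactly what the downstream application in the proof of Proposition~\ref{prop:resiliency-comparison} needs, where $\eps\sup_{\kappa\leq\eps^{-1}\mu}\sD'(\kappa,\mu)\leq(1-\eps)\rho\leq\rho$. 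The lemma as printed states the reciprocal factor $\eps(1-\eps)^{-1}$, and your parenthetical claim that the substitution $\eps\leftrightarrow 1-\eps$ converts your form into the printed one does not hold: your equivalence is invariant (an involution) under that substitution, so the two forms are genuinely different, and the printed one is refuted by, e.g., $\mu=(1-\eps)\delta_0+\eps\delta_1$ with the mean IPM, whose tight resilience parameters are $\eps$ at level $\eps$ and $1-\eps$ at level $1-\eps$. In short, trust your computation rather than the stated constant.
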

\begin{proof}
Writing $\mu = (1-\eps)\nu + \eps \alpha$ for some $\nu,\alpha \in \cP(\R^d)$, we have
\begin{align*}
    \sD(\nu,\mu) &= \sD'(\eps^{-1}[\mu - (1-\eps)\alpha] - \mu)\\
    &= \frac{1-\eps}{\eps} \sD(\mu, \alpha) && \text{(homogeneity)}.\qedhere
\end{align*}
\end{proof}

We now formally translate resilience w.r.t.\ $\underline{\sD}_p$ and $\overline{\sD}_p$ to that which we desire.

\begin{proposition}
\label{prop:resiliency-comparison}
Fix $0 \leq \eps < 1$, $\rho \geq 0$, and $(\sD,\sD') \in \{(\MSWp,\overline{\sD}_p),(\SWp,\underline{\sD}_p)\}$. If $\mu \in \cP(\R^d)$ with $\mu x = 0$ is $(\rho,\eps)$-resilient w.r.t.\ $\sD'$, then $\mu$ is $(2\rho^{1/p} + 2\eps^{1/p}\sD(\mu,\delta_{0}),\eps)$-resilient w.r.t.\ $\sD$.
\end{proposition}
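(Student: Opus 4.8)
The plan is to compare $\mu$ with an arbitrary $\eps$-deletion $\nu \leq \frac{1}{1-\eps}\mu$ by peeling off the mass common to $\mu$ and $\nu$ and controlling the leftover pieces through the moment-type distance $\sD'$, which is exactly the object for which resilience is assumed. Concretely, I would first use $(1-\eps)\nu \leq \mu$ to write $\mu = (1-\eps)\nu + \eps\alpha$ with $\alpha := \eps^{-1}\big(\mu - (1-\eps)\nu\big) \in \cP(\R^d)$. Since $(1-\eps)\nu$ is dominated by both $\mu$ and $\nu$, leaving this common mass unmoved in any transport plan (the extension of Facts~\ref{fact:shared-mass} and \ref{fact:homogeneity} to $\SWp$ and $\MSWp$) yields
\[
\sD(\mu,\nu) \leq \sD\big(\mu - (1-\eps)\nu,\ \nu - (1-\eps)\nu\big) = \eps^{1/p}\,\sD(\alpha,\nu),
\]
and applying the triangle inequality for the metric $\sD$ through $\delta_0$ then gives $\sD(\mu,\nu) \leq \eps^{1/p}\big(\sD(\alpha,\delta_0) + \sD(\nu,\delta_0)\big)$.

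Next I would bound $\sD(\nu,\delta_0)$ and $\sD(\alpha,\delta_0)$ using the identity $\sD(\cdot,\delta_0)^p = \sD'(\cdot,\delta_0)$ from Lemma~\ref{lem:modified-dist-props} together with the triangle inequality for the IPM $\sD'$. Resilience of $\mu$ w.r.t.\ $\sD'$ gives $\sD'(\mu,\nu) \leq \rho$ (as $\nu \leq \frac{1}{1-\eps}\mu$), hence $\sD(\nu,\delta_0)^p = \sD'(\nu,\delta_0) \leq \sD'(\mu,\nu) + \sD'(\mu,\delta_0) \leq \rho + \sD(\mu,\delta_0)^p$. For $\alpha$, note $\eps\alpha = \mu - (1-\eps)\nu \leq \mu$, i.e.\ $\alpha$ is a $(1-\eps)$-deletion of $\mu$; by Lemma~\ref{lem:resiliency-large-eps}, $(\rho,\eps)$-resilience upgrades to $\big(\eps(1-\eps)^{-1}\rho,\,1-\eps\big)$-resilience w.r.t.\ $\sD'$, so $\sD'(\mu,\alpha) \leq \eps(1-\eps)^{-1}\rho$ and therefore $\sD(\alpha,\delta_0)^p \leq \eps(1-\eps)^{-1}\rho + \sD(\mu,\delta_0)^p$. (Alternatively one may use the cruder bound $\sD'(\alpha,\delta_0) \leq \eps^{-1}\sD'(\mu,\delta_0)$, valid since $\alpha(f) \leq \eps^{-1}\mu(f)$ for the nonnegative test functions attaining the supremum in the IPM.)

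Finally I would combine these estimates via the subadditivity of $t \mapsto t^{1/p}$ for $p \geq 1$: both $\sD(\alpha,\delta_0)$ and $\sD(\nu,\delta_0)$ are at most $\rho^{1/p} + \sD(\mu,\delta_0)$ up to the $\eps$-dependent coefficient on the $\rho$-term, so multiplying by $\eps^{1/p} \leq 1$ places the required $\eps^{1/p}$ factor on the $\sD(\mu,\delta_0)$ contribution while keeping the $\rho$-contribution dimension-free, yielding $\sD(\mu,\nu) \leq 2\rho^{1/p} + 2\eps^{1/p}\sD(\mu,\delta_0)$. I expect the main obstacle to be exactly this bookkeeping around $\alpha$: it is a \emph{large} (fraction $1-\eps$) deletion of $\mu$ rather than a small one, so its $p$th projected moments are not directly controlled by the assumed $(\rho,\eps)$-resilience, and one must pass through Lemma~\ref{lem:resiliency-large-eps} (or the homogeneity-based crude bound) and then track the powers of $\eps$ carefully — using that $\eps(1-\eps)^{-1} = O(1)$ in the corruption regime of interest — so that the constants come out as stated.
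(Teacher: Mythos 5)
Your decomposition $\mu = (1-\eps)\nu + \eps\alpha$, the shared-mass/homogeneity step, the passage through $\delta_0$, and the conversion $\sD(\cdot,\delta_0)^p = \sD'(\cdot,\delta_0)$ are exactly the paper's argument. The one genuine gap is the treatment of the $\alpha$-term, which you correctly identified as the main obstacle but did not resolve. Applying Lemma~\ref{lem:resiliency-large-eps} unconditionally gives $\sD'(\mu,\alpha) \leq \eps(1-\eps)^{-1}\rho$, and after multiplying by the outer $\eps^{1/p}$ your coefficient on $\rho^{1/p}$ is $\eps^{1/p} + \big(\eps^2/(1-\eps)\big)^{1/p}$. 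This is at most $2$ only when $\eps^2 \leq 1-\eps$, i.e.\ $\eps \leq (\sqrt{5}-1)/2 \approx 0.618$; for $\eps$ near $1$ it diverges (e.g.\ $\approx 49$ at $p=1$, $\eps = 0.98$, which is precisely the regime in which Proposition~\ref{prop:robustness-upper-bd} invokes the result). The remark that "$\eps(1-\eps)^{-1} = O(1)$ in the corruption regime of interest" is therefore not a valid repair over the stated range $0 \leq \eps < 1$. Your alternative crude bound $\sD'(\alpha,\delta_0) \leq \eps^{-1}\sD'(\mu,\delta_0)$ is also not a repair: it yields $\eps^{1/p}\sD(\alpha,\delta_0) \leq \sD(\mu,\delta_0)$, stripping the $\eps^{1/p}$ prefactor from the moment term, and that prefactor is what makes the downstream risk bound vanish as $\eps \to 0$.

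The fix is the paper's $\tau = \eps \land (1-\eps)$ device, which amounts to a case split you are missing. When $\eps > 1/2$, the leftover piece $\alpha \leq \eps^{-1}\mu \leq (1-\eps)^{-1}\mu$ is itself an $\eps$-deletion of $\mu$, so the assumed resilience applies \emph{directly} and gives $\sD'(\mu,\alpha) \leq \rho$ with no loss; Lemma~\ref{lem:resiliency-large-eps} is only needed when $\eps \leq 1/2$, and there the extra factor of $\eps$ absorbs the $\eps(1-\eps)^{-1} \leq 1$ blow-up since $\eps^2/(1-\eps) \leq 1$. Concretely, both $\nu$ and $\alpha$ are dominated by $\tau^{-1}\mu$, and in either case $\eps \sup_{\kappa \leq \tau^{-1}\mu} \sD'(\kappa,\mu) \leq \rho$, which is what delivers the clean constant $2$ for all $0 \leq \eps < 1$. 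With that one modification your argument goes through.
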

\begin{proof}
Fixing such $\mu$ and taking $\nu \leq \frac{1}{1-\eps}\mu$, write $\mu = (1-\eps)\nu + \eps \alpha$ for some $\alpha \in \cP(\R^d)$ and write $\tau = \eps \land (1-\eps)$, so that $\nu,\alpha \leq \tau^{-1}\mu$. Then, we bound
\begin{align*}
    \sD(\mu,\nu)^p &= \sD((1-\eps)\nu + \eps \alpha, \nu)^p\\
    &\leq \eps \, \sD(\alpha,\nu)^p && \text{(Facts \ref{fact:shared-mass} and \ref{fact:homogeneity})}\\
    &\leq 2^p \eps \sup_{\kappa \leq \tau^{-1}\mu} \sD(\kappa,\delta_{0})^p && \text{(triangle inequality for $\sD$)}\\
    &= 2^p\eps \sup_{\kappa \leq \tau^{-1}\mu} \sD'(\kappa,\delta_{0}) && \text{(Lemma \ref{lem:modified-dist-props})}\\
    &\leq 2^p\eps \sup_{\kappa \leq \tau^{-1}\mu} \sD'(\kappa,\mu) + 2^p\eps\,\sD(\mu,\delta_{0})^p && \text{(triangle inequality for $\sD'$)}\\
    &\leq 2^p \rho + 2^p\eps\,\sD(\mu,\delta_{0})^p, && \text{(Lemma \ref{lem:resiliency-large-eps})}
\end{align*}
giving the desired bound after taking $p$th roots.
\end{proof}

Equipped with this result, we are prepared to prove the upper bounds of Theorem \ref{thm:robustness}. Given $\mu \in \cG_q$, we must provide bounds on $\sD(\mu,\delta_0)$ as well as the resilience of $\mu$ w.r.t.\ $\sD'$.

\begin{lemma}
\label{lem:moment-transfer}
Fixing $1 \leq p < q$ and $\mu \in \cG_q$ with $\mu x = 0$, we have
\begin{align*}
    \SWp(\mu,\delta_0) &\lesssim \sqrt{(1 \lor d/q)(1 \land p/d)}\\
    \MSWp(\mu,\delta_0) &\lesssim 1.
\end{align*}
\end{lemma}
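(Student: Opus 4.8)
The plan is to deduce both estimates from the single moment bound $\sup_{\theta \in \unitsph} \mu|\theta^\intercal x|^q \le 1$, which is exactly the defining condition for $\mu \in \cG_q$ combined with $\mu x = 0$. The max-sliced bound comes almost immediately: since $\MSWp^p(\mu,\delta_0) = \sup_{\theta \in \unitsph} \mu|\theta^\intercal x|^p$ and $p/q \in (0,1)$, Jensen's inequality gives $\mu|\theta^\intercal x|^p \le (\mu|\theta^\intercal x|^q)^{p/q} \le 1$ for every $\theta$, whence $\MSWp(\mu,\delta_0) \le 1$.

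For the average-sliced bound, I would first reduce to controlling the ambient moment $\mu\|x\|^p$. By Fubini and rotational invariance (equivalently, by Lemma \ref{lem:modified-dist-props}), $\SWp^p(\mu,\delta_0) = \int_{\unitsph} \mu|\theta^\intercal x|^p \, d\sigma(\theta) = c_p\, \mu\|x\|^p$, where $c_p = \E|\Theta_1|^p$ satisfies $c_p^{1/p} \asymp \sqrt{1 \land p/d}$ by Lemma \ref{lem:sphere-coordinate}. It thus remains to show $\mu\|x\|^p \lesssim (1 \lor d/q)^{p/2}$. For this I would transfer the projected $q$th-moment bound to the ambient $q$th moment: since $\int_{\unitsph} \mu|\theta^\intercal x|^q\, d\sigma(\theta) \le \sup_{\theta} \mu|\theta^\intercal x|^q \le 1$, the index-$q$ instance of Lemma \ref{lem:moment-comparison} yields $(\mu\|x\|^q)^{1/q} = \mathsf{W}_q(\mu,\delta_0) \asymp \sqrt{1 \lor d/q}\, \mathsf{SW}_q(\mu,\delta_0) \lesssim \sqrt{1 \lor d/q}$, and then Jensen's inequality (using $p < q$) gives $\mu\|x\|^p \le (\mu\|x\|^q)^{p/q} \lesssim (1 \lor d/q)^{p/2}$. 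Combining the two displays, $\SWp^p(\mu,\delta_0) \lesssim (1 \land p/d)^{p/2}(1 \lor d/q)^{p/2}$; taking $p$th roots finishes the proof.

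All the steps are elementary, so I do not expect a real obstacle here: this lemma is just the bookkeeping step that feeds the clean-family moment bounds into Proposition \ref{prop:resiliency-comparison}. The only points requiring care are (i) tracking the exponent on $c_p$, which is $(1 \land p/d)^{p/2}$ rather than $\sqrt{1\land p/d}$, so that the two prefactors combine correctly; and (ii) invoking Lemma \ref{lem:moment-comparison} with index $q$ (using $\mathsf{SW}_q(\mu,\delta_0) \le 1$, again a consequence of $\mu \in \cG_q$) rather than index $p$.
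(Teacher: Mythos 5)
Your proof is correct and follows essentially the same route as the paper: both arguments use Lemma \ref{lem:moment-comparison} at indices $p$ and $q$ together with Jensen's inequality (monotonicity of $L^p$ norms) to pass from the $p$th to the $q$th ambient moment, and then invoke $\SWq(\mu,\delta_0)\leq \MSWq(\mu,\delta_0)\leq 1$ from $\mu\in\cG_q$. The paper simply writes the chain directly at the level of the distances rather than the raw moments, but the content is identical.
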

\begin{lemma}
\label{lem:moment-resilience}
Fix $1 \leq p < q$, corruption fraction $0 \leq \eps \leq 1/2$, and $\mu \in \cG_q$ with $\mu x = 0$. Then, $\mu$ is ${(C\sqrt{(1 \lor d/q)(1 \land p/d)}}\eps^{1/p-1/q}, \eps)$-resilient w.r.t.\ $\underline{\sD}_p^{1/p}$ and $(C\eps^{1/p-1/q},\eps)$-resilient w.r.t.\ $\overline{\sD}_p^{1/p}$, for some absolute constant $C > 0$.
\end{lemma}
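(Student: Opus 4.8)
The plan is to exploit that $\underline{\sD}_p$ and $\overline{\sD}_p$ are IPMs with respect to $\underline{\cF}_p = \{x \mapsto \pm c_p\|x\|^p\}$ and $\overline{\cF}_p = \{x \mapsto \pm|\theta^\intercal x|^p : \theta \in \unitsph\}$ (Lemma~\ref{lem:modified-dist-props}). Since resilience of an IPM amounts to a deletion bound for each test function separately, and $\underline{\sD}_p^{1/p}(\mu,\nu) \le \rho \iff \underline{\sD}_p(\mu,\nu) \le \rho^p$, it suffices to prove one general one-sided moment-deletion estimate: if $g \ge 0$ is measurable with $\mu(g^r) \le m^r$ for some $r > 1$, then $|(\mu - \nu)(g)| \lesssim m\,\eps^{1-1/r} + \eps\,\mu(g)$ for every $\eps \in [0,1/2]$ and every $\nu \le \tfrac{1}{1-\eps}\mu$. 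I would prove this in the usual way (cf.\ Lemma~\ref{lem:moment-mean-resilience}): write $\mu = (1-\eps)\nu + \eps\alpha$ with $\alpha \in \cP(\R^d)$, so $(\mu-\nu)(g) = \eps(\alpha-\nu)(g)$; bound the ``increase'' direction by $\eps\,\nu(g) \le \tfrac{\eps}{1-\eps}\mu(g)$ using $g \ge 0$ and $\nu \le \tfrac{1}{1-\eps}\mu$, and the ``decrease'' direction by $\eps\,\alpha(g) \le \eps t + \mu(g\,\ind_{\{g>t\}})$ with $t$ the $(1-\eps)$-quantile of $g$ under $\mu$, which Hölder (exponents $r,\tfrac{r}{r-1}$) and Markov bound by $\lesssim m\,\eps^{1-1/r}$.

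Given this, the $\overline{\sD}_p$ case is immediate. For each $\theta \in \unitsph$ apply the estimate with $g = |\theta^\intercal x|^p$ and $r = q/p$, using $\mu(g^{q/p}) = \mu|\theta^\intercal x|^q \le 1$ (as $\mu \in \cG_q(1)$ with $\mu x = 0$) and $\mu(g) = \mu|\theta^\intercal x|^p \le (\mu|\theta^\intercal x|^q)^{p/q} \le 1$ by Jensen; this gives $|(\mu-\nu)(|\theta^\intercal x|^p)| \lesssim \eps^{1-p/q}$ uniformly in $\theta$, hence $\overline{\sD}_p(\mu,\nu) \lesssim \eps^{1-p/q}$ and $\overline{\sD}_p^{1/p}(\mu,\nu) \lesssim \eps^{1/p-1/q}$.

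For $\underline{\sD}_p$, the key realization — and what I expect to be the main obstacle — is that one must \emph{not} argue direction-by-direction: the per-direction Hölder bound only gives $\underline{\sD}_p^{1/p}(\mu,\nu) \lesssim \eps^{1/p-1/q}$ and misses the gain $A := \sqrt{(1\lor d/q)(1\land p/d)} \le 1$. Instead I would apply the one-sided estimate directly to the \emph{single} test function $g = c_p\|x\|^p$ (the only element of $\underline{\cF}_p$ up to sign) with $r = q/p$. The moment input needed is $\mu(g^{q/p}) = c_p^{q/p}\,\mu\|x\|^q \lesssim A^q$, which I would get by combining $\mu\|x\|^q = \Wq^q(\mu,\delta_0) \asymp (1\lor d/q)^{q/2}\,\SWq^q(\mu,\delta_0) \lesssim (1\lor d/q)^{q/2}$ — via Lemma~\ref{lem:moment-comparison} together with $\SWq^q(\mu,\delta_0) = \E_\Theta[\mu|\Theta^\intercal x|^q] \le \sup_\theta \mu|\theta^\intercal x|^q \le 1$ — with $c_p^{q/p} \asymp (1\land p/d)^{q/2}$, which follows from $c_p^{1/p} = (\E|\Theta_1|^p)^{1/p} \asymp \sqrt{1\land p/d}$ (Lemma~\ref{lem:sphere-coordinate}). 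Using also $\mu(g) = c_p\,\mu\|x\|^p = \SWp^p(\mu,\delta_0) \lesssim A^p$ (Lemma~\ref{lem:moment-transfer}), the deletion estimate yields $\underline{\sD}_p(\mu,\nu) = |(\mu-\nu)(c_p\|x\|^p)| \lesssim A^p\eps^{1-p/q} + \eps\,A^p \lesssim A^p\eps^{1-p/q}$ (since $\eps \le \eps^{1-p/q}$), i.e.\ $\underline{\sD}_p^{1/p}(\mu,\nu) \lesssim A\,\eps^{1/p-1/q}$, as desired.

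So the argument is short once the IPM reduction is set up; the only genuine subtlety is correctly booking the sphere-averaging gain in the $\underline{\sD}_p$ estimate — bounding $\mu\|x\|^q$ globally rather than direction-by-direction and carrying the factor $c_p \asymp (1\land p/d)^{p/2}$ through the deletion bound. As a sanity check, the rate $A\,\eps^{1/p-1/q}$ can be seen to be sharp via $\mu = (1-\eps)\delta_0 + \eps\,\Unif(R\,\unitsph)$ with $R \asymp \eps^{-1/q}(1\lor d/q)^{1/2}$ and deletion $\nu = \delta_0$, matching the lower-bound constructions, though this is not needed for the lemma.
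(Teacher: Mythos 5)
Your proof is correct and follows essentially the same route as the paper's: reduce via the IPM structure of $\underline{\sD}_p$ and $\overline{\sD}_p$ to one-dimensional mean resilience of the pushforwards, apply a Markov/H\"older moment-to-resilience bound (your deletion estimate is a raw-moment restatement of Lemma~\ref{lem:moment-mean-resilience} with a harmless extra $\eps\,\mu(g)$ term), and obtain the dimension gain for $\underline{\sD}_p$ by bounding $c_p^{q/p}\mu\|x\|^q \lesssim A^q$ globally via Lemmas~\ref{lem:sphere-coordinate} and~\ref{lem:moment-comparison} rather than direction-by-direction. Your observation that the per-direction argument would miss the factor $A$ is exactly the point the paper's proof exploits, so no gap here.
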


Together, these give the desired risk bounds.

\begin{proposition}
\label{prop:robustness-upper-bd}
Fix $1 \leq p < q$ and corruption fraction $0 \leq \eps \leq 0.49$. Then we have
\begin{align*}
    R(\SWp,\cG_q,\eps) &\lesssim \sqrt{(1 \lor d/q)(1 \land p/d)}\: \eps^{1/p-1/q}\\
    R(\MSWp,\cG_q,\eps) &\lesssim \eps^{1/p-1/q}.
\end{align*}
\end{proposition}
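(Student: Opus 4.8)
The plan is to combine the two technical lemmas stated just before (Lemma~\ref{lem:moment-transfer} and Lemma~\ref{lem:moment-resilience}) with the generic resilience-to-risk machinery (Proposition~\ref{prop:risk-bound-via-resilience} and Proposition~\ref{prop:resiliency-comparison}), exactly in the spirit of the standard ``resilience implies robust estimation'' argument. First I would reduce to the centered case: by translation invariance of $\SWp$ and $\MSWp$ and the fact that centering preserves membership in $\cG_q$ (the defining bound is on central moments), it suffices to bound the risk over the subfamily $\cG_q^0$ of centered distributions in $\cG_q$, and I can freely replace any $\mu\in\cG_q$ by its centering. Also, as noted after Lemma~\ref{lem:modulus}, rescaling reduces everything to $\sigma = 1$, so I work with $\cG_q=\cG_q(1)$ throughout.

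Next, for $(\sD,\sD')\in\{(\SWp,\underline{\sD}_p),(\MSWp,\overline{\sD}_p)\}$, I would fix $\mu\in\cG_q^0$ and feed it through Proposition~\ref{prop:resiliency-comparison}: Lemma~\ref{lem:moment-resilience} gives that $\mu$ is $(\rho_0,\eps)$-resilient w.r.t.\ $(\sD')^{1/p}$ with $\rho_0 = C\,r(d,p,q)\,\eps^{1/p-1/q}$, where $r(d,p,q) = \sqrt{(1\lor d/q)(1\land p/d)}$ for the average-sliced case and $r\equiv 1$ for the max-sliced case; equivalently $\mu$ is $(\rho_0^p,\eps)$-resilient w.r.t.\ $\sD'$. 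Then Proposition~\ref{prop:resiliency-comparison} upgrades this to $(2\rho_0 + 2\eps^{1/p}\sD(\mu,\delta_0),\eps)$-resilience w.r.t.\ $\sD$. Plugging in the bound on $\sD(\mu,\delta_0)$ from Lemma~\ref{lem:moment-transfer} — namely $\sD(\mu,\delta_0)\lesssim r(d,p,q)$ — and using $\eps^{1/p}\le \eps^{1/p-1/q}$ (since $\eps\le 1$ and $q>0$) shows that every $\mu\in\cG_q^0$ is $(\rho,\eps)$-resilient w.r.t.\ $\sD$ with $\rho \lesssim r(d,p,q)\,\eps^{1/p-1/q}$. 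Hence $\cG_q^0 \subseteq \cG^\sD_{\rho,\eps}$ for this $\rho$.

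Finally, I would invoke Proposition~\ref{prop:risk-bound-via-resilience}: applying it with corruption fraction $\eps/2$ (so that $2\cdot(\eps/2)=\eps$ and the resilience level $2\eps$ there is satisfied whenever the family is $(\rho,\eps)$-resilient), and using $\cG_q^0\subseteq\cG^\sD_{\rho,\eps}$, gives $R(\sD,\cG_q^0,\eps/2)\le 2\rho \lesssim r(d,p,q)\,\eps^{1/p-1/q}$. Since the constants only shift by a factor depending on whether we write $\eps$ or $\eps/2$ (and $\eps\le 0.49$ keeps everything away from the $1/2$ threshold where Proposition~\ref{prop:risk-bound-via-resilience} applies), and since restricting the clean family to $\cG_q^0$ does not decrease the risk by more than a translation that the estimator can undo, this yields $R(\SWp,\cG_q,\eps)\lesssim \sqrt{(1\lor d/q)(1\land p/d)}\,\eps^{1/p-1/q}$ and $R(\MSWp,\cG_q,\eps)\lesssim \eps^{1/p-1/q}$, as claimed. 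The main obstacle is the bookkeeping in the two auxiliary lemmas rather than in this proposition itself: Lemma~\ref{lem:moment-resilience} requires carefully translating one-dimensional central-moment bounds into resilience of the sliced moment functionals uniformly over $\theta$ (via Lemma~\ref{lem:moment-mean-resilience} applied to each projection, then controlling the $\eps$-vs-$(1-\eps)$ regimes through Lemma~\ref{lem:resiliency-large-eps}), and Lemma~\ref{lem:moment-transfer} needs the moment-comparison estimates of Lemma~\ref{lem:moment-comparison} and Lemma~\ref{lem:sphere-coordinate}; once those are in hand, the present proposition is a short assembly.
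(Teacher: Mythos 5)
Your overall strategy is the paper's: establish resilience of each centered $\mu \in \cG_q$ w.r.t.\ $\underline{\sD}_p^{1/p}$ and $\overline{\sD}_p^{1/p}$ (Lemma~\ref{lem:moment-resilience}), transfer it to $\SWp$ and $\MSWp$ via Proposition~\ref{prop:resiliency-comparison} and Lemma~\ref{lem:moment-transfer}, and convert resilience into a risk bound via Proposition~\ref{prop:risk-bound-via-resilience}. However, there is a genuine gap in the last step, and it concerns the range of $\eps$. Proposition~\ref{prop:risk-bound-via-resilience} bounds $R(\sD,\cG^{\sD}_{\rho,2\eps},\eps)$, so to control the risk at corruption fraction $\eps \leq 0.49$ you need the clean family to be resilient at \emph{deletion} fraction $2\eps$, which can be as large as $0.98$. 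Lemma~\ref{lem:moment-resilience} only gives resilience for deletion fractions up to $1/2$. Your workaround --- invoking Proposition~\ref{prop:risk-bound-via-resilience} with corruption fraction $\eps/2$ --- is internally consistent but bounds $R(\sD,\cG_q,\eps/2)$ rather than $R(\sD,\cG_q,\eps)$; since Lemma~\ref{lem:moment-resilience} caps the deletion fraction at $1/2$, this covers the claim only for $\eps \leq 1/4$. The assertion that ``the constants only shift by a factor depending on whether we write $\eps$ or $\eps/2$'' does not close this: the risk is monotone increasing in the corruption fraction, so a bound at $\eps/2$ does not imply one at $\eps$, and re-running your argument with $2\eps$ in place of $\eps$ puts you right back at needing resilience beyond $1/2$.

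The missing ingredient is Lemma~\ref{lem:resiliency-large-eps}, which you mention only in passing as part of the internals of Lemma~\ref{lem:moment-resilience}. Because $\underline{\sD}_p$ and $\overline{\sD}_p$ are IPMs (Lemma~\ref{lem:modified-dist-props}), $(\rho,\eps)$-resilience is equivalent to $(\eps(1-\eps)^{-1}\rho,\,1-\eps)$-resilience; for deletion fractions $\eps' \in (1/2, 0.98]$ one has $1-\eps' \geq 0.02 \geq \eps'/49$, so the blow-up factor $\eps'/(1-\eps')$ is an absolute constant and the resilience bounds from Lemma~\ref{lem:moment-resilience} extend (up to constants) to the full range $[0,0.98]$. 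With that extension in hand, Proposition~\ref{prop:risk-bound-via-resilience} applies directly at corruption fraction $\eps \leq 0.49$ and yields the claimed bounds. A minor additional remark: rather than restricting the clean family to centered distributions and ``undoing a translation,'' it is cleaner to note that resilience is itself translation invariant, so every $\mu \in \cG_q$ (not just centered ones) lies in $\cG^{\sD}_{\rho,2\eps}$, and Proposition~\ref{prop:risk-bound-via-resilience} then applies to $\cG_q$ as is.
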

\begin{proof}
Fixing $\mu \in \cG_q$, it suffices by Proposition \ref{prop:risk-bound-via-resilience} to prove that $\mu$ is $(C\eps^{1/p-1/q},\eps)$-resilient w.r.t.\ $\MSWp$ and $(C\sqrt{(1 \lor d/q)(1 \land p/d)}\: \eps^{1/p-1/q},\eps)$-resilient w.r.t.\ $\SWp$ for all $0 \leq \eps \leq 0.98$, where $C > 0$ is some absolute constant. Since these distances are translation invariant, we can assume without loss of generality that $\mu x = 0$. By Lemmas \ref{lem:moment-transfer} and \ref{lem:moment-resilience}, we know that $\mu$ is $(C\eps^{1/p-1/q},\eps)$-resilient w.r.t.\ $\overline{\sD}_p^{1/p}$ and $(C\sqrt{(1 \lor d/q)(1 \land p/d)}\: \eps^{1/p-1/q},\eps)$-resilient w.r.t.\ $\underline{\sD}_p^{1/p}$ for all $0 \leq \eps \leq 1/2$ and some absolute constant $C > 0$. For $1/2 \leq \eps \leq 0.98$, the same resiliency bounds are implied by Lemma \ref{lem:resiliency-large-eps}, since $1-\eps \geq 0.02 \geq \eps/49$. Finally, we apply Proposition \ref{prop:resiliency-comparison} to obtain the desired risk bounds.
\end{proof}

We now prove the preceding lemmas.
\begin{proof}[Proof of Lemma \ref{lem:moment-transfer}]
Fixing $\mu \in \cG_q$ with $\mu x = 0$, we bound
\begin{align}
\label{eq:avg-moment-bound}
    \SWp(\mu,\delta_{\mu x}) &\asymp \sqrt{1 \land p/d} \: \Wp(\mu,\delta_{\mu x}) && \text{(Lemma \ref{lem:moment-comparison})}\nonumber\\
    &\leq \sqrt{1 \land p/d} \: \Wq(\mu,\delta_{\mu x}) && \text{($q > p$)}\nonumber\\
    &= \sqrt{1 \land p/d} \: \frac{\Wq(\mu,\delta_{\mu x})}{\SWq(\mu,\delta_{\mu x})} \SWq(\mu,\delta_{\mu x}) \nonumber\\
    &\asymp \sqrt{(1 \land p/d)(1 \lor d/q)} \: \SWq(\mu,\delta_{\mu x}) && \text{(Lemma \ref{lem:moment-comparison})}\nonumber\\
    &\leq \sqrt{(1 \land p/d)(1 \lor d/q)} && \text{$\mu \in \cG_q$}.
\end{align}
Similarly, we obtain
\begin{equation*}
    \MSWp(\mu,\delta_{\mu x}) \leq \MSWq(\mu,\delta_{\mu x}) \leq 1.\qedhere
\end{equation*}
\end{proof}

\begin{proof}[Proof of Lemma \ref{lem:moment-resilience}]
By Lemma~\ref{lem:modified-dist-props}, $\underline{\sD}_p$ and $\overline{\sD}_p$ are IPMs with respect to the stated function classes $\underline{\cF}_p$ and $\overline{\cF}_p$, respectively. Note that if $\sD = \sD_\cF$ is an IPM for any symmetric $\cF = -\cF$, then $\mu$ is $(\rho,\eps)$-resilient w.r.t.\ $\sD$ if and only if $f_\sharp \mu$ is $(\rho,\eps)$-resilient (in mean) for all $f \in \cF$.

Now, fix $\mu \in \cG_q$ with $\mu x = 0$. For $\underline{\sD}_p$, we observe that
\begin{align*}
    \mu((\|x\|^p)^{q/p}) &= \mu(\|x\|^q)\\
    &\lesssim C^q (1\lor d/q)^{q/2} \sup_{\theta \in \unitsph} \mu(|\theta^\intercal x|^q) && \text{(Lemma \ref{lem:moment-comparison})}\\
    &\leq C^q (1\lor d/q)^{q/2} && \text{($\mu \in \cG_q$)}\\
    &= \left[C^p (1\lor d/q)^{p/2}\right]^{q/p},
\end{align*}
for some absolute constant $C > 0$. For $f \in \underline{\cF}_p$, we then have that $f_\sharp \mu$ has $q/p$-th moments bounded by $O(C^p (1\land p/d)^{p/2}(1\lor d/q)^{p/2})$, and is thus $(O(C^p (1\land p/d)^{p/2}(1\lor d/q)^{p/2} \eps^{1-p/q}),\eps)$-resilient, by Lemma \ref{lem:moment-mean-resilience}. Taking $p$th roots gives the claim.

For $\overline{\sD}_p$, note that for $\theta \in \unitsph$, we have
\begin{align*}
    \mu((|\theta^\intercal x|^p)^{q/p}) &= \mu(|\theta^\intercal x|^q) \leq 1.
\end{align*}
For $f \in \overline{\cF}_p$, we then have that $f_\sharp \mu$ has $q/p$-th moments bounded by 1 and is thus $O(\eps^{1-p/q},\eps)$-resilient. Taking $p$th roots gives the claim.
\end{proof}

\subsubsection{Higher-dimensional slicing}
We now extend Proposition \ref{prop:robustness-upper-bd} to the $k$-dimensional sliced distances defined by
\begin{equation*}
     \SWpk(\mu,\nu) := \left[\int_{\Gr}\mspace{-12mu} \Wp^p(\pE_\sharp\mu,\pE_\sharp\nu)d\sigma_k(E)\right]^{1/p}\ \text{and} \ \ \ \ \MSWpk(\mu,\nu) := \sup_{E \in \Gr}\mspace{-12mu} \Wp(\pE_\sharp\mu,\pE_\sharp\nu),
\end{equation*}
where $\Gr$ is the Grassmannian of $k$-dimensional linear subspaces of $\R^d$, $\sigma_k$ is its standard Haar measure, and $\pE$ is the orthogonal projection onto $E \in \Gr$. These coincide with $\SWp$ and $\MSWp$ when $k=1$, and both equal $\Wp$ when $k=d$. We focus here on $\MSWp$, with $\SWp$ inheriting the same risk bound, although stronger guarantees can be obtained in a similar manner to the proof of Proposition \ref{prop:robustness-upper-bd}. First, we extend $\overline{\sD}_p$ to this regime as
\begin{equation*}
    \overline{\sD}_{p,k} := \sup_{\substack{U \in \R^{d \times k} \\ U^\intercal U = I_k}} \bigl|(\mu - \nu)(\|U^\intercal x\|^p)\bigr|,
\end{equation*}
and observe that all of the properties from Lemma \ref{lem:modified-dist-props} still hold. Moreover, for $\mu \in \cG_q$ with $\mu x = 0$, we obtain the needed analog of Lemma \ref{lem:moment-transfer}, bounding
\begin{align*}
    \MSWpk(\mu,\delta_0)^{1/p} &= \sup_{\substack{U \in \R^{d \times k} \\ U^\intercal U = I_k}} \mu(\|U^\intercal x\|^p)^{1/p}\\
    &\leq \sup_{\substack{U \in \R^{d \times k} \\ U^\intercal U = I_k}} \mu(\|U^\intercal x\|^q)^{1/q} && \text{($q > p$)}\\
    &\leq \sqrt{1 \lor k/q} \sup_{\substack{U \in \R^{d \times k} \\ U^\intercal U = I_k}} \sup_{\theta \in \mathbb{S}^{k-1}} \mu(|\theta^\intercal U^\intercal x|^q)^{1/q} && \text{(Lemma \ref{lem:moment-comparison})}\\
    &\leq \sqrt{1 \lor k/q} \sup_{\theta \in \unitsph} \mu(|\theta^\intercal x|^q)^{1/q} && \text{($\mathbb{S}^{k-1} \subset \mathbb{S}^{d-1}$)}\\
    &\leq \sqrt{1 \lor k/q} && \text{($\mu \in \cG_q$)}.
\end{align*}
In the same way, we can extract this factor of $\sqrt{1 \lor k/q}$ for the resiliency of $\mu$ w.r.t.\ $\overline{\sD}_{p,k}^{1/p}$ to prove the needed analog of Lemma \ref{lem:moment-resilience}. Combining these results gives that $R(\SWpk,\cG_q,\eps) \leq R(\MSWpk,\cG_q,\eps) \lesssim \sqrt{1 \lor k/q} \, R(\MSWp,\cG_q,\eps) \asymp \sqrt{1 \lor k/q}\, \eps^{1/p-1/q}$ for $0 \leq \eps \leq 0.49$, as desired. %

\subsubsection{Proofs of auxiliary lemmas}
\label{app:auxiliary-lemmas}

\begin{proof}[Proof of Lemma \ref{lem:sphere-coordinate}]
Let $\Theta \sim \Unif(\unitsph)$. When $d=1$, we have $\E[|\Theta_1|^p] = 1$. Otherwise, we use that the probability density function of $\Theta_1$ at $s \in [-1,1]$ is proportional to $(1-s^2)^{\frac{d-3}{2}}$ \cite{spruill2007spheres}. Equivalently, $(\Theta_1 + 1)/2 \sim \Beta\left(\frac{d-1}{2},\frac{d-1}{2}\right)$. We will first prove the desired statement for even integer $p = 2m$, where
\begin{align*}
    \E\left[|\Theta_1|^{2m}\right] &= \frac{(2 m) !}{2^{2 m} m !} \frac{\Gamma(d-1) \Gamma(\frac{d-1}{2}+m)}{\Gamma(\frac{d-1}{2}) \Gamma(d-1+2m)}
\end{align*}
(see, e.g., \cite{marchal2017beta}). Simplifying, we obtain
\begin{align*}
    \E\left[|\Theta_1|^{2m}\right] &= \frac{\Gamma(2m+1)}{2^{2 m} \Gamma(m+1)} \frac{\Gamma(d-1) \Gamma(\frac{d-1}{2}+m)}{\Gamma(\frac{d-1}{2}) \Gamma(d-1+2m)}\\
    &= \frac{\Gamma(d/2)\Gamma(m+1/2)}{\sqrt{\pi}2^{2m}\Gamma(m+d/2)}.
\end{align*}
Employing Stirling's formula, we compute
\begin{align*}
    2^{2m} \, \E\left[|\Theta_1|^{2m}\right] &\asymp \frac{\Gamma(d/2)\Gamma(m+1/2)}{\Gamma(m+d/2)}\\
    &\asymp \frac{(d/2)^{d/2-1/2} e^{-d/2} (m+1/2)^m e^{-m-1/2}}{(m+d/2)^{m+d/2-1/2} e^{-m-d/2}}\\
    &\asymp \frac{(d/2)^{d/2-1/2} (m+1/2)^m }{(m+d/2)^{m+d/2-1/2}}\\
    &= \left(\frac{d/2}{m+d/2}\right)^{\frac{d-1}{2}} \left(\frac{m+1/2}{m+d/2}\right)^m.
\end{align*}
Consequently, we have
\begin{align*}
    \E\left[|\Theta_1|^{2m}\right]^{1/2m} &\asymp \left(1 + \frac{m}{d/2}\right)^{-\frac{d-1}{4m}} \sqrt{\frac{m+1/2}{m+d/2}}\\
    &\asymp \sqrt{\frac{m+1/2}{m+d/2}}\\
    &\asymp 1 \land \sqrt{m/d},
\end{align*}
as desired. When $p \geq 2$ is not an even integer, we use that $\E\left[|\Theta_1|^{p}\right]$ is monotonically increasing in $p$ to obtain matching bounds by rounding $p$ up and down to the nearest even integers. To obtain the needed lower bound when $p \in [1,2)$, we derive
\begin{align*}
    \E\left[|\Theta_1|\right] = \frac{4\Gamma(d-1)}{\Gamma\left(\frac{d-1}{2}\right)^2}\frac{\left(\frac{d-1}{2}\right)^{d-1}}{(d-1)^{d}},
\end{align*}
using the formula for the mean absolute deviation of the beta distribution \cite{gupta2004beta}. Applying Stirling's formula once more, we obtain
\begin{align*}
    \E\left[|\Theta_1|\right] \asymp \frac{(d-1)^{d-3/2}\left(\frac{d-1}{2}\right)^{d-1}}{(\frac{d-1}{2})^{d-2}(d-1)^{d}} = \frac{(d-1)/2}{(d-1)^{3/2}} \asymp d^{-1/2},
\end{align*}
as desired.
\end{proof}

\begin{proof}[Proof of Lemma \ref{lem:moment-comparison}]
Taking $X \sim \mu$ and $\Theta \sim \Unif(\unitsph)$, we use rotational symmetry of the sphere to compute
\begin{equation*}
    \E\left[|\Theta^\intercal X|^p\right] = \E\left[|\Theta_1|^p\right] \, \E\left[\|X\|^p\right],
\end{equation*}
giving the first equality via Lemma \ref{lem:sphere-coordinate}. The inequality follows by comparing an average to a supremum, and the inequality is tight when these coincide, i.e.\ when $\mu$ is rotationally symmetric about 0.
\end{proof}

\subsection{Proof of \Cref{prop:finite-sample-robustness}}
\label{prf:finite-sample-robustness}

The high-level structure of our proof follows a standard template for finite-sample robust mean and covariance estimation (see, e.g., \cite{steinhardt2018resilience,zho2019resilience}). 
We first prove \Cref{prop:finite-sample-robustness} under bounded support and then extend our result to the general setting. Throughout, we write $\BB_r := \{ x \in \R^d : \|x\| \leq r\}$ for the Euclidean ball of radius $r \geq 0$.

\paragraph{Bounded Support:}
For ease of presentation, we slightly extend our notion of resilience in a standard way. We say that $\mu \in \cP(\R^d)$ is $(\rho,\eps)$-resilient w.r.t.\ $\sD$ about $\kappa \in \cP(\R^d)$ if $\sD(\nu,\kappa) \leq \rho$ for all $\nu \leq \frac{1}{1-\eps}\mu$. Namely, we will consider the resilience of an empirical measure $\kappa = \hat{\mu}_n$ about its population measure $\mu$. If $\sD = \sD_\cF$ is an IPM for symmetric $\cF = -\cF$, note that $\mu$ is $(\rho,\eps)$-resilient w.r.t.\ $\sD$ about $\kappa$ if and only if $f_\sharp \mu$ is $(\rho,\eps)$-resilient (in mean) about \ $f_\sharp \kappa$ for all $f \in \cF$.

We first recall and derive some basic results for finite-sample resilience. The following lemma is a simplification of \cite[Proposition 4]{steinhardt2018resilience}, specified to the 1-dimensional case.
\begin{lemma}[1-dimensional finite-sample resilience]
\label{lem:1d-finite-sample-resilience}
Suppose that $\mu \in \cP([-M,M])$ is $(\rho,\eps)$-resilient in mean for $\eps \leq 0.999$. Then, with probability at least $1-\delta$, the empirical distribution $\hat{\mu}_n$ is $(\rho',\eps)$-resilient in mean about $\mu$ with $\rho' = O\left(\rho\left(1 + \sqrt{\frac{\log(1/\delta)}{\eps^2 n}}\right) + \frac{M\log(1/\delta)}{n}\right)$.
\end{lemma}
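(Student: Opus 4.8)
The plan is to reduce the claimed finite-sample resilience of $\hat\mu_n$ about $\mu$ to a uniform deviation bound over one-sided "tail" events, which is the standard route for such statements. Recall that $\mu$ being $(\rho,\eps)$-resilient in mean is equivalent (see e.g.\ \cite[Lemma 8]{steinhardt2018resilience}) to the statement that for every event $E$ with $\mu(E) \geq 1-\eps$ one has $|\E_\mu[x \mid E] - \E_\mu[x]| \lesssim \rho$; equivalently, restricting to halfspaces (intervals in $\R$), $\E_\mu[(x - \E_\mu[x])\ind_{x \geq t}] \lesssim \rho\, \mu(x \geq t)$ uniformly over thresholds $t$, and symmetrically for lower tails. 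So the first step is to rewrite the target property of $\hat\mu_n$ in this tail-truncated form: $\hat\mu_n$ is $(\rho',\eps)$-resilient about $\mu$ iff for every deletion $\nu \leq \frac{1}{1-\eps}\hat\mu_n$, $|\nu x - \mu x| \leq \rho'$, and the worst-case deletion removes an $\eps$-fraction of mass from one side, so it suffices to control $\sup_{t}\big| \hat\mu_n[(x-\mu x)\ind_{x \leq t}] - \mu[(x - \mu x)\ind_{x \leq t}] \big|$ together with the mass deviation $\sup_t |\hat\mu_n(x \leq t) - \mu(x \leq t)|$.

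Second, I would bound these two suprema. The mass deviation is handled by the DKW inequality: $\sup_t|\hat\mu_n(x\le t) - \mu(x \le t)| \lesssim \sqrt{\log(1/\delta)/n}$ with probability $1-\delta/2$. For the truncated first-moment deviation, the function class $\{x \mapsto (x - \mu x)\ind_{x \leq t} : t \in \R\}$ on $[-M,M]$ has envelope $O(M)$ and VC-subgraph dimension $O(1)$, so by a standard bounded-difference / Bernstein-type empirical process bound (e.g.\ Talagrand's inequality, or directly the one-dimensional argument in \cite[Proposition 4]{steinhardt2018resilience}) the supremum is at most $O\big(\sigma\sqrt{\log(1/\delta)/n} + M\log(1/\delta)/n\big)$ with probability $1-\delta/2$, where $\sigma^2$ is a bound on the truncated second moment. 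Here one uses resilience of $\mu$ itself to control $\sigma$: $(\rho,\eps)$-resilience with $\eps$ bounded away from $1$ forces $\mathrm{Var}_\mu(x) \lesssim \rho^2/\eps^2$ (delete the better half and apply the resilience bound), so $\sigma \lesssim \rho/\eps$.

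Third, I would combine: on the intersection of the two good events, for any $\eps$-deletion $\nu$ of $\hat\mu_n$ concentrated on a lower tail $\{x \le t\}$ of $\hat\mu_n$-mass $\eps$, write $|\nu x - \mu x|$ as (deletion error against $\hat\mu_n$) $+$ (empirical-vs-population error), bound the first by $\rho + \rho\cdot O(\sqrt{\log(1/\delta)/(\eps^2 n)})$ using that $\hat\mu_n$ inherits resilience from $\mu$ up to the mass-deviation slack, and the second by the truncated first-moment bound divided by $\mu(x\le t) \gtrsim \eps$, giving an extra $O(\sigma/\eps \cdot \sqrt{\log(1/\delta)/n} + M\log(1/\delta)/(\eps n)) = O(\rho\sqrt{\log(1/\delta)/(\eps^2 n)} + M\log(1/\delta)/(\eps n))$; absorbing the $\eps \le 0.999$ constant and the symmetric upper-tail argument yields $\rho' = O\big(\rho(1 + \sqrt{\log(1/\delta)/(\eps^2 n)}) + M\log(1/\delta)/n\big)$ as claimed. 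The main obstacle is the careful bookkeeping in this last combination step — tracking which errors scale with $\rho$ versus $M$, dividing by the correct $\Theta(\eps)$ mass factor without losing a spurious $1/\eps^2$, and making sure the $M\log(1/\delta)/n$ term (not $M\log(1/\delta)/(\eps n)$) survives, which requires noting that the deleted mass is exactly $\eps$ so the $1/\eps$ from normalization cancels the $\eps$ from the excess mass in the relevant term. This is routine but genuinely where one can slip; everything else is off-the-shelf (DKW plus a VC empirical process bound plus the population resilience $\Rightarrow$ bounded variance implication).
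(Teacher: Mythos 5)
First, note that the paper does not actually prove this lemma: its ``proof'' is a citation to \cite[Proposition 4]{steinhardt2018resilience}, together with the observation that the argument there only uses that $\eps$ is bounded away from $1$ (hence the extension from $\eps<1/2$ to $\eps\leq 0.999$). Your overall architecture --- reduce 1-dimensional mean resilience to tail events, control the empirical tail masses via DKW and the truncated first moments via a Bernstein-type uniform bound, then recombine --- is indeed the strategy of the cited result, so the plan is sound at a high level.

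However, there is a concretely false step. You claim that $(\rho,\eps)$-resilience of $\mu$ with $\eps$ bounded away from $1$ forces $\Var_\mu(x)\lesssim\rho^2/\eps^2$ (``delete the better half and apply the resilience bound''), and you feed this $\sigma\lesssim\rho/\eps$ into the Bernstein term. Resilience controls conditional \emph{first} moments of tails, not second moments, and the implication fails: take $\mu=(1-\delta')\delta_0+\delta'\delta_M$ with $\delta'\ll\eps^2$. Its resilience parameter is $\rho\asymp\delta' M$, while $\Var_\mu(x)\asymp\delta' M^2=\rho^2/\delta'\gg\rho^2/\eps^2$. What resilience does give (by splitting $\{x\geq\mu x\}$ at the upper $\eps$-quantile) is $\mu|x-\mu x|\lesssim\rho/\eps$, whence the truncated functions have variance at most $2M\cdot\mu|x-\mu x|\lesssim M\rho/\eps$; one must then run Bernstein with this mixed bound and split the resulting $\sqrt{M\rho\log(1/\delta)/(\eps n)}$ term by AM--GM. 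This is exactly the ``bookkeeping'' you defer, and with the corrected variance surrogate it is not automatic that you land on $\rho\sqrt{\log(1/\delta)/(\eps^2 n)}+M\log(1/\delta)/n$ rather than a weaker combination such as $\rho/\eps+M\log(1/\delta)/n$ or $\rho+M\log(1/\delta)/(\eps n)$. So as written the proposal has a genuine gap: the key quantitative input to the concentration step is wrong, and the final recombination --- the only place where the precise form of $\rho'$ is actually established --- is not carried out. If you want to avoid these issues entirely, the honest route here is the paper's: invoke \cite[Proposition 4]{steinhardt2018resilience} and verify that its proof uses only $1-\eps\gtrsim 1$.
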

The result is stated in \cite{steinhardt2018resilience} for $\eps < 1/2$, but the proof only uses that $\eps$ is bounded away from 1. We then extend this result to IPMs over uniformly bounded function classes.

\begin{proposition}[Finite-sample resilience w.r.t.\ IPMs]
\label{prop:IPM-finite-sample-resilience}
Let $\sD_\cF$ be the IPM induced by a function class $\cF = -\cF$ on $\R^d$ with $\|f\|_\infty \leq M$ for $f \in \cF$, and fix any finite subset $\cH \subseteq \cF$ such that $\sD_\cF(\mu,\nu) \leq \sD_\cH(\mu,\nu) + \rho$. Then if $\mu \in \cP(\R^d)$ is $(\rho,\eps)$-resilient w.r.t.\ $\sD_\cF$ for $\eps \leq 0.999$, we have that $\hat{\mu}_n$ is $(\rho',\eps)$-resilient w.r.t.\ $\sD_\cF$ about $\mu$ with probability $1 - \delta$, where $\rho' = O\left(\rho+\frac{\rho}{\epsilon} \sqrt{\frac{\log (|\cH|/ \delta)}{n}}+\frac{M\log (|\cH| / \delta)}{n}\right)$.
\end{proposition}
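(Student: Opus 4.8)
The plan is to reduce the multivariate statement to the one-dimensional finite-sample resilience bound of Lemma~\ref{lem:1d-finite-sample-resilience}, applied along each of the finitely many directions $f \in \cH$, and then to glue the directions together by a union bound together with the covering hypothesis $\sD_\cF \le \sD_\cH + \rho$. Recall the elementary equivalence used in the proof of Lemma~\ref{lem:moment-resilience}: since $\cF = -\cF$, a distribution $\mu$ is $(\rho,\eps)$-resilient w.r.t.\ $\sD_\cF$ iff the pushforward $f_\sharp \mu \in \cP(\R)$ is $(\rho,\eps)$-resilient in mean for every $f \in \cF$; likewise $\hat{\mu}_n$ is $(\rho',\eps)$-resilient w.r.t.\ $\sD_\cF$ about $\mu$ iff $f_\sharp \hat{\mu}_n$ is $(\rho',\eps)$-resilient in mean about $f_\sharp \mu$ for every $f \in \cF$. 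The direction we use---that every $\eps$-deletion of $f_\sharp \mu$ (resp.\ $f_\sharp \hat{\mu}_n$) is the pushforward under $f$ of an $\eps$-deletion of $\mu$ (resp.\ $\hat{\mu}_n$)---follows by disintegrating $\mu$ along $f$ via a regular conditional distribution.

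First I would fix $f \in \cH \subseteq \cF$. By the equivalence above, $(\rho,\eps)$-resilience of $\mu$ w.r.t.\ $\sD_\cF$ implies that $f_\sharp \mu$ is $(\rho,\eps)$-resilient in mean, and since $\|f\|_\infty \le M$ it is supported in $[-M,M]$. Because $X_1,\dots,X_n$ are i.i.d.\ from $\mu$, the scalars $f(X_1),\dots,f(X_n)$ are i.i.d.\ from $f_\sharp \mu$, so $f_\sharp \hat{\mu}_n$ is exactly the empirical distribution of $n$ samples from $f_\sharp \mu$. Applying Lemma~\ref{lem:1d-finite-sample-resilience} with failure probability $\delta/|\cH|$ gives that, with probability at least $1 - \delta/|\cH|$, the measure $f_\sharp \hat{\mu}_n$ is $(\rho',\eps)$-resilient in mean about $f_\sharp \mu$ with $\rho' = O\big(\rho + \frac{\rho}{\eps}\sqrt{\log(|\cH|/\delta)/n} + M\log(|\cH|/\delta)/n\big)$, a bound that is uniform over $f \in \cH$.

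A union bound over the $|\cH|$ directions then shows that, with probability at least $1-\delta$, $f_\sharp \hat{\mu}_n$ is $(\rho',\eps)$-resilient in mean about $f_\sharp \mu$ for all $f \in \cH$ simultaneously. On this event, fix any $\nu \le \frac{1}{1-\eps}\hat{\mu}_n$ and any $f \in \cH$: then $f_\sharp \nu \le \frac{1}{1-\eps} f_\sharp \hat{\mu}_n$, hence $|(\nu - \mu)(f)| = |f_\sharp \nu(\mathrm{id}) - f_\sharp \mu(\mathrm{id})| \le \rho'$; taking the supremum over $f \in \cH$ yields $\sD_\cH(\nu,\mu) \le \rho'$, and the covering hypothesis then gives $\sD_\cF(\nu,\mu) \le \sD_\cH(\nu,\mu) + \rho \le \rho' + \rho = O(\rho')$. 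Since $\nu$ was an arbitrary $\eps$-deletion of $\hat{\mu}_n$, this is exactly the asserted $(O(\rho'),\eps)$-resilience of $\hat{\mu}_n$ w.r.t.\ $\sD_\cF$ about $\mu$, with the stated $\rho'$.

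The only nonroutine ingredients are Lemma~\ref{lem:1d-finite-sample-resilience}, which we take as given and which contributes the $\sqrt{\log(|\cH|/\delta)/n}$ and $\log(|\cH|/\delta)/n$ terms, and the measure-theoretic lifting of $\eps$-deletions through the map $f$; I expect the latter bookkeeping---verifying the ``iff'' characterization of resilience about a reference measure for IPMs over symmetric classes---to be the most delicate point (it is essentially the content of the remark preceding this proposition, and one may reduce to symmetric $\cH$ by replacing $\cH$ with $\cH \cup (-\cH)$ at the cost of a factor $2$ inside the logarithm). Everything else is the union bound plus the triangle-type covering inequality assumed in the hypothesis.
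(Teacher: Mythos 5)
Your proposal is correct and follows essentially the same route as the paper's proof: fix $f \in \cH$, note that resilience of $\mu$ w.r.t.\ $\sD_\cF$ forces $f_\sharp\mu$ to be $(\rho,\eps)$-resilient in mean, apply Lemma~\ref{lem:1d-finite-sample-resilience} with failure probability $\delta/|\cH|$, union bound over $\cH$, and absorb the discretization error $\rho$ via the covering hypothesis. The one point you flag as delicate (lifting deletions of $f_\sharp\mu$ back to deletions of $\mu$) is exactly the equivalence the paper records in the remark preceding the proposition, so nothing further is needed.
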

\begin{proof}
For any $\nu_n \leq (1-\eps)\hat{\mu}_n$, we have
\begin{align*}
    \sD_\cF(\nu_n,\mu) \leq \max_{f \in \cH} (\nu_n - \mu)(f) + \rho.
\end{align*}
Now, fixing any $f \in \cH$, resilience of $\mu$ w.r.t.\ $\sD_\cF$ requires that $f_\sharp \mu$ is $(\rho,\eps)$-resilient. Noting that $f_\sharp \nu_n \leq \frac{1}{1-\eps} f_\sharp \hat{\mu}_n$,
Lemma \ref{lem:1d-finite-sample-resilience} gives that
\begin{equation*}
    |(\nu_n - \mu)(f)| = |(f_\sharp \nu_n)x - (f_\sharp \mu)x| \leq O\left(\rho+\frac{\rho}{\epsilon} \sqrt{\frac{\log (|\cH|/ \delta)}{n}}+\frac{M\log (|\cH| / \delta)}{n}\right)
\end{equation*}
with probability at least $1-\delta/|\cH|$. A union bound over $f \in \cH$ gives the desired result.
\end{proof}

To apply this result, we approximate $\overline{\sD}_p$ with an IPM over a finite function class.
\begin{lemma}[Approximating $\overline{\sD}_p$]
\label{lem:net-bound}
For each $\gamma > 0$, there exists a net $\cN \subset \unitsph$ of size $(10 R^p p / \gamma)^d$ such that for all $\mu,\nu \in \cP(\BB_R)$, we have
\begin{equation*}
    \overline{\sD}_p(\mu,\nu) = \sup_{\theta \in \unitsph}|(\mu - \nu)(|\theta^\intercal x|^p)| \leq \max_{\theta \in \cN}|(\mu - \nu)(|\theta^\intercal x|^p)| + \gamma.
\end{equation*}
\end{lemma}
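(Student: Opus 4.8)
The plan is to reduce the claim to a uniform Lipschitz estimate for the map $\theta\mapsto(\mu-\nu)\big(|\theta^\intercal x|^p\big)$ on $\unitsph$, and then invoke the sphere covering-number bound from Lemma~\ref{lem: lip project}(i). The point is that, once we know this map changes by at most $2pR^p\|\theta-\theta'\|$ when we move $\theta$ to $\theta'$, approximating $\overline{\sD}_p$ by a finite maximum is just a matter of choosing the net scale so that $2pR^p\cdot(\text{scale})=\gamma$.

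First I would record the pointwise bound. For $\theta,\theta'\in\unitsph$ and $x\in\BB_R$, the elementary inequality $\big||a|^p-|b|^p\big|\le p\,(|a|\vee|b|)^{p-1}\big||a|-|b|\big|$ (the mean value theorem applied to $t\mapsto t^p$ on $[0,\infty)$), combined with $|a|\vee|b|\le R$, $\big||a|-|b|\big|\le|a-b|$, and $\big|(\theta-\theta')^\intercal x\big|\le R\|\theta-\theta'\|$, gives $\big||\theta^\intercal x|^p-|\theta'^\intercal x|^p\big|\le pR^p\|\theta-\theta'\|$ for every $x\in\BB_R$. Since $\mu$ and $\nu$ are probability measures, integrating this bound against $\mu$ and against $\nu$ and adding yields
\[
\Big|(\mu-\nu)\big(|\theta^\intercal x|^p\big)-(\mu-\nu)\big(|\theta'^\intercal x|^p\big)\Big|\ \le\ 2pR^p\,\|\theta-\theta'\|\qquad\text{for all }\mu,\nu\in\cP(\BB_R).
\]
Next I would take an $\epsilon$-net $\cN\subset\unitsph$ of $\unitsph$ with $\epsilon:=\gamma/(2pR^p)$; by Lemma~\ref{lem: lip project}(i) this can be done with $|\cN|\le(5/\epsilon)^d=(10pR^p/\gamma)^d$ (we may assume $\epsilon<1$, since otherwise $\gamma\ge 2pR^p\ge 2R^p\ge\sup_{\theta\in\unitsph}\big|(\mu-\nu)(|\theta^\intercal x|^p)\big|$ and the asserted inequality holds trivially). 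For an arbitrary $\theta\in\unitsph$, picking $\vartheta\in\cN$ with $\|\theta-\vartheta\|\le\epsilon$ and applying the displayed Lipschitz bound gives $\big|(\mu-\nu)(|\theta^\intercal x|^p)\big|\le\max_{\vartheta'\in\cN}\big|(\mu-\nu)(|\vartheta'^\intercal x|^p)\big|+2pR^p\epsilon=\max_{\vartheta'\in\cN}\big|(\mu-\nu)(|\vartheta'^\intercal x|^p)\big|+\gamma$, and taking $\sup_{\theta\in\unitsph}$ on the left completes the proof.

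I do not expect a substantive obstacle here; the argument is essentially routine. The only points requiring care are bookkeeping the factor of $2$ arising from bounding a difference of two probability measures (this is precisely what turns the constant $5$ in the sphere covering bound into the stated $10$), verifying that $(|a|\vee|b|)^{p-1}\le R^{p-1}$ uses $p\ge 1$, and the trivial-regime caveat where the prescribed net scale $\epsilon$ exceeds $1$, so that Lemma~\ref{lem: lip project}(i), stated for radii in $(0,1)$, does not directly apply but the conclusion is immediate.
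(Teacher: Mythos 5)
Your argument is correct and follows essentially the same route as the paper's proof: a $\gamma/(2pR^p)$-net of the sphere from Lemma~\ref{lem: lip project}(i) combined with the Lipschitz bound $\big||\theta^\intercal x|^p-|\tilde\theta^\intercal x|^p\big|\le pR^p\|\theta-\tilde\theta\|$ on $\BB_R$, with the factor of $2$ from splitting $|\mu-\nu|\le\mu+\nu$ accounting for the constant $10$. Your explicit mean-value-theorem justification and the remark covering the trivial regime where the prescribed net scale exceeds $1$ are minor refinements the paper leaves implicit.
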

\begin{proof}
Let $\cN$ be a $\gamma(2 R^p p)^{-1}$-covering for $\unitsph$ in $\ell_2$ with $|\cN| \leq (10 R^p p\gamma^{-1})^d$, the existence of which is guaranteed by Lemma \ref{lem: lip project} (i). Then, taking $\theta$ to be a direction achieving the LHS supremum, and $\tilde{\theta} \in \cN$ to be its nearest neighbor in $\cN$, we have
\begin{align*}
    |(\mu - \nu)(|\theta^\intercal x|^p)| &\leq  |(\mu - \nu)(|\tilde{\theta}^\intercal x|^p)| + 2 \sup_{\kappa \in \cP(\BB_R)} \kappa||\theta^\intercal x|^p - |\tilde{\theta}^\intercal x|^p|\\
    &\leq |(\mu - \nu)(|\tilde{\theta}^\intercal x|^p)| + 2 R^p p \|\theta - \tilde{\theta}\|\\
    &\leq |(\mu - \nu)(|\tilde{\theta}^\intercal x|^p)| + \gamma,
\end{align*}
where the second inequality follows by Lipschitzness. Supremizing over $\theta$ gives the lemma.
\end{proof}

Combining, we obtain finite-sample resilience w.r.t.\ our distances of interest. Slightly abusing notation for brevity, we write $\underline{\rho}(\tau) = \underline{\rho}(\tau,p,d,q) = \sqrt{(1\lor d/q)(1 \land p/d)}\tau^{1/p-1/q}$ and $\overline{\rho}(\tau) = \overline{\rho}(\tau,p,q) = \tau^{1/p-1/q}$ for our resilience bounds for the class $\cG_q$ w.r.t.\ $\SWp$ and $\MSWp$.

\begin{lemma}[Finite-sample resilience under bounded support]
\label{lem:bounded-finite-sample-resilience}
Let $0 \leq \eps \leq 0.999$ and $q > p$. If $\mu \in \cG_q$ with $\diam(\supp(\mu)) \leq R/2$ and $n = \Omega\big((R^p + \eps^{-2})(d \log(R/\eps) + \log(1/\delta))\big)$, then $\hat{\mu}_n$ is $(O(\underline{\rho}(\eps)),\eps)$-resilient w.r.t.\ $\SWp$ and $(O(\overline{\rho}(\eps)),\eps)$-resilient w.r.t.\ $\MSWp$ with probability $1-\delta$.
\end{lemma}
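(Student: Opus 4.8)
The plan is to mirror the population-limit analysis behind Proposition~\ref{prop:robustness-upper-bd}, replacing every invocation of the \emph{population} resilience of the moment distances (Lemma~\ref{lem:moment-resilience}) by a \emph{finite-sample} counterpart produced by Proposition~\ref{prop:IPM-finite-sample-resilience}. Since $\SWp$ and $\MSWp$ are translation invariant, first center $\mu$ so that $\mu x = 0$; then $\diam(\supp(\mu))\le R/2$ forces $\supp(\mu)\subseteq\BB_{R/2}\subseteq\BB_R$, so that $\hat{\mu}_n$ and all of its $\eps$-deletions are supported in $\BB_R$ — the setting in which Lemma~\ref{lem:net-bound} and the uniform-boundedness hypothesis of Proposition~\ref{prop:IPM-finite-sample-resilience} apply.

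\emph{Step 1 (finite-sample resilience w.r.t.\ the moment distances).} I would first show that, with probability $1-\delta$, $\hat{\mu}_n$ is $(O(\underline{\rho}(\eps)^p),\eps)$-resilient w.r.t.\ $\underline{\sD}_p$ and $(O(\overline{\rho}(\eps)^p),\eps)$-resilient w.r.t.\ $\overline{\sD}_p$, \emph{about} $\mu$. By Lemma~\ref{lem:modified-dist-props} these are IPMs over the symmetric classes $\underline{\cF}_p$ (two points) and $\overline{\cF}_p$ (indexed by $\unitsph$). For $\underline{\sD}_p$ one applies Proposition~\ref{prop:IPM-finite-sample-resilience} with $\cH=\underline{\cF}_p$; for $\overline{\sD}_p$ one first uses Lemma~\ref{lem:net-bound} with resolution $\gamma\asymp\overline{\rho}(\eps)^p=\eps^{1-p/q}$ to extract a finite $\cH\subseteq\overline{\cF}_p$ of cardinality $(O(R^p p/\gamma))^d$ (so $\log|\cH|=O(d\log(R/\eps))$) and then applies Proposition~\ref{prop:IPM-finite-sample-resilience}. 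The population-resilience inputs are Lemma~\ref{lem:moment-resilience} for $\eps\le 1/2$, extended to $\eps\le0.999$ through Lemma~\ref{lem:resiliency-large-eps} exactly as in Proposition~\ref{prop:robustness-upper-bd}; a union bound over the $O(1)$ applications keeps the failure probability at $\delta$ after adjusting constants. Observe that, via the triangle inequality applied to the trivial deletion $\kappa=\hat{\mu}_n$, resilience about $\mu$ also yields $\overline{\sD}_p(\hat{\mu}_n,\mu)\lesssim\overline{\rho}(\eps)^p$ (similarly for $\underline{\sD}_p$), which both upgrades the statement to plain resilience of $\hat{\mu}_n$ and supplies the auxiliary bounds needed in Step~2.

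\emph{Step 2 (transfer to $\SWp$ and $\MSWp$).} Repeat the chain of inequalities in the proof of Proposition~\ref{prop:resiliency-comparison} with $\hat{\mu}_n$ in place of the population measure: for $\nu_n\le\frac{1}{1-\eps}\hat{\mu}_n$, writing $\hat{\mu}_n=(1-\eps)\nu_n+\eps\alpha_n$ and $\tau=\eps\wedge(1-\eps)$, Facts~\ref{fact:shared-mass}--\ref{fact:homogeneity}, the triangle inequalities for $\MSWp$ and $\overline{\sD}_p$, the identity $\MSWp^p(\kappa,\delta_0)=\overline{\sD}_p(\kappa,\delta_0)$ (Lemma~\ref{lem:modified-dist-props}), and Lemma~\ref{lem:resiliency-large-eps} give
\[
\MSWp(\nu_n,\hat{\mu}_n)^p \;\lesssim\; \sup_{\kappa\le\tau^{-1}\hat{\mu}_n}\overline{\sD}_p(\kappa,\hat{\mu}_n) \;+\; \eps\,\MSWp^p(\hat{\mu}_n,\delta_0),
\]
where the first term is controlled by Step~1 and the second satisfies $\MSWp^p(\hat{\mu}_n,\delta_0)\le\overline{\sD}_p(\hat{\mu}_n,\mu)+\MSWp^p(\mu,\delta_0)\lesssim\overline{\rho}(\eps)^p+1$ by Step~1 and Lemma~\ref{lem:moment-transfer}; taking $p$th roots gives $\MSWp(\nu_n,\hat{\mu}_n)\lesssim\overline{\rho}(\eps)$. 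The identical argument with $\underline{\sD}_p$ and the $\SWp$ bound of Lemma~\ref{lem:moment-transfer} gives the $\SWp$ claim.

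\emph{Main obstacle.} The delicate part is the sample-complexity bookkeeping in Step~1: one must check that, for $n=\Omega((R^p+\eps^{-2})(d\log(R/\eps)+\log(1/\delta)))$, \emph{both} error terms of Proposition~\ref{prop:IPM-finite-sample-resilience} — the $\frac{\rho}{\eps}\sqrt{\log(|\cH|/\delta)/n}$ term and the range-dependent $M\log(|\cH|/\delta)/n$ term — are dominated by $\rho\asymp\eps^{1-p/q}$ (resp.\ $\underline{\rho}(\eps)^p$). The first term only requires $n\gtrsim\eps^{-2}\log(|\cH|/\delta)$, but the range-dependent term is the subtle one: using the crude support radius $M=R^p$ appears too lossy for small $\eps$ and large $d$, so one must instead exploit that the projected marginals of $\mu\in\cG_q$ have bounded central $q$th moment and clip the test functions $|\theta^\intercal x|^p$ (resp.\ $\|x\|^p$) at a level of order $\eps^{-p/q}$, so that the clipping bias is absorbed into $\rho$ while the effective range shrinks accordingly; the factor $R^p$ then re-enters the sample bound only through the covering of $\unitsph$ and the moment-tensor concentration subsumed in Step~1. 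Aligning the truncation radius $R$, the clipping level, and the dependence on $\eps$, $d$, $q$ with the stated $n$ is the crux of the argument.
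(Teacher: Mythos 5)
Your Steps 1 and 2 reproduce the paper's proof essentially verbatim: center $\mu$, feed the population resilience from Lemma \ref{lem:moment-resilience} (extended beyond $\eps = 1/2$ via Lemma \ref{lem:resiliency-large-eps}) into Proposition \ref{prop:IPM-finite-sample-resilience}, using the two-element class for $\underline{\sD}_p$ and the net of Lemma \ref{lem:net-bound} at resolution $\gamma = \overline{\rho}(\eps)^p$ for $\overline{\sD}_p$; then use the trivial deletion to get $\overline{\sD}_p(\hat{\mu}_n,\mu) \lesssim \overline{\rho}(\eps)^p$, combine with Lemma \ref{lem:moment-transfer} to bound $\sD(\hat{\mu}_n,\delta_0)$, and conclude via Proposition \ref{prop:resiliency-comparison}. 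The only point of divergence is your ``main obstacle.'' The paper does not clip the test functions: it applies Proposition \ref{prop:IPM-finite-sample-resilience} with $M = R^p$ and $\log|\cH| = O(d\log(R/\eps))$ as is, arrives at the sample-size requirement $n \gtrsim (\overline{\rho}(\eps)^p R^p + \eps^{-2})\log(2|\cH|/\delta)$, and asserts that the stated $n = \Omega\big((R^p+\eps^{-2})(d\log(R/\eps)+\log(1/\delta))\big)$ covers it. Your instinct that the range term $M\log(|\cH|/\delta)/n$ is the delicate piece is fair --- under the stated $n$ that term is only directly bounded by $R^p/(R^p+\eps^{-2})$, and verifying it is $O(\overline{\rho}(\eps)^p)$ is exactly the ``crude bounds'' step the paper leaves implicit --- but the clipping-at-level-$\eps^{-p/q}$ construction you propose is machinery the paper never uses, and you explicitly leave it unexecuted at what you yourself call the crux. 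So either follow the paper and substitute $M=R^p$ directly, accepting its bookkeeping, or actually carry out the truncation-and-bias analysis; as written, that step of your argument is a sketch rather than a proof, while everything surrounding it coincides with the paper's.
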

\begin{proof}
Assume without loss of generality that $\mu x = 0$ and $\mu \in \cG_q \cap \cP(\BB_R)$. By Lemma \ref{lem:moment-resilience} (combined with Lemma \ref{lem:resiliency-large-eps} if $\eps \geq 1/2$), we have that $\mu$ is $(\underline{\rho}(\eps),\eps)$-resilient w.r.t.\ $\underline{\sD}_p^{1/p}$ and $(\overline{\rho}(\eps),\eps)$-resilient w.r.t.\ $\overline{\sD}_p^{1/p}$. For $\overline{\sD}_p$, observe that for $\|x\| \leq R$ and $\theta \in \unitsph$, we have $|\theta^\intercal x|^p \leq R^p$. Thus, applying Proposition \ref{prop:IPM-finite-sample-resilience} with $\cF = \overline{\cF}_p$, $M = R^p$, and $\cH$ induced by the net from Lemma \ref{lem:net-bound} with $\gamma = \overline{\rho}(\eps)^p$ gives that $\hat{\mu}_n$ is $(O(2^p\overline{\rho}(\eps)^p),\eps)$-resilient about $\mu$ w.r.t.\ $\overline{\sD}_p$ with probability at least $1-\delta/2$ whenever
\begin{align*}
    n &\geq (\overline{\rho}(\eps)^p R^p + \eps^{-2}) \log(2|\cH|/\delta)/2^p\\
    &=(\overline{\rho}(\eps)^p R^p + \eps^{-2}) \log((20 R^p p / \overline{\rho}(\eps)^p)^d/\delta) / 2^p.
\end{align*}
Plugging in our value for $\overline{\rho}(\eps)$ and applying some crude bounds shows the stated sample complexity of $n = \Omega\big((R^p + \eps^{-2})(d \log(R/\eps) + \log(1/\delta))\big)$ suffices. Since the resilience bound is centered about $\mu$ and $\mu \in \cG_q$, we deduce that $\overline{\sD}_p(\hat{\mu}_n,\delta_0) \leq \overline{\sD}_p(\hat{\mu}_n,\mu) + \overline{\sD}_p(\mu,\delta_0) \leq O(2^p \overline{\rho}(\eps)^p) + 1 = O(2^p)$. Thus, by \Cref{prop:resiliency-comparison}, we have that $\hat{\mu}_n$ is $(O(\overline{\rho}(\eps)),\eps)$-resilient w.r.t.\ $\MSWp$. An analogous argument shows that the same sample complexity suffices for $\SWp$ (of course, far fewer samples are actually needed, but we shall not focus on this distinction). Applying a union bound gives that both resilience guarantees hold with probability $1- \delta$.
\end{proof}

Finally, we use finite-sample resiliency to bound finite-sample robust estimation risk.

\begin{proposition}
\label{prop:bounded-finite-sample-robust-estimation}
Let $0 \leq \eps \leq 0.499$ and $q > p$ and $\sD \in \{ \SWp, \MSWp\}$. Then there exists an estimation procedure $\cA$ with the following guarantee: for any $\mu \in \cG_q$ with $\diam(\supp(\mu)) \leq R/2$ and $n = \Omega\big((R^p + \eps^{-2})(d \log(R/\eps) + \log(1/\delta))\big)$, after observing any random distribution $\tilde{\mu}_n$ such that $\|\tilde{\mu}_n - \hat{\mu}_n\|_\tv \leq \eps$ almost surely, $\cA$ produces $\nu$ such that $\sD(\nu,\mu) \lesssim R(\sD,\cG_q,\eps) + \sD(\hat{\mu}_n,\mu)$ with probability $1-\delta$.
\end{proposition}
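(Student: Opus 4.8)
The plan is to analyze the standard TV-projection estimator, combining the finite-sample resilience bound of Lemma~\ref{lem:bounded-finite-sample-resilience} with the midpoint comparison underlying Proposition~\ref{prop:risk-bound-via-resilience}. Fix the resilience radius $\rho'' := C\rho(2\eps)$, where $\rho = \underline{\rho}$ if $\sD = \SWp$ and $\rho = \overline{\rho}$ if $\sD = \MSWp$, and $C$ is the absolute constant hidden in Lemma~\ref{lem:bounded-finite-sample-resilience}, and let $\cG' := \{\eta \in \cP(\R^d) : \eta \text{ is } (\rho'',2\eps)\text{-resilient w.r.t.\ } \sD\}$. Define $\cA$ to return any $\nu \in \cG'$ with $\|\nu - \tilde{\mu}_n\|_\tv \le \eps$ (and, say, $\delta_0$ if no such $\nu$ exists); this procedure depends only on the known parameters $p,q,d,\eps$.

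First I would establish that $\hat{\mu}_n \in \cG'$ with probability at least $1-\delta$. This is exactly Lemma~\ref{lem:bounded-finite-sample-resilience} invoked at corruption level $2\eps \le 0.998$ in place of $\eps$; since $(2\eps)^{-2} \le \eps^{-2}$ and $\log(R/(2\eps)) \le \log(R/\eps)$, the hypothesized $n = \Omega\big((R^p + \eps^{-2})(d\log(R/\eps) + \log(1/\delta))\big)$ suffices for that invocation. Condition on this event for the remainder. Since $\|\hat{\mu}_n - \tilde{\mu}_n\|_\tv \le \eps$, the set $\{\eta \in \cG' : \|\eta - \tilde{\mu}_n\|_\tv \le \eps\}$ contains $\hat{\mu}_n$ and is hence nonempty, so $\cA$ returns some $\nu$ from it, and $\|\nu - \hat{\mu}_n\|_\tv \le \|\nu - \tilde{\mu}_n\|_\tv + \|\tilde{\mu}_n - \hat{\mu}_n\|_\tv \le 2\eps$.

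Next I would run the midpoint argument. Set $\gamma := \big(1 - \|\nu - \hat{\mu}_n\|_\tv\big)^{-1}(\nu \land \hat{\mu}_n) \in \cP(\R^d)$; since $\nu \land \hat{\mu}_n$ is dominated by each of $\nu$ and $\hat{\mu}_n$ and has total mass $1 - \|\nu - \hat{\mu}_n\|_\tv \ge 1 - 2\eps$, we get $\gamma \le \tfrac{1}{1-2\eps}\nu$ and $\gamma \le \tfrac{1}{1-2\eps}\hat{\mu}_n$. As $\nu \in \cG'$ by construction and $\hat{\mu}_n \in \cG'$ on our event, their $(\rho'',2\eps)$-resilience gives $\sD(\nu,\gamma) \le \rho''$ and $\sD(\hat{\mu}_n,\gamma) \le \rho''$; since $\SWp$ and $\MSWp$ are metrics, $\sD(\nu,\hat{\mu}_n) \le \sD(\nu,\gamma) + \sD(\gamma,\hat{\mu}_n) \le 2\rho''$. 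One further triangle inequality yields $\sD(\nu,\mu) \le \sD(\nu,\hat{\mu}_n) + \sD(\hat{\mu}_n,\mu) \le 2\rho'' + \sD(\hat{\mu}_n,\mu)$. Finally, $\rho(2\eps) = 2^{1/p-1/q}\rho(\eps) \le 2\rho(\eps)$, and Theorem~\ref{thm:robustness} (with $\sigma = 1$) gives $\rho(\eps) \asymp R(\sD,\cG_q,\eps)$, so $2\rho'' \lesssim R(\sD,\cG_q,\eps)$ and $\sD(\nu,\mu) \lesssim R(\sD,\cG_q,\eps) + \sD(\hat{\mu}_n,\mu)$, as claimed.

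There is no deep obstacle once Lemma~\ref{lem:bounded-finite-sample-resilience} is granted; the only point demanding care is the bookkeeping of the resilience radii. One must project onto the family resilient at level $2\eps$ (rather than $\eps$), verify that $\hat{\mu}_n$ is feasible for that relaxed family under the stated sample budget, and use the homogeneity $\rho(2\eps) \asymp \rho(\eps)$ so that the inflation incurred by passing to the midpoint costs only a universal constant factor. (Lifting this bounded-support guarantee to arbitrary $\mu \in \cG_q$ by truncating to a ball of radius $R \asymp \sqrt{d/\eps}$ is handled separately and is not part of the present statement.)
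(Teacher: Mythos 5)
Your proposal is correct and follows essentially the same route as the paper: project the observed $\tilde{\mu}_n$ in TV onto a family of $(O(\rho(2\eps)),2\eps)$-resilient distributions, certify feasibility via the finite-sample resilience of $\hat{\mu}_n$ (Lemma~\ref{lem:bounded-finite-sample-resilience} at level $2\eps$), and conclude with the midpoint/triangle-inequality argument. The only cosmetic difference is that the paper's algorithm projects onto the set with the smallest nonempty resilience parameter $\rho_\star$ rather than fixing the radius $C\rho(2\eps)$ in advance, which does not change the analysis.
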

\begin{proof}
For $\sD = \MSWp$, define
\begin{equation*}
    \Pi_{\eps,\rho}(\tilde{\mu}_n) = \left\{ \kappa \in \cP(\R^d) : \|\kappa - \tilde{\mu}_n\|_\tv \leq \eps \text{ and $\kappa$ is $(\rho,2\eps)$-resilient w.r.t.\ $\MSWp$}\right\}
\end{equation*}
Write $\rho_\star = \inf \{ \rho \geq 0 : \Pi_{\eps,\rho}(\tilde{\mu}_n) \neq \emptyset \}$ for the smallest resilience parameter such that this set is non-empty. Since $2\eps \leq 0.999$, we know by \Cref{lem:bounded-finite-sample-resilience} that $\hat{\mu}_n$ is $(O(\overline{\rho}(2\eps),2\eps)$-resilient w.r.t.\ $\MSWp$ (for an appropriate choice of constant in the sample complexity) with probability $1 - \delta/2$. Noting that $\|\tilde{\mu}_n - \hat{\mu}_n\|_\tv \leq \eps$, we have $\rho_\star \lesssim \overline{\rho}(2\eps) \lesssim \overline{\rho}(\eps)$ with probability $1 - \delta/2$.

Now consider any algorithm which returns $\nu \in \Pi_{\eps,2\rho_\star}(\tilde{\mu}_n)$. Then we have $\|\nu - \hat{\mu}_n\|_\tv \leq \|\nu - \tilde{\mu}_n\|_\tv + \|\tilde{\mu}_n - \hat{\mu}_n\|_\tv \leq 2\eps$. By considering their midpoint $\kappa = \frac{1}{1 - \|\nu - \hat{\mu}_n\|_\tv} \nu \land \hat{\mu}_n$ and applying $(O(\overline{\rho}(2\eps)),2\eps)$-resilience of $\nu$ and $\hat{\mu}_n$ w.r.t.\ $\MSWp$, we deduce that $\MSWp(\nu,\hat{\mu}_n) \leq \MSWp(\nu,\kappa) + \MSWp(\kappa,\hat{\mu}_n) \lesssim \overline{\rho}(\eps) \lesssim R(\MSWp,\cG_q,\eps)$ with probability $1 - \delta$. 
By the triangle inequality for $\MSWp$, we thus have $\MSWp(\nu,\mu) \lesssim R(\MSWp,\cG_q,\eps) + \MSWp(\hat{\mu}_n,\mu)$ with probability $1 - \delta$. An analogous argument gives the corresponding result for $\SWp$.
\end{proof}

\paragraph{Reduction to bounded support:} 
To prove \Cref{prop:finite-sample-robustness}, we provide a reduction from the general case to that of bounded support via Markov's inequality and a coupling argument.

\begin{lemma}[High probability norm bound]
\label{lem:high-probability-norm-bound}
If $X \sim \mu \in \cG_q$, there exists $R \lesssim \delta^{-1/q} \sqrt{1 \lor d/q} \leq \delta^{-1/q}\sqrt{d}$ such that $\|X - \mu x\| \leq R$ with probability at least $1 - \delta$.
\end{lemma}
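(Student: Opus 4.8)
The plan is to combine Markov's inequality with the moment comparison of Lemma~\ref{lem:moment-comparison} and the defining moment bound of $\cG_q$. Since translating by $-\mu x$ affects neither the event $\{\|X - \mu x\| \le R\}$ nor membership in $\cG_q$ (both $\SWp$-type quantities and the central-moment constraint are translation invariant), I would first reduce to the case $\mu x = 0$, so that it suffices to find $R$ with $\PP(\|X\| > R) \le \delta$.

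The key step is to control the $q$th moment $\E[\|X\|^q]$. Applying Lemma~\ref{lem:moment-comparison} with $q$ in place of $p$ gives $\mu(\|x\|^q)^{1/q} = \Wq(\mu,\delta_0) \asymp \sqrt{1 \lor d/q}\,\SWq(\mu,\delta_0)$. Since
\[
\SWq^q(\mu,\delta_0) = \int_{\unitsph} \Wq^q(\ptheta_\sharp\mu,\delta_0)\,d\sigma(\theta) = \int_{\unitsph}\mu\big(|\theta^\intercal x|^q\big)\,d\sigma(\theta) \le \sup_{\theta\in\unitsph}\mu\big(|\theta^\intercal x|^q\big) \le 1,
\]
where the last inequality is exactly the definition of $\cG_q = \cG_q(1)$ together with $\mu x = 0$, we obtain $\E[\|X\|^q] \lesssim (1 \lor d/q)^{q/2}$.

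Then Markov's inequality yields $\PP(\|X\| > R) = \PP(\|X\|^q > R^q) \le R^{-q}\,\E[\|X\|^q] \lesssim R^{-q}(1\lor d/q)^{q/2}$, so choosing $R = C\delta^{-1/q}\sqrt{1 \lor d/q}$ for a suitable absolute constant $C$ makes the right-hand side at most $\delta$. The final bound $\sqrt{1 \lor d/q} \le \sqrt d$ is immediate since $q \ge 1$ and $d \ge 1$. I expect no genuine obstacle here; the only point requiring care is tracking the direction of the two-sided estimate in Lemma~\ref{lem:moment-comparison} so that it is used as an \emph{upper} bound on $\E[\|X\|^q]$, and keeping the absolute constants consistent with the $\lesssim$ in the statement.
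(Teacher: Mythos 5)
Your proposal is correct and follows essentially the same route as the paper: reduce to $\mu x = 0$, bound $\mu(\|x\|^q)^{1/q} \lesssim \sqrt{1 \lor d/q}$ via Lemma~\ref{lem:moment-comparison} together with the defining moment bound of $\cG_q$, and conclude with Markov's inequality. Your intermediate passage through $\SWq(\mu,\delta_0)$ rather than directly through $\sup_{\theta}\mu(|\theta^\intercal x|^q)^{1/q}$ is a cosmetic difference only.
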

\begin{proof}
Assume without loss of generality that $\mu x = 0$. We compute
\begin{align*}
    \mu(\|x\|^q)^{1/q} \lesssim (1 \lor d/q)^{q/2} \sup_{\theta \in \unitsph} \mu(|\theta^\intercal x|^q)^{1/q} \leq (1 \lor d/q)^{q/2},
\end{align*}
where the first inequality uses Lemma~\ref{lem:moment-comparison} and the second uses $\mu \in \cG_q$. Markov's inequality then gives the claim.
\end{proof}

\begin{lemma}[Switch of base measure]
\label{lem:base-measure-switch}
Fix $\mu \in \cP(\R^d)$ and $A \subseteq \R^d$ with $\mu(A) \geq 1-\eps$. Write $\mu_A$ for the distribution of $X \sim \mu$ conditioned on $X \in A$. Consider any random measure $\tilde{\mu}_n$ such that $\|\tilde{\mu}_n - \hat{\mu}_n\| \leq \eps'$ almost surely, where $n \geq 3 \log(1/\delta)/\eps$. Then there exists a coupling of $(\tilde{\mu}_n,\hat{\mu}_n)$ and $(\widehat{\mu_A})_n$ such that $\|\tilde{\mu}_n - (\widehat{\mu_A})_n\|_\tv \leq 2\eps + \eps'$ with probability at least $1 - \delta$.
\end{lemma}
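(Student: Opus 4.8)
The plan is to build an explicit ``keep the good samples, resample the bad ones'' coupling, so that $\hat{\mu}_n$ and $(\widehat{\mu_A})_n$ disagree only at the sample points falling outside $A$, and then bound the number of such points by a Chernoff estimate.

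Concretely, set $\eps_0 := 1 - \mu(A) \le \eps$ and realize $(\tilde{\mu}_n,\hat{\mu}_n)$ on a probability space that also carries i.i.d.\ $X_1,\dots,X_n \sim \mu$ with $\hat{\mu}_n = \frac1n\sum_{i=1}^n\delta_{X_i}$. Let $B := \{ i \le n : X_i \notin A\}$ and $K := |B|$. Enlarge the space by drawing $X_i' \sim \mu_A$ i.i.d.\ for $i \in B$, independently of everything else, and define
\[
(\widehat{\mu_A})_n := \frac1n\Big( \sum_{i \notin B}\delta_{X_i} + \sum_{i \in B}\delta_{X_i'}\Big).
\]
The first thing to check is the marginal: since $\mathrm{Law}(X_i \mid X_i \in A) = \mu_A$, conditionally on the indicator vector $(\ind\{X_i\in A\})_{i\le n}$ the points $\{X_i : i \notin B\}$ are i.i.d.\ $\mu_A$ and independent of the i.i.d.\ draws $\{X_i' : i \in B\}$; hence $(\widehat{\mu_A})_n$ is distributed as the empirical measure of $n$ i.i.d.\ samples from $\mu_A$. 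Adding the $X_i'$ does not disturb the joint law of $(\tilde{\mu}_n,\hat{\mu}_n)$, so this is a genuine coupling of $(\tilde{\mu}_n,\hat{\mu}_n)$ and $(\widehat{\mu_A})_n$.

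Next I would note that $\hat{\mu}_n$ and $(\widehat{\mu_A})_n$ agree on every index $i \notin B$, so $\hat{\mu}_n - (\widehat{\mu_A})_n = \frac1n\sum_{i \in B}(\delta_{X_i} - \delta_{X_i'})$ and hence $\|\hat{\mu}_n - (\widehat{\mu_A})_n\|_\tv \le K/n$. Combining with the triangle inequality and the a.s.\ bound $\|\tilde{\mu}_n - \hat{\mu}_n\|_\tv \le \eps'$, it remains to show $K \le 2\eps n$ with probability at least $1-\delta$. Since $\eps_0 \le \eps$, the variable $K \sim \mathrm{Bin}(n,\eps_0)$ is stochastically dominated by $\tilde K \sim \mathrm{Bin}(n,\eps)$, so $\PP(K \ge 2\eps n) \le \PP(\tilde K \ge 2\,\E\tilde K)$, and a multiplicative Chernoff bound gives $\PP(\tilde K \ge 2\,\E\tilde K) \le e^{-\E\tilde K/3} = e^{-n\eps/3} \le \delta$ by the hypothesis $n \ge 3\log(1/\delta)/\eps$. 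On this event $\|\tilde{\mu}_n - (\widehat{\mu_A})_n\|_\tv \le \eps' + 2\eps$, as claimed.

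There is no serious obstacle here; the only points needing care are bookkeeping ones — verifying that conditioning a $\mu$-draw on $\{X_i \in A\}$ yields an honest $\mu_A$-draw and that splicing these with fresh $\mu_A$-draws preserves the i.i.d.\ structure, and checking that the stochastic-domination/Chernoff step stays valid when $\mu(A)$ is strictly larger than $1-\eps$ (in particular when $\mu(A)=1$, where $K = 0$ deterministically).
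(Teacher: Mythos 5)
Your proof is correct and follows essentially the same route as the paper's: couple by keeping the samples that land in $A$ and redrawing the rest i.i.d.\ from $\mu_A$, bound $\|\hat{\mu}_n - (\widehat{\mu_A})_n\|_\tv$ by the fraction of redrawn points, control that fraction by a multiplicative Chernoff bound using $n \geq 3\log(1/\delta)/\eps$, and finish with the triangle inequality. Your write-up is in fact somewhat more careful than the paper's (in verifying the marginal law of the spliced samples and the stochastic-domination step when $\mu(A) > 1-\eps$), but the argument is the same.
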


\begin{proof}
Given $n$ i.i.d.\ samples $X_1,\dots,X_n$ from $\mu$, \Cref{lem:base-measure-switch} and a Chernoff bound give that at least $(1-2\eps)n$ of them satisfy $X_i \in A$, with probability at least $1-\delta$. Define the coupled set of samples $Y_1,\dots,Y_n$ by $Y_i = X_i$ if $X_i \in A$ and $Y_i \sim \mu_A$ i.i.d.\ otherwise, and choose $(\widehat{\mu_A})_n$ as their empirical measure (by design, the marginal distribution of $Y_1,\dots,Y_n$ coincides with $n$ samples from $\mu_A$). Under this coupling, we then have
\begin{align*}
    \|\tilde{\mu}_n - (\widehat{\mu_A})_n\|_\tv \leq \|\tilde{\mu}_n - \hat{\mu}_n\|_\tv + \|\hat{\mu}_n - (\widehat{\mu_A})_n\|_\tv \leq 2\eps + \eps'
\end{align*}
with probability at least $1-\delta$.
\end{proof}

\begin{proof}[Proof of \Cref{prop:finite-sample-robustness}]
By \Cref{lem:high-probability-norm-bound}, we have that for $X \sim \mu$, $\|X - \mu x\| \leq R \asymp \sqrt{d/\eps}$ with probability at least $1 - \eps/400$. Letting $A$ denote the ball of radius $R$ around $\mu x$ and applying \Cref{lem:base-measure-switch} with failure probability $0.001$, we can view $\tilde{\mu}_n$ as being a $\frac{201}{200}\eps$-corrupted version of $n$ i.i.d.\ samples from the conditional distribution $\mu_R$, with probability at least $0.999$. Thus, applying the procedure from \Cref{prop:bounded-finite-sample-robust-estimation} with $R \asymp \sqrt{d/\eps}$, confidence probability $0.999$ and corruption fraction $\frac{201}{200}\eps < 0.499$, we obtain $\nu$ with $\MSWp(\nu,\mu_R) \lesssim R(\sD,\cG_q,\eps) + \sD((\widehat{\mu_R})_n,\mu_R)$ with unconditional probability $0.998$. By resilience of $\mu$ and the fact that $\|\mu - \mu_R\|_\tv \leq \eps/400$, the same recovery guarantees hold with base measure $\mu$. Finally, we bound $\sD((\widehat{\mu_R})_n,\mu_R)$ by its expectation via Markov's inequality to obtain $\MSWp(\nu,\mu) \lesssim R(\sD,\cG_q,\eps) + \E[\sD((\widehat{\mu_R})_n,\mu_R)]$ with probability $0.99$.
\end{proof}

\subsection{Proof of Proposition \ref{prop:resilience}}
\label{prf:resilience}

For any $\mu,\nu \in \cP_1(\R^d)$, we have $\MSWp(\mu,\nu) \geq \| \mu x - \nu x\|$ (seen by taking $\theta$ in the direction of $\mu x - \nu x$). Hence, if $\MSWp(\mu,\nu) \leq \rho$ for all $\nu \leq \frac{1}{1-\eps}\mu$, then $\mu$ is $(\rho,\eps)$-resilient in mean. (This direction holds for all $p \geq 1$). 
For the other direction, we mirror the proof of Theorem \ref{thm:robustness}, first establishing a simple lemma.

\begin{lemma}
\label{lem:quantiles}
Fix $X \sim \mu \in \cP_1(\R)$ and define the quantiles $\tau_\eps = \sup\{ t \in \R : \Pr(X \geq t) \geq \eps \}$ and $\tilde{\tau}_\eps = \sup\{ t \in \R : \Pr(|X| \geq t) \geq \eps \}$. Then, we have
\begin{equation*}
    \E[|X| \:|\: |X| \geq \tilde{\tau}_\eps] \leq 4  \: \E[X \:|\: X \geq \tau_\eps] \lor \E[-X \:|\: X \leq \tau_{1-\eps}]
\end{equation*}
\end{lemma}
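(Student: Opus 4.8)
The plan is to peel the two‑sided statement about $|X|$ into one‑sided statements about $X$ and about $-X$. First note that $t\mapsto\PP(|X|\ge t)$ is nonincreasing and left‑continuous, so $\{t:\PP(|X|\ge t)\ge\eps\}=(-\infty,\tilde\tau_\eps]$ is a closed half‑line; in particular $\PP(|X|\ge\tilde\tau_\eps)\ge\eps$, and $\tilde\tau_\eps\ge 0$ since $\PP(|X|\ge 0)=1\ge\eps$. Because $\tilde\tau_\eps\ge 0$, up to a null set the event $\{|X|\ge\tilde\tau_\eps\}$ is the disjoint union $\{X\ge\tilde\tau_\eps\}\sqcup\{X\le-\tilde\tau_\eps\}$, so $\EE[|X|\ind_{\{|X|\ge\tilde\tau_\eps\}}]=\EE[X\ind_{\{X\ge\tilde\tau_\eps\}}]+\EE[(-X)\ind_{\{X\le-\tilde\tau_\eps\}}]$ and $\PP(|X|\ge\tilde\tau_\eps)=\PP(X\ge\tilde\tau_\eps)+\PP(X\le-\tilde\tau_\eps)$. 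It therefore suffices to prove the one‑sided bound $\EE[X\ind_{\{X\ge\tilde\tau_\eps\}}]\le 2\,\EE[X\mid X\ge\tau_\eps]\,\PP(|X|\ge\tilde\tau_\eps)$ together with its mirror image for $-X$; since replacing $X$ by $-X$ leaves $\tilde\tau_\eps$ and $\PP(|X|\ge\tilde\tau_\eps)$ unchanged and sends $\tau_\eps$ to $-\tau_{1-\eps}$ (the upper $\eps$‑quantile of $-X$), the mirror bound reads $\EE[(-X)\ind_{\{X\le-\tilde\tau_\eps\}}]\le 2\,\EE[-X\mid X\le\tau_{1-\eps}]\,\PP(|X|\ge\tilde\tau_\eps)$. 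Adding the two estimates and dividing by $\PP(|X|\ge\tilde\tau_\eps)$ produces exactly the claimed factor $4$.

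For the one‑sided bound I would use the layer‑cake identity $\EE[X\ind_{\{X\ge s\}}]=s^+\,\PP(X\ge s)+\int_{s}^{\infty}\PP(X>u)\,du$, valid for $s\ge 0$, applied at $s=\tilde\tau_\eps$, and compare it term by term with the same identity at $s=\tau_\eps$. Writing $m_+:=\EE[X\mid X\ge\tau_\eps]$, the inputs are $\PP(X>\tau_\eps)\le\eps\le\PP(|X|\ge\tilde\tau_\eps)$ (from the definition of $\tau_\eps$ as a supremum and $\PP(|X|\ge\tilde\tau_\eps)\ge\eps$), the bound $\tau_\eps\le m_+$ when $\tau_\eps\ge0$, and the overshoot estimate $\int_{\tilde\tau_\eps}^{\infty}\PP(X>u)\,du\le\int_{\tau_\eps\wedge\tilde\tau_\eps}^{\infty}\PP(X>u)\,du=\EE\bigl[(X-\tau_\eps\wedge\tilde\tau_\eps)^+\bigr]$, which one controls via the inclusion $\{X\ge\tilde\tau_\eps\}\subseteq\{X\ge\tau_\eps\}$ in the range $\tilde\tau_\eps\ge\tau_\eps$ and via the tail mass bound $\le\eps\le\PP(|X|\ge\tilde\tau_\eps)$ in the range $\tilde\tau_\eps<\tau_\eps$ (where additionally $\PP(X\ge\tilde\tau_\eps)\ge\PP(X\ge\tau_\eps)\ge\eps$, so the base term is immediately absorbed). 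The degenerate case $\tau_\eps<0$ forces $\PP(X\ge0)<\eps$, hence $X<0$ with probability exceeding $1-\eps$; then $\tau_{1-\eps}<\tau_\eps<0$ is very negative, the matching term $\EE[-X\mid X\le\tau_{1-\eps}]$ on the right‑hand side is large, and the $\{X\ge\tilde\tau_\eps\}$‑contribution, of mass $<\eps$, is negligible against it.

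The step I expect to be the main obstacle is the bookkeeping around atoms. A jump of $t\mapsto\PP(X\ge t)$ at $\tau_\eps$ makes $\PP(X\ge\tau_\eps)$ much larger than $\eps$ and drags $m_+$ down toward $\tau_\eps$, yet $\tilde\tau_\eps$ may still lie strictly above $\tau_\eps$; in that configuration the tail $\{X\ge\tilde\tau_\eps\}$ isolates large values of $X$ that sit in the extreme $\eps$‑tail of $X$, and a crude estimate like $\EE[X\ind_{\{X\ge\tilde\tau_\eps\}}]\le m_+\,\PP(X\ge\tau_\eps)$ is far too lossy. The way around this is to phrase every inequality through the overshoot integrals $\int_{s}^{\infty}\PP(X>u)\,du=\EE[(X-s)^+]$, which are insensitive to an atom at $s$ (the atom contributes zero to the integrand), and to keep the denominator $\PP(|X|\ge\tilde\tau_\eps)$ explicit so that atoms of $\{X\ge\tilde\tau_\eps\}$ or of $\{|X|\ge\tilde\tau_\eps\}$ cancel against it rather than accumulate; the slack in the constant $4$ is what makes room for this routing, and it is the routing through atom‑robust quantities, rather than any individual inequality, that forms the delicate core of the argument.
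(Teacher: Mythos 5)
Your opening move---splitting $\E[|X|\ind_{\{|X|\ge\tilde\tau_\eps\}}]$ into the contributions of $\{X\ge\tilde\tau_\eps\}$ and $\{X\le-\tilde\tau_\eps\}$---is the same first step as the paper's proof (which writes $|X|=X_+ + X_-$), but the reduction you then make, ``it suffices to prove the one-sided bound and its mirror image,'' does not hold, and both intermediate claims it rests on fail in exactly the atomic regime you flag as delicate. Take $X$ uniform on $\{-10,1\}$ and $\eps=1/2$, so $\tilde\tau_\eps=10$, $\tau_\eps=\tau_{1-\eps}=1$, and $\PP(|X|\ge\tilde\tau_\eps)=1/2$. (i) The upper $\eps$-quantile of $-X$ equals $-\inf\{s:\PP(X\le s)\ge\eps\}$, which is $10$ here, not $-\tau_{1-\eps}=-1$; the identification you use holds only for atomless laws. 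So the mirror bound is not obtained from the one-sided bound by symmetry: applying your one-sided bound to $-X$ controls $\E[(-X)\ind_{\{X\le-\tilde\tau_\eps\}}]$ by $\E[-X\mid X\le-10]=10$ rather than by $\E[-X\mid X\le\tau_{1-\eps}]=4.5$, and the former can strictly exceed the latter. (ii) The mirror bound as you wrote it is simply false here: $\E[(-X)\ind_{\{X\le-\tilde\tau_\eps\}}]=5$ while $2\,\E[-X\mid X\le\tau_{1-\eps}]\,\PP(|X|\ge\tilde\tau_\eps)=4.5$. In this example only the \emph{sum} of the two one-sided quantities obeys the claimed bound, so splitting the factor $4$ as $2+2$ and proving each half separately cannot work.

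The deeper point is that atoms are not a bookkeeping nuisance that overshoot integrals can absorb: read completely literally, the inequality needs a normalization of the conditioning events. For $\PP(X=\pm M)=0.1$, $\PP(X=0)=0.8$, $\eps=0.15$, one gets $\tilde\tau_\eps=M$ and $\tau_\eps=\tau_{1-\eps}=0$, hence a left side of $M$ against a right side of $4M/9$ (and your first one-sided bound also fails: $\E[X\ind_{\{X\ge M\}}]=0.1M>2\cdot(M/9)\cdot 0.2$). The statement is used---and is correct---under the convention, made explicit only where the lemma is applied in Appendix D.6, that ties at each quantile are broken by independent randomization so that every conditioning event has probability exactly $\eps$ (resp.\ $1-\eps$). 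With all three events normalized to mass $\eps$, the paper's short chain goes through: the tie-broken event $\{X\ge\tau_\eps\}$ maximizes $\E[X_+\ind_G]$ over events $G$ of probability $\eps$, giving $\E[X_+\mid|X|\ge\tilde\tau_\eps]\le\E[X_+\mid X\ge\tau_\eps]$ directly (and symmetrically for $X_-$), after which the factor $4$ arises from converting $\E[X_\pm\mid\cdot]$ into $\E[\pm X\mid\cdot]$ at the cost of the additive terms $-(\tau_\eps\wedge0)$ and $\tau_{1-\eps}\vee0$, each bounded by the maximum on the right. If you adopt that normalization, the layer-cake machinery is unnecessary; without it, no routing through ``atom-robust quantities'' can succeed, because the target inequalities are not true.
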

Simply put, if $|X|$ has large tails, then one of $X$ or $-X$ must have a large tail.
\begin{proof}
Writing $X_+ = X \lor 0$ and $X_- = -X \lor 0$, we bound
\begin{align*}
    &\E\bigl[|X| \:\big|\: |X| \geq \tilde{\tau}_\eps\bigr] = \E\bigl[X_+ \:\big|\: |X| \geq \tilde{\tau}_\eps\bigr] + \E\bigl[X_- \:\big|\: |X| \geq \tilde{\tau}_\eps\bigr]\\
    &\leq \E[X_+ | X \geq \tau_\eps] + \E[X_- | X \leq \tau_{1-\eps}]\\
    &\leq \E[X - (\tau_\eps \land 0) \:|\: X \geq \tau_\eps] + \E[-X + (\tau_{1-\eps} \lor 0) \:|\: X \leq \tau_{1-\eps}]\\
    &= \E[X \:|\: X \geq \tau_\eps] + \E[-X  \:|\: X \leq \tau_{1-\eps}] - (\tau_\eps \land 0) + (\tau_{1-\eps} \lor 0)\\
    &\leq \E[X \:|\: X \geq \tau_\eps] + \E[-X  \:|\: X \leq \tau_{1-\eps}] + (-\tau_{1-\eps} \lor 0) + (\tau_\eps \lor 0)\\
    &\leq \E[X \:|\: X \geq \tau_\eps] + \E[-X  \:|\: X \leq \tau_{1-\eps}] + (\E[-X  \:|\: X \leq \tau_{1-\eps}] \lor 0) + (\E[X \:|\: X \geq \tau_\eps] \lor 0).
\end{align*}
Now, it is easy to check that each summand is bounded by $\E[X \:|\: X \geq \tau_\eps] \lor \E[-X \:|\: X \leq \tau_{1-\eps}]$ (since this maximum is non-negative), giving the lemma.
\end{proof}

Continuing, we take $\mu \in \cP_1(\R^d)$ which is $(\rho,\eps)$-mean-resilient and assume without loss of generality that $\mu x = 0$. For all $\nu \leq \frac{1}{1-\eps}\mu$, we write $\mu = (1-\eps)\nu + \eps \alpha$ for $\alpha \in \cP(\R^d)$ and bound
\begin{align*}
    \MSWp(\mu,\nu) &= \MSWp((1-\eps)\nu + \eps \alpha, \nu)\\
    &\leq \eps^{1/p} \MSWp(\alpha,\nu) && \text{(Fact \ref{fact:homogeneity})}\\
    &\leq \eps^{1/p} (\MSWp(\alpha,\delta_0) + \MSWp(\nu,\delta_0)) && \text{(triangle inequality)}\\
    &\leq 2 \eps^{1/p} \sup_{\kappa \leq \frac{1}{(1-\eps) \land \eps} \mu} \MSWp(\kappa,\delta_0)\\
    &= 2 \eps \sup_{\kappa \leq \frac{1}{(1-\eps) \land \eps} \mu} \sup_{\theta \in \unitsph}\E_\kappa \left[|\theta^\intercal X|^p\right]^{1/p}\\
    &= 2 \eps \sup_{\theta \in \unitsph} \E_\kappa \left[|\theta^\intercal X|^p \:\big|\: |\theta^\intercal X| \geq \tilde{\tau}_{\eps \land (1-\eps)}(\theta) \right]^{1/p}.
\end{align*}
where $\tilde{\tau}_\eps(\theta) = \sup \{t \in \R : \PP(|\theta^\intercal X| \geq t) \geq \eps \}$ for $X \sim \mu$. (Technically, the final inequality may fail if $\mu$ has a point mass at $\tilde{\tau}_{\eps \land (1-\eps)}(\theta)$; in this case, assume that ties are broken with independent randomness so that the conditioned event has probability $\eps \land (1-\eps)$). From now on, we will use that $p=1$. Writing $\tau_\eps(\theta) = \sup \{t \in \R : \PP(\theta^\intercal X \geq t) \geq \eps \}$ and breaking ties in the same way, we apply Lemma \ref{lem:quantiles} to bound
\begin{align*}
    \MSWone(\mu,\nu) &\leq 8 \eps \sup_{\theta \in \unitsph} \E_\mu [\theta^\intercal X \:|\: \theta^\intercal X \geq \tau_{\eps \land (1-\eps)}(\theta) ]\\
    &=8 \eps \sup_{\theta \in \unitsph} \theta^\intercal \E_\mu [ X \:|\: \theta^\intercal X \geq \tau_{\eps \land (1-\eps)}(\theta)]\\
    &=8 \eps  \: \|\E_\mu [ X \:|\: \theta^\intercal X \geq \tau_{\eps \land (1-\eps)}(\theta)]\|\\
    &=8 \eps  \: \|\E_\mu X - \E_\mu [ X \:|\: \theta^\intercal X \geq \tau_{\eps \land (1-\eps)}(\theta)]\|.
\end{align*}
Now, if $\eps \geq 1/2$, we can use resilience of $\mu$ to bound 
\begin{align*}
    \MSWone(\mu,\nu) &\leq 8 \eps \rho \leq 8\rho.
\end{align*}
Otherwise, writing $E$ for the event that $\theta^\intercal X \geq \tau_\eps(\theta)$, we have
\begin{align*}
    \MSWone(\mu,\nu) &\leq 8\eps\: \| \eps\E_\mu[X|E] + (1-\eps)\E_\mu[X|E^c] - \E_\mu[X|E]\|\\
    &= 8\eps (1-\eps)\: \|\E_\mu[X|E^c] - \E_\mu[X|E]\|\\
    &= 8\eps (1-\eps)\: \|\E_\mu[X|E^c] - \eps^{-1}(\E_\mu[X] - (1-\eps)\E_\mu[X|E^c])\|\\
    &= 8 (1-\eps)\: \|\E_\mu[X] - \E_\mu[X|E^c]\|\\
    &\leq 8 (1-\eps) \rho \leq 8 \rho.
\end{align*}
Hence, $\mu$ is $(8\rho,\eps)$-resilient w.r.t.\ $\MSWone$.

Immediately, this allows $\MSWone$ to inherit a multitude of (population-limit and finite-sample) risk bounds from the robust mean estimation literature. See \cite{steinhardt2018robust} for a detailed survey of robust statistics results based on resiliency. For example, $\mu \in \cG_q$ is known to be $(O(\eps^{1-1/q}),\eps)$-mean-resilient, immediately implying Theorem \ref{thm:robustness} for $\MSWone$.

\subsection{Proof of \Cref{prop:finite-sample-cov-alg}}
\label{prf:finite-sample-cov-alg}

When $q=2$, we mirror the approach of \Cref{prop:finite-sample-robustness} but perform projection onto the space of distributions with bounded covariance, instead of onto the space of resilient distributions. We require the following standard result (see, e.g., Lemma A.18 of \cite{diakonikolas2017robust}), establishing finite-sample covariance bounds under bounded support.

\begin{lemma}
\label{lem:finite-sample-covariance-bound}
Let $\mu \in \cP(\R^d)$ with $\|\Sigma_\mu\|_\mathrm{op} \leq \sigma^2$ and $\diam(\supp(\mu)) \leq R$. Then the empirical distribution $\hat{\mu}_n$ satisfies $\|\Sigma_{\hat{\mu}_n}\|_\mathrm{op} \lesssim \sigma^2$ with probability at least 0.999 for $n \gtrsim R^2 \log(d)$.
\end{lemma}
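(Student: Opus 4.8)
The plan is to reduce the claim to a deviation bound for the empirical second-moment matrix and then invoke a matrix concentration inequality. By rescaling $\R^d$ we may assume $\sigma = 1$ (so that $R$ plays the role of the diameter measured in units of $\sigma$; the general case follows by applying the result to $X/\sigma$). Since $\mu x = \int x\, d\mu$ lies in the closed convex hull of $\supp(\mu)$, every $x \in \supp(\mu)$ obeys $\|x - \mu x\| \le \diam(\supp(\mu)) \le R$; after translating so that $\mu x = 0$ (which alters neither $\Sigma_{\hat{\mu}_n}$ nor $\Sigma_\mu$), the i.i.d.\ sample $X_1,\dots,X_n$ from $\mu$ thus satisfies $\|X_i\| \le R$ almost surely, and in particular $\|\Sigma_\mu\|_{\op} \le \Tr(\Sigma_\mu) = \E\|X_1\|^2 \le R^2$. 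Writing $\bar X = \frac1n\sum_i X_i$, the identity $\frac1n\sum_i (X_i - \bar X)(X_i - \bar X)^\intercal = \hat M - \bar X \bar X^\intercal \preceq \hat M$ with $\hat M := \frac1n\sum_i X_i X_i^\intercal$ reduces the problem to showing $\|\hat M\|_{\op} \lesssim 1$ with probability at least $0.999$.

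Next I would apply the matrix Bernstein inequality to the centered sum $\hat M - \Sigma_\mu = \sum_{i=1}^n Z_i$, where $Z_i := \frac1n(X_i X_i^\intercal - \Sigma_\mu)$ are independent, Hermitian, and mean-zero. Boundedness of the support supplies the per-term bound $\|Z_i\|_{\op} \le \frac1n(\|X_i\|^2 + \|\Sigma_\mu\|_{\op}) \le \frac{2R^2}{n}$, while the matrix variance statistic is controlled via $\E[Z_i^2] = \frac1{n^2}\bigl(\E[\|X_i\|^2 X_i X_i^\intercal] - \Sigma_\mu^2\bigr) \preceq \frac{R^2}{n^2}\Sigma_\mu$, whence $\bigl\|\sum_i \E[Z_i^2]\bigr\|_{\op} \le \frac{R^2}{n}\|\Sigma_\mu\|_{\op} \le \frac{R^2}{n}$. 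Matrix Bernstein then yields
\[
\PP\bigl(\|\hat M - \Sigma_\mu\|_{\op} \ge t\bigr) \le 2d\exp\!\left(-\frac{c\,n\,t^2}{R^2(1+t)}\right), \qquad t > 0,
\]
for a universal constant $c > 0$. Taking $t = 1$ and using $\|\Sigma_\mu\|_{\op} \le 1$ gives $\|\hat M\|_{\op} \le 2$ with probability at least $1 - 2d\,e^{-c'n/R^2}$, which exceeds $0.999$ once $n \ge CR^2\log d$ for a suitable absolute constant $C$. Undoing the rescaling gives $\|\Sigma_{\hat{\mu}_n}\|_{\op} \lesssim \sigma^2$, as claimed. (Keeping $t$ free upgrades this to a $1-\delta$ bound under $n \gtrsim R^2(\log d + \log(1/\delta))$.)

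The one step deserving care is the matrix variance computation: it is precisely the boundedness of the support — rather than merely the covariance bound — that justifies the Loewner comparison $\E[\|X\|^2 X X^\intercal] \preceq R^2 \Sigma_\mu$, and this is what produces the clean $R^2$ prefactor and, in turn, the $\log d$ sample dependence. A cruder union bound over a $1/4$-net $\cN$ of $\unitsph$ (using scalar Bernstein for each $\langle\theta,X_i\rangle^2 \in [0,R^2]$ together with $\|\hat M\|_{\op} \le 2\max_{\theta \in \cN}\theta^\intercal \hat M\theta$) would only deliver the weaker requirement $n \gtrsim R^2 d$. Otherwise the argument is routine, and the statement is standard; see \cite[Lemma~A.18]{diakonikolas2017robust}.
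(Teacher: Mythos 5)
Your proof is correct, but note that the paper does not actually prove this lemma: it is invoked as a standard black-box result with a pointer to Lemma~A.18 of \cite{diakonikolas2017robust}. What you have supplied is a complete, self-contained derivation via the matrix Bernstein inequality, and every step checks out: the centering argument ($\mu x$ lies in the convex hull of the support, so $\|X_i - \mu x\| \le R$ a.s.), the reduction $\Sigma_{\hat{\mu}_n} \preceq \hat M$, the per-term bound $\|Z_i\|_{\op} \le 2R^2/n$ (using $\|\Sigma_\mu\|_{\op} \le \E\|X\|^2 \le R^2$), and the variance comparison $\E[\|X\|^2 XX^\intercal] \preceq R^2\Sigma_\mu$ that yields the $\log d$ rather than $d$ dependence. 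One point where your write-up is arguably \emph{more} careful than the paper's statement: after your rescaling, the sample-size requirement is really $n \gtrsim (R/\sigma)^2 \log d$, and indeed the lemma is false as literally stated for small $\sigma$ (e.g., a two-point distribution placing mass $\sigma^2/R^2$ at distance $R$ requires $n \gtrsim R^2/\sigma^2$ for the empirical covariance to concentrate). You flag this implicitly by declaring $R$ to be the diameter in units of $\sigma$; it would be worth making that caveat explicit, since the paper's downstream application treats $\sigma$ as a free parameter. Your closing remark correctly identifies why the Loewner variance bound, rather than a net argument, is what buys the $\log d$.
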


Importantly, there are efficient filtering algorithms for projecting onto the set of distributions with bounded covariance (see, e.g., Theorem 3.1 \cite{hopkins2020robust}).

\begin{lemma}[Spectral reweighting]
\label{lem:efficient-recovery}
Let $x_1, \dots, x_n \in \R^d$ and $0 < \eps \leq 1/10$.
Suppose the discrete measure $\mu_n = \frac{1}{n} \sum_{i=1}^n \delta_{x_i}$ admits an $\eps$-deletion $\nu_n \leq \frac{1}{1-\eps}\mu_n$ such that $\|\Sigma_{\mu_n}\|_\mathrm{op} \leq \sigma^2$.
Then, given $\{x_i\}_{i=1}^n$ and $\eps$, there is an algorithm which finds $\nu \leq \frac{1}{1-3\eps}\mu_n$ such that $\|\Sigma_{\nu}\|_\mathrm{op} \lesssim \sigma^2$ with probability 0.999, in time $\smash{\widetilde{O}(n d^2)}$.
\end{lemma}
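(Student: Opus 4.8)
The plan is to realize the reweighting by the standard spectral \emph{filtering} procedure of Diakonikolas et al.\ \cite{diakonikolas2017robust}, in the near-linear-time form analyzed in \cite{hopkins2020robust}, and to verify that its sole precondition --- a bounded-covariance guarantee on a $(1-\eps)$-fraction of the mass --- is exactly what the hypothesis supplies. Concretely, I would maintain a weight vector $w\in[0,1/n]^n$, initialized at $w_i=1/n$, whose renormalization $\nu_w:=\|w\|_1^{-1}\sum_i w_i\delta_{x_i}$ is always a deletion of $\mu_n$; since both $w$ and $(1-\eps)\nu_n$ are pointwise dominated by $\mu_n$, I can write $w=w_{\mathrm g}+w_{\mathrm b}$ with $w_{\mathrm g}\le(1-\eps)\nu_n$ pointwise and $\|w_{\mathrm b}\|_1\le\eps$, so that the good part of $w$ inherits the second-moment control $\sum_i w_{\mathrm g}(x_i)\langle u,x_i-c\rangle^2\lesssim\sigma^2$ for every unit vector $u$ and every center $c$ close to $\nu_n x$. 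The algorithm then repeats: form $\mu_w=\|w\|_1^{-1}\sum_i w_ix_i$ and $\Sigma_w=\|w\|_1^{-1}\sum_i w_i(x_i-\mu_w)(x_i-\mu_w)^\intercal$; if $\|\Sigma_w\|_\mathrm{op}\le C\sigma^2$ for a suitable absolute constant $C$, output $\nu_w$; otherwise take an (approximate) top unit eigenvector $v$ of $\Sigma_w$, set scores $\tau_i=\langle v,x_i-\mu_w\rangle^2$, and soft-downweight $w_i\leftarrow w_i(1-\tau_i/\tau_{\max})$ with $\tau_{\max}=\max_i\tau_i$, preceded by one preliminary pruning pass that discards the $O(\eps n)$ points of largest norm so that $\tau_{\max}$ stays polynomially bounded in $d/\eps$ once $\|\Sigma_w\|_\mathrm{op}$ is.

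For correctness I would prove the usual structural fact: while the filtered weight has removed at most $3\eps$ mass and $\|w_{\mathrm g}\|_1\ge1-O(\eps)$, the inherited bound gives $\sum_i w_{\mathrm g}(x_i)\tau_i\lesssim\sigma^2\|w\|_1$, whereas by definition $\sum_i w_i\tau_i=\|w\|_1\|\Sigma_w\|_\mathrm{op}>C\sigma^2\|w\|_1$; taking $C$ large makes the good share of the total score less than one third. Hence every soft-downweighting step removes strictly more mass from $w_{\mathrm b}$ than from $w_{\mathrm g}$, since $\sum_i(w_{\mathrm g}-w_{\mathrm g}')(x_i)=\tau_{\max}^{-1}\sum_i w_{\mathrm g}(x_i)\tau_i<\tau_{\max}^{-1}\bigl(\sum_i w_i\tau_i-\sum_i w_{\mathrm g}(x_i)\tau_i\bigr)$. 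A debit/credit induction then yields the invariant ``good mass removed so far $\le$ bad mass removed so far''; since the bad budget is at most $\eps$, the total removed mass stays below $2\eps<3\eps$, and $w_{\mathrm g}$ is never exhausted, so the progress guarantee remains valid until the algorithm halts with $\|\Sigma_{\nu_w}\|_\mathrm{op}\le C\sigma^2$. The output $\nu:=\nu_w$ then satisfies $\nu\le\frac1{1-3\eps}\mu_n$ and $\|\Sigma_\nu\|_\mathrm{op}\lesssim\sigma^2$, as required.

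The remaining issue, which I expect to be the main obstacle, is bringing the running time down to $\widetilde O(nd^2)$: the plain filter above removes only a $\mathrm{poly}(\eps/d)$-fraction of the mass per step, so it terminates in merely polynomially many rounds. The standard remedy, following \cite{hopkins2020robust}, is to replace the hard progress bound with a multiplicative-weights / one-sided reweighting controlled by a scalar potential (e.g.\ $\sum_i w_i\langle v,x_i-\mu_w\rangle^2$ along the dominant direction, or $\|\Sigma_w\|_\mathrm{op}$ measured against a geometrically decreasing target) that contracts by a constant factor every $\widetilde O(1)$ rounds, so that $\widetilde O(1)$ filtering rounds suffice; each round costs $O(nd^2)$ to form $\Sigma_w$ plus $O(d^3)=O(nd^2)$ for an eigendecomposition (using $n\gtrsim d$), or $\widetilde O(nd)$ via power iteration, which is the only step where the $0.999$-probability statement enters. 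Since this fast-filtering machinery is already packaged as Theorem~3.1 of \cite{hopkins2020robust}, I would invoke it as a black box for the time bound and keep the argument above for the recovery guarantee, emphasizing that the deterministic input that theorem requires is precisely the covariance control on the good part of $w$ furnished by the promised deletion $\nu_n$.
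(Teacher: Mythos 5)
Your proposal is correct and ultimately takes the same route as the paper: the paper offers no proof of this lemma at all, simply invoking Theorem~3.1 of \cite{hopkins2020robust} for both the recovery guarantee and the $\widetilde{O}(nd^2)$ runtime, and your argument likewise defers to that theorem as a black box for the only genuinely delicate part (getting the number of filtering rounds down to $\widetilde{O}(1)$). Your added sketch of the standard soft-filtering analysis (good/bad weight decomposition, score comparison with constant $C$, debit/credit invariant giving at most $2\eps \leq 3\eps$ deleted mass) is sound but redundant given that citation, and you also correctly read the hypothesis as a covariance bound on the deletion $\nu_n$ rather than on $\mu_n$ as the statement's typo suggests.
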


Combining, we prove the proposition. We remark that sample complexity is dominated by empirical convergence under $\sD \in \{ \SWp, \MSWp\}$ of the truncated version of a distribution with bounded second moments. This can be improved significantly in many cases of interest, for example under log-concavity of the clean distribution.

\begin{proof}[Proof of \Cref{prop:finite-sample-cov-alg}]
First, we consider the case of bounded support, when $\diam(\supp(\mu)) \leq R$, and with contamination fraction $\eps \in [0,1/10]$. We mirror the argument of \Cref{prop:bounded-finite-sample-robust-estimation}, but project onto the set of distributions with bounded covariance using spectral reweighting. Write $\tilde{\mu}_n$ for the empirical distribution of the $\eps$-contaminated samples and $\hat{\mu}_n$ for that of the clean samples, with $\hat{\mu}_n \leq \frac{1}{1-\eps}\tilde{\mu}_n$. Combining \Cref{lem:finite-sample-covariance-bound,lem:efficient-recovery}, we find that $\|\Sigma_{\hat{\mu}_n}\|_\mathrm{op} \lesssim \sigma^2$ and that the spectral reweighting algorithm returns $\nu \leq \frac{1}{1-3\eps}\hat{\mu}_n$ with $\|\Sigma_\nu\|_\mathrm{op} \lesssim \sigma^2$ in time $\widetilde{O}(nd^2)$, all with probability 0.998. By resilience of the class $\cG_2(\sigma)$ w.r.t.\ $\sD$ and Markov's inequality, we have 
\begin{equation*}
    \sD(\nu,\mu) \leq \sD(\nu,\hat{\mu}_n) + \sD(\hat{\mu}_n,\mu) \lesssim R(\sD,\cG_q(\sigma),\eps) + \E[\sD(\hat{\mu}_n,\mu)]
\end{equation*}
with probability $0.995$. For the unbounded case, we apply \Cref{lem:high-probability-norm-bound} and \Cref{lem:base-measure-switch} as in the proof of \Cref{prop:finite-sample-robustness} to reduce to $R \asymp \sqrt{d/\eps}$ and obtain the desired error bound with probability at least 0.99, so long as $0 < \eps \leq 1/12$ (any constant separated from 1/10 will do).
\end{proof}

\subsection{Proof of Lemma~\ref{lem: wp_theta_lipschitz}}\label{APPEN:wp_theta_lipschitz_proof}

We start by showing that the Lipschitz constant of $w_p$ is upper bounded by $L_{\mu,\nu}^p$. Fix $\theta_1, \theta_2 \in \unitsph$ and observe that
\begin{align*}
\big|w_p(\theta_1) - w_p(\theta_2)\big| &= \big|\Wp\big(\proj^{\theta_1}_\sharp \mu, \proj^{\theta_1}_\sharp \nu\big) - \Wp\big(\proj^{\theta_2}_\sharp \mu,\proj^{\theta_2}_\sharp \nu\big)\big| \\
&\leq \Wp\big(\proj^{\theta_1}_\sharp \mu, \proj^{\theta_2}_\sharp \mu\big) + \Wp\big(\proj^{\theta_1}_\sharp \nu, \proj^{\theta_2}_\sharp \nu\big)\\
&\leq \|\theta_1 - \theta_2\| \sup_{\theta \in \unitsph} \left ( \big(\mu |\theta^\intercal x|^p\big)^{1/p} + \big(\nu |\theta^\intercal x|^p \big)^{1/p} \right ),
\end{align*}
where the last step uses the optimal transportation cost formulation of $\Wp$. The RHS above is $L_{\mu,\nu}^p$ from the lemma, which concludes the proof of the first statement.

Next, we bound the Lipschitz constant of $w_p^p$. For $\theta_1, \theta_2 \in \unitsph$ and $i=1,2$, let $(X_i, Y_i)$ be a coupling of $\mu$ and $\nu$ so that $(\theta^\intercal_i X_i, \theta_i^\intercal Y_i)$ is optimal for $\Wp\big(\proj^{\theta_i}_\sharp \mu, \proj^{\theta_i}_\sharp \nu\big)$. These couplings are constructed as follows. For $i=1,2$, let $(U_i,V_i)$ be an optimal couplings for $\Wp\big(\proj^{\theta_i}_\sharp \mu, \proj^{\theta_i}_\sharp \nu\big)$. Take $\mathrm{P}_i\in\RR^{d\times d}$ as a unitary matrix whose first row is $\theta_i$, and let $\mathrm{P}_{i,-1}\in\RR^{(d-1)\times d}$ denote the matrix obtained by deleting the first row of $\mathrm{P}_i$. Given $u_1,u_2\in\RR$ generate the random variables $W_i(u_i)\sim \law\big(P_{i,-1}X\big|\theta_i^\intercal X=u_i\big)$, for $i=1,2$, where $X\sim \mu$ and $\cL(\cdot)$ designates the probability law of a random variable. Setting $\bar{U}_i:=\big(U_i,W_i(U_i)\big)$ for $i=1,2$, observe that $\bar{U}_i\sim \law\big(\mathrm{P}_i X\big)$ and further that $X_i:=\mathrm{P}_i^\intercal \bar{U}_i\sim \mu$. Constructing $\bar{V}_i$, for $i=1,2$, in an analogous fashion but with $\nu$ in place of $\mu$, and defining $Y_i$ similarly to $X_i$ above, we  obtain the desired $(X_i,Y_i)$ couplings. 

Then by optimality of the couplings, we have
\[
\begin{split}
w_p^p(\theta_1) - w_p^p(\theta_2) &\leq \EE\big[\big|\theta_1^\intercal (X_2 - Y_2)\big|^p - \big|\theta_2^\intercal (X_2 - Y_2)\big|^p \big],\\
w_p^p(\theta_2) - w_p^p(\theta_1) &\leq \EE\big[\big|\theta_2^\intercal (X_1 - Y_1)\big|^p - \big|\theta_1^\intercal (X_1 - Y_1)\big|^p \big].
\end{split}
\]

Combining these bounds, we obtain
\begin{align*}
|w_p^p(\theta_1) - w_p^p(\theta_2)| &\leq p\|\theta_1 - \theta_2\|\, \EE \left  [ \max_{i=1,2} \left | \frac{(\theta_1 - \theta_2)^\intercal (X_i - Y_i)}{\|\theta_1 - \theta_2\|} \right | \cdot \max_{i,j=1,2} \big|\theta_i^\intercal (X_j - Y_j)\big|^{p-1} \right ]\\
&\leq p \|\theta_1 - \theta_2\| \, \EE \left [ \max_{\substack{i=1,2 \\ j = 1,2,3}} \big|\theta_j'(X_i - Y_i)\big|^p \right ]\\
&\leq 3p2^p\|\theta_1 - \theta_2\| \, \sup_{\theta \in \unitsph} \EE \big [ \big|\theta^\intercal X_1|^p + |\theta^\intercal Y_1|^p \big ],
\end{align*}
where for the second inequality we have defined $\theta_3 := \frac{\theta_1 - \theta_2}{\|\theta_1 - \theta_2\|}$. This concludes the proof. 

\begin{remark}[Alternative Lipschitz constants]
\label{rem: wp_lipschitz_alt}
The Lipschitz constant for $w_p^p$ can be alternatively derived as
\begin{align*}
|w_p^p(\theta_1) - w_p^p(\theta_2)| &\leq p\|\theta_1 - \theta_2\|\, \EE \left  [ \max_{i=1,2} \left | \frac{(\theta_1 - \theta_2)^\intercal (X_i - Y_i)}{\|\theta_1 - \theta_2\|} \right | \cdot \max_{i,j=1,2} \big|\theta_i^\intercal (X_j - Y_j)\big|^{p-1} \right ]\\
&\leq p \|\theta_1 - \theta_2\| \, \EE \left [ \max_{\substack{i=1,2 \\ j = 1,2,3}} \big|\theta_j'(X_i - Y_i)\big|^p \right ]\\
&\lesssim_p \|\theta_1 - \theta_2\| \left ( \|\mu x - \nu x\| +  \sup_{\theta \in \unitsph} \EE \big [ \big | \theta^\intercal (X_1 - \mu x) \big |^p + \big | \theta^\intercal (Y_1 - \nu x) \big |^p \big ] \right ),
\end{align*}
where the terms corresponding to mean difference and covariance are separated.
\end{remark}

\subsection{Proof of Proposition \ref{prop: monte carlo}}\label{APPEN: monte carlo proof}

We decompose the error by introducing the Monte Carlo average for the population projected distances:
\begin{align*}
    &\EE\Big[\Big|\WMC-\SWp^p(\mu,\nu)\Big|\Big]\\
    &\leq \EE\left[\left| \WMC-  \frac{1}{m}\sum_{i=1}^m \mathsf{W}_p^p \big ( \proj^{\Theta_i}_\sharp \mu, \proj^{\Theta_i}_\sharp \nu \big ) \right | \right ] +\EE \left [ \left | \frac{1}{m} \sum_{i=1}^m \mathsf{W}_p^p \big ( \proj^{\Theta_i}_\sharp \mu, \proj^{\Theta_i}_\sharp \nu \big )  - \SWp^p(\mu, \nu) \right | \right ] \Bigg \}.\numberthis\label{EQ:MC_decomp}
\end{align*}

For the first term, using the fact that $\Theta_1,\ldots,\Theta_n$ are i.i.d., we have
\begin{equation}
\EE\left[\left | \WMC- \frac{1}{m} \sum_{i=1}^m \mathsf{W}_p^p \big ( \proj^{\Theta_i}_\sharp \mu, \proj^{\Theta_i}_\sharp \nu \big ) \right | \right ] \leq \EE\bigg\{\EE \bigg [ \Big| \mathsf{W}_p^p \big(\proj^{\Theta}_\sharp \empmu, \proj^{\Theta}_\sharp \empnu\big) - \mathsf{W}_p^p \big ( \proj^{\Theta}_\sharp \mu, \proj^{\Theta}_\sharp \nu \big ) \Big | \,\bigg | \Theta\bigg ]\bigg\}.\label{EQ:MC_A}
\end{equation}

Denote $(f \oplus g)(x,y) = f(x) + g(y)$, and let $c(x,y) = \|x - y\|^2$.  Further, define the c-conjugate of a function $f$ as $f^c(y) = \inf_x c(x,y) - f(x)$. For each $\theta\in\unitsph$, observe that
\begin{align*}
    \mathsf{W}_p^p &\big ( \proj^\theta_\sharp \empmu, \proj^\theta_\sharp \empnu \big ) - \mathsf{W}_p^p \big ( \proj^{\theta}_\sharp \mu, \proj^{\theta}_\sharp \nu \big )\\
    &\leq \sup_{\substack{(\varphi,\psi) \in L^1(\mu)\times L^1(\nu): \\ \varphi\oplus\psi \le c}} \left\{(\proj^\theta_\sharp \empmu) \varphi + (\proj^\theta_\sharp \empnu) \psi \right\} - \sup_{\substack{(f,g) \in L^1(\mu)\times L^1(\nu): \\ f\oplus g \le c}} \left \{  (\proj^\theta_\sharp \mu) f + (\proj^\theta_\sharp \nu) g \right \}\\
    &\leq \sup_{(\varphi,\psi) \in L^1(\mu)\times L^1(\nu)} \left\{ (\proj^\theta_\sharp \empmu) \varphi + (\proj^\theta_\sharp \mu) \varphi^c + (\proj^\theta_\sharp \empnu) \psi + (\proj^\theta_\sharp \nu) \psi^c \right\}\\
    &= \mathsf{W}_p^p\big(\proj^\theta_\sharp\empmu, \proj^\theta_\sharp\mu\big) + \mathsf{W}_p^p\big(\proj^\theta_\sharp\empnu, \proj^\theta_\sharp\nu\big).
\end{align*}
Repeating this argument for $\mathsf{W}_p^p \big ( \proj^{\theta}_\sharp \mu, \proj^{\theta}_\sharp \nu \big ) - \mathsf{W}_p^p \big ( \proj^\Theta_\sharp \empmu, \proj^\Theta_\sharp \empnu \big )$ we obtain
\[
\Big|\mathsf{W}_p^p \big ( \proj^{\theta}_\sharp \mu, \proj^{\theta}_\sharp \nu \big ) - \mathsf{W}_p^p \big (\proj^\theta_\sharp \empmu, \proj^\theta_\sharp \empnu \big )\Big| \leq \mathsf{W}_p^p(\proj^{\theta}_\sharp\empmu, \proj^{\theta}_\sharp\mu) + \mathsf{W}_p^p(\proj^{\theta}_\sharp\empnu, \proj^{\theta}_\sharp\nu).
\]

The proof of Theorem \ref{thm: SWp rate} implies that, for any $\theta \in \unitsph$,
\[
\begin{split}
\EE\big[\mathsf{W}_p^p\big(\proj^\theta_\sharp\empmu, \proj^\theta_\sharp\mu\big)\big] &\leq C_p \frac{(\log n)^{\ind_{\{p=2\}}} \|\Sigma_\mu\|_{\op}^{p/2}} {n^{(p\wedge 2)/2}},\\
\EE\big[\mathsf{W}_p^p\big(\proj^\theta_\sharp\empnu, \proj^\theta_\sharp\nu\big)\big] &\leq C_p \frac{(\log n)^{\ind_{\{p=2\}}} \|\Sigma_\nu\|_{\op}^{p/2}} {n^{(p\wedge 2)/2}}.
\end{split}
\]
Inserting this back into \eqref{EQ:MC_A}, we have
\begin{equation}
\EE\left[\left| \WMC-  \frac{1}{m}\sum_{i=1}^m \mathsf{W}_p^p \big ( \proj^{\Theta_i}_\sharp \mu, \proj^{\Theta_i}_\sharp \nu \big ) \right | \right ] \leq  \frac{C_p\big( \|\Sigma_\nu\|_{\op}^{p/2} + \|\Sigma_\mu\|_{\op}^{p/2} \big) (\log n)^{\ind_{\{p=2\}}}}{n^{(p\wedge 2)/2}}\label{EQ:MC_A_final}
\end{equation}

\medskip

For the second term in \eqref{EQ:MC_decomp}, recall that $w_p^p(\theta) := \mathsf{W}_p^p \big ( \proj^{\theta}_\sharp \mu, \proj^{\theta}_\sharp \nu \big )$ for $\theta\in\unitsph$, and bound
\[
    \EE \Bigg [ \Bigg | \frac{1}{m}  \sum_{i=1}^m \mathsf{W}_p^p \big ( \proj^{\Theta_i}_\sharp \mu, \proj^{\Theta_i}_\sharp \nu \big ) - \SWp^p(\mu, \nu) \Bigg | \Bigg ] \leq  \sqrt{ \frac{1}{m} \Var \big (  w_p^p(\Theta)  \big )}
\]
To control the variance we use concentration of Lipschitz functions on the unit sphere. By Remark~\ref{rem: wp_lipschitz_alt} following the proof of Lemma~\ref{lem: wp_theta_lipschitz}, $w_p^p$ is $\tilde M^p_{\mu, \nu}$-Lipschitz, with $\tilde M^p_{\mu, \nu} \lesssim_p \|\mu x - \nu x\|^p + \sup_{\theta \in \unitsph} (\mu |\theta^\intercal (x - \mu x)|^{p} + \nu |\theta^\intercal (x - \nu x)|^{p})$. Denoting the median by $\med(\cdot)$, we have for $d \geq 3$ (cf. e.g., \cite[Chapter 1]{ledoux1991probability})
\[
\PP\Big(\big|w_p^p(\Theta) - \med\big(w_p^p(\theta)\big)\big| \ge t\Big) \leq 8 \exp\left(-\frac{(d-2)t^2}{2(\tilde M^p_{\mu, \nu})^2}\right).
\]
Consequently,
\begin{align*}
    \Var \big (  w_p^p(\Theta)  \big ) &\leq \EE \Big [ \big ( w_p^p(\Theta) - \med \big ( w_p^p(\Theta) \big ) \big )^2 \Big ]\\
    &= \int_{0}^\infty \PP\Big(\big|w_p^p(\Theta) - \med\big(w_p^p(\Theta)\big)\big| \ge \sqrt{t}\Big)\,dt\\
    &\leq \frac{16(\tilde M^p_{\mu, \nu})^2}{d-2}.
\end{align*}
Alternatively, for $d \leq 2$, letting $\Theta,\Theta'$ be independent samples drawn uniformly from $\unitsph$, we have
\[
\Var\big(w_p^p(\Theta)\big)=\frac 12 \EE\big[\big| w_p^p(\Theta)-w_p^p(\Theta')\big|^2\big]\leq \frac{(\tilde M^p_{\mu, \nu})^2}{2}\EE\big[|\Theta-\Theta'|^2\big] \leq (\tilde M^p_{\mu, \nu})^2.
\]
Combining the two variance bounds, for any $d\geq 1$, we obtain
\begin{equation}
\EE \Bigg [ \Bigg | \frac{1}{m}  \sum_{i=1}^m \mathsf{W}_p^p \big ( \proj^{\Theta_i}_\sharp \mu, \proj^{\Theta_i}_\sharp \nu \big ) - \SWp^p(\mu, \nu) \Bigg | \Bigg ] \lesssim \frac{4\tilde M^p_{\mu, \nu}}{\sqrt{md}},\label{EQ:MC_B_final}  
\end{equation}
where the hidden constant is universal.

We now focus on bounding $\tilde M^p_{\mu, \nu}$, leveraging log-concavity of $\mu$ and $\nu$. We present the derivation for $\sup_\theta \mu |\theta^\intercal (x - \mu x)|$; the one corresponding to $\nu$ is analogous. To control this term we use exponential concentration for 1-Lipschitz functions of log-concave random variables. Recalling that $\mu$ and $\nu$ being log-concave implies that so are $\proj^\theta_\sharp \mu$ and $\proj^\theta_\sharp \nu$, Theorem 1.2 in \cite{milman2009role} yields
\[
\PP\big(\big|\theta^\intercal X - \mu (\theta^\intercal x)\big| > t\big) \leq e \exp(-D_\mu t),
\]
where $X \sim \mu$ and $D_\mu \geq c/\sqrt{\|\Sigma_\mu\|_{\op}}$, with a universal constant $c$. Then,
\begin{align*}
    \sup_{\theta \in \unitsph} \mu |\theta^\intercal (x - \mu x)|^p &\leq \int_{0}^\infty \PP\big(\big| \theta^\intercal x - \mu(\theta^\intercal x)\big|^p > t\big) dt \\
    &\leq \int_{0}^\infty e\exp(-D_\mu t^{1/p})\,dt \\
    &= \Gamma(p+1)D_\mu^{-p} \\
    &\leq  \Gamma(p+1) \left(\frac{\sqrt{\|\Sigma_\mu\|_{\op}}}{c} \right)^p \\
    &\leq C_p \|\Sigma_\nu\|_{\op}^{p/2},
\end{align*}
for a constant $C_p$ depending only on $p$. Similarly, we obtain
\[
\sup_{\theta \in \unitsph} \nu |\theta^\intercal (x - \nu x)|^p \leq C_p \|\Sigma_\nu\|_{\op}^{p/2},
\]
which together implies
\[
\tilde M^p_{\mu, \nu} \leq C'_p \left (  \|\mu x - \nu x\|^p + \|\Sigma_\mu\|_{\op}^{p/2} + \|\Sigma_\nu\|_{\op}^{p/2}  \right ).
\]
Inserting the above bound into \eqref{EQ:MC_B_final} and combining with \eqref{EQ:MC_A_final} yields the result.

\subsection{Proof of Proposition \ref{PROP:MSWP_subgrad} }
\label{APPEN:MSWP_subgrad_proof}
Observe that $\tilde w_2^2(\theta)$ is  $M_n$-Lipschitz by Lemma ~\ref{lem: wp_theta_lipschitz} and $\rho_n$-weakly convex by Lemma 2.2 in \cite{lin2020projection}, where $M_n = 4 \sup_\theta (\empmu |\theta^\intercal x|^2 + \empnu |\theta^\intercal x|^2)$ and $\rho_n = 2 \max_{i,j} \|X_i - Y_j\|^2$.
By equation (2.10) in \cite{davis2018stochastic}, there exists a choice of step sizes $\alpha_t = \frac{c_{\rho_n, M_n}}{\sqrt{t+1}}$, such that Algorithm~\ref{alg:subgradient} for the objective $\varphi(\theta) = \tilde w_2^2+ \delta_{\BB^d}$, where $\delta_{\BB^d} = \infty \ind_{(\BB^d)^c}$, outputs a point $\theta_{t^*}$ that is close to a near-stationary point $\theta^*$, in the sense that $\EE_{t^*}[\|\theta^*-\theta_{t^*}\|]\leq \frac{\epsilon}{2\rho_n}$ and $\mathrm{dist}\big(0,\partial \tilde{w}_2^2(\theta^*)\big)\leq \epsilon$, in number of steps 
\[
T \leq \left \lceil \frac{64 \rho_n^2 M_n^2 \big (1 \wedge \frac{M_n}{2\rho_n}\big )}{\epsilon^4}  \right \rceil.
\]

We derive high probability upper bounds on $M_n$ and $\rho_n$ to obtain a non-stochastic bound on the computational complexity of our algorithm.

We will first reduce our problem to the case where $\mu$ and $\nu$ are isotropic log-concave, where our assumptions will lead to concentration inequalities on the above quantities. Assume first that $\Sigma_\mu$ and $\Sigma_\nu$ have rank $d$. Let $T^\mu (x) = \Sigma_\mu^{-1/2} (x - \mu x)$, and define $T^\nu$ analogously. Then, $\tilde{\mu} = T^\mu_\sharp \mu$ and $\tilde \nu = T^\nu_\sharp \nu$ are isotropic log-concave. Let $\tilde \mu_n$ and $\tilde \nu_n$ be empirical measures corresponding to $\tilde \mu$ and $\tilde \nu$, obtained by applying $T_\mu$ and $T_\nu$ to samples from $\mu$ and $\nu$, respectively.
For the first, we have
\[
\begin{split}
&M_n \leq  M^2_{\mu, \nu} + 24\sup_{\theta \in \unitsph} |\empmu \theta^\intercal (x - \mu x)|^2 + 24 \sup_{\theta \in \unitsph} \empnu |\theta^\intercal (x - \nu x)|^2, \quad \text{and} \\
&\rho_n \leq 6 \Big ( \|\mu x - \nu x\|^2 + \|\Sigma_\mu\|_{\op} \max_i \|\Sigma_\mu^{-1/2}(X_i - \mu x)\|^2 + \|\Sigma_\nu\|_{\op} \max_j \|\Sigma_\nu^{-1/2}(Y_j - \nu x)\|^2 \Big ).
\end{split}
\]
Further, assuming that $\mu$ and $\nu$ are centered, we have
\begin{align*}
    \sup_{\theta \in \unitsph} &\left | \empmu |\theta^\intercal x|^2  - \mu |\theta^\intercal x|^2 \right | \\
    &\leq \sup_{\theta \in \unitsph} \left | \empmu |\theta_{\Sigma_\mu}^\intercal x|^2  - \mu |\theta^\intercal x|^2 \right | \qquad\qquad \left [ \theta_{\Sigma_\mu} = \frac{\Sigma_\mu^{-1/2} \theta}{\|\Sigma_\mu^{-1/2} \theta\|} \right ]\\
    &\leq \|\Sigma_\mu\|_{\op} \sup_{\theta \in \unitsph} \Big | \tilde \mu_n |\theta^\intercal x|^2  - \tilde\mu |\theta^\intercal x|^2 \Big |,
\end{align*}
and similarly,
\[
\sup_{\theta \in \unitsph} \left | \empnu |\theta^\intercal x|^2  - \nu |\theta^\intercal x|^2 \right | \leq \|\Sigma_\nu\|_{\op} \sup_{\theta \in \unitsph} \Big | \tilde \nu_n |\theta^\intercal x|^2  - \tilde\nu |\theta^\intercal x|^2 \Big |.
\]

For isotropic $\tilde \mu$ and $\tilde \nu$, we have (cf. Theorem 4.2 in \cite{adamczak2010quantitative})
\[
\PP \left ( \sup_{\theta \in \unitsph}  \Big |  \tilde \mu_n |\theta^\intercal x|^2  - \tilde \mu |\theta^\intercal x|^2 \Big | \leq \epsilon  \right ) \geq 1 - \exp \left ( - c n^{1/4} \epsilon \sqrt{d} \right ).
\]
Choosing $\epsilon = 1/c$ above and noting that $\tilde\mu |\theta^\intercal x|^2 = \tilde\nu |\theta^\intercal x|^2 = 1 $, we have
\be
M_n \leq M_{\mu, \nu}^2 + 4(1 + 1/c) \Big(\|\Sigma_\mu\|_{\op} + \|\Sigma_\nu\|_{\op}\Big)
\label{eq: wpp_lipschitz_bound}
\ee
with probability at least $1-\frac{2}{n}$.
Additionally, by Lemma 3.1 in \cite{adamczak2010quantitative}, if $d \geq \big (\log n \big )^2$, there exists a universal constant $C > 0$ such that
\[
\max \left \{\max_i \|\Sigma_\mu^{-1/2}(X_i - \mu x)\|^2, \max_i \|\Sigma_\mu^{-1/2}(X_i - \mu x)\|^2 \right \} \leq C d, \]
implying
\be
\label{eq: wpp_convex_bound}
\rho_n \leq 6 \Big ( \|\mu x - \nu x\|^2 + Cd \left ( \|\Sigma_\mu\|_{\op} + \|\Sigma_\nu\|_{\op} \right ) \Big )
\ee
with probability at least $1 - \frac{2}{n}$.

If $\Sigma_\mu$ and $\Sigma_\nu$ are not full rank, then the above results hold for $\mu \ast \Unif(B_d(0,\sigma))$ and $\nu \ast \Unif (B_d(0,\sigma))$ instead, which are log-concave measures with covariance matrices $\Sigma_\mu + \sigma^2 I_d/(d+1)$ and $\Sigma_\nu + \sigma^2 I_d/(d+1)$, respectively. Letting $M_n^\sigma$, $\rho_n^\sigma$ and $M_{\mu, \nu}^{2,\sigma}$ denote $M_n$, $\rho_n$ and $M_{\mu, \nu}^2$ for these perturbed measures, we observe that $|\rho_n - \rho| \leq \sigma^2$, $|M_n^\sigma - M_n| \leq 96 \sigma^2$, and $|M_{\mu, \nu}^{2,\sigma} - M_{\mu, \nu}^2| \leq 48 \sigma^2$. Choosing $\sigma^2 = \|\Sigma_\mu\|_{\op} + \|\Sigma_\nu\|_{\op}$, we see that \eqref{eq: wpp_lipschitz_bound} and \eqref{eq: wpp_convex_bound} hold for non-full dimensional $\mu$ and $\nu$ as well with adjustments to $c$ and $C$. 

Combining \eqref{eq: wpp_lipschitz_bound} and \eqref{eq: wpp_convex_bound}, and noting that sorting to obtain the optimal permutation $\sigma^*$ and computing the subdifferential $\partial \rho(\sigma^*, \theta) = \nabla_\theta \rho(\sigma^*, \theta) $ takes $O(n \log n)$ operations, we have the result. \qed

\subsection{Proof of Proposition \ref{prop: MSWp_LIPO_convergence}}
\label{subsection: MSWp_LIPO_convergence_proof}
By Lemma~\ref{lem: wp_theta_lipschitz}, we have that $\hat w_p(\theta)$ is $\hat L_n$-Lipschitz with $\hat L_n := \sup_{\theta \in \unitsph} \big \{ ( \empmu |\theta^\intercal x|^p )^{1/p} + ( \empnu |\theta^\intercal x|^p )^{1/p} \big \}$. This yields, via the LIPO convergence guarantee (Corollary 13 in \cite{malherbe2017global}), that
\be
\label{eq: LIPO_bound_emp}
\max_{\theta \in \unitsph} \hat w_p(\theta) - \max_{1 \leq i \leq k} \hat w_p(\Theta_i) \leq 2 \hat L_n \left ( \frac{\log(1/\delta)}{k} \right )^{1/d}
\ee
with probability at least $1 - \delta$.

As in the previous section, we will first reduce our problem to the case where $\mu$ and $\nu$ are isotropic log-concave. For $1 \leq p \leq 2$, $\hat L_n \leq \sup_{\theta \in \unitsph} \big \{ ( \empmu |\theta^\intercal x|^2 )^{1/2} + ( \empnu |\theta^\intercal x|^2 )^{1/2} \big \}$, so that it suffices to bound $\hat L_n$ for $p \geq 2$. We have
\begin{align*}
    \hat L_n &= \sup_{\theta \in \unitsph} \big \{ ( \empmu |\theta^\intercal x|^p )^{1/p} + ( \empnu |\theta^\intercal x|^p )^{1/p} \big \} \\
    &\leq \sup_{\theta \in \unitsph} \big \{ ( \empmu |\theta^\intercal (x - \mu x)|^p )^{1/p} + ( \empnu |\theta^\intercal (x - \nu x)|^p )^{1/p} \big \} \\
    &\qquad+ \sup_\theta |\mu (\theta^\intercal x)|  + \sup_\theta |\nu (\theta^\intercal x)|\\
    &\leq \|\Sigma_\mu\|_{\op}^{1/2} \left [ \sup_{\theta \in \unitsph} (\tilde{\mu} |\theta^\intercal x|^p)^{1/p} + \sup_{\theta \in \unitsph} \left | (\tilde \mu_n |\theta^\intercal x|^p )^{1/p} - (|\tilde \mu (\theta^\intercal x)|^p)^{1/p} \right |  \right ]\\
    &\qquad + \|\Sigma_\nu\|_{\op}^{1/2} \left [ \sup_{\theta \in\unitsph} (\tilde{\mu} |\theta^\intercal x|^p)^{1/p} +  \sup_{\theta \in \unitsph} \left | (\tilde \nu_n |\theta^\intercal x|^p )^{1/p} - (|\tilde\nu (\theta^\intercal x)|^p)^{1/p} \right | \right ]\\
    &\qquad + \sup_\theta |\mu (\theta^\intercal x)|  + \sup_\theta |\nu (\theta^\intercal x)|\\
    &\leq \|\Sigma_\mu\|_{\op}^{1/2} \left [ (2p)^{1/p}  + \sup_{\theta \in \unitsph} \left | (\tilde \mu_n |\theta^\intercal x|^p )^{1/p} - (|\tilde\mu (\theta^\intercal x)|^p)^{1/p} \right |  \right ]\\
    &\qquad + \|\Sigma_\nu\|_{\op}^{1/2} \left [ (2p)^{1/p}  +  \sup_{\theta \in \unitsph} \left | (\tilde \nu_n |\theta^\intercal x|^p )^{1/p} - (|\tilde\nu (\theta^\intercal x)|^p)^{1/p} \right | \right ]\\
    &\qquad + \sup_\theta |\mu (\theta^\intercal x)|  + \sup_\theta |\nu (\theta^\intercal x)|
\end{align*}

Now, applying \cite[Theorem 4.2]{adamczak2010quantitative} with $\epsilon = 1/2$ and $t = 1$, we get
\begin{align*}
    \PP \left ( \sup_{\theta \in \unitsph} \left | (\tilde \mu_n |\theta^\intercal x|^p )^{1/p} - (|\tilde\mu (\theta^\intercal x)|^p)^{1/p} \right | > \frac{1}{2} \right ) &\leq \PP \left ( \sup_{\theta \in \unitsph} \left | \tilde \mu_n |\theta^\intercal x|^p  - |\tilde\mu (\theta^\intercal x)|^p \right | > \frac{1}{2^p} \right )\\
    &\leq 1 - e^{-c_p  \sqrt{d}} \numberthis\label{eq: L_n concentration}
\end{align*}
under assumed constraints on $n$ in the statement. An analogous bound holds for $\nu$, which yields that
\[
\PP \left ( \hat L_n \geq (\|\Sigma_\mu\|_{\op}^{1/2} + \|\Sigma_\nu\|_{\op}^{1/2}) \left ( (2p)^{1/p}  + \frac{1}{2}  \right ) + \sup_\theta |\mu (\theta^\intercal x)|  + \sup_\theta |\nu (\theta^\intercal x)| \right ) \leq e^{-c_p \sqrt{d}}
\]
Recall that $\beta = \exp(-c_p \sqrt{d}).$
Plugging \eqref{eq: L_n concentration} back into \eqref{eq: LIPO_bound_emp}, we get
\be
\label{eq: LIPO_bound_pop}
\max_{\theta \in \unitsph} \hat w_p(\theta) - \max_{1 \leq i \leq k} \hat w_p(\Theta_i) \leq L_{\mu,\nu} \left ( \frac{\log(1/\delta)}{k} \right )^{1/d}
\ee
with probability $1 - \delta - \beta$.

Finally, we have $\max_{\theta \in \unitsph} \hat w_p(\theta) = \MSWp(\empmu, \empnu)$, and
\[
|\MSWp(\empmu, \empnu) - \MSWp(\mu, \nu)| \leq \MSWp(\empmu, \mu) + \MSWp(\empnu, \nu).
\]
By \eqref{eq: MSWp concentration} in Proposition~\ref{prop: SWp concentration}, for any $t > 0$,
\[
\PP \Bigg ( \MSWp(\empmu, \mu) \geq \alpha_{n,\mu} + t \Bigg ) \leq 2 \exp \left ( -K_\mu \min \left ( n^{1/p} t, n^{2/(2 \vee p)} t^2 \right ) \right ),
\]
\[
\PP \Bigg ( \MSWp(\empnu, \nu) \geq \alpha_{n,\nu} + t \Bigg ) \leq 2 \exp \left ( -K_\nu \min \left ( n^{1/p} t, n^{2/(2 \vee p)} t^2 \right ) \right ),
\]
where $K_\mu \lesssim d^{o_d(1)} \max\{\|\Sigma_\mu\|_{\op}^{1/2},\,\|\Sigma_\mu\|_{\op} \}$ and $K_\nu \lesssim d^{o_d(1)} \max\{\|\Sigma_\nu\|_{\op}^{1/2},\,\|\Sigma_\nu \|_{\op} \}$. Setting
\[
\gamma_n(t) = 2 \exp \left ( -K_\mu^{-1} \min \left ( n^{1/p} t, n^{2/(2 \vee p)} t^2 \right ) \right ) + 2 \exp \left ( -K_\nu^{-1} \min \left ( n^{1/p} t, n^{2/(2 \vee p)} t^2 \right ) \right ),
\]
we then have
\[
\PP \left ( |\MSWp(\empmu, \empnu) - \MSWp(\mu, \nu)| > \alpha_{n,\mu} + \alpha_{n,\nu} + 2t \right ) \leq \gamma_n(t).
\]
Combining the above display with \eqref{eq: LIPO_bound_pop}, we get the desired result.

\section{Additional Experiments and Details}
\label{app:experiments}
Code for reproducing this paper's experiments can be found at \url{https://github.com/sbnietert/sliced-Wp}. Distance computations and plots for Figure \ref{fig:sw2_cpx} were performed on a cluster machine with 8 CPU cores and 64GB RAM in approximately 6 hours. Distance computations and plots for Figures  \ref{fig:MSW2_computation} and \ref{fig:robust-estimation} were performed on a cluster machine with 4 CPU cores and 20GB RAM in approximately 30 minutes. For Figure \ref{fig:robust-estimation} (right), the lower bound on $\Wone$ is computed by only considering couplings which leave the shared mass at 0 unmoved.

\begin{figure}[b]
\centering
\includegraphics[width=0.4\textwidth]{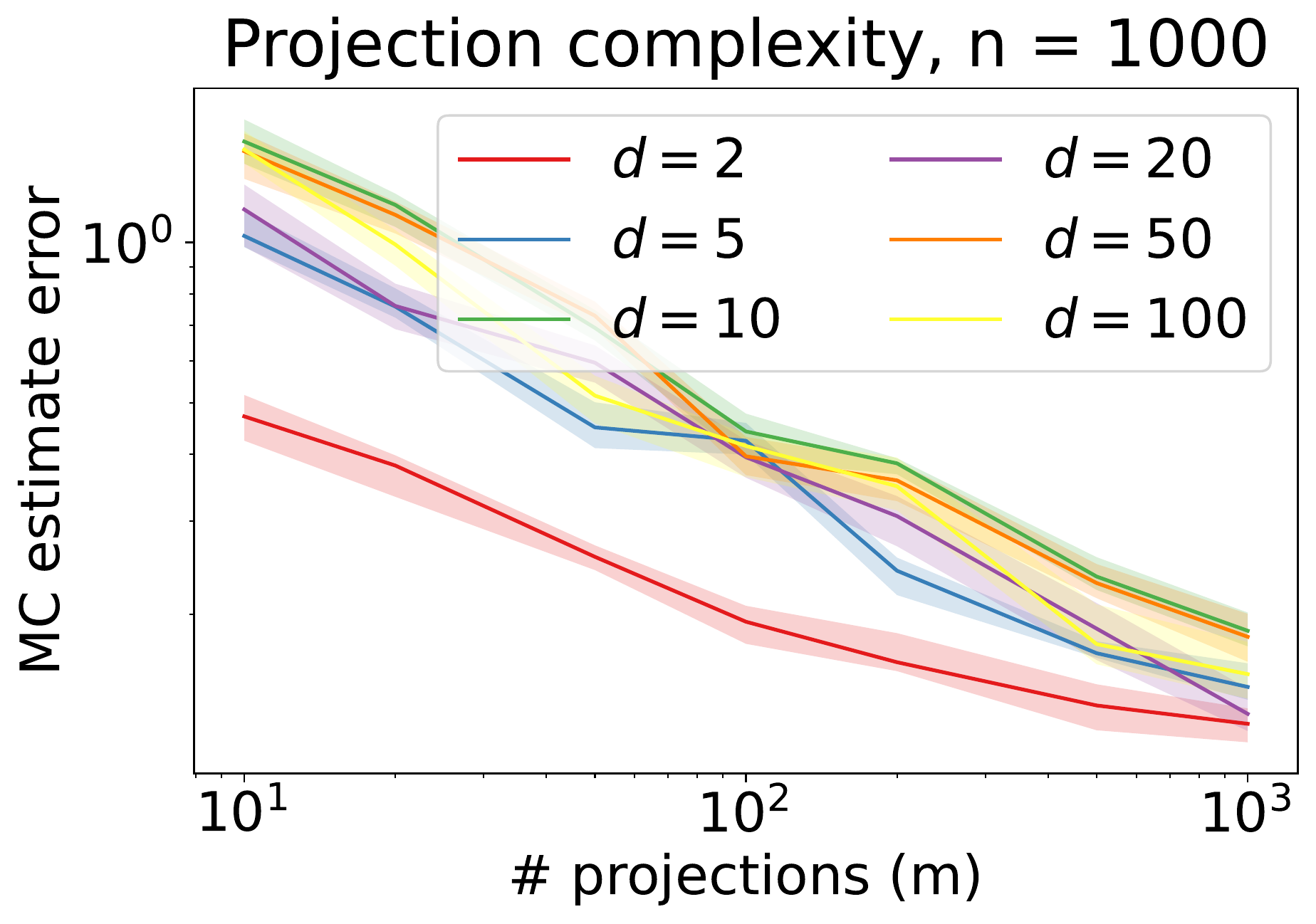}
\includegraphics[width = 0.41\textwidth]{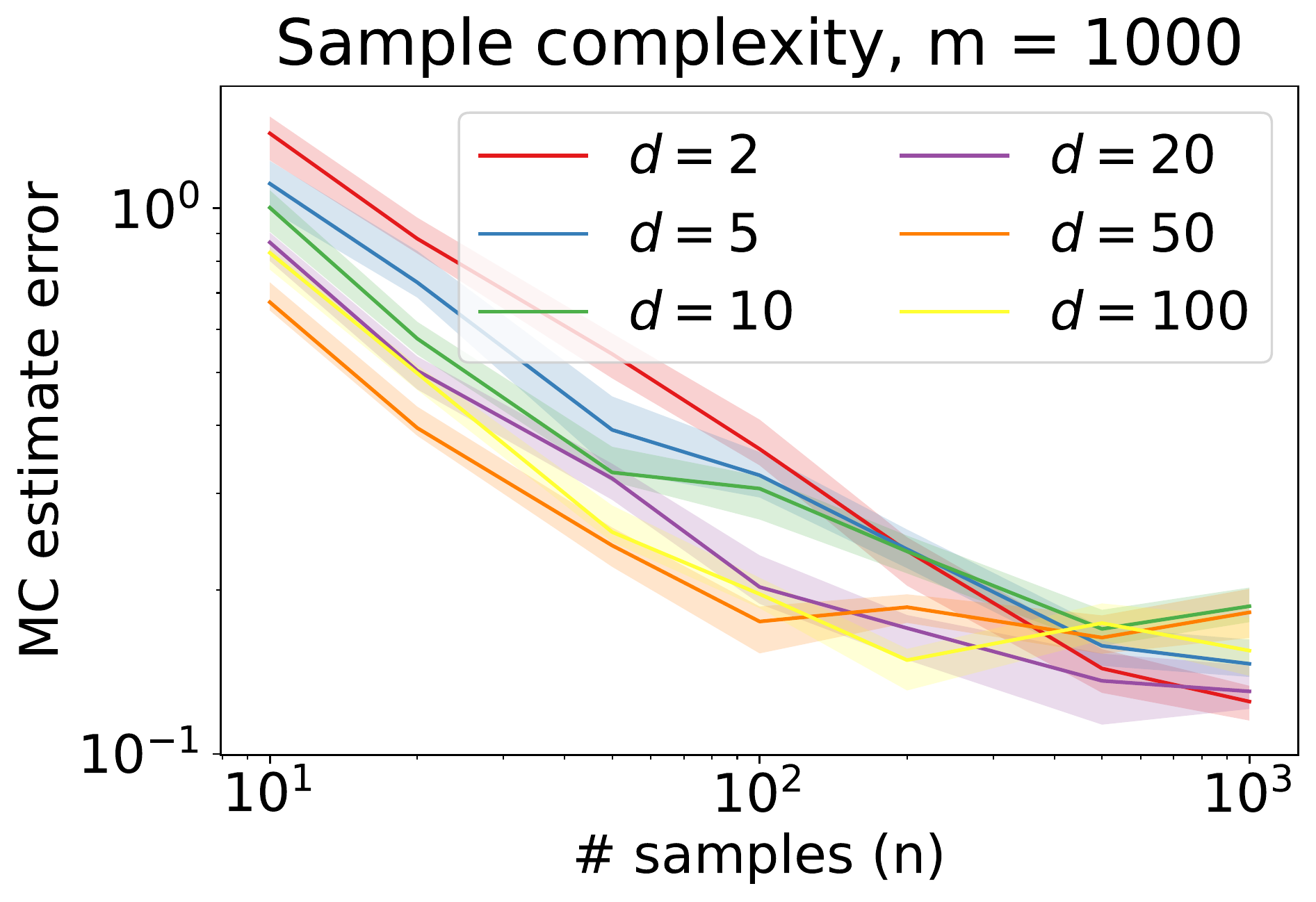}
\caption{$\big |\WMCtwo - \SWtwo^2(\mu, \nu) \big |$ under Model (3).}\label{fig:swp-exp3}
\end{figure}

As an additional experimental setup along the lines of Figure \ref{fig:sw2_cpx}, we consider Model (3): Gaussian mixtures $\mu = \frac{1}{10} \sum_{i=1}^{10} \cN(\mu_{1,i}, \Sigma_{1,i})$ and $\nu = \frac{1}{10}\cN(\mu_{2,i}, \Sigma_{2,i})$, where means $\mu_{1,i}$ and $\mu_{2, i}$ are respectively generated from $\cN(\bm{1}_d, I_d)$ and $\cN(3\,\bm{1}_d, I_d)$, and the covariance matrices of the mixtures are simulated as $\frac{1}{k} X^{\intercal} X$, where $X$ is $k \times d$ data matrix generated from $\cN(0, I_d)$ and $k$ is a uniformly sampled integer from $1$ to $d$. Conditioned on fixed random choices of $\mu$ and $\nu$, we provide the corresponding projection and sample complexity plots in \Cref{fig:swp-exp3}, with general trends matching those of \Cref{fig:sw2_cpx}. For both this experiment and Figure \ref{fig:sw2_cpx} in the main text, the population versions of the distances where no closed forms exist were calculated by setting the number of samples and Monte Carlo directions to 5000 and 2000 respectively. Computations and plots were performed on a cluster machine with 8 CPU cores and 64GB RAM in approximately 12 hours.

Finally, we consider how the robustness properties of sliced $\Wp$ may impact its application to generative modeling. Minimum distance estimation with respect to classic $\Wone$ serves as a theoretical foundation for Wasserstein GANs \cite{arjovsky_wgan_2017, gulrajani2017improved}, a successful approach for training generative models.
Later work extended this approach to average and max-sliced $\Wp$ \cite{deshpande2018generative, deshpande2019max}, albeit at a slightly less direct level (in these papers, sliced distances are computed in a feature embedding space rather than raw image space). 
\begin{wrapfigure}{r}{0.5\textwidth}
\centering
\begin{subfigure}[b]{0.245\textwidth}
    \centering
    \includegraphics[scale=0.27]{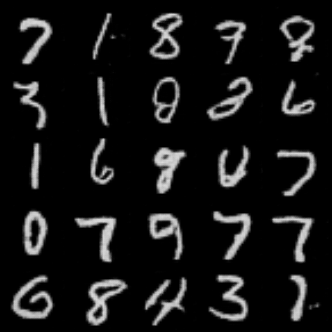}\\
    \vspace{1mm}
    \includegraphics[scale=0.27]{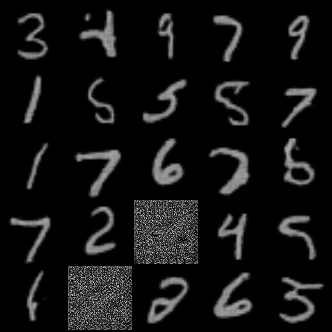}
    \caption{Sliced WGAN}
\end{subfigure}
\begin{subfigure}[b]{0.245\textwidth}
    \centering
    \includegraphics[scale=0.59]{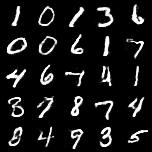}\\
    \vspace{1mm}
    \includegraphics[scale=0.59]{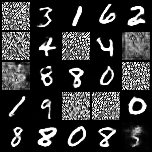}
    \caption{WGAN-GP}
\end{subfigure}
\caption{Preliminary GAN experiments with uncontaminated (top) vs. contaminated (bottom) MNIST data.}\label{fig:gans}
\end{wrapfigure}
In \Cref{fig:gans}, we display samples generated from open source implementations of the standard Wasserstein GAN with Gradient Penalty (WGAN-GP) \cite{gulrajani2017improved} and an average-sliced WGAN \cite{deshpande2018generative} trained for 20 epochs over the MNIST dataset \cite{deng2012mnist} of digit images with 10\% random noise contamination, using default parameter settings. Computations were performed on a cluster machine with 4 CPU cores, a NVIDIA Tesla T4 GPU, and 20GB RAM in roughly 12 hours. While there are differences between the produced samples, the two GAN architectures seem too distinct to draw any strong conclusions. Moreover, the robustness guarantees from \Cref{sec:robust-estimation} hold after preprocessing that appears too expensive to perform for data of this scale, so it is not surprising that the sliced WGAN reproduces random noise. Translating methods and guarantees for standard WGAN robustification (e.g., \cite{nietert2022robust}) to the sliced setting and thorough empirical comparisons are an interesting avenue for future research beyond the scope of this paper.

\end{document}